\titlespacing*{\section}{0pt}{*2}{*1}
\titlespacing*{\subsection}{0pt}{*2}{*1} 
\newtheorem{thm}{Theorem}[section]
\newtheorem{cor}[thm]{Corollary}
\newtheorem{lem}[thm]{Lemma}
\newtheorem{assu}[thm]{Assumption}
\newtheorem{prop}[thm]{Proposition}
\newtheorem{defn}[thm]{Definition}
\theoremstyle{remark}
\newtheorem{rem}[thm]{Remark}
\newtheorem{algo}[thm]{Algorithm}
\newcommand{\beq}{\begin{equation}}
\newcommand{\eeq}{\end{equation}}
\newcommand{\beqs}{\begin{equation*}}
\newcommand{\eeqs}{\end{equation*}}
\newcommand{\beas}{\begin{align*}}
\newcommand{\eeas}{\end{align*}}
\newcommand{\bea}{\begin{align}}
\newcommand{\eea}{\end{align}}
\newcommand{\bei}{\begin{itemize}}
	\newcommand{\eei}{\end{itemize}}
\newcommand{\ben}{\begin{enumerate}}
	\newcommand{\een}{\end{enumerate}}
\newcommand{\bet}{\begin{theorem}}
	\newcommand{\eet}{\end{theorem}}
\newcommand{\bel}{\begin{lemma}}
	\newcommand{\eel}{\end{lemma}}
\newcommand{\bep}{\begin{proposition}}
	\newcommand{\eep}{\end{proposition}}
\newcommand{\bed}{\begin{definition}}
	\newcommand{\eed}{\end{definition}}
\newcommand{\bec}{\begin{corollary}}
	\newcommand{\eec}{\end{corollary}}
\newcommand{\bex}{\begin{example}}
	\newcommand{\eex}{\end{example}}
\newcommand{\bu}{\bold{u}}
\newcommand{\bw}{\bold{w}}
\newcommand{\bv}{\bold{v}}
\newcommand{\SMP}{\mathsf{MP}}
\newcommand{\bU}{\bold{U}}
\newcommand{\bW}{\bold{W}}
\newcommand{\bN}{\bold{N}}
\newcommand{\bB}{\bold{B}}
\newcommand{\bA}{\bold{A}}
\newcommand{\bF}{\bold{F}}
\newcommand{\bR}{\bold{R}}
\newcommand{\bV}{\bold{V}}
\newcommand{\bK}{\bold{K}}
\newcommand{\bO}{\bold{O}}
\newcommand{\bLam}{\bold{\Lambda}}
\newcommand{\bTheta}{\bold{\Theta}}
\newcommand{\bSig}{\bold{\Sigma}}
\newcommand{\bzeta}{\boldsymbol{\zeta}}
\newcommand{\bbeta}{\boldsymbol{\beta}}
\newcommand{\bmu}{\boldsymbol{\mu}}
\newcommand{\bxi}{\bm{\xi}}
\newcommand{\bx}{\bm{x}}
\newcommand{\by}{\bm{y}}
\newcommand{\R}{\mathbb{R}}
\newcommand{\E}{\mathbb{E}}
\newcommand{\vertiii}[1]{{\left\vert\kern-0.25ex\left\vert\kern-0.25ex\left\vert #1 
		\right\vert\kern-0.25ex\right\vert\kern-0.25ex\right\vert}}
\newcommand{\xb}{\mathbf{x}}
\newcommand{\yb}{\mathbf{y}}
\newcommand{\vb}{\mathbf{v}}
\newcommand{\ub}{\mathbf{u}}
\newcommand{\zb}{\mathbf{z}}
\newcommand{\OO}{\mathrm{O}}
\newcommand{\oo}{\mathrm{o}}
\newcommand{\dd}{\mathrm{d}}
\newcommand{\ri}{\mathrm{i}}
\newcommand{\sfP}{\mathsf{P}}
\newcommand{\sfr}{\mathsf{r}}
\newcommand{\ignore}[1]{}
\begin{document}

\begin{titlepage}
\setstretch{1.14}
\title{Kernel Spectral Joint Embeddings for High-Dimensional Noisy Datasets using Duo-Landmark Integral Operators}
	\author{Xiucai Ding\thanks{Xiucai Ding is Associate Professor at Department of Statistics, University of California, Davis, CA 95616 (E-mail: \emph{xcading@ucdavis.edu}). He is currently partially supported by NSF DMS-2515104.} and  Rong Ma \thanks{Rong Ma is Assistant Professor at Department of Biostatistics, T.H. Chan School of Public Health, Harvard University, Department of Data Science, Dana-Farber Cancer Institute, and the Broad Institute of MIT and Harvard, Boston, MA 02115 (E-mail: \emph{rongma@hsph.harvard.edu}). Rong gratefully acknowledges Boris Landa and Yuval Kluger for constructive discussions on methodology and numerical simulations during the development of the methods for a related problem in \cite{landa2024entropic}.}}
	\date{}
	\maketitle
	\thispagestyle{empty}
	%\footnotetext{ }

\begin{abstract}
Integrative analysis of multiple heterogeneous datasets has arised in many research fields. Existing approaches oftentimes suffer from limited power in capturing nonlinear structures, insufficient account of noisiness and effects of high-dimensionality, lack of adaptivity to signals and sample sizes imbalance, and their results are sometimes difficult to interpret. To address these limitations, we propose a kernel spectral method that achieves joint embeddings of two independently observed high-dimensional noisy datasets. The proposed method automatically captures and leverages  {shared} low-dimensional structures across datasets to enhance embedding quality. The obtained low-dimensional embeddings can be utilized for downstream tasks such as simultaneous clustering, data visualization, and denoising. The proposed method is justified by rigorous theoretical analysis, which guarantees its consistency in capturing the signal structures, and provides a geometric interpretation of the embeddings. Under a joint manifolds model framework, we establish the convergence of the embeddings to the eigenfunctions of some natural integral operators. These operators, referred to as duo-landmark integral operators, are defined by the convolutional kernel maps of some reproducing kernel Hilbert spaces (RKHSs). These RKHSs capture the underlying,  {shared} low-dimensional nonlinear signal structures between the two datasets. Our numerical experiments and analyses of two pairs of single-cell omics datasets demonstrate the empirical advantages of the proposed method over existing methods in both embeddings and several downstream tasks.

	\bigskip
	\noindent\emph{Keywords}: High-dimensional data, Data integration, Manifold learning, Nonlinear dimension reduction,  Spectral method
\end{abstract}

\end{titlepage}

\section{Introduction}

\subsection{Background and motivation}\label{sec_backgroundandmotivation}

The rapid advancement in technology and computing power has substantially improved the capacity of data collection, storage, processing and management. With the increasing availability of large, complex, and heterogeneous datasets, in  many fields such as molecular biology \citep{joyce2006model,li2023high}, precision medicine \citep{martinez2022data,ma2022optimal,einav2023using}, business intelligence \citep{dayal2009data}, and econometrics \citep{hunermund2023causal}, the interests and needs to combine and jointly analyze multiple datasets arises naturally, where the hope is to leverage the (possibly partially) shared information across datasets to obtain better characterizations of the underlying signal structures. 
As a prominent example, in single-cell omics research, integrating diverse datasets generated from different studies, samples, tissues, or across different sequencing technologies, time points, experimental conditions, etc., have become a standard practice \citep{stuart2019integrative,argelaguet2021computational}. Since many biological processes such as  gene regulations and cellular signaling are likely shared across different biological samples, tissues, or organs, combining multiple datasets has been found helpful to better reveal the biological signals underlying individual datasets \citep{xiong2022online,luecken2022benchmarking,ma2024principled,sun2024tissue}; see Figure \ref{fig_intro} for an illustration. 

%{\color{blue}Figure here: {\color{blue}[the introduction of the point clouds $\mathcal{X}$ and $\mathcal{Y}$ should be here, especially independence ]}}

\begin{figure}[ht]
	\centering
	\includegraphics[width=14cm]{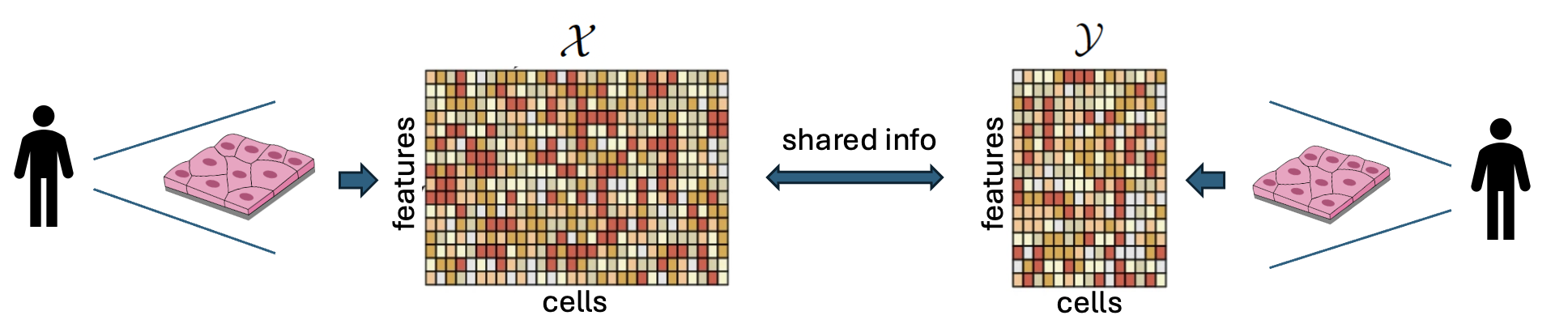}
	\caption{Illustration of multiple independently observed datasets with potentially shared information. Each dataset contains the same features but possibly different number of samples. }\label{fig_intro}
\end{figure}

%As depicted in Figure \ref{fig_intro},

Motivated by these applications, this paper focuses on the joint embedding of two independently observed high-dimensional noisy datasets $\mathcal{X}=\{\xb_i\}_{i=1}^{{n_1}} \subset \mathbb{R}^p$ and $\mathcal{Y}=\{\yb_j\}_{j=1}^{n_2} \subset \mathbb{R}^p,$ with possibly shared signal structures, as depicted in Figure \ref{fig_intro}. Our goal is to effectively learn the valuable signal structures from both datasets, which typically exhibit highly nonlinear patterns in biomedical research \citep{moon2019visualizing, sapoval2022current,ma2023spectral,sun2023dynamic}.

Before proceeding further, we must first underscore the distinction between our concerned problem and a closely related yet markedly different problem which is also frequently encountered in modern biomedical research: the necessity of integrating datasets pertaining to identical samples, while each dataset comprising varying ``views" or types of measurements \citep{rappoport2018multi,li2020inferring,kline2022multimodal}. For example, in single-cell multiomics research \citep{argelaguet2020mofa+,baysoy2023technological}, measurements on  genomic variation and gene expression can be obtained simultaneously for the same set of cells, leading to two data matrices concerning distinct types (and numbers) of features \citep{yu2023scone}. The goal of multiomic analysis is to combine two or more views or modalities (e.g., epigenomic and transcriptomic information) of the same cells for more comprehensive profiling of their cell identities and associated molecular processes. The underlying analytical task is often known in literature as multi-view learning \citep{zhao2017multi} or sensor fusion \citep{gustafsson2010statistical}. Lots of efforts are made to find effective algorithms under various assumptions, to list but a few,  canonical correlation analysis (CCA) \citep{da6385d2-9c65-3860-bbcd-b821fdff69ff,hardoon2004canonical}, nonparametric canonical correlation analysis (NCCA) \citep{michaeli2016nonparametric}, kernel canonical correlation analysis (KCCA) \citep{lai2000kernel}, deep canonical correlation analysis (DCCA) \citep{andrew2013deep}, alternating diffusion \citep{talmon2019latent}, and time coupled diffusion maps \citep{marshall2018time}, etc. For the sensor fusion problem, the two datasets, denoted as $\mathcal{X}'=\{\xb'_i\}_{i=1}^{n} \subset \mathbb{R}^{p_1}$ and $\mathcal{Y}'=\{\yb'_j\}_{j=1}^{n} \subset \mathbb{R}^{p_2}$, typically display dependence and possess an equal number of samples, $n$, yet they may differ in their feature dimensions, represented by $p_1$ and $p_2$, respectively. 
On the contrary, our work addresses a different problem where the two datasets $\mathcal{X}$ and $\mathcal{Y}$ are independent and share the same set of $p$ features, although they may possess varying sample sizes, $n_1$ and $n_2$.

Despite the importance and significance of the joint embedding problem in modern biomedical research, to our knowledge, it has not yet been formally or systematically addressed in the statistical literature. Several  computational approaches have been developed specifically for single-cell applications \citep{haghverdi2018batch,stuart2019comprehensive, cao2022manifold,luecken2022benchmarking,kang2022roadmap}.   
In spite of their initial success, there are several key limitations about the existing methods. First, all of these methods are developed heuristically, and the absence of a theoretical foundation significantly constrains the users' interpretation of the results and the potential to address new tasks \citep{mazumder2024dataperf,omiye2024large}. Second, existing methods are based on conceptions built upon noiseless observations in the low-dimensional setting so that their  applicability and validity with respect to high-dimensional and noisy datasets is in general left unguaranteed  \citep{chari2023specious,liu2025assessing}. Finally, biomedical datasets often exhibit high heterogeneity, characterized by varying signal-to-noise levels and sample sizes across individual datasets; however, existing methods  do not provide automatic adaptation and often fail to effectively handle  such information imbalance \citep{maan2024characterizing, maan2024characterizing1}.

To address these limitations, in this paper, we formulate this problem rigorously under a joint manifolds model framework (Assumption \ref{assum_signal} and Figure \ref{fig_manifoldmodelillustration}) and  propose a novel kernel spectral method that achieves joint embeddings of two independently observed high-dimensional and noisy datasets. Building upon the mathematical insights surrounding the newly introduced duo-landmark integral operators (Definition \ref{defn_landmarkintegral}), we propose an asymmetric cross-data kernel matrix along with a data-adaptive bandwidth selection procedure (Algorithm \ref{al0}) to capture the joint information contained in both datasets, and develop spectral joint embeddings of both datasets based on the singular values and singular vectors of the kernel matrix. The proposed method automatically captures and leverages possibly shared low-dimensional structures across datasets to enhance embedding quality.  The obtained joint embeddings can be useful for many downstream tasks, such as simultaneous clustering (Section \ref{sec_biclustering}) and learning low-dimensional nonlinear structures (Section \ref{sec_ml}) on both datasets. The proposed method is justified by rigorous theoretical analysis, which guarantees consistency of the low-dimensional joint embeddings concerning the underlying signal structures, and provides explicit interpretations about its geometric meaning. When applied to  real-world biological datasets (Section \ref{sec_realdataanalysis}) for joint analysis of noisy  high-dimensional single-cell omics datasets, the proposed method demonstrates superior performance in capturing the underlying biological signals as compared with alternative methods.

\subsection{Our algorithm, main results and contributions}

Our proposed method is summarized in Algorithm \ref{al0}. Influenced by \cite{10.1093/imaiai/iaac013}, the algorithm begins by {assessing the information sharing or ``alignability" between the two datasets of interest, avoiding unwanted distortions and misinterpretations that can arise from integrating unrelated data. The core of the algorithm involves linking data points from the point clouds $\mathcal{X}=\{\xb_i\}_{1 \leq i \leq n_1}$, $\mathcal{Y}=\{\yb_j\}_{1 \leq j \leq n_2}\subset \mathbb{R}^p$ via an asymmetric rectangular kernel affinity matrix, as described  in (\ref{eq_kernelmatrix}). 
Importantly, we exclude ``self-connections" within the individual dataset, $\mathcal{X}$ and $\mathcal{Y}$, and instead focus exclusively on establishing connections between the two datasets. This design choice is motivated by both theoretical and practical considerations, as detailed below.}

%One remarkable feature is that it begins by connecting the data points from the point clouds $\mathcal{X}=\{\xb_i\}_{1 \leq i \leq n_1} \subset \mathbb{R}^p$ and $\mathcal{Y}=\{\yb_j\}_{1 \leq j \leq n_2},$ through the creation of an asymmetric rectangular kernel affinity matrix, as outlined in (\ref{eq_kernelmatrix}). 
%As a result, we exclude ``self-connections" within the individual point cloud $\mathcal{X}$ and $\mathcal{Y}$, focusing solely on connecting distinct data points between them. This approach is inspired and motivated by various theoretical and practical considerations as follows. 

First, as mentioned in Section \ref{sec_backgroundandmotivation}, our primary objective in practical applications is to effectively utilize both datasets and accomplish various downstream tasks simultaneously or separately. In this context, $\mathcal{X}$ and $\mathcal{Y}$ might only share partial information, limiting the relevance of self-connections within one dataset for understanding the structure of the other. This rationale also clarifies our decision not to amalgamate the two datasets and create a symmetric affinity matrix treating them as one entity, as the signal structures may not be entirely shared across both datasets. 
This is further confirmed by our simulations (Section \ref{sec_simulation}) and analyses of real single-cell omics datasets (Section \ref{sec_realdataanalysis}), which demonstrate that our approach significantly outperforms the methods that merely concatenate the two datasets and involve self-connections.

Second, from a theoretical standpoint, our approach draws significant inspiration from recent advancements in manifold learning that enhance the scalability of various algorithms through the utilization of landmark datasets  \citep{shen2022robust,10.1093/imaiai/iaac013, wu2024design, 2024arXiv240419649Y}. However, there exist significant differences between our aims and setups and theirs. For clarity, we focus on explaining the differences from \cite{10.1093/imaiai/iaac013}. In their configuration, only one set of cloud points is observed (denoted as $\mathcal{X}$), with a single landmark dataset (denoted as $\mathcal{Y}$), essentially resampled from $\mathcal{X}$. Consequently, the two point clouds are dependent. The remarkable contribution of \cite{10.1093/imaiai/iaac013} lies in providing guidance on selecting resampling schemes and resample sizes based on an analysis of a single differential operator,  the Laplace operator.  In contrast, as noted in Section \ref{sec_backgroundandmotivation},  in our applications independent datasets $\mathcal{X}$ and $\mathcal{Y}$ are observed. In this context, $\mathcal{Y}$ can be considered the landmark dataset for $\mathcal{X}$, and conversely, $\mathcal{X}$ can also be viewed as the landmark dataset for $\mathcal{Y}$. This enables mutual learning between the two datasets through a pair of operators, namely, the newly introduced duo-landmark integral operators outlined in Definition \ref{defn_landmarkintegral}. In both our theoretical analysis (Section \ref{sec_theoreticalanalysis}) and numerical evaluations (Sections \ref{sec_simulation} and \ref{sec_realdataanalysis}), the duo-landmark integral operators demonstrate the capacity to extract crucial structures from the underlying datasets with assistance from each other, despite only sharing partial  information.

Another important component in Algorithm \ref{al0} is selecting an appropriate bandwidth parameter. Generalizing the approach from \cite{DW2}, our proposed method utilizes pairwise distances between datasets $\mathcal X$ and $\mathcal Y$, as described in (\ref{eq_bandwidthselection}). It is adaptive and does not depend on prior knowledge of the underlying  structures. Moreover, it theoretically ensures that our proposed joint embeddings (Step 3 of Algorithm \ref{al0}) can be geometrically interpreted using the eigenfunctions of the duo-landmark integral operators. {Importantly, unlike these existing works which deal with nonlinear embedding of a single dataset, the current work focuses on jointly embedding two related datasets, which has distinct objectives and applications, and involves very different ideas behind our proposed methods (see discussions after Definition \ref{defn_landmarkintegral}).}

In a manifold learning framework, we theoretically demonstrate that our proposed algorithm can produce embeddings capable of capturing the intrinsic structures of the underlying nonlinear manifolds, leveraging the eigenfunctions of the duo-integral operators linked with these manifolds.  
In our setup, shaped by the demands of our applications wherein datasets $\mathcal{X}$ and $\mathcal{Y}$ share only partial information, we propose a new \emph{joint manifolds model} (Assumption \ref{assum_signal} and Figure \ref{fig_manifoldmodelillustration}) for the underlying clean signals. The joint manifolds exhibit partially shared nonlinear structures, which can also be identical, thus encapsulating the common manifold model introduced in \cite{ding2021kernel,talmon2019latent} as a notable special case within our framework. For practical relevance, we consider the commonly used 
signal-plus-noise model for the observations where for $1 \leq i \leq n_1$ and $1 \leq j \leq n_2$ 
\begin{equation} \label{model}
	\xb_i=\xb_i^0+\bm{\xi}_i \in \mathbb{R}^p, \qquad \yb_j=\yb_j^0+\bm{\zeta}_j \in \mathbb{R}^p,
\end{equation} 
where $\{\xb_i^0\}$ and $\{\yb_j^0\}$ are the clean signals in $\mathbb{R}^p$ sampled from the joint manifolds model and $\{\bm{\xi}_i\}$ and $\{\bm{\zeta}_j\}$ are the noise terms. 

We emphasize that our approach allows the dimension $p$ to diverge alongside $n_1$ and $n_2$, addressing the issue of high dimensionality while accommodating significant differences between $n_1$ and $n_2$. More specifically, our methods are tailored to high-dimensional inputs, effectively mitigating potential imbalanced sample sizes. In addition, we allow the signal strengths and signal-to-noise ratios (SNRs) to vary across different datasets, addressing the issue of information imbalance. 

Under some mild conditions, we prove the convergence and robustness of our proposed algorithms. The theoretical novelty and significance can be summarized as follows.
\begin{itemize}
	\item 
	Under the newly introduced joint manifolds model framework, we present a set of innovative integral operators known as duo-landmark integral operators (Definition \ref{defn_landmarkintegral}). These operators {are built on} the nonlinear manifold models {to capture some common geometric structures} and are constructed upon convolution kernels alternating between the joint manifolds. This design facilitates the integration of information from datasets sampled from the two manifolds, enabling mutual learning between them. 
	\item Under some mild conditions, {when the two datasets have common structures and are alignable},  we establish the spectral convergence of our algorithms towards the duo-landmark operators. In other words, when the datasets remain pristine (i.e., $\xb_i=\xb_i^0$ and $\yb_j=\yb_j^0$ in (\ref{model})), we show that the outputs of our algorithms converge to the eigenvalues and eigenfunctions of the duo-landmark operators, accompanied by detailed convergence rate analyses.
	\item {We  analyze the impact of the high-dimensional noise on the proposed method and its  phase transition.}
We prove the robustness of our algorithm under the high-dimensional noise {when the signals dominate the noise}. We show that, under mild conditions, our proposed algorithm is robust against the noise and  still converges to the duo-landmark integral operators. {Moreover, we show that when the noise is stronger, although the embeddings may lose interpretability, they still exhibit a deterministic limiting behavior. This phenomenon can be understood through random matrix theory, specifically via the global spectral distribution, which follows the free multiplicative convolution of two Marchenko–Pastur laws.}
%Consequently, our joint embeddings can be interpreted as a finite approximation of the samples projected onto the leading eigenfunctions of the duo-landmark integral operators. 
\end{itemize}

Before concluding this section, it is worth noting that our algorithm can effectively handle a range of downstream tasks when the joint manifolds possess specific structures. For instance, in cases where both manifolds exhibit cluster structures, the joint embeddings will leverage the possibly shared cluster patterns, so that employing some clustering algorithms (e.g., k-means) simultaneously to the joint embeddings will lead to improved clustering of the datasets (Section \ref{sec_biclustering}). As another example, when the two datasets contain some partially shared nonlinear smooth manifold structures, but one has a higher SNR while the other has a lower SNR, the proposed joint embeddings  would facilitate enhancing the embedding quality of the low-SNR dataset for better learning its underlying nonlinear manifolds (Section \ref{sec_ml}). In the applications to integrative single-cell omics analysis, we observe that compared with alternative methods, our algorithm achieves better performance in identifying the distinct cell types across pairs of datasets of gene expression or chromatin accessibility measures, generated from different experimental conditions or studies.

This work was developed concurrently with \cite{landa2024entropic}. At a high level, both papers study joint embeddings of two datasets using singular vectors/eigenfunctions from an asymmetric cross-dataset affinity matrix/operator.  Several elements of the present work were indeed influenced by the ideas there. For example, the normalization of the asymmetric matrix  and aspects of the numerical simulation design. While the two works address related problems,  \cite{landa2024entropic} develop an optimal-transport-based approach grounded in a diffusion-type framework that accommodates unshared dataset-specific deformations and corruptions (e.g., shifts/dilations and nuisance structures).

%\subsection{Organization and Notations}
The rest of the paper is organized as follows. In Section \ref{sec_algo}, we introduce our proposed algorithm. In Section \ref{sec_theoreticalanalysis}, we  introduce the duo-landmark integral operators and prove the spectral convergence and robustness of our proposed methods. In Section \ref{sec_simulation}, we use extensive numerical simulations to illustrate the usefulness of our method and in Section \ref{sec_realdataanalysis}, we apply our algorithm to the integrative single-sell omics analysis. An online supplement is provided to present additional numerical results, provide detailed proofs of the main results and related technical lemmas, and include further discussions and remarks.

%Some preliminary results are provided in Appendix \ref{appendix_preliminaryresults}, the technical proofs of the main results are given in Appendix \ref{sec_techinicalproofappendix} and additional discussions on hyperparameter selection can be found in Appendix \ref{sec_tuningparametersection}. 

We finish this section by introducing some notation. To streamline our statements, we use the notion of \emph{stochastic domination}, which  is commonly adopted in random matrix theory \citep{erdos2017dynamical} to syntactically simplify precise statements of the form ``$\mathsf{X}^{(n)}$ is bounded with high probability by $\mathsf{Y}^{(n)}$ up to small powers of $n$."  More notation is given in the supplement.

\begin{defn} [Stochastic domination]\label{defn_stochasdomi} Let $	\mathsf{X}=\big\{\mathsf{X}^{(n)}(u):  n \in \mathbb{N}, \ u \in \mathsf{U}^{(n)}\big\}$ and $\mathsf{Y}=\big\{\mathsf{Y}^{(n)}(u):  n \in \mathbb{N}, \ u \in \mathsf{U}^{(n)}\big\}$
	be two families of nonnegative random variables, where $\mathsf{U}^{(n)}$ is a possibly $n$-dependent parameter set. We say  $\mathsf{X}$ is {\em stochastically dominated} by $\mathsf{Y}$, or $\mathsf{X}\prec \mathsf{Y}$, uniformly in the parameter $u$, if for all small $\upsilon>0$ and large $ D>0$, there exists $n_0(\upsilon, D)\in \mathbb{N}$ so that $
		\sup_{u \in \mathsf{U}^{(n)}} \mathbb{P} \big( \mathsf{X}^{(n)}(u)>n^{\upsilon}\mathsf{Y}^{(n)}(u) \big) \leq n^{- D},$
	for all $n \geq  n_0(\upsilon, D)$. 
\end{defn}

\section{Proposed algorithm}\label{sec_algo}

In this section, we introduce  our new algorithm for the joint embedding of two datasets with the same features, summarized below in Algorithm \ref{al0}. 

\begin{algorithm}[h!]
	\caption{Kernel spectral joint embeddings using duo-landmark integral operator} \label{al0}
	\begin{algorithmic}
		\State {\bf Input:} Observed (centered) samples $\{\xb_j\}_{j \in [n_1]}$, $\{\yb_i\}_{i\in[n_2]}$, {percentile} $\omega\in(0,1)$, { nearest neighbor $k$}, and {singular vector index sets} $\Gamma \subseteq\{1,2,...,\min\{n_1,n_2\}\}$. {
		\State 1. {\bf Alignability screening:} 
		\State \hspace{4mm} (i) define the full kernel matrix $\bF := (F(i,j))\in\R^{(n_1+n_2)\times(n_1+n_2)}$ based on the merged dataset $\{\zb_1,...,\zb_{n_1+n_2}\}\equiv\{\xb_1,...,\xb_{n_1},\yb_1,...,\yb_{n_2}\}$ by letting $
		F(i,j)=\exp\big( -\frac{\|\zb_i-\zb_j\|_2^2}{q_n} \big),$
		where $q_n$ is the the smallest value satisfying the inequality $\nu^{(z)}_n(q_n)\ge\omega$ where
		$\nu^{(z)}_n(t)=\frac{2}{(n_1+n_2)(n_1+n_2-1)}\sum_{\substack{1\le i\ne  j\le n_1+n_2}}1_{\{\|\zb_i-\zb_j\|_2^2\le t\}}$.
		\State \hspace{4mm}  (ii) let $\mathbf{L}_\Gamma \in \mathbb{R}^{(n_1+n_2) \times |\Gamma|}$ contain the eigenvectors of $\mathbf{F}$ indexed by $\Gamma$, and define the embeddings $\widehat{\mathcal{Z}} = \{\widehat{\mathbf{z}}_i\}_{1 \le i \le n_1+n_2}$ as the rows of $\mathbf{L}_\Gamma$; define the label set $\{l(\widehat{\mathbf{z}}_i)\}$ by 
		\[
		l(\widehat{\mathbf{z}}_i) = 
		\begin{cases}
			0, & \text{if } \mathbf{z}_i \in \{\mathbf{x}_i\}, \\
			1, & \text{if } \mathbf{z}_i \in \{\mathbf{y}_i\}.
		\end{cases}
		\]
		\State \hspace{4mm}  (iii) for each $\widehat\zb_i\in \widehat{\mathcal Z}$, we compute its $k$-nearest neighborhood purity: $p_i = \frac{\mathcal{N}_i}{k}$,
		where $\mathcal{N}_i$ is the number of nearest neighbors of $\widehat{\mathbf{z}}_i$ in $\widehat{\mathcal{Z}}$ sharing the same label $l_i$.
		\State \hspace{4mm}  (iv) if $\operatorname{median}(\{p_i\}) = 1$, stop the algorithm; otherwise, proceed to the next step. }
		\State 2. {\bf Duo-landmark kernel matrix construction:} define the kernel matrix $\bK:=(K(i,j)) \in \mathbb{R}^{n_1 \times n_2}$ by letting
		\begin{equation}\label{eq_kernelmatrix}
			K(i,j)=\exp\bigg( -\frac{\|\xb_i-\yb_j\|_2^2}{h_n} \bigg),
		\end{equation} 
		where $h_n$ is the the smallest value satisfying the inequality 
			\begin{equation}\label{eq_bandwidthselection}
			\nu_n(h_n)\ge\omega. 
		\end{equation}
		where
		$\nu^{(xy)}_n(t)=\frac{1}{n_1n_2}\sum_{\substack{1\le i\le n_1\\
				1\le j\le n_2}}1_{\{d_{ij}\le t\}},$ and $d_{ij}=\|\xb_i-\yb_j\|_2^2$  for all $1\le i\le n_1$ and $1 \leq j \leq n_2$.
		\State 3. {\bf Obtain duo-landmark joint embeddings:} 
		\State \hspace{4mm} (i) obtain the singular value decomposition (SVD) of the scaled kernel matrix $(n_1 n_2)^{-1}\bK$ as {
		\begin{equation} \label{eigen_decomp}
			\frac{1}{\sqrt{n_1 n_2}}\bK=\sum_{i=1}^{\min\{n_1, n_2\}} s_i \ub_i \vb_i^\top,
		\end{equation} }
		where $s_1\ge s_2\ge...\ge s_{\min\{n_1, n_2\}}$ are the singular values of $(n_1 n_2)^{-1/2}\bK$, and $\{\ub_i\}$ and $\{\vb_i\}$ are the corresponding left and right singular vectors.
		\State \hspace{4mm} (ii) let $\bU_{\Gamma_1} \in \mathbb{R}^{n_1 \times |\Gamma_1|}$  contain the left singular vectors indexed by the elements in $\Gamma_1$, $\bV_{\Gamma_2} \in \mathbb{R}^{n_2 \times |\Gamma_2|}$ contain the right singular vectors indexed by the elements in $\Gamma_2.$ Moreover, {for $\ell=1,2$, let $\bLam_{\Gamma_{\ell}}$ be a $|\Gamma_{\ell}| \times | \Gamma_{\ell}|$ diagonal matrix containing the singular values indexed by the elements in $\Gamma_{\ell}.$ }
		\State {\bf Output:} { kernel embeddings $\sqrt{n_1}\bU_{\Gamma_1} \bLam_{\Gamma_1}$ and $ \sqrt{n_2}\bV_{\Gamma_2}\bLam_{\Gamma_2}$}  for $\{\xb_i\}$ and $\{\yb_i\}$, respectively. 
	\end{algorithmic}
\end{algorithm}	
In Step 1 of the algorithm, we combine the two datasets $\{\xb_i\}$ and $\{\yb_j\}$ and obtain a joint kernel embedding of both datasets based on all the pairwise distances among the samples, using the method proposed in \cite{ding2022learning}. We then evaluate locally the degree of mixing, or ``overlap" between the low-dimensional embeddings of the two datasets. If there is a significant portion of data points indicating clear signal discrepancy between the two datasets, we will not proceed to integrate the two datasets. In other words, we only perform our proposed joint embedding algorithm when there is evidence of information sharing between the two dataests. {See Sections \ref{supp.sec.nu} and \ref{sec_tuningparametersection} of the supplement for more discussions regarding practical considerations.}

 {In Step 2 of the algorithm, we construct the kernel matrix $\bK=(K(i,j))\in\R^{n_1\times n_2}$, $K(i,j)=\exp(-{d_{ij}}/{h_n})$, using the  distances of the data points solely between the two datasets. Notably, the kernel matrix $\bK$  is of dimension $n_1 \times n_2$ and rectangular in general. For simplicity, here we focus on the Gaussian kernel function.  The results can be extended to other positive definite kernels and possibly multiple datasets; see Remarks \ref{rem_remarkkernel} and \ref{rem_morethanthree} of our supplement for more details. Meanwhile, a data-adaptive bandwidth parameter $h_n$ is chosen as the $\omega$-{percentile} of the empirical cumulative distribution function $\nu_n(t)$ of the pairwise between-dataset squared-distances $\{d_{ij}\}_{\substack{1\le i\le n_1,\\ 1 \leq j \leq n_2}}\equiv \{\|\xb_i-\yb_j\|_2^2\}_{\substack{1\le i\le n_1, \\1 \leq j \leq n_2}}$ of the observed datasets. This strategy is motivated by our theoretical analysis of the spectrum of kernel random matrices and its dependence on the associated bandwidth parameter; see Proposition \ref{lem_bandwidthconcentration}. It ensures the thus determined bandwidth $h_n$ adapts well to the unknown nonlinear structures and the SNRs of the datasets, so that the singular values and singular vectors of the associated kernel matrix capture the respective underlying low-dimensional structures via  the duo-landmark integral operators; see Section \ref{sec_theoreticalanalysis} for more details. {The percentile} $\omega$ is a tunable hyperparameter. {In Section \ref{sec_theoreticalanalysis} (and Section \ref{band.sec} of the supplement), we show in theory $\omega$ can be chosen as any constant between 0 and 1 to have the final embeddings achieve the same asymptotic behavior. In practice, as demonstrated by our numerical simulations, the performance remains robust across different choices of $\omega$. To improve automation of the method, we recommend using a resampling approach as in \cite{DW2}, described in Section \ref{sec_tuningparametersection} of the supplement, to determine {the percentile} $\omega$.}
In Step 3, the final embeddings are defined as the leading left and right singular vectors of the scaled kernel matrix, weighted by their respective singular values and square-root sample sizes. The {index sets} $\Gamma_1$ and $\Gamma_2$ of the embedding space are determined by the users, depending on the specific aims or downstream applications (see Section \ref{supp.sec.nu} of the supplement for additional discussions).}

We point out that in (\ref{eq_originalkernel}) and the constructions in Definition \ref{defn_clmd} below, for simplicity we considered the Gaussian kernel. However, we can easily generalize our results for a large class of kernel functions. For more details, we refer the readers to Remark \ref{rem_remarkkernel} of our supplement for more details. {Finally, we point out that the proposed algorithm—aimed at providing joint embeddings to extract {commonly shared structures}—is fundamentally different from the one in \cite{ding2022learning}. For more details, we refer the reader to Remark~\ref{rem_algo} in our supplement.}

 %of the low-dimensional embeddings. 

%In the next section, we discuss in detail the theoretical properties of the proposed algorithm, which justify its robust performance when applied to high-dimensional noisy datasets, and provide a rigorous geometric interpretation of the final joint embeddings.

%\subsection{Comparison with Existing Methods}
%
%\red{[discuss difference from joint KPCA; Roseland; LBDM] [difference 1: purpose and algorithm details; difference 2: limiting operators; difference 3: numerical performance.]}

\section{Theoretical properties and justifications}\label{sec_theoreticalanalysis}

We generalize the ideas in \cite{DW2,elkaroui2016} and assume that  $\{\xb_i\}$ and $\{\yb_j\}$ are sampled from some nonlinear manifolds model and corrupted by high-dimensional noise as in (\ref{model}). In what follows, we introduce the nonlinear manifolds model, {the common structures} and the duo-landmark integral operators {for capturing these common structures} in Section \ref{sec_manifoldmodelandlandmarkoperator}. In Section \ref{sec_spectralanalysis}, we establish the convergence results of $(n_1 n_2)^{-1} \bK^0$ to the duo-landmark integral operators, where $\bK^0$ is defined similarly as in (\ref{eq_kernelmatrix}) but using the clean signals, that is, 
\begin{equation}\label{eq_clearnsignalkernel}
	\bK^0=(K^0(i,j)),\qquad K^0(i,j)=\exp \left(-\|\xb_i^0-\yb_j^0\|_2^2/h^0_n \right),
\end{equation}
where $h_n^0$ is defined in the same way as in (\ref{eq_bandwidthselection}) using the clean signals.
In Section \ref{sec_robustness}, we {discuss how the high-dimensional noise impact our proposed algorithm}. 

%show that
%our proposed algorithm is robust to high-dimensional noise.  

\subsection{Joint manifolds model, {common structures} and duo-landmark integral operators}\label{sec_manifoldmodelandlandmarkoperator}

For the clean signals  $\{\xb_i^0\}$ and $\{\yb_j^0\}$, we suppose that they are sampled from some smooth manifolds model, which generalizes those considered  in    \cite{cheng2013local,talmon2019latent, Wu_Wu:2017}, and is formally summarized in Assumption \ref{assum_signal} and illustrated in Figure \ref{fig_manifoldmodelillustration}.

\begin{figure}[ht]
	\centering
	\includegraphics[height=5cm,width=8cm]{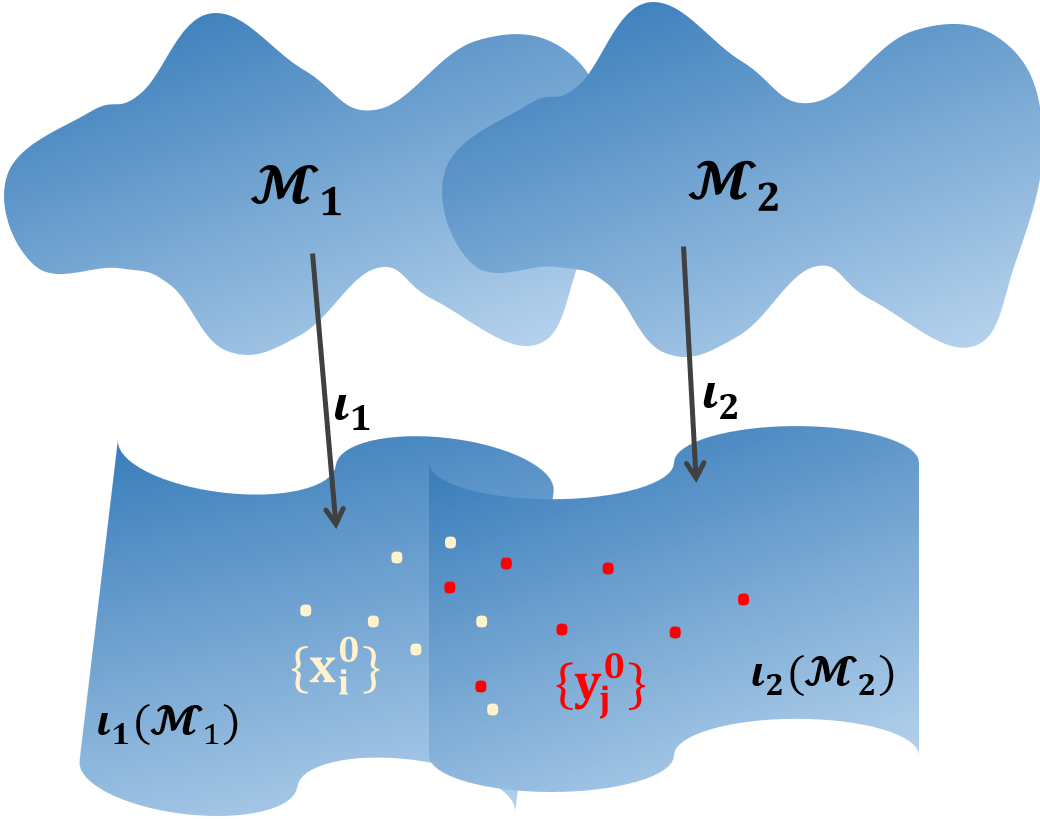}
	\caption{Illustration of the joint manifolds model. Here $\iota_1(\mathcal{M}_1)$ and $\iota_2(\mathcal{M}_2)$ contain partially overlapped or identical structures. This generalizes the common manifold model considered in \cite{ding2021kernel,talmon2019latent}.}\label{fig_manifoldmodelillustration}
\end{figure}
\vspace{-5pt}

\begin{assu}[Joint manifolds model] \label{assum_signal}
	For $\xb_{i}^0, 1 \leq i \leq n_1,$ we assume that they are centered and i.i.d. sampled from some sub-Gaussian random vector $X: \Omega_1 \rightarrow \mathbb{R}^p$ with respect to some probability space $(\Omega_1,\mathcal{F}_1, \mathbb{P}_1).$ Moreover, we assume that the range of $X$ is supported on an $m_1$-dimensional connected Riemannian manifold $\mathcal{M}_1$ isometrically embedded in $\mathbb{R}^p$ via $\iota_1: \mathcal{M}_1 \rightarrow \mathbb{R}^p.$ We suppose that $\operatorname{dim}(\iota_1(\mathcal{M}_1))=r_1 \leq p.$ Let $\widetilde{\mathcal{F}}_1$ be the Borel sigma algebra of $\iota_1(\mathcal{M}_1)\subseteq\R^p$ and denotes $\widetilde{\mathbb{P}}_1$ be the probability measure of $X$ defined on $\widetilde{\mathcal{F}}_1$ induced from $\mathbb{P}_1. $ We assume that $\widetilde{\mathbb{P}}_1$ is absolutely continuous with respect to the volume measure on $\iota_1(\mathcal{M}_1)$ whose density function is strictly positive.  In addition, we assume that similar conditions hold for $\yb_j, 1 \leq j \leq n_2,$ by replacing the quantities $(\Omega_1, \mathcal{F}_1, \mathbb{P}_1, X, \mathcal{M}_1, \iota_1, m_1, r_1)$ by some other combinations $(\Omega_2, \mathcal{F}_2, \mathbb{P}_2, Y, \mathcal{M}_2, \iota_2, m_2, r_2).$ Finally, we assume that $\{\xb_i^0\}$ and $\{\yb_j^0\}$ are independent. 
%We shall refer to the above model as the .  
\end{assu}

{A few remarks are provided in order. First, since our Algorithm \ref{al0} concerns datasets after centralization, without loss of generality, we assume that all the samples are centered. Second, the connectedness of $\mathcal{M}_1$ and $\mathcal{M}_2$ guarantees that $\mathbb{P}_1$ and $\mathbb{P}_2$ correspond to some continuous random vectors, respectively. Third, we allow the dimensions $r_1$ and $r_2$ of  $\iota_1(\mathcal{M}_1)$ and $\iota_2(\mathcal{M}_2)$ to be generic numbers, which can diverge with $p$. As a result, our theoretical framework does not depend on the specific mappings $\iota_1$ and  $\iota_2$, and allows for more flexible nonlinear structures.
%, as compared with many existing work \citep{DW2,ding2021kernel} where $r_1=r_2=1$. In particular, by the Nash embedding theorem (see Section \ref{manifold.sec} of our Supplement), we actually also allow the intrinsic dimensions $m_1$ and $m_2$ to diverge with $p$.
Finally, in real applications, oftentimes the object of interest is the embedded manifold  as the observations are sampled according to $X$ (or $Y$) which is supported on $\iota_1(\mathcal{M}_1)$ (or $\iota_2(\mathcal{M}_2)$). Consequently, we focus on understanding the shared geometric structures of $\iota_1(\mathcal{M}_1)$ and $\iota_2(\mathcal{M}_2)$. Nevertheless, our discussion is naturally related to $(\mathcal{M}_\ell,\iota_\ell), \ell=1,2;$ { see Section \ref{manifold.sec} of our supplement for more details. To formulate the common structures, we consider the following setup.
\begin{assu}[Common structure] \label{assum_commonstructureassumption} For the $p$-dimensional independent random vectors $X$ and $Y$ in Assumption \ref{assum_signal}, we assume that there exists some $d$-dimensional ($d \leq p$) subvectors,  $X_d \subset X$ and $Y_d \subset Y,$ and some differential map
$\vartheta: \mathbb{R}^d \rightarrow \mathbb{R}^d$ so that 
\begin{equation}\label{eq_orignalmap}
Y_d \simeq \vartheta(X_d),
\end{equation}
where $\simeq$ means equal in distribution. 
\end{assu}
The assumption above implies that $X$ and $Y$ are (partially) statistically related. As explained in Section \ref{manifold.sec} of our supplement, under the manifold model Assumption \ref{assum_commonstructureassumption}, they also share common geometric structures with respect to the underlying manifold models. These common structures can be learned via our newly introduced duo-landmark integral operators below.}}

{Before proceeding to introduce the duo-landmark integral operators, we introduce a model reduction scheme which largely generalizes the framework proposed in \cite{DW2}.} Let 
\begin{equation}\label{eq_covsetting}
	\operatorname{Cov}(X)=\bSig_1, \qquad  \operatorname{Cov}(Y)=\bSig_2. 
\end{equation}
Due to independence, we have that for $Z=X-Y,$ $\operatorname{Cov}(Z):= \bSig=\bSig_1+\bSig_2. $ Under Assumption \ref{assum_signal}, it follows that the rank $r=\operatorname{rank}(\bSig)$ satisfies $\max\{r_1, r_2\} \leq r \leq r_1+r_2$, and that  there exists a rotation matrix $\bR\in\R^{p\times p}$ such  that $\bR Z=(Z_1, Z_2,\cdots, Z_r, 0, \cdots, 0)^\top\in\R^p. $

Let the spectral decomposition of $\operatorname{Cov}(Z_1, \cdots, Z_r)$ be $\bSig_r=\bV \bTheta \bV^\top,$ where $\bTheta=\operatorname{diag}(\theta_1, \cdots, \theta_r)$ is a diagonal matrix of  the eigenvalues and $\bV$ contains the eigenvectors as its columns. For the orthogonal matrix $\bO\in\R^{p\times p}$ given by
$
\bO=\begin{bmatrix}
	\bV^\top & {\bf 0}\\
	{\bf 0} & {\bf I}_{p-r}
\end{bmatrix},
$
it holds that 
$
	\text{Cov}(\bO \bR Z) = \begin{bmatrix}
		\bTheta& {\bf 0}\\
		{\bf 0} & {\bf 0}
	\end{bmatrix}.
$
Based on the above discussion, since Euclidean distance is invariant to rotations, for (\ref{eq_clearnsignalkernel}), we have
\[
\exp\bigg(-\frac{\|\xb^0_i-\yb^0_j\|_2^2}{h^0_n} \bigg)=\exp\bigg(-\frac{\|\bO \bR\xb^0_i-\bO \bR\yb^0_j\|_2^2}{h^0_n} \bigg).
\]
Consequently, without loss of generality, we can directly use $\bO \bR\xb^0_i-\bO \bR\yb^0_j$ and focus on
\begin{equation}\label{eq_datareducedstructure}
	\zb^0_{ij}:=\bO \bR (\xb_i^0-\yb^0_j)=(z_{ij,1}, \cdots, z_{ij,r}, \mathbf{0}),
\end{equation}
with
\begin{equation}\label{eq_covstructure}
	\operatorname{Cov}(z_{ij,1}, \cdots, z_{ij,r})=\operatorname{diag}(\theta_1, \cdots, \theta_r). 
\end{equation}
Intuitively, the parameters $\{\theta_1,...,\theta_r\}$ are {jointly} determined by the probability measures $\widetilde{\mathbb{P}}_1$ and $\widetilde{\mathbb{P}}_2$ capturing the signal structures of the individual datasets, as well as the structural sharing between the two embedded manifolds $\iota_1(\mathcal{M}_1)$ and $\iota_2(\mathcal{M}_2)$.
Denote the sets 
\begin{equation}\label{eq_setstwo}
	\mathcal{S}_1=\left\{ \bO \bR \xb^0: \ \xb^0 \in \iota_1(\mathcal{M}_1)  \right\}, \qquad \mathcal{S}_2=\left\{ \bO \bR \yb^0: \ \yb^0 \in \iota_2(\mathcal{M}_2)  \right\}. 
\end{equation}
We define the probability measure on $\mathcal{S}_{1}$ for $\bO \bR X$ based on $\widetilde{\mathbb{P}}_1$ as $\widetilde{\mathsf{P}}_1,$ and define the measure on $\mathcal{S}_2$ for $\bO \bR Y$ based on  $\widetilde{\mathbb{P}}_2$ as $\widetilde{\mathsf{P}}_2.$ For random vectors $X'$ and $Y'$ respectively defined on $\mathcal{S}_{1}$ and $\mathcal{S}_{2}$ associated with $\widetilde{\mathsf{P}}_1$ and $\widetilde{\mathsf{P}}_2$, according to Assumption \ref{assum_commonstructureassumption}, it is easy to see that there exists some $d'$-dimensional ($d' \leq r$) subvectors $X_d'$ and $Y_d',$ and some differential  mapping $\vartheta': \mathbb{R}^{d'} \rightarrow \mathbb{R}^{d'}$ satisfying 
\begin{equation}\label{eq_reducedmapping}
Y_{d'} \simeq \vartheta'(X_{d'}). 
\end{equation}
Next we introduce the duo-landmark integral operators for the reduced model (\ref{eq_datareducedstructure}) on the sets (\ref{eq_setstwo}).
%, inspired by \cite{shen2022robust,10.1093/imaiai/iaac013}.
For notional simplicity, we set that for $\bm{x}, \bm{y} \in \mathbb{R}^p$  
\begin{equation}\label{eq_originalkernel}
	k(\bm{x}, \bm{y})=\exp\left(-\|\bm{x}-\bm{y} \|_2^2/h^0_n \right).
\end{equation}

{ Based on the kernel in (\ref{eq_originalkernel}), the joint manifold model in Assumption \ref{assum_signal} and the common structures in Assumption \ref{assum_commonstructureassumption}}, we now define a pair of new kernel functions as follows. { For $\ell\in\{1,2\}$, and for any $\bm{z} \in \mathcal{S}_{\ell},$ for notional simplicity, we write it as $\bm{z}=\begin{pmatrix} \bm{z}_{d'} \\ \bm{z}_{d'}^\perp\end{pmatrix},$ where $\bm{z}_{d'}$ corresponds to the $d'$-dimensional subvector, and  {assume the direct-sum decomposition $\mathcal{S}_{\ell}=\mathcal{S}_{\ell,d'} \oplus \mathcal{S}_{\ell,d'}^\perp$ hold, so   that $\bm{z}_{d'} \in \mathcal{S}_{\ell,d'}$ and $\bm{z}_{d'}^\perp \in \mathcal{S}_{\ell,d'}^\perp.$ Moreover, we denote $\widetilde{\mathsf{P}}_{\ell,d'}$ and $\widetilde{\mathsf{P}}^\perp_{\ell,d'}$ as the  independent marginal components of the measures $\widetilde{\mathsf{P}}_{\ell}$ in the subspaces $\mathcal{S}_{\ell,d'}$ and $\mathcal{S}_{\ell,d'}^\perp,$ respectively.} 

%{ Recall (\ref{eq_density}) and (\ref{expec}).}

%{\color{red}[add common structures here, using mapping]}

\begin{defn}[Convolutional landmark kernels]\label{defn_clmd}
	For $\bm{x}_1, \bm{x}_2 \in \mathcal{S}_1,$ we define the kernel $k_1(\cdot, \cdot): \mathcal{S}_1 \times \mathcal{S}_1 \rightarrow \mathbb{R}$ as
	\begin{equation}\label{eq_kernelone}
		k_1(\bm{x}_1, \bm{x}_2):= \int_{\mathcal{S}_{1,d'}} \int_{\mathcal{S}_{2,d'}^\perp} k\left(\bm{x}_1, \begin{pmatrix}
\vartheta'(\bm{z}_{d'}) \\ \bm{z}_{d'}^\perp		
\end{pmatrix} \right) k\left(\begin{pmatrix}
\vartheta'(\bm{z}_{d'}) \\ \bm{z}_{d'}^\perp		
\end{pmatrix} , \bm{x}_2\right) \widetilde{\mathsf{P}}^\perp_{2,d'}(\mathrm{d} \bm{z}_{d'}^\perp) \widetilde{\mathsf{P}}_{1,d'}(\mathrm{d} \bm{z}_{d'}) . 
	\end{equation}
	Similarly, for $\bm{y}_1, \bm{y}_2 \in \mathcal{S}_2,$ we define the kernel $k_2(\cdot,\cdot): \mathcal{S}_2 \times \mathcal{S}_2 \rightarrow \mathbb{R}$ as 
	\begin{equation}\label{eq_kerneltwo}
		k_2(\bm{y}_1, \bm{y}_2):= \int_{\mathcal{S}_{2,d'}} \int_{\mathcal{S}_{1,d'}^\perp} k\left(\bm{y}_1, \begin{pmatrix}
(\vartheta')^{-1}(\bm{z}_{d'}) \\ \bm{z}_{d'}^\perp		
\end{pmatrix} \right) k\left(\begin{pmatrix}
(\vartheta')^{-1}(\bm{z}_{d'}) \\ \bm{z}_{d'}^\perp		
\end{pmatrix} , \bm{y}_2\right) \widetilde{\mathsf{P}}_{1,d'}^\perp(\mathrm{d} \bm{z}_{d'}^\perp) \widetilde{\mathsf{P}}_{2,d'}(\mathrm{d} \bm{z}_{d'}). 
	\end{equation}
	We shall call the kernels $k_1(\cdot,\cdot)$ and $k_2(\cdot,\cdot)$ the convolutional landmark kernels. 
\end{defn}
}

{
%In the above definitions, the second equations of (\ref{eq_kernelone}) and (\ref{eq_kerneltwo}) follow from (\ref{eq_density}), which relate the kernel functions $k_1(\cdot, \cdot)$ and $k_2(\cdot, \cdot)$ to the underlying manifolds $(\mathcal{M}_\ell, \iota_\ell)$, $\ell=1,2$. 
{The above definitions hold only if $d' > 0$, meaning that the datasets share at least some partially common structures.} {As will be seen from the discussions in Section \ref{manifold.sec} of our supplement, the kernel functions can be directly related to the manifolds. The following proposition shows that if $d'>0,$ both kernels in (\ref{eq_kernelone}) and (\ref{eq_kerneltwo}) are positive definite so that the integral operators and reproducing kernel Hilbert spaces (RKHSs; see Section \ref{append_RMRKHS} for some background) can be constructed based on them accordingly. } Its proof can be found in Section \ref{sec_31proof} of the supplement.

{
\begin{prop}\label{prop_pdfkernel}
	Suppose Assumptions \ref{assum_signal} and \ref{assum_commonstructureassumption} hold. Recall (\ref{eq_reducedmapping}). If the kernels in Definition \ref{defn_clmd} are properly defined in the sense that $d'>0,$ then we can rewrite 
	\begin{equation}\label{eq_kernelone1111}
		k_1(\bm{x}_1, \bm{x}_2)=\int_{\mathcal{S}_2} k(\bm{x}_1, \bm{z}) k(\bm{z}, \bm{x}_2) \widetilde{\mathsf{P}}_2(\mathrm{d} \bm{z}), \ \
		k_2(\bm{y}_1, \bm{y}_2)=\int_{\mathcal{S}_1} k(\bm{y}_1, \bm{z}) k(\bm{z}, \bm{y}_2) \widetilde{\mathsf{P}}_1(\mathrm{d} \bm{z}). 
	\end{equation}
Moreover, the above kernels are bounded and positive definite. 
\end{prop}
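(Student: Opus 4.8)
The plan is to first establish the compact integral representations in \eqref{eq_kernelone1111} by a change of variables, and then read off both boundedness and positive definiteness directly from them.

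First I would use the product structure of the landmark measure. By the independence of the marginal components $\widetilde{\mathsf{P}}_{2,d'}$ and $\widetilde{\mathsf{P}}^\perp_{2,d'}$ assumed just before Definition \ref{defn_clmd}, the measure $\widetilde{\mathsf{P}}_2$ factors as $\widetilde{\mathsf{P}}_{2,d'}\otimes\widetilde{\mathsf{P}}^\perp_{2,d'}$ on the direct sum $\mathcal{S}_2=\mathcal{S}_{2,d'}\oplus\mathcal{S}_{2,d'}^\perp$. Writing a generic $\bm{z}\in\mathcal{S}_2$ as $\bm{z}=(\bm{w},\bm{z}_{d'}^\perp)$ with $\bm{w}\in\mathcal{S}_{2,d'}$ and $\bm{z}_{d'}^\perp\in\mathcal{S}_{2,d'}^\perp$, Fubini's theorem gives
\[
\int_{\mathcal{S}_2} k(\bm{x}_1,\bm{z})k(\bm{z},\bm{x}_2)\,\widetilde{\mathsf{P}}_2(\mathrm{d}\bm{z})=\int_{\mathcal{S}_{2,d'}}\!\int_{\mathcal{S}_{2,d'}^\perp} k\!\left(\bm{x}_1,\begin{pmatrix}\bm{w}\\\bm{z}_{d'}^\perp\end{pmatrix}\right)k\!\left(\begin{pmatrix}\bm{w}\\\bm{z}_{d'}^\perp\end{pmatrix},\bm{x}_2\right)\widetilde{\mathsf{P}}^\perp_{2,d'}(\mathrm{d}\bm{z}_{d'}^\perp)\,\widetilde{\mathsf{P}}_{2,d'}(\mathrm{d}\bm{w}).
\]
The decisive step is a change of variables in the outer ($\bm{w}$) integral. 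Since \eqref{eq_reducedmapping} asserts $Y_{d'}\simeq\vartheta'(X_{d'})$, the marginal $\widetilde{\mathsf{P}}_{2,d'}$ is exactly the pushforward $(\vartheta')_\ast\widetilde{\mathsf{P}}_{1,d'}$; substituting $\bm{w}=\vartheta'(\bm{z}_{d'})$ and applying the pushforward (law-of-the-unconscious-statistician) identity to the partial integral over $\bm{z}_{d'}^\perp$ turns the right-hand side into precisely the defining integral \eqref{eq_kernelone} of $k_1$. This proves the first identity in \eqref{eq_kernelone1111}. The second identity is obtained symmetrically: since $\vartheta'$ is a diffeomorphism, $X_{d'}\simeq(\vartheta')^{-1}(Y_{d'})$, so $\widetilde{\mathsf{P}}_{1,d'}=((\vartheta')^{-1})_\ast\widetilde{\mathsf{P}}_{2,d'}$, and the same argument transforms $\int_{\mathcal{S}_1} k(\bm{y}_1,\bm{z})k(\bm{z},\bm{y}_2)\,\widetilde{\mathsf{P}}_1(\mathrm{d}\bm{z})$ into \eqref{eq_kerneltwo}.

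Given the representations in \eqref{eq_kernelone1111}, the remaining claims are immediate. For boundedness, the Gaussian kernel in \eqref{eq_originalkernel} satisfies $0< k\le 1$, so $|k_1(\bm{x}_1,\bm{x}_2)|\le\widetilde{\mathsf{P}}_2(\mathcal{S}_2)=1$, and likewise $|k_2|\le 1$, since $\widetilde{\mathsf{P}}_1,\widetilde{\mathsf{P}}_2$ are probability measures. For positive definiteness, I would take any finite set $\{\bm{x}_i\}_{i=1}^N\subset\mathcal{S}_1$ and reals $\{c_i\}$, and exploit the symmetry $k(\bm{x}_i,\bm{z})=k(\bm{z},\bm{x}_i)$ together with \eqref{eq_kernelone1111} to write
\[
\sum_{i,j} c_i c_j\, k_1(\bm{x}_i,\bm{x}_j)=\int_{\mathcal{S}_2}\Big(\sum_{i=1}^N c_i\, k(\bm{x}_i,\bm{z})\Big)^2\,\widetilde{\mathsf{P}}_2(\mathrm{d}\bm{z})\ge 0,
\]
so every Gram matrix of $k_1$ is positive semidefinite, which is the sense of positive definiteness required to invoke the Moore--Aronszajn theorem and build the associated integral operators and RKHSs; the same computation applies to $k_2$ with $\widetilde{\mathsf{P}}_1$.

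The main obstacle I anticipate is the measure-theoretic bookkeeping of the change-of-variables step rather than any hard estimate: one must verify that the direct-sum decomposition $\mathcal{S}_2=\mathcal{S}_{2,d'}\oplus\mathcal{S}_{2,d'}^\perp$ is genuinely compatible with the factorization of $\widetilde{\mathsf{P}}_2$, and that the pushforward identity is applied only to the $d'$-dimensional block while the orthogonal coordinate $\bm{z}_{d'}^\perp$ is carried through untouched (it is integrated against $\widetilde{\mathsf{P}}^\perp_{2,d'}$ on both sides). The hypothesis $d'>0$ is exactly what makes this block decomposition non-degenerate so that the kernels are well defined, while the invertibility of the diffeomorphism $\vartheta'$ is what is needed to treat $k_2$ symmetrically.
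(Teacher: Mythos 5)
Your proposal is correct, and its first two components match the paper's own proof: the identity \eqref{eq_kernelone1111} is established there by exactly your change-of-variables argument (the pushforward observation that if $\bm{z}_{d'}\sim\widetilde{\mathsf{P}}_{1,d'}$ then $\vartheta'(\bm{z}_{d'})\sim\widetilde{\mathsf{P}}_{2,d'}$, applied to the outer integral in \eqref{eq_kernelone}, with the orthogonal block carried through untouched, and the symmetric substitution via $(\vartheta')^{-1}$ for $k_2$), and boundedness is read off the representation in the same one-line way. Where you genuinely diverge is positive definiteness. You verify it directly from \eqref{eq_kernelone1111}, using the symmetry of the Gaussian kernel to write the Gram quadratic form as
\begin{equation*}
\sum_{i,j} c_i c_j\, k_1(\bm{x}_i,\bm{x}_j)=\int_{\mathcal{S}_2}\Big(\sum_{i} c_i\, k(\bm{x}_i,\bm{z})\Big)^2\,\widetilde{\mathsf{P}}_2(\mathrm{d} \bm{z})\ \ge\ 0,
\end{equation*}
which is sound (Fubini is harmless since $0<k\le 1$ and the measures are probability measures) and is in fact more elementary and more general than what the paper does --- it works verbatim for any bounded symmetric positive definite base kernel, not just the Gaussian. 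The paper instead routes through the auxiliary construction of Section \ref{sec_auxililaryincrement}: it extends $\widetilde{\mathsf{P}}_1,\widetilde{\mathsf{P}}_2$ to measures $\mathring{\mathtt{P}}_1,\mathring{\mathtt{P}}_2$ on the common space $\mathcal{S}=\mathcal{S}_1\cup\mathcal{S}_2$, expands $k$ by Mercer's theorem, and derives the explicit spectral representation $k_1(\bm{x}_1,\bm{x}_2)=\sum_j(\lambda_j^{a,2})^2\,\psi_j^{a,2}(\bm{x}_1)\psi_j^{a,2}(\bm{x}_2)$ (equation \eqref{eq_k1proof}), concluding positive definiteness from the converse of Mercer's theorem. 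That heavier machinery is not gratuitous: the representation it produces --- eigenfunctions of $k_1$ coincide with those of $\mathcal{G}_2$ while the eigenvalues are squared --- is precisely what is reused in the discussion following Definition \ref{defn_landmarkintegral} and in the proof of Proposition \ref{eigenvalue.prop}. So your argument proves the proposition as stated with less overhead, while the paper's proof is engineered to also deliver the spectral facts needed downstream; if you adopted your version, that Mercer expansion would still have to be established separately later.
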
}
%\begin{proof}
%	See .  
%\end{proof}

{ The above proposition shows that when $d'>0$, $k_1(\cdot, \cdot)$ can be understood as a convolution of two kernels using $\iota_2(\mathcal{M}_2)$ as a landmark population. Similar discussion applies to $k_2(\cdot,\cdot).$ }
Based on Proposition \ref{prop_pdfkernel}, we now introduce the duo-landmark integral operators.

\begin{defn}[Duo-landmark integral operators]\label{defn_landmarkintegral} { Suppose the assumptions of Proposition \ref{prop_pdfkernel} hold,} for $\ell\in\{1,2\}$ and $g_\ell \in \mathcal{L}_2(\mathcal{S}_\ell, \widetilde{\mathsf{P}}_\ell),$ we define the duo-landmark integral operators $\mathcal{K}_\ell$ over $\mathcal{L}_2(\mathcal{S}_\ell, \widetilde{\mathsf{P}}_\ell)$ as 
		\begin{equation*}
		\mathcal{K}_\ell g_\ell(\bm{z}):=\int_{\mathcal{S}_{\ell}} k_\ell(\bm{z}, \bm{y}) g_\ell(\bm{y}) \widetilde{\mathsf{P}}_\ell(\mathrm{d} \bm{y}).
%		=\int_{\mathcal{M}_{\ell}} k_\ell(\bm{z}, \bO\bR\iota_\ell(\bm{w})) g_\ell(\bO\bR\iota_\ell(\bm{w})){\sf f}_\ell(\bm{w})\mathrm{d} V_\ell(\bm{w}).
	\end{equation*} 
\end{defn}
	Since $k(\cdot, \cdot)$ in (\ref{eq_originalkernel}) is a positive-define kernel, it is well-known  \citep{scholkopf2002learning} that  $\mathcal{L}_2(\mathcal{S}_1, \widetilde{\mathsf{P}}_1)$ and $\mathcal{L}_2(\mathcal{S}_2, \widetilde{\mathsf{P}}_2)$ equipped  with the kernel (\ref{eq_originalkernel}) can be made into two RKHSs, respectively, denoted as $\mathcal{H}_1$ and $\mathcal{H}_2$. In addition, the two RKHSs are closely related to two other integral operators $\mathcal{G}_\ell, \ell=1,2,$ given by 
	\begin{equation}\label{eq_initialintegraloperators}
		\mathcal{G}_\ell g_\ell(\bm{x}):=\int_{\mathcal{S}_\ell} k(\bm{x},\bm{y}) g_\ell(\bm{y}) \widetilde{\mathsf{P}}_\ell(\mathrm{d} \bm{y}),  \qquad  g_\ell \in \mathcal{L}_2(\mathcal{S}_\ell, \widetilde{\mathsf{P}}_\ell). 
	\end{equation}
	Furthermore, $k(\cdot,\cdot)$ can be decomposed using Mercer's theorem (see Theorem 2.10 of \cite{scholkopf2002learning}) in the sense that for $\ell=1,2,$ $k(\bm{x}, \bm{y})= \sum_{j} \lambda^\ell_j \psi_j^\ell(\bm{x}) \psi_j^\ell(\bm{y}),$ where $\{\lambda_j^\ell\}$ and $\{\psi_j^\ell(\cdot)\}$ are the eigenvalues and eigenfunctions of the integral operators $\mathcal{G}_\ell$ in (\ref{eq_initialintegraloperators}). 
	
	Similarly, as in Definition \ref{defn_landmarkintegral}, one can build RKHSs using the kernels (\ref{eq_kernelone}) and (\ref{eq_kerneltwo}) on $\mathcal{L}_2(\mathcal{S}_1, \widetilde{\mathsf{P}}_1)$ and $\mathcal{L}_2(\mathcal{S}_2, \widetilde{\mathsf{P}}_2).$  Using Mercer's theorem, one can also expand the kernels $k_1(\cdot, \cdot)$ and $k_2(\cdot, \cdot).$ {As demonstrated in the proof of Proposition \ref{prop_pdfkernel}, the kernel  $k_1$ is closely related to $\mathcal{G}_2$ in the sense that one can construct an RKHS for $\mathcal{G}_2$ such that the eigenfunctions of $k_1$ are identical to those of the operator $\mathcal{G}_2$, while the eigenvalues of $k_1$ are squares of those of $\mathcal{G}_2$. The kernel $k_2$ is closely related to the operator $\mathcal{G}_1$ in the same way. From the above discussion, we can see the difference between {the manifold-specific operators $\mathcal{G}_\ell, \ell=1,2,$ in (\ref{eq_initialintegraloperators}) and the duo-landmark operators $\mathcal{K}_\ell, \ell=1,2,$ in Definition \ref{defn_landmarkintegral} incorporating both manifolds} --  the former operators only utilize the information from one dataset whereas the latter operators aim at integrating the information from two datasets. We also refer to Remark \ref{rem_algo1} of our supplement for more discussions.}

{	 Finally, we point out that the validity of Definitions \ref{defn_clmd} and \ref{defn_landmarkintegral} and Proposition \ref{prop_pdfkernel} rely on the existence of common structures between the two datasets, i.e., $d'>0$ in (\ref{eq_reducedmapping}).  In particular, Step 1 of our Algorithm \ref{al0} is designed to practically to check such an assumption. }

\subsection{Spectral convergence analysis}\label{sec_spectralanalysis}

In this section, we  show that when both datasets are clean signals (i.e., $\xb_i=\xb_i^0$ and $\yb_j=\yb_j^0$ in (\ref{model})), the embeddings obtained from Algorithm \ref{al0} have close relations with the eigenfunctions of the operators $\mathcal{K}_1$ and $\mathcal{K}_2$ in Definition \ref{defn_landmarkintegral}. More specifically,  the singular values and vectors of the matrix $\bK^0$ in (\ref{eq_clearnsignalkernel}) are closely related to the eigenvalues and eigenfunctions of the operators. Before stating our results, we first provide an important result on the eigenvalues of the operators whose proof can be found in Section \ref{sec_32proof} of the supplement. Recall that  $\mathcal{H}_1$ and $\mathcal{H}_2$ are the RKHSs built on  $\mathcal{L}_2(\mathcal{S}_1, \widetilde{\mathsf{P}}_1)$ and $\mathcal{L}_2(\mathcal{S}_2, \widetilde{\mathsf{P}}_2)$ separately  using the kernel (\ref{eq_originalkernel}).

\begin{prop} \label{eigenvalue.prop}
{	Suppose the assumptions of Proposition \ref{prop_pdfkernel} hold. Moreover, we assume that there exists  some common measurable space $\mathcal{N} \subset \mathbb{R}^p$ that $\mathcal{H}_i, i=1,2,$ are compactly embedded  into $\mathcal{L}_2(\mathcal{N}, \widetilde{\mathsf{P}}_i).$  Then we have that operators $\mathcal{K}_1$ and $\mathcal{K}_2$ have the same set of nonzero eigenvalues.} 
\end{prop}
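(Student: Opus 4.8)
The plan is to realize the two duo-landmark operators as $\mathcal{K}_1=\mathcal{A}\mathcal{A}^*$ and $\mathcal{K}_2=\mathcal{A}^*\mathcal{A}$ for a single compact \emph{cross-dataset} operator $\mathcal{A}$, and then invoke the classical fact that $\mathcal{A}\mathcal{A}^*$ and $\mathcal{A}^*\mathcal{A}$ share the same nonzero spectrum. Concretely, I would introduce
\[
\mathcal{A}:\mathcal{L}_2(\mathcal{S}_2,\widetilde{\mathsf{P}}_2)\to\mathcal{L}_2(\mathcal{S}_1,\widetilde{\mathsf{P}}_1),\qquad (\mathcal{A}g)(\bm{x})=\int_{\mathcal{S}_2}k(\bm{x},\bm{z})\,g(\bm{z})\,\widetilde{\mathsf{P}}_2(\mathrm{d}\bm{z}),
\]
where $k$ is the Gaussian kernel in (\ref{eq_originalkernel}). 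Since $k$ is bounded (Proposition \ref{prop_pdfkernel}) and $\widetilde{\mathsf{P}}_1,\widetilde{\mathsf{P}}_2$ are probability measures, the kernel of $\mathcal{A}$ is square-integrable against $\widetilde{\mathsf{P}}_1\otimes\widetilde{\mathsf{P}}_2$, so $\mathcal{A}$ is Hilbert--Schmidt and hence compact; the hypothesis that $\mathcal{H}_1,\mathcal{H}_2$ embed compactly into $\mathcal{L}_2(\mathcal{N},\cdot)$ supplies the common ambient space $\mathcal{N}$ in which both families of eigenfunctions reside and guarantees the operators possess a genuine discrete spectrum (so that ``set of eigenvalues'' is meaningful).

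Next, using the symmetry $k(\bm{x},\bm{z})=k(\bm{z},\bm{x})$ of the Gaussian kernel together with Fubini's theorem, I would compute the adjoint
\[
(\mathcal{A}^*f)(\bm{z})=\int_{\mathcal{S}_1}k(\bm{z},\bm{x})\,f(\bm{x})\,\widetilde{\mathsf{P}}_1(\mathrm{d}\bm{x}),\qquad \mathcal{A}^*:\mathcal{L}_2(\mathcal{S}_1,\widetilde{\mathsf{P}}_1)\to\mathcal{L}_2(\mathcal{S}_2,\widetilde{\mathsf{P}}_2).
\]
Applying Fubini once more and the convolution representation of $k_1,k_2$ from Proposition \ref{prop_pdfkernel}, I would then verify the operator identities $\mathcal{K}_1=\mathcal{A}\mathcal{A}^*$ and $\mathcal{K}_2=\mathcal{A}^*\mathcal{A}$: composing $\mathcal{A}$ after $\mathcal{A}^*$ reproduces exactly the inner kernel $\int_{\mathcal{S}_2}k(\bm{x},\bm{z})k(\bm{z},\bm{x}')\,\widetilde{\mathsf{P}}_2(\mathrm{d}\bm{z})=k_1(\bm{x},\bm{x}')$, matching the operator $\mathcal{K}_1$ of Definition \ref{defn_landmarkintegral}, and symmetrically the composition $\mathcal{A}^*\mathcal{A}$ yields the kernel $k_2$.

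Finally, I would conclude with the standard spectral equivalence for bounded operators between Hilbert spaces: $\mathcal{A}\mathcal{A}^*$ and $\mathcal{A}^*\mathcal{A}$ have the same nonzero eigenvalues with equal multiplicities, the correspondence being $f\mapsto\mathcal{A}^*f$ (and conversely $g\mapsto\mathcal{A}g$). Indeed, if $\mathcal{K}_1f=\lambda f$ with $\lambda\neq0$ and $f\neq0$, then $g:=\mathcal{A}^*f\neq0$ (else $\lambda f=\mathcal{A}\mathcal{A}^*f=0$), and $\mathcal{K}_2 g=\mathcal{A}^*\mathcal{A}\mathcal{A}^*f=\mathcal{A}^*(\lambda f)=\lambda g$; the reverse inclusion is identical. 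This delivers the asserted equality of the nonzero spectra of $\mathcal{K}_1$ and $\mathcal{K}_2$.

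I expect the main obstacle to be functional-analytic bookkeeping rather than any deep idea: checking that $\mathcal{A}$ is well-defined and bounded between two \emph{distinct} $\mathcal{L}_2$ spaces carrying different measures, that Fubini legitimately applies both to identify $\mathcal{A}^*$ and to form the products $\mathcal{A}\mathcal{A}^*,\mathcal{A}^*\mathcal{A}$ (justified by boundedness of $k$ and finiteness of the measures), and crucially that the eigenfunctions of the two operators can be compared within a single space --- which is exactly where the compact-embedding hypothesis into $\mathcal{L}_2(\mathcal{N},\cdot)$ is invoked, both to secure compactness and to place $\mathcal{H}_1,\mathcal{H}_2$ into one common $\mathcal{L}_2$ framework.
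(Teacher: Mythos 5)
Your proof is correct, but it takes a genuinely different --- and in fact more economical --- route than the paper's. The paper never introduces the cross-operator $\mathcal{A}$; instead it works on the union $\mathcal{S}=\mathcal{S}_1\cup\mathcal{S}_2$ with the extended measures $\mathring{\mathtt{P}}_1,\mathring{\mathtt{P}}_2$ of Section \ref{sec_auxililaryincrement}, expands $k_1$ and $k_2$ in the two Mercer bases $\{\psi_j^{a,1}\},\{\psi_j^{a,2}\}$ of auxiliary RKHSs, transfers the eigenvalue equations $\mathcal{K}_1\phi=\lambda\phi$ and $\mathcal{K}_2\psi=\lambda\psi$ to coefficient sequences in $\ell_2(\mathbb{R})$ via two sequence-space operators $\mathcal{T}_1,\mathcal{T}_2$, and then invokes Lemma \ref{lem_Operatorcommuteshavesameeigenvalues} to conclude that $\mathcal{T}_1\mathcal{T}_2$ and $\mathcal{T}_2\mathcal{T}_1$ share nonzero eigenvalues. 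That is where the compact-embedding hypothesis genuinely enters the paper's argument: via Lemma \ref{lem_RKHSextension} it guarantees that the Mercer eigenfunctions form orthonormal bases of the relevant $\mathcal{L}_2$ spaces, which is what licenses expanding an eigenfunction of $\mathcal{K}_1$ in the basis $\{\psi_i^{a,2}\}$. Your factorization $\mathcal{K}_1=\mathcal{A}\mathcal{A}^*$, $\mathcal{K}_2=\mathcal{A}^*\mathcal{A}$ short-circuits all of this: the identities follow from the representation (\ref{eq_kernelone1111}) and Fubini (legitimate since $k$ is bounded and the measures are probability measures), $\mathcal{A}$ is Hilbert--Schmidt so both products are compact, and the intertwining $f\mapsto\mathcal{A}^*f$ is the textbook argument with correct handling of the $\lambda\neq 0$ case. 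One small correction to your closing commentary: your proof does not actually need the compact-embedding hypothesis at all --- compactness comes for free from the Hilbert--Schmidt property, and the eigenvalue correspondence maps directly between the two distinct $\mathcal{L}_2$ spaces without ever placing the eigenfunctions in a common ambient space --- so that hypothesis is essential to the paper's basis-expansion route (and to the downstream Theorem \ref{thm_cleanconvergence}), not to your argument; you thus prove the proposition under strictly weaker assumptions. What your approach buys is a shorter proof and an explicit correspondence $f\mapsto\mathcal{A}^*f$, $g\mapsto\mathcal{A}g$ between the eigenfunction pairs --- precisely the singular-function structure implicitly exploited by the SVD in Algorithm \ref{al0}; what the paper's approach buys is the Mercer-coefficient machinery on $\mathcal{S}_1\cup\mathcal{S}_2$, which it reuses in its spectral convergence analysis.
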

%\begin{proof}
%	See  
%\end{proof} 

\begin{rem}
We provide two remarks on Proposition \ref{eigenvalue.prop}. First, our proposed Algorithm \ref{al0} provides the joint embeddings by using the singular values and vectors of the matrix in (\ref{eigen_decomp}). Therefore, if the two embeddings are associated with some operators, it is necessary that these two operators must share the same nonzero eigenvalues. In this sense, Proposition \ref{eigenvalue.prop} is the basis of our theoretical analysis. Second, the assumption that $\mathcal{H}_i, i=1,2,$ are compactly embedded into the $\mathcal{L}_2$ spaces is a mild assumption and used frequently in the literature  \citep{hou2023sparse, JMLR:v17:14-023, steinwart2012mercer}. It guarantees that the orthonormal basis of the RKHSs and the $\mathcal{L}_2$ spaces are closely related; see Lemma \ref{lem_RKHSextension} for more precise statements. { Roughly speaking, it requires that the two manifolds have common structures so that the RKHSs built on them share common information (i.e., $d'>0$ in (\ref{eq_reducedmapping})). }
\end{rem}

Based on Proposition \ref{eigenvalue.prop}, we denote $\{\gamma_i\}$ as the nonzero eigenvalues of $\mathcal{K}_1$ and $\mathcal{K}_2$ in the decreasing order. Moreover, we denote $\{\phi_i(\bm{x})\}$ and $\{\psi_i(\bm{y})\}$ as the eigenfunctions of $\mathcal{K}_1$ and $\mathcal{K}_2,$ respectively. That is for $\bm{x} \in \mathcal{S}_1$ and $\bm{y} \in \mathcal{S}_2$
\begin{equation}\label{eq_eigefunctiondefinition}
	\mathcal{K}_1 \phi_i(\bm{x})=\gamma_i \phi_i(\bm{x}),  \quad  \ \mathcal{K}_2 \psi_i(\bm{y})=\gamma_i \psi_i(\bm{y}). 
\end{equation}
Moreover, for $\bK^0=(K^0(i,j))$ defined in (\ref{eq_clearnsignalkernel}), we denote 
\begin{equation}\label{eq_twomatrices}
	\bN_{01}=\frac{1}{n_1n_2} \bK^0 (\bK^0)^\top, \quad \bN_{02}=\frac{1}{n_2n_1} (\bK^0)^\top\bK^0. 
\end{equation}
It is clear that $\bN_{01}$ and $\bN_{02}$ share the same nonzero eigenvalues. We denote them as $\{\lambda_i\}$.  Moreover, we denote the associated eigenvectors of $\bN_{01}$ and $\bN_{02}$ as $\{\bu_i^0\}$ and $\{\bv_i^0\},$ respectively. As such, $\{\lambda^{1/2}_i
\}$ are the nonzero singular values of $(n_1 n_2)^{-1/2} \bK^0$, whereas $\{\bu_i^0\}$ and $\{\bv_i^0\}$  are  the left and right singular vectors of $(n_1 n_2)^{-1/2} \bK^0.$ To better describe the convergence of $\{\bu_i^0\}$ and $\{\bv_i^0\}$, for the kernel function defined in (\ref{eq_originalkernel}), we further denote 
\begin{equation}\label{eq_k01k02definition}
	\widehat{k}^0_1(\bm{x}_1, \bm{x}_2)=\frac{1}{n_2}\sum_{s=1}^{n_2} k(\bm{x}_1, \yb^0_s) k(\yb^0_s, \bm{x}_2), \qquad \widehat{k}^0_2(\bm{y}_1, \bm{y}_2)=\frac{1}{n_1}\sum_{s=1}^{n_1} k(\bm{y}_1, \xb^0_s) k(\xb^0_s, \bm{y}_2). 
\end{equation}
and define the functions
\begin{equation}\label{eq_empericialeigenfunction}
	\widehat{\phi}^0_{i}(\bm{x})=\frac{1}{\lambda_i\sqrt{n_1}} \sum_{j=1}^{n_1} \widehat{k}^0_1(\bm{x}, \xb^0_j) u^0_{ij}, \qquad \widehat{\psi}^0_i(\bm{y})=\frac{1}{\lambda_i\sqrt{n_2}} \sum_{j=1}^{n_2}\widehat{k}^0_2(\bm{y}, \yb^0_j) v^0_{ij}, 
\end{equation} 
where $\ub^0_i=(u^0_{i1}, \cdots, u^0_{in_1})^\top$ and $\vb^0_i=(v^0_{i1}, \cdots, v^0_{in_2})^\top.$ By the above construction, we have that $\widehat{\phi}^0_i(\xb^0_j)=\sqrt{n_1}u^0_{ij}, \qquad \widehat{\psi}^0_i(\yb^0_j)=\sqrt{n_2}v^0_{ij}.$

 {In what follows, we always assume that $\{\gamma_i\}$ and $\{\lambda_i\}$ are sorted in a nonincreasing order with multiplicity}. 
For some constant $\delta>0,$ we define 
\begin{equation}\label{eq_defni}
	\mathsf{K} \equiv \mathsf{K}(\delta):=\operatorname{arg \ max}\left\{ i: \gamma_i \geq \delta \right\},
\end{equation}  \vspace{-3pt}
and {for each $1 \leq i \leq \mathsf{K}$, we define the index set ${\sf I}\equiv {\sf I}(i)$ such that $i\in {\sf I} \subset \{1,2,\cdots, \min\{n_1,n_2\}\}$ and for any $t\in {\sf I}$ we have $\gamma_t \equiv \gamma_i$ and 
	\begin{equation}\label{defn_sfr}
		\mathsf{r}_i:= \min_{j\in{\sf I}^c}|\gamma_i-\gamma_j|,
	\end{equation}
	 where ${\sf I}^c=\mathbb{N}\setminus {\sf I}$. We define the vector-valued functions $\bm{\phi}_{\sf I}(\bm{x}):=(\phi_i(\bm{x}))_{i\in \sf I}$ and $\widehat{\bm{\phi}}^0_{\sf I}(\bm{x}):=(\widehat{\phi}^0_i(\bm{x}))_{i\in \sf I}$, and similarly $\bm{\psi}_{\sf I}(\bm{x})$ and $\widehat{\bm{\psi}}^0_{\sf I}(\bm{x})$. % projection operators $\mathcal{P}_{1,\sf I}=\sum_{i\in {\sf I}}\gamma_i\phi_i\phi_i^*$ and $\mathcal{P}_{2,\sf I}=\sum_{i\in {\sf I}}\gamma_i\psi_i\psi_i^*$, where $\phi_i^*:\mathcal{L}_2(\mathcal{S}_1,\widetilde{\sf P}_1)\to \R$ such that $\phi_i^*g(\bm{y})=\int\phi_i( \bm{y}) g( \bm{y})\widetilde{\mathcal{\sf P}}_1(d \bm{y})$ and $\psi_i^*$ is defined similarly.}
	  %and $\mathcal{P}^{(n)}_{\sf I}=\sum_{i\in {\sf I}}\mu_i\phi^{(n)}_i(\phi^{(n)}_i)^*$ where $\{\phi_i^{(n)}\}$ are the eigenfunctions of $\mathcal{K}_n$ associated to the nonincreasing eigenvalues $\{\mu_i\}$ with multiplicity.
%and $\widehat{\phi}^0_i(\bm{x}),\phi_i(\bm{x})\in \mathcal{H}_{1}$ and $\widehat{\psi}^0_i(\bm{y}),\psi_i(\bm{y})\in \mathcal{H}_{2}$, where  $ \mathcal{H}_{1}$ and $ \mathcal{H}_{2}$ are RKHS associated with the kernel functions $	\widehat{k}^0_1(\bm{x}_1, \bm{x}_2)$ and $\widehat{k}^0_1(\bm{x}_1, \bm{x}_2)$, respectively. The detailed definition of an RKHS is given in the Supplementary Notes.

\begin{thm}\label{thm_cleanconvergence}
	Suppose the assumptions of Proposition \ref{eigenvalue.prop} hold. Then we have that when $n_1$ and $n_2$ are sufficiently large,
	\begin{equation}\label{v.conv0}
		\sup_i|\lambda_i-\gamma_i| \prec n_1^{-1/2}+n_2^{-1/2}. 
	\end{equation}
	Moreover,  for each $1\le i\le 	\mathsf{K}$ and $\sfr_i$ in (\ref{defn_sfr}) satisfying that for some small constant $\tau>0$ 
	\begin{equation}\label{eq_ricondition}
		n_1^{-1/2+\tau}+n_2^{-1/2+\tau}=\mathrm{o}(\sfr_i),
	\end{equation}
	we have that, for any $\bm{x}\in \mathcal{S}_1$, %there exists an eigenfunction ${\phi}_{i}$ such that
	\beq\label{eigenvector_leftspectral}
	\inf_{\bO\in O({\sf I})}\left\| \sqrt{\gamma_i} {\bm \phi}_{\sf I}(\bm{x}) -\sqrt{\lambda_i}\bO \widehat{\bm \phi}_{\sf I}^{0}(\bm{x}) \right\|_\infty	%|\sqrt{\gamma_i}{\phi}_{i}(\bm{x})-\sqrt{\lambda_i}\widehat{\phi}^0_{i}(\bm{x})|
	\prec \frac{1}{\sfr_i}\bigg(\frac{1}{\sqrt{n_1}}+\frac{1}{\sqrt{n_2}}\bigg)+\frac{1}{\sqrt{n_2}},
	\eeq
	and for any  $\bm{y}\in \mathcal{S}_2$,  %there exists an eigenfunction ${\psi}_{i}$ such that
	\beqs
		\inf_{\bO\in O({\sf I})}\left\| \sqrt{\gamma_i} {\bm \psi}_{\sf I}(\bm{y}) -\sqrt{\lambda_i}\bO \widehat{\bm \psi}_{\sf I}^{0}(\bm{y}) \right\|_\infty
%	|\sqrt{\gamma_i}{\psi}_{i}(\bm{y})-\sqrt{\lambda_i}\widehat{\psi}^0_{i}(\bm{y})|
\prec \frac{1}{\sfr_i}\bigg(\frac{1}{\sqrt{n_1}}+\frac{1}{\sqrt{n_2}}\bigg)+\frac{1}{\sqrt{n_1}},
	\eeqs
	where $\mathcal{S}_1$ and $\mathcal{S}_2$ are defined in (\ref{eq_setstwo}). 
	%{\color{red}[eigenfunction convergence analysis is missing: based on the current proofs, it seems that we should expect results similar to Corollary 1 of \cite{MR2558684} with almost identical techniques.]}
\end{thm}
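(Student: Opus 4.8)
The plan is to recognize $\bN_{01}$ and $\bN_{02}$ as empirical (Nystr\"om-type) approximations of the duo-landmark operators $\mathcal{K}_1$ and $\mathcal{K}_2$, and then to transfer spectral information from the operators to the matrices via concentration plus perturbation theory. The starting point is the algebraic identity $(\bN_{01})_{ij} = n_1^{-1}\,\widehat{k}^0_1(\xb_i^0, \xb_j^0)$ obtained directly from (\ref{eq_twomatrices}) and (\ref{eq_k01k02definition}), which exhibits a \emph{nested} empirical structure: the outer average over the $n_1$ points of $\mathcal{S}_1$ approximates the integral defining $\mathcal{K}_1$, while the inner kernel $\widehat{k}^0_1$ is itself an $n_2$-sample landmark average approximating the convolution kernel $k_1$ of Proposition \ref{prop_pdfkernel}. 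Following the strategy of \cite{DW2}, I would introduce the empirical operator $\widehat{\mathcal{K}}_1 g(\bm{x}) = n_1^{-1}\sum_{j} \widehat{k}^0_1(\bm{x}, \xb_j^0)\, g(\xb_j^0)$, whose nonzero eigenvalues coincide with $\{\lambda_i\}$ and whose Nystr\"om eigenfunctions are exactly the $\widehat{\phi}^0_i$ defined in (\ref{eq_empericialeigenfunction}); Proposition \ref{eigenvalue.prop} guarantees that the target limit $\{\gamma_i\}$ is shared between $\mathcal{K}_1$ and $\mathcal{K}_2$, making the comparison with the common $\{\lambda_i\}$ of $\bN_{01},\bN_{02}$ meaningful.

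For the eigenvalue bound (\ref{v.conv0}), I would control $\widehat{\mathcal{K}}_1 - \mathcal{K}_1$ in Hilbert--Schmidt norm by splitting it as $(\widehat{\mathcal{K}}_1 - \widetilde{\mathcal{K}}_1) + (\widetilde{\mathcal{K}}_1 - \mathcal{K}_1)$, where $\widetilde{\mathcal{K}}_1 g(\bm{x}) = \int_{\mathcal{S}_1}\widehat{k}^0_1(\bm{x},\bm{y}) g(\bm{y})\,\widetilde{\mathsf{P}}_1(\mathrm{d}\bm{y})$ uses the random inner-averaged kernel $\widehat{k}^0_1$ but integrates against the true measure $\widetilde{\mathsf{P}}_1$. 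The first difference is an $n_1$-sample empirical average versus its conditional expectation, contributing an $n_1^{-1/2}$ rate; the second captures the fluctuation $\widehat{k}^0_1 - k_1$ of the inner $n_2$-landmark average, contributing an $n_2^{-1/2}$ rate. Both are established by Bernstein/Hoeffding-type concentration, using the boundedness of the Gaussian kernel (\ref{eq_originalkernel}) and the sub-Gaussianity in Assumption \ref{assum_signal}, handled by conditioning so the two randomness sources decouple; this yields $\|\widehat{\mathcal{K}}_1 - \mathcal{K}_1\|_{\HS} \prec n_1^{-1/2} + n_2^{-1/2}$, and Weyl's (or Lidskii's) inequality then delivers (\ref{v.conv0}).

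For the eigenfunction bounds I would run a Davis--Kahan $\sin\Theta$ argument at the operator level on the eigenspace indexed by ${\sf I}$, with the spectral gap $\sfr_i$ of (\ref{defn_sfr}) entering the denominator and condition (\ref{eq_ricondition}) ensuring that the target eigenspace remains isolated under a perturbation of order $n_1^{-1/2}+n_2^{-1/2}$; the rotation $\bO\in O({\sf I})$ absorbs the non-uniqueness of eigenfunctions within a degenerate eigenspace. This perturbation step first gives $\mathcal{L}_2$ control. To upgrade it to the $\|\cdot\|_\infty$ statement in (\ref{eigenvector_leftspectral}), I would exploit the explicit Nystr\"om representation (\ref{eq_empericialeigenfunction}): since $\widehat{\phi}^0_i(\bm{x})$ is a kernel average divided by $\lambda_i$, a pointwise estimate follows by uniformly controlling $\widehat{k}^0_1(\bm{x},\cdot)$ over $\bm{x}\in\mathcal{S}_1$ and feeding in the already-established $\mathcal{L}_2$-convergence of the coefficient vectors $\ub^0_i$. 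The additional $n_2^{-1/2}$ term in the left bound (resp. $n_1^{-1/2}$ in the right bound) is precisely the residual error of approximating $k_1$ by the $\mathcal{S}_2$-landmark average $\widehat{k}^0_1$ (resp. $k_2$ by the $\mathcal{S}_1$-average), and the right-hand (singular vector) bound follows verbatim by swapping the roles of the two datasets and sample sizes.

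The hard part will be the uniform (sup-norm) control of the eigenfunctions rather than mere $\mathcal{L}_2$-convergence, since $\mathcal{S}_1,\mathcal{S}_2$ need not be compact and the comparison in (\ref{eigenvector_leftspectral}) is pointwise. This demands a uniform concentration argument for the random map $\bm{x}\mapsto\widehat{k}^0_1(\bm{x},\cdot)$ over $\bm{x}\in\mathcal{S}_1$, which I would obtain via an $\varepsilon$-net/chaining argument that leverages the smoothness and rapid decay of the Gaussian kernel together with the sub-Gaussian tails of the samples. A secondary difficulty is the genuinely two-level empirical structure: the inner landmark average over $n_2$ and the outer average over $n_1$ are coupled through $\widehat{k}^0_1$, so the concentration and perturbation estimates must be carried out jointly to produce the clean additive rate in (\ref{v.conv0}) and the cleanly separated error contributions in the eigenfunction bounds. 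Finally, the compact-embedding hypothesis of Proposition \ref{eigenvalue.prop}, via Lemma \ref{lem_RKHSextension}, is what aligns the orthonormal bases of the RKHSs and the $\mathcal{L}_2$ spaces and thereby makes the sup-norm bootstrapping legitimate.
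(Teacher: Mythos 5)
Your outline is correct and is, in substance, the route the paper takes: it likewise reads off the nested identity $(\bN_{01})_{ij}=n_1^{-1}\widehat{k}^0_1(\xb_i^0,\xb_j^0)$, separates the two randomness sources at rates $n_1^{-1/2}$ and $n_2^{-1/2}$ via Bernstein-type concentration with the bounded Gaussian kernel, and runs a gap-dependent perturbation step (Lemmas \ref{eigenvalue.lem1} and \ref{eigenvector.lem}, a resolvent contour-integral version of Davis--Kahan) with the rotation over $O({\sf I})$ absorbing degeneracy, exactly as you propose. Two differences are worth recording. First, your intermediate object is the mirror image of the paper's: you freeze the true measure and keep the random inner kernel (your $\widetilde{\mathcal K}_1$), whereas the paper freezes the deterministic convolution kernel $k_1$ and keeps the empirical points, comparing $\bN_{01}$ with the matrix $\bW_{01}$ of (\ref{eq_intermediatematrix}) through entrywise Bernstein plus Gershgorin to get $\|\bN_{01}-\bW_{01}\|\prec n_2^{-1/2}$, and then applying the RKHS operator-concentration lemma to the pair $(\mathcal K_1,\bW_{01})$ at rate $n_1^{-1/2}$. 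Both diagonals of the square work; the paper's choice has the small advantage that the operator-level lemma is only ever invoked with a fixed kernel, so no conditioning is needed at that step (your version must condition on $\{\yb_s^0\}$ and note that $\widehat{k}^0_1$ is still a bounded positive semidefinite kernel, which is true since it is a sum of terms $k(\cdot,\yb_s^0)k(\yb_s^0,\cdot)$). Second---and here your plan overshoots---the $\varepsilon$-net/chaining argument you flag as the hard part is not needed, for two reasons: the $\|\cdot\|_\infty$ in (\ref{eigenvector_leftspectral}) is the maximum over the finite multiplicity index set ${\sf I}$, and the statement is for each fixed $\bm{x}\in\mathcal{S}_1$ rather than a supremum over $\bm{x}$ on a single event; and the pointwise upgrade comes for free from the reproducing property, $|f(\bm{x})|\le\sqrt{k_1(\bm{x},\bm{x})}\,\|f\|_{K_1}$ with $k_1$ bounded, which is precisely the paper's step (\ref{eq_partoneeigenvector}) and requires no compactness of $\mathcal{S}_1$. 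With that correction, your three residual error sources (eigenvalue shift, kernel fluctuation $\widehat{k}^0_1-k_1$ at rate $n_2^{-1/2}$, and eigenvector perturbation at scale $\sfr_i^{-1}n_2^{-1/2}$) coincide with the paper's $E_1,E_2,E_3$ decomposition, and the second bound follows by swapping roles as you say.
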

{The technical proof is provided in Section \ref{sec_32proof} of the supplement. For further discussion of the technical details, see Remark \ref{rem_proofremark} therein.} We provide several remarks on Theorem \ref{thm_cleanconvergence}. First, it establishes the spectral convergence results of the matrices in (\ref{eq_twomatrices}) to the duo-landmark integral operators in Definition \ref{defn_landmarkintegral} under the clean signals. We now explain the high-level heuristics using the embeddings for $\{\xb_i^0\}$ as an example. Note that for all $1 \leq i, j \leq n_1,$ by law of large number, roughly speaking, we have that for the kernel in (\ref{eq_kernelone})
\begin{align}\label{eq_convulutionpart}
	n_1(\bN_{01})_{ij} =\frac{1}{n_2} \sum_{s=1}^{n_2} k(\xb^0_i, \yb^0_s) k(\yb^0_s, \xb^0_j) 
	\approx \int_{\mathcal{S}_2} k(\xb_i^0, \bm{z}) k(\bm{z}, \xb_j^0) \widetilde{\mathsf{P}}_2(\mathrm{d} \bm{z})\equiv k_1(\xb_i^0, \xb_j^0).
\end{align}
%&=\frac{1}{n_2} (\bK^0(\bK^0)^\top)_{ij}
Consequently, if one defines $\bW_{01}=(W_{01}(i,j)) \in \mathbb{R}^{n_1 \times n_1}$ with $W_{01}(i,j)=k_1(\xb_i^0, \xb_j^0),$ similar to the argument in \cite{BJKO,JMLR:v11:rosasco10a}, one can see that {$n_1^{-1} \bW_{01}$ is a finite-sample matrix version of the operator $\mathcal{K}_1$ defined via the kernel $k_1$ in (\ref{eq_kernelone})}. This explains the closeness between the matrix $\bN_{01}$ in (\ref{eq_twomatrices}) and the operator $\mathcal{K}_1.$ It is worth pointing out that the scaling $n_2^{-1}$ is needed for the convergence of the convolution part in (\ref{eq_convulutionpart}) whereas $n_1^{-1}$ is related to the dimension of $\bW_{01}$, essentially ensuring the convergence of $\bW_{01}.$

{Second, we provide some insights on the differences in constructing embeddings between only using a single dataset and with the help of the landmark dataset. We use the dataset $\{\xb_i^0\}$ as an example. Note that after some simple algebraic manipulation, one can rewrite (\ref{eq_convulutionpart}) as follows 
\begin{equation*}
	k_1(\xb_i^0, \xb_j^0)=\mathsf{g}(\xb^0_i, \xb^0_j, \{\yb_i^0\})k(\xb_i^0, \xb_j^0),
\end{equation*}
for some function $\mathsf{g}.$ On the one hand, when one only uses the single dataset $\{\xb_i^0\}$, it is easy to see that $\mathsf{g} \equiv 1$ so that $\xb_i^0$ and $\xb_j^0$ are connected using the kernel $k.$ On the other hand, our proposed algorithm utilizes both $\{\xb_i^0\}$ and the landmark dataset $\{\yb_i^0\}$ so that $\xb_i^0$ and $\xb_j^0$ will be connected via the kernel $k_1,$ which is related to the global kernel $k$ but  with local modifications.}

{ Finally,  even though the convergence rates in Theorem \ref{thm_cleanconvergence} depend on both the sample sizes $n_1$ and $n_2$, our theoretical results do not rely on any particular assumptions on the relations between $n_1$ and $n_2$, allowing for sample imbalance. Moreover, we do not require any particular structures and relations between the two underlying manifolds.  This justifies the flexibility of our algorithm for analyzing real-world single-cell omics data, as demonstrated in Section \ref{sec_realdataanalysis}.} }   
%In fact, one actually have that $h_n^0 \asymp \sum_{i=1}^r \theta_i,$ where we recall (\ref{eq_covstructure}).  

\subsection{Performance in presence of high-dimensional noise}\label{sec_robustness}

\subsubsection{Robustness analysis}
In this section, we  study the convergence of our algorithm for the noisy datasets (\ref{model}). Throughout, we make the following assumptions on the random noise $\{\bm{\xi}_i\}$ and $\{\bm{\zeta}_i\}$. 

\begin{assu} \label{assum_mainassumption}
	For some constants $\sigma_1, \sigma_2>0,$ we assume $\{\bm{\xi}_i\}$ and $\{\bm{\zeta}_j\}$ are  independent  sub-Gaussian random vectors {containing independent components} such that
	\begin{equation}\label{eq_noisevariance}
		\mathbb{E}\bm{\xi}_i=\mathbb{E} \bm{\zeta}_j=\bm{0}; \qquad \operatorname{Cov}(\bm{\xi}_i)=\sigma_1^2 \mathbf{I}, \qquad  \operatorname{Cov}(\bm{\zeta}_j)=\sigma_2^2 \mathbf{I},
	\end{equation}
	for all $1\le i\le n_1$ and $1\le j\le n_2$.
\end{assu}

{The above assumption requires that the noise vectors are independent centered sub-Gaussian random vectors with isotropic covariance, with the respective noise magnitude controlled by parameters $\sigma_1^2$ and $\sigma_2^2$. Now by the same argument around (\ref{eq_datareducedstructure}), analyzing the duo-landmark kernel matrix $\bK$ defined in (\ref{eq_kernelmatrix}) under noisy observations (\ref{model}) can be simplified if we replace $(\xb_i-\yb_j)$'s in $\bK$ by their equivalent, orthogonal transformed vectors $
{\bf OR}(\xb_i-\yb_j)={\bf OR}(\xb_i^0-\yb_j^0)+{\bf OR}(\bxi_i-\bzeta_j)$.
By Assumption \ref{assum_mainassumption}, the noise component $\{{\bf OR}(\bxi_i-\bzeta_j)\}$ of $\{{\bf OR}(\xb_i-\yb_j)\}$ are also centered sub-Gaussian random vectors with covariance matrix $\sigma^2{\bf I}_p$, where $\sigma^2=\sigma_1^2+\sigma_2^2$. Note that by the property of sub-Gaussian vectors (c.f. Lemma \ref{sg.bnd.lem} of the supplement), the norm of ${\bf OR}(\bxi_i-\bzeta_j)$ is bounded by $\mathrm{O}(p\sigma^2)$ with high probability.}
%Under Assumption \ref{assum_mainassumption}, by the arguments around (\ref{eq_datareducedstructure}), we see that $\bO\bR(\bm{\xi}_i-\bm{\zeta}_j)$  are centered sub-Gaussian random vectors with covariance matrix $\sigma^2{\bf I}$, where $\sigma^2=\sigma_1^2+\sigma_2^2$. 
We will also need the following assumption on the high dimensionality and global SNRs.

\begin{assu}\label{assum_dimensionalityandsnr}
	For $n=\min\{n_1, n_2\},$ we assume that $n$ is sufficiently large. Moreover,  we assume that there exist some positive constants $\beta_1$ and $\beta_2$ so that $n_1 \asymp n^{\beta_1}$ and $n_2 \asymp n^{\beta_2}.$ Additionally, we suppose that there exist some nonnegative constants  
	$\beta, \upsilon_1, \upsilon_2$ so that  $p \asymp n^{\beta}$ and for $i=1,2,$  $\sigma_i^2 \asymp n^{\upsilon_i}.$
	%\begin{equation}\label{eq_dimensionality}
	%n_1 \asymp p^{\gamma_1}, \ n_2 \asymp p^{\gamma_2}.   
	%\end{equation} 
	%Consequently, for $\sigma^2=\sigma_1^2+\sigma_2^2,$ we have that $\sigma^2 \asymp n^{\beta}, \ \beta=\max\{\beta_1, \beta_2\}.$ 
	Moreover, for $\sigma^2=\sigma_1^2+\sigma_2^2$ and $\{\theta_i \equiv \theta_i(n)\}$ in (\ref{eq_covstructure}),  we assume that 
	\begin{equation}\label{eq_sigmaimagnititude}
		\frac{p \sigma^2}{\sum_{i=1}^r \theta_i }=\mathrm{o}\left(1\right).
	\end{equation}
\end{assu}

\begin{rem}\label{rem_snsnsnsnsndiscussion}
{	We provide two remarks. First, based on Assumption \ref{assum_mainassumption}, we can interpret $p \sigma^2$ as the overall noise level. Moreover, according to the discussions around (\ref{eq_datareducedstructure}) and (\ref{eq_covstructure}), $\sum_{i=1}^r \theta_i$ should be understood as the signal strength. Consequently, (\ref{eq_sigmaimagnititude}) imposes a condition on the SNR. That is, we require the overall signals are relatively stronger than the noise. Second, we explain the potential advantage of using our algorithm with two datasets as compared to using only a single dataset. For simplicity, we assume that $r_1=r_2=1$ in Assumption \ref{assum_signal}. Moreover, we assume that in (\ref{eq_covsetting}), $\operatorname{Var}(X)=\mathsf{a}_1$ and $\operatorname{Var}(Y)=\mathsf{a}_2.$ According to \cite{DW2}, if one uses a single dataset, say $\{\xb_i\},$ it requires that $\mathsf{a}_1 \gg p \sigma_1^2.$ However, if we utilize both datasets, the condition in (\ref{eq_sigmaimagnititude}) reads that $\mathsf{a}_1+\mathsf{a}_2 \gg p(\sigma_1^2+\sigma_2^2).$ We consider a setting that $\sigma_1^2 \asymp \sigma_2^2 \asymp 1. $ On the one hand, when $\mathsf{a}_1=\sqrt{p},$ we will not be able to learn useful information from the dataset $\{\xb_i\}$ with the commonly used integral or differential operators as in \cite{ding2022learning, DW2}. On the other hand, if we are able to find a dataset $\{\yb_i\}$ with better quality, for instance $\mathsf{a}_2=p^{1+\delta}-\sqrt{p}$ for some constant $\delta>0,$ then $\mathsf{a}_1+\mathsf{a}_2=p^{1+\delta} \gg p$ will enable us to better understand $\{\xb_i\}$ via the operator $\mathcal{K}_1$ and the dataset $\{\yb_i\}.$ This is empirically observed in Section \ref{sec_simulation} and Appendix \ref{supp.sec.nu}.}
\end{rem}

Analogous to (\ref{eq_twomatrices}), we denote  
\begin{equation}\label{eq_twomatricesnoisy}
	\bN_{1}=\frac{1}{n_1n_2} \bK \bK^\top, \ \bN_{2}=\frac{1}{n_2n_1} \bK^\top\bK. 
\end{equation}
We denote the nonzero eigenvalues of $\bN_1$ and $\bN_2$ in the decreasing order as $\{\mu_i\}$ (note that $\mu_i=s_i^2$), the associated eigenvectors of $\bN_{1}$ and $\bN_{2}$ as $\{\bu_i\}$ and $\{\bv_i\},$ which are also the left and right singular vectors of $(n_1 n_2)^{-1/2} \bK.$ Denote the control parameter $\eta \equiv \eta_n$ as follows 
\begin{equation}\label{psi}
	\eta \equiv \eta_n:=\frac{p \sigma^2}{\sum_{i=1}^r \theta_i}+\frac{\sigma}{\left(\sum_{i=1}^r \theta_i \right)^{1/2}}. 
\end{equation}

\begin{thm}\label{thm_noiseconvergence}
	Suppose Assumptions \ref{assum_signal}, \ref{assum_mainassumption} and \ref{assum_dimensionalityandsnr} hold. Recall that $\{\lambda_i\}, \{\ub_i^0\}, \{\vb_i^0\}$ are defined according to the matrices in (\ref{eq_twomatrices}).
	For the eigenvalues, we have that for $\eta$ defined in (\ref{psi})
	\begin{equation}\label{eq_eigs_v}
		\sup_i|\mu_i-\lambda_i| \prec \eta. 
	\end{equation}
	For the eigenvectors, if we further assume $\sfr_i$ in (\ref{defn_sfr})
	%				\begin{equation} 
		%		\mathsf{r}_i:=\min\{\gamma_{i-1}-\gamma_i, \gamma_i-\gamma_{i+1}\},
		%		\end{equation}
	satisfies that for some small constant $\tau>0$
	\begin{equation}\label{eq_assumption}
		n^{\tau} \left( 	n_1^{-1/2}+n_2^{-1/2}+\eta \right)=\mathrm{o}(\sfr_i),
	\end{equation} 
	%	\frac{1}{\sqrt{n_1}}+\frac{1}{\sqrt{n_2}}+\frac{p \sigma_i}{\sum_{i=1}^r \theta_i }=\mathrm{o}(\sfr_i^2),
{	then, for $\bU_{\sf I}=(\bu_i)_{i\in \sf I}$, $\bU^0_{\sf I}=(\bu^0_i)_{i\in \sf I}\in\R^{n_1\times |\sf I|}$, and $\bV_{\sf I}=(\bv_i)_{i\in \sf I}$, $\bV^0_{\sf I}=(\bv^0_i)_{i\in \sf I}\in\R^{n_2\times |\sf I|}$, we have %that for $1 \leq i \leq \mathsf{K}$ in (\ref{eq_defni})
	\begin{equation}\label{eq_projectionbound}
		\max\left\{	\inf_{\bO\in O(|\sf I|)}\| \bU_{\sf I}-\bU^0_{\sf I}\bO \|, \inf_{\bO\in O(|\sf I|)}\| \bV_{\sf I}-\bV^0_{\sf I}\bO \|\right\}=\OO_{\prec}\left( \frac{\eta}{\sfr_i}\right).
	\end{equation} }
\end{thm}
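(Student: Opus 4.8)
The plan is to prove Theorem~\ref{thm_noiseconvergence} as a pure perturbation statement comparing the noisy Gram matrices $\bN_1,\bN_2$ in (\ref{eq_twomatricesnoisy}) with their clean counterparts $\bN_{01},\bN_{02}$ in (\ref{eq_twomatrices}), reducing everything to the single operator-norm bound
\[
\max\{\|\bN_1-\bN_{01}\|_{\op},\ \|\bN_2-\bN_{02}\|_{\op}\}\prec \eta .
\]
Given this bound, Weyl's inequality gives $\sup_i|\mu_i-\lambda_i|\le\|\bN_1-\bN_{01}\|_{\op}\prec\eta$, which is (\ref{eq_eigs_v}); and the Davis--Kahan $\sin\Theta$ theorem applied to the block indexed by ${\sf I}$ gives $\inf_{\bO}\|\bU_{\sf I}-\bU^0_{\sf I}\bO\|\prec \eta/\sfr_i$ as soon as the gap $\sfr_i$ strictly dominates the perturbation, which is precisely what the eigengap condition (\ref{eq_assumption}) supplies. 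Throughout I would work in the rotation-reduced coordinates of (\ref{eq_datareducedstructure}), so the signal lives in $r$ effective directions with $\operatorname{Cov}=\operatorname{diag}(\theta_1,\dots,\theta_r)$ as in (\ref{eq_covstructure}); the clean convergence of Theorem~\ref{thm_cleanconvergence} then pins down the size and spectral structure of $\bN_{01}$, in particular $\|\bN_{01}\|_{\op}=\lambda_1=\OO(1)$ and hence $\|\bK^0\|_{\op}\asymp\sqrt{n_1n_2}$.

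The heart of the argument is an entrywise analysis of the noisy kernel. Expanding the squared distance gives
\[
\|\xb_i-\yb_j\|_2^2=\|\xb_i^0-\yb_j^0\|_2^2+\|\bxi_i-\bzeta_j\|_2^2+2\langle \xb_i^0-\yb_j^0,\ \bxi_i-\bzeta_j\rangle .
\]
By Assumption~\ref{assum_mainassumption} and the sub-Gaussian norm bound (Lemma~\ref{sg.bnd.lem}), the noise term concentrates as $\|\bxi_i-\bzeta_j\|_2^2=p\sigma^2(1+\oo_\prec(1))$, a common deterministic bias shared by every entry, while Hanson--Wright shows the cross term is mean-zero of typical size $(\sum_i\theta_i)^{1/2}\sigma$. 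I would first reconcile the two data-adaptive bandwidths via Proposition~\ref{lem_bandwidthconcentration}, establishing $h_n=h_n^0+p\sigma^2(1+\oo_\prec(1))$ with $h_n\asymp\sum_i\theta_i$, and then Taylor-expand the exponential so that, entrywise, $K(i,j)=c_n\,K^0(i,j)\,(1+\OO_\prec(\eta))$ with the single scalar $c_n:=\exp(-p\sigma^2/h_n)$. This is exactly where the two pieces of $\eta$ in (\ref{psi}) appear: the bias $p\sigma^2$ divided by the bandwidth $h_n\asymp\sum_i\theta_i$ produces the first term (note $c_n=1+\OO_\prec(p\sigma^2/\sum_i\theta_i)$, which is $\oo(1)$ by (\ref{eq_sigmaimagnititude})), and the cross-term fluctuation $(\sum_i\theta_i)^{1/2}\sigma$ divided by the bandwidth produces the second.

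The main obstacle will be upgrading these entrywise estimates to a uniform operator-norm bound on $\bN_1-\bN_{01}$, since the naive entrywise errors would accumulate. The scalar $c_n$ contributes $\|(c_n^2-1)\bN_{01}\|_{\op}=\OO_\prec(\eta)$ on its own and is within budget, so the delicate part is the fluctuation matrix $\bE$, which to leading order is the Hadamard product $\bE\approx -\tfrac{2c_n}{h_n}\,\bK^0\circ\bC$, where $C_{ij}=\langle \xb^0_i-\yb^0_j,\ \bxi_i-\bzeta_j\rangle$ is a mean-zero signal--noise bilinear form. Bounding $\|\bK^0\circ\bC\|_{\op}$ is the crux: I would obtain $\|\bK^0\circ\bC\|_{\op}\prec (\sum_i\theta_i)^{1/2}\sigma\,\sqrt{n_1n_2}$ through an $\varepsilon$-net argument combined with Hanson--Wright-type concentration for sub-Gaussian bilinear forms, exploiting the cancellation in $\bC$ rather than a crude entrywise maximum. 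Feeding this into $\bN_1=(n_1n_2)^{-1}\bK\bK^\top$, the dominant cross term $(n_1n_2)^{-1}c_n(\bK^0\bE^\top+\bE(\bK^0)^\top)$ is controlled by $(n_1n_2)^{-1}\|\bK^0\|_{\op}\|\bE\|_{\op}$; since $\|\bK^0\|_{\op}\asymp\sqrt{n_1n_2}$ and $\|\bE\|_{\op}\prec \eta\,\sqrt{n_1n_2}$, the two $(n_1n_2)^{1/2}$ factors cancel the normalization and leave order $\eta$, with the quadratic term $\|\bE\bE^\top\|_{\op}$ of smaller order. The same computation on $\bK^\top\bK$ controls $\bN_2-\bN_{02}$.

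Finally I would assemble the pieces. Weyl yields (\ref{eq_eigs_v}) directly. For the eigenvectors, the condition $n^\tau(n_1^{-1/2}+n_2^{-1/2}+\eta)=\oo(\sfr_i)$ in (\ref{eq_assumption}) guarantees that the eigenvalues $\{\lambda_t:t\in{\sf I}\}$ are separated from the remaining clean eigenvalues by a gap $\sfr_i$ strictly dominating both the sampling error of Theorem~\ref{thm_cleanconvergence} and the noise perturbation $\eta$; hence the spectral projector onto the block ${\sf I}$ is stable, and Davis--Kahan delivers $\inf_{\bO\in O(|{\sf I}|)}\|\bU_{\sf I}-\bU^0_{\sf I}\bO\|=\OO_\prec(\eta/\sfr_i)$, with the symmetric statement for $\bV_{\sf I}$. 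Taking the maximum of the two bounds establishes (\ref{eq_projectionbound}).
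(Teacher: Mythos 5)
Your proposal is correct and follows essentially the same skeleton as the paper's proof: reduce everything to the operator-norm bound $\|\bN_1-\bN_{01}\|\prec\eta$ (and its analogue for $\bN_2$) via an entrywise expansion of the noisy kernel, with the two bandwidths reconciled through Proposition \ref{lem_bandwidthconcentration}, then obtain (\ref{eq_eigs_v}) by eigenvalue perturbation and (\ref{eq_projectionbound}) from spectral-projector stability under the gap condition (\ref{eq_assumption}). Two local differences are worth flagging. First, at the step you identify as the crux, the paper needs no cancellation at all: it bounds every entry using the worst-case maxima $\max_{i,k}\|\bxi_i-\bzeta_k\|_2^2\prec\sigma^2 p$ and $\max_{i,j}|\langle\xb_i^0-\yb_j^0,\,\bxi_i-\bzeta_j\rangle|\prec\sigma\big(\sum_{l=1}^r\theta_l\big)^{1/2}$, deduces $|N_{ij}^1-N_{ij}^{01}|\prec\eta/n_1$, and concludes by the Gershgorin circle theorem. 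Your proposed $\varepsilon$-net/Hanson--Wright bound $\|\bK^0\circ\mathbf{C}\|\prec\sigma\big(\sum_{l=1}^r\theta_l\big)^{1/2}\sqrt{n_1n_2}$ is exactly what the trivial inequality $\|\bA\|\le\|\bA\|_F\le\sqrt{n_1n_2}\,\max_{ij}|A_{ij}|$ already yields from the crude entrywise maximum, so the concentration machinery ``exploiting cancellation'' buys nothing here --- it would only matter if you aimed for a rate strictly better than $\eta$, which the theorem does not claim. Second, for the eigenvectors the paper does not invoke Davis--Kahan by name but runs the equivalent Riesz-projector argument: a contour integral of the resolvent over the disk of radius $\asymp\sfr_i$ around $\gamma_i$, the resolvent identity, resolvent bounds guaranteed by (\ref{v.conv0}), (\ref{eq_eigs_v}) and (\ref{eq_assumption}), and finally Lemma 1 of \cite{cai2018rate} to pass from $\|\bU_{\sf I}\bU_{\sf I}^\top-\bU^0_{\sf I}(\bU^0_{\sf I})^\top\|$ to $\inf_{\bO\in O(|{\sf I}|)}\|\bU_{\sf I}-\bU^0_{\sf I}\bO\|$; your gap bookkeeping (separating $\{\lambda_t\}_{t\in{\sf I}}$ from the remaining clean eigenvalues using Theorem \ref{thm_cleanconvergence}) matches the paper's, and the projector-to-orthogonal-alignment conversion you leave implicit is precisely this last lemma.
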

{The technical proof is provided in Section \ref{sec_proof33} of the supplement. For further discussion of the technical details, see Remark \ref{rem_proofremark} therein.} Theorem \ref{thm_noiseconvergence} implies that our proposed Algorithm \ref{al0} is robust against high-dimensional noise in the sense that $(n_1n_2)^{-1/2} \bK$ is close to $(n_1 n_2)^{-1/2} \bK^0,$ once (\ref{eq_sigmaimagnititude}) is satisfied. Combining Theorem \ref{thm_noiseconvergence} with Theorem \ref{thm_cleanconvergence}, we can establish the results concerning the convergence of the matrices (\ref{eq_twomatricesnoisy}) to the duo-landmark integral operators. We first prepare some notations. For $\bm{x}_1, \bm{x}_2 \in \mathcal{S}_1$ and $\bm{y}_1, \bm{y}_2 \in \mathcal{S}_2,$ we denote
\begin{equation}\label{eq_approximatekernels}
	\widehat{k}_1({\bm x}_1,{\bm x}_2)=\frac{1}{n_2}\sum_{s=1}^{n_2} k_n({\bm x}_1, \yb_s) k_n(\yb_s, {\bm x}_2), \  \widehat{k}_2({\bm y}_1, {\bm y}_2)=\frac{1}{n_1}\sum_{s=1}^{n_1} k_n({\bm y}_1, \xb_s) k_n(\xb_s,{\bm y}_2),
\end{equation}
where $k_n$ is the same as the kernel function in (\ref{eq_originalkernel}) except that $h_n^0$ is replaced by $h_n.$ Based on the above kernels, we further denote
\begin{equation*}
	\widehat{\phi}_{i}(\bm{x})=\frac{1}{\mu_i\sqrt{n_1}} \sum_{j=1}^{n_1} \widehat{k}_1(\bm{x}, \xb_j) u_{ij}, \ \widehat{\psi}_i(\bm{y})=\frac{1}{\mu_i\sqrt{n_2}} \sum_{j=1}^{n_2}\widehat{k}_2(\bm{y}, \yb_j) v_{ij}, \
\end{equation*} 
where $\ub_i=(u_{i1}, \cdots, u_{in_1})^\top$ and $\vb_i=(v_{i1}, \cdots, v_{in_2})^\top.$ By the above construction, we have that
\begin{equation}\label{eq_interptation}
	\widehat{\phi}_i(\xb_j)=\sqrt{n_1}u_{ij}, \ \widehat{\psi}_i(\yb_j)=\sqrt{n_2}v_{ij}. 
\end{equation}
%Recall (\ref{eq_setstwo}) and (\ref{eq_eigefunctiondefinition}). 

%{\color{blue}[our step one somehow excludes this case. This is a strength. need to discuss a noise prechecking. Basically, the definitions have to been defined properly first. The proof should be revised accordingly. alogorithm can run but not able to be explained. the artifact generating regime.]}
{\begin{cor}\label{cor_finalconvergence}
	Suppose the assumptions of Theorems \ref{thm_cleanconvergence} and \ref{thm_noiseconvergence} hold. {Moreover, we assume that neither $\mathbf{x}_i$ nor $\mathbf{y}_j$ in (\ref{model}) is purely noise.} Then we have that for the eigenvalues
	\begin{equation} \label{v.conv}
		\sup_i |\mu_i-\gamma_i|={ \OO_{\prec}\left( \frac{1}{\sqrt{n_1}}+\frac{1}{\sqrt{n_2}}+\eta\right).}
	\end{equation}
	Furthermore, for the eigenfunctions, we have that for $\mathsf{K}$ in (\ref{eq_defni}) and $ 1 \leq i \leq \mathsf{K}$ and any  $\bm{x} \in \mathcal{S}_1$, %there exists an eigenfunction $\phi_i$ such that
	\begin{equation}\label{eq_lefteigenfunction}
			\inf_{\bO\in O({\sf I})}\left\| \sqrt{\gamma_i} {\bm \phi}_{\sf I}(\bm{x}) -\sqrt{\mu_i}\bO \widehat{\bm \phi}_{\sf I}(\bm{x}) \right\|_\infty
			%\left|\sqrt{\mu_i}\widehat{\phi}_i(\bm{x})-\sqrt{\gamma_i}{{\phi}_i}(\bm{x})\right|
			\prec \frac{1}{\sfr_i}\bigg(\frac{1}{\sqrt{n_1}}+\frac{1}{\sqrt{n_2}} \bigg)+\frac{1}{\sqrt{n_2}}+\eta+\sqrt{\frac{\eta}{\sfr_i}}, 
	\end{equation}
	and for any $\bm{y} \in \mathcal{S}_2$, %there exists an eigenfunction $\psi_i$ such that
	\begin{equation*}
			\inf_{\bO\in O({\sf I})}\left\| \sqrt{\gamma_i} {\bm \psi}_{\sf I}(\bm{y}) -\sqrt{\mu_i}\bO \widehat{\bm \psi}_{\sf I}(\bm{y}) \right\|_\infty
			%\left|\sqrt{\mu_i}\widehat{\psi}_i(\bm{y})-\sqrt{\gamma_i}{{\psi}_i}(\bm{y})\right|
			\prec\frac{1}{\sfr_i}\bigg(\frac{1}{\sqrt{n_1}}+\frac{1}{\sqrt{n_2}} \bigg)+\frac{1}{\sqrt{n_1}}+\eta+\sqrt{\frac{\eta}{\sfr_i}}.
	\end{equation*} 
\end{cor}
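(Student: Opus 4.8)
The plan is to obtain Corollary \ref{cor_finalconvergence} by composing Theorem \ref{thm_cleanconvergence} (clean-signal spectral convergence to the duo-landmark operators) with Theorem \ref{thm_noiseconvergence} (robustness of the noisy spectrum against the clean spectrum) through triangle inequalities. The three families of eigenvalues form a chain, $\{\gamma_i\}$ being the operator eigenvalues, $\{\lambda_i\}$ the eigenvalues of the clean matrices $\bN_{01},\bN_{02}$ in (\ref{eq_twomatrices}), and $\{\mu_i\}$ those of the noisy matrices $\bN_1,\bN_2$ in (\ref{eq_twomatricesnoisy}); likewise the clean empirical eigenfunctions $\widehat{\bm\phi}^0_{\sf I}$ interpolate between the population $\bm\phi_{\sf I}$ and the noisy empirical $\widehat{\bm\phi}_{\sf I}$. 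The assumption that neither $\bm{x}_i$ nor $\bm{y}_j$ is purely noise guarantees that each noisy sample still carries the shared signal, so that the clean operators and their eigenfunctions remain the correct targets for the composition.

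For the eigenvalues I would simply write $|\mu_i-\gamma_i|\le|\mu_i-\lambda_i|+|\lambda_i-\gamma_i|$ and invoke (\ref{eq_eigs_v}) and (\ref{v.conv0}), which are uniform in $i$, to conclude (\ref{v.conv}). For the eigenfunctions, fix $i\le\mathsf{K}$ and $\bm{x}\in\mathcal{S}_1$, let $\bO^\star\in O({\sf I})$ nearly attain the infimum in (\ref{eigenvector_leftspectral}), and insert the clean empirical eigenfunction:
\begin{align*}
&\inf_{\bO\in O({\sf I})}\left\| \sqrt{\gamma_i}\,{\bm \phi}_{\sf I}(\bm{x}) -\sqrt{\mu_i}\,\bO\widehat{\bm \phi}_{\sf I}(\bm{x}) \right\|_\infty\\
&\qquad\le \left\| \sqrt{\gamma_i}\,{\bm \phi}_{\sf I}(\bm{x}) -\sqrt{\lambda_i}\,\bO^\star\widehat{\bm \phi}^0_{\sf I}(\bm{x}) \right\|_\infty +\inf_{\bO\in O({\sf I})}\left\| \sqrt{\lambda_i}\,\bO^\star\widehat{\bm \phi}^0_{\sf I}(\bm{x}) -\sqrt{\mu_i}\,\bO\widehat{\bm \phi}_{\sf I}(\bm{x}) \right\|_\infty.
\end{align*}
The first summand is exactly the clean bound (\ref{eigenvector_leftspectral}), which reproduces the $\frac{1}{\sfr_i}(\frac{1}{\sqrt{n_1}}+\frac{1}{\sqrt{n_2}})+\frac{1}{\sqrt{n_2}}$ part of the target rate; since $|{\sf I}|=\OO(1)$, the outer factor $\bO^\star$ can be pulled into the second summand at the cost of a harmless constant (converting the coordinate $\infty$-norm to the $2$-norm over the cluster and back).

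The second summand, comparing the noisy and clean empirical eigenfunctions, is the genuinely new work. Using the reconstruction formulas for $\widehat{\phi}_i$ and $\widehat{\phi}^0_i$, I would split its error into three sources: (i) the eigenvalue prefactors $\sqrt{\mu_i},\mu_i^{-1}$ versus $\sqrt{\lambda_i},\lambda_i^{-1}$, which—because $\gamma_i\ge\delta$ forces $\mu_i,\lambda_i$ to be bounded below by (\ref{v.conv0})–(\ref{eq_eigs_v})—contribute $\OO_\prec(\eta)$; (ii) the discrepancy between the reconstruction kernels $\widehat{k}_1$ (noisy points, bandwidth $h_n$) and $\widehat{k}^0_1$ (clean points, bandwidth $h_n^0$), which I would control by the entrywise distance concentration already established in the proof of Theorem \ref{thm_noiseconvergence} together with Proposition \ref{lem_bandwidthconcentration} bounding $|h_n-h_n^0|$, again yielding $\OO_\prec(\eta)$; and (iii) the eigenvector-coefficient perturbation $u_{ij}$ versus $u^0_{ij}$, controlled in the subspace sense by (\ref{eq_projectionbound}).

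The main obstacle is source (iii): the bound (\ref{eq_projectionbound}) controls the sample eigenvectors only in operator/Frobenius norm at the sample points, whereas the statement demands a uniform (in $\bm{x}\in\mathcal{S}_1$) bound on the reconstructed eigen\emph{functions} in $\|\cdot\|_\infty$. I would lift the vector perturbation to the functional level by viewing $\widehat{\bm\phi}_{\sf I}$ as an approximate eigenfunction tuple of the clean duo-landmark operator $\mathcal{K}_1$ with residual of order $\eta$ (produced by sources (i)–(ii)), and then invoking a Temple–Kato/Davis–Kahan energy estimate: an approximate eigenfunction with residual $\rho$ and spectral gap $\sfr_i$ lies within $\OO(\sqrt{\rho/\sfr_i})$ of the target eigenspace in $\mathcal{L}_2(\mathcal{S}_1,\widetilde{\mathsf{P}}_1)$. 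This is precisely the origin of the $\sqrt{\eta/\sfr_i}$ term and explains why it appears under a square root rather than as the linear $\eta/\sfr_i$ one might naively expect from the sample-point bound. Finally, since the kernel (\ref{eq_originalkernel}) is bounded, the RKHS embedding $\|f\|_\infty\lesssim\|f\|_{\mathcal{H}_1}$ transfers this $\mathcal{L}_2$ control to the required sup-norm control without additional loss. Collecting (i)–(iii) with the clean term yields (\ref{eq_lefteigenfunction}); the bound for $\bm{y}\in\mathcal{S}_2$ follows by the symmetric argument with $(n_1,\mathcal{S}_1)\leftrightarrow(n_2,\mathcal{S}_2)$.
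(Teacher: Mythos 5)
Your eigenvalue argument and your overall eigenfunction decomposition coincide with the paper's proof: the paper likewise writes $|\mu_i-\gamma_i|\le|\mu_i-\lambda_i|+|\lambda_i-\gamma_i|$ and, for the eigenfunctions, inserts $\sqrt{\lambda_i}\widehat{\bm\phi}^0_{\sf I}$ by the triangle inequality as in (\ref{eq_firstbegining}), bounds the first summand by (\ref{eigenvector_leftspectral}), and splits the second summand into exactly your three sources, there denoted $F_1,F_2,F_3$: the prefactor term $F_1\prec|\lambda_i-\mu_i|\prec\eta$ via (\ref{eq_eigs_v}), and the kernel-discrepancy term $F_2\prec\eta$ via the entrywise expansions of type (\ref{approx.bnd}) and (\ref{similarboundusingusingusing}) together with Proposition \ref{lem_bandwidthconcentration}. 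Up to that point your plan is the paper's proof.

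The divergence, and the genuine gap, is your step (iii). The obstacle you identify there is illusory: the uniform-in-$\bm{x}$ sup-norm bound follows from the operator-norm eigenvector bound by a single Cauchy--Schwarz step, exactly as in (\ref{eq_E3decomposition}). Indeed, $F_3=\frac{1}{\sqrt{n_1}}\inf_{\bO}\big\|\mu_i^{-1/2}\widehat{\bm{k}}_1(\bm{x})^\top(\bU^{0}-\bU\bO)\big\|_\infty\prec\inf_{\bO\in O(|{\sf I}|)}\|\bU_{\sf I}-\bU^0_{\sf I}\bO\|$ uniformly in $\bm{x}$, because $\widehat{k}_1$ is bounded and $\mu_i$ is bounded below for $i\le\mathsf{K}$; then (\ref{eq_projectionbound}) gives $\eta/\sfr_i$, which is dominated by $\sqrt{\eta/\sfr_i}$ since $\eta/\sfr_i=\oo(1)$ under (\ref{eq_assumption}). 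That is the actual origin of the square-root term in the statement --- a harmless weakening of the linear sample-level bound --- not a functional energy estimate, so your explanation of where $\sqrt{\eta/\sfr_i}$ comes from misattributes it. More importantly, your Temple--Kato detour fails on its last step as sketched: the embedding $\|f\|_\infty\le\sqrt{k_1(\bm{x},\bm{x})}\,\|f\|_{\mathcal{H}_1}$ requires control of the RKHS norm, and closeness to the target eigenspace in $\mathcal{L}_2(\mathcal{S}_1,\widetilde{\mathsf{P}}_1)$ does not control $\|\cdot\|_{\mathcal{H}_1}$, which weights spectral components by inverse eigenvalues. Worse, $\widehat{\bm\phi}_{\sf I}$ is built from the noisy empirical kernel $\widehat{k}_1$ (noisy points, bandwidth $h_n$), so it does not even lie in the clean RKHS $\mathcal{H}_1$, and certifying it as an $\OO_\prec(\eta)$-approximate eigenfunction tuple of $\mathcal{K}_1$ would itself require the uniform kernel comparisons you are trying to conclude. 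Your sources (i)--(ii) are sound; replacing (iii) by the direct Cauchy--Schwarz transfer closes the argument and recovers the paper's proof.
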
}
Corollary \ref{cor_finalconvergence} establishes the convergence of noisy matrices in (\ref{eq_twomatricesnoisy}) to the duo-landmark integral operators. {Our analysis suggests that the performance of the proposed method may be primarily determined by the sample size of the smaller dataset; see Figure \ref{supp.gamma} in Section \ref{supp.sec.nu} of the supplement for numerical evidence.  Especially, for the eigenvectors, together with 
(\ref{eq_interptation}) and (\ref{eigen_decomp}), we see that for any $1 \leq i \leq \mathsf{K}$, if $\gamma_i$ has multiplicity one, then  up to a possible sign change of the whole vector,
\begin{equation*}
	s_i \ub_i \approx \frac{\sqrt{\gamma_i}}{\sqrt{n_1}}(\phi_i(\xb_1), \cdots, \phi_i(\xb_{n_1}))^\top, \quad	s_i \vb_i \approx \frac{\sqrt{\gamma_i}}{\sqrt{n_2}}(\psi_i(\yb_1), \cdots, \psi_{i}(\yb_{n_2}))^\top,\quad s_i=\sqrt{\mu}_i.
\end{equation*}
This implies that as long as the index sets $\Gamma_1, \Gamma_2 \subset \{1,2,\cdots, \mathsf{K}\},$ the joint embeddings of $(\sqrt{n_1}\bU_{\Gamma_1} \Lambda_{\Gamma_1}, \sqrt{n_2}\bU_{\Gamma_2} \Lambda_{\Gamma_2})$ is approximately the eigenfunctions $(\phi_i, \psi_i)$ evaluated at the rotated data points and weighted by the eigenvalues.} In other words, the coordinates of final embeddings are nonlinear transforms of the original datasets, related to the operators in Definition \ref{defn_landmarkintegral}. 

%{\color{blue}[here]}
{ \subsubsection{Phase transition with SNR and practical consideration}\label{sec_phasetransition}
As discussed in the previous subsection, our proposed algorithm will be robust to the high dimensional noise when the SNR satisfies the assumption of (\ref{eq_sigmaimagnititude}), i.e., $\sum_{i=1}^r \theta_i \gg p \sigma^2.$ In this section, we discuss how to check this assumption via our algorithm using the spectrum of $\bN_1$ (or $\bN_2$) in (\ref{eq_twomatricesnoisy}). Roughly speaking, in the robust regime when (\ref{eq_sigmaimagnititude}) holds that the signal dominates the noise, the spectrum of the outputs of our algorithm are closely related to those of the duo-landmark integral operators. In contrast, when the noise dominates the signal that $\sum_{i=1}^r \theta_i \ll p \sigma^2,$ the spectrum can be explained via random matrix theory using the free multiplicative convolution of two Marchenco-Pastur type laws which differs significantly from the duo-landmark integral operators. Especially, the bulk eigenvalues are rather rigid. These information can be potentially used to check whether the robustness assumption holds.

We now state our results whose proof can be found in Section \ref{sec_proof33} of the supplement. Since in our applications, the dimension $p$ (i.e., the number of features) is usually comparable to or much larger than the sample sizes (i.e., the number of cells), we focus on the setup that $p  \succsim \max\{n_1, n_2\}.$ That is, for $\beta_1, \beta_2, \beta$ in Assumption \ref{assum_dimensionalityandsnr}, we focus on 
\begin{equation}\label{eq_detaileddimensionregime}
\beta \geq \max\{\beta_1, \beta_2\}. 
\end{equation} 
Denote 
\begin{equation}\label{eq_sunknown}
\mathsf{s}:=\frac{\sqrt{n_1 n_2}}{4p \sigma_1^2 \sigma_2^2} \exp\left(-\frac{2 p\sigma^2}{h_n}\right),
\end{equation}
and the nonzero eigenvalues of $\mathsf{s}\bN_1$ (or equivalently $\mathsf{s} \bN_2$) as $\{w_i\},$ and  
%Moreover, we denote 
%\begin{equation*}
%\omega_i^*:=\frac{\sigma^2}{\sigma_1 \sigma_2} w_i. 
%\end{equation*}
$\mathsf{n}=\min\{n_1, n_2, p\}.$ Denote the empirical spectral distribution (ESD) of $\{w_i\}$ be 
\begin{equation}\label{eq_muesddefn}
\mu_n(x)=\frac{1}{\mathsf{n}}\sum_{i=1}^{\mathsf{n}} \mathbf{1}(w_i \leq x). 
\end{equation}

Let $\mu_{\SMP_k}, k=1,2,$ be  the scaled Marchenco-Pastur laws with parameters $\phi_k=p/n_k$ in the sense that \citep{bloemendal2014isotropic} 
\begin{equation}\label{eq_MPlawforms}
\mathrm{d} \mu_{\SMP_k}(x)= \frac{\sqrt{\phi_k}}{2 \pi} \frac{\sqrt{[(x-\gamma_{k,-})(\gamma_{k,+}-x)]_+}}{x} \mathrm{d} x, \ \gamma_{k,\pm}=\sqrt{\phi_k}+\frac{1}{\sqrt{\phi_k}}\pm 2. 
\end{equation}
Moreover, we denote $\mu_{\SMP_1 \boxtimes \SMP_2}$ as the free multiplicative convolution of $\mu_{\SMP_1}$ and $\mu_{\SMP_2};$ see Section \ref{sec_backgroundinrmt} of our supplement for some background in random matrix theory and Definition \ref{defn_freemultiplicativeconvuliton}  for the definition of free multiplicative convolution of two measures. 
\begin{thm}\label{thm_noisededuction}
Suppose Assumption \ref{assum_mainassumption}  and the assumption of (\ref{eq_detaileddimensionregime}) hold. Moreover, we assume
	\begin{equation}\label{eq_weaksnr}
		\frac{\sum_{i=1}^r \theta_i }{p \sigma^2}=\mathrm{o}\left(1\right).
	\end{equation}
%Then we have that there exists some fixed-rank random matrices $\mathbf{L}_1$ and $\mathbf{L}_2$ 
%
% For $k=1,2,$
%\begin{equation*}
%\left\| n_1 n_2\bN_k-\mathbf{L}_1-\mathbf{W}_1\mathbf{W}_2^\top \right\|=\mathrm{O}_{\prec} \left( \upsilon \right).
%\end{equation*}
Then we have that for the ESD in (\ref{eq_muesddefn}), we have that $\mu_n \Rightarrow \mu_{\SMP_1 \boxtimes \SMP_2},$ where $\Rightarrow$ means converge weakly in distribution. 
\end{thm}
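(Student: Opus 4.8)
The plan is to work entirely in the noise-dominated regime (\ref{eq_weaksnr}) and reduce the nonlinear cross-kernel matrix to a tractable polynomial random-matrix model whose limiting spectrum is dictated by free probability. \emph{First}, I would invoke the orthogonal reduction around (\ref{eq_datareducedstructure}): after transformation the relevant noise vectors $\bxi_i-\bzeta_j$ are, by Assumption \ref{assum_mainassumption}, centered sub-Gaussian with isotropic covariance $\sigma^2\bI_p$, $\sigma^2=\sigma_1^2+\sigma_2^2$. By sub-Gaussian concentration (Lemma \ref{sg.bnd.lem}) the squared distances $d_{ij}=\|\xb_i-\yb_j\|_2^2$ concentrate around $\bar d:=p\sigma^2$ with fluctuation $\mathrm{O}(\sqrt p\,\sigma^2)$, and under (\ref{eq_weaksnr}) the signal contributes at strictly lower order and can be absorbed into the remainder. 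Hence the percentile bandwidth obeys $\bar d/h_n\to\text{const}$, which legitimizes a Taylor expansion of $K(i,j)=\exp(-d_{ij}/h_n)$ about $\bar d$.

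\emph{Second}, I would organize the expansion by index-dependence. Writing $d_{ij}=\bar d+\delta_i+\epsilon_j-2\langle\bxi_i,\bzeta_j\rangle+(\text{signal}+\text{h.o.t.})$ with $\delta_i=\|\bxi_i\|_2^2-p\sigma_1^2$, $\epsilon_j=\|\bzeta_j\|_2^2-p\sigma_2^2$, one obtains
\[
\bK=e^{-\bar d/h_n}\Big[\mathbf 1_{n_1}\mathbf 1_{n_2}^\top-\tfrac{1}{h_n}\big(\bm{\delta}\mathbf 1_{n_2}^\top+\mathbf 1_{n_1}\bm{\epsilon}^\top\big)+\tfrac{2}{h_n}\bX\bY^\top+\bE\Big],
\]
where $\bX=(\bxi_1,\dots,\bxi_{n_1})^\top$, $\bY=(\bzeta_1,\dots,\bzeta_{n_2})^\top$, the first three terms have rank $\le 3$, and $\bE$ collects the quadratic-and-higher Taylor remainders and the signal. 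The decisive observation is that the only index-bilinear \emph{leading} fluctuation is the cross term $\tfrac{2}{h_n}\bX\bY^\top$. Forming $\bN_1=(n_1n_2)^{-1}\bK\bK^\top$ in (\ref{eq_twomatricesnoisy}) and discarding the finite-rank skeleton (which perturbs only $\mathrm{O}(1)$ eigenvalues and thus cannot affect the weak limit of the ESD), the bulk is driven by $(n_1n_2)^{-1}\bX\bY^\top\bY\bX^\top$, whose nonzero eigenvalues coincide with those of the $p\times p$ matrix $(n_1n_2)^{-1}(\bX^\top\bX)(\bY^\top\bY)$ — a product of two \emph{independent} Wishart-type matrices with aspect ratios $\phi_1=p/n_1$ and $\phi_2=p/n_2$.

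\emph{Third}, I would identify the limiting law. The scale $\mathsf s$ in (\ref{eq_sunknown}) is calibrated so that each factor $\tfrac{1}{\sigma_k^2\sqrt{n_k p}}\bX_k^\top\bX_k$ converges to the scaled Marchenko–Pastur law $\mu_{\SMP_k}$ in (\ref{eq_MPlawforms}) (one checks the support endpoints $(\phi_k^{1/4}\pm\phi_k^{-1/4})^2$ match the largest/smallest squared singular values of an $n_k\times p$ noise block). Since $\bX$ and $\bY$ are independent, the two normalized Wisharts are asymptotically free, and by Voiculescu's multiplicative free convolution theorem the ESD of their product converges to $\mu_{\SMP_1}\boxtimes\mu_{\SMP_2}$. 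I would make this precise either through the $S$-transform identity $S_{\mu_{\SMP_1}\boxtimes\mu_{\SMP_2}}=S_{\mu_{\SMP_1}}S_{\mu_{\SMP_2}}$, matching Definition \ref{defn_freemultiplicativeconvuliton}, or by solving the self-consistent (Marchenko–Pastur-type) fixed-point equation for the Stieltjes transform of the product model; universality beyond the Gaussian case follows because the limiting product spectrum depends on the noise only through $(\sigma_k^2,\phi_k)$.

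\emph{The main obstacle} is making the reduction in the second step rigorous at the spectral level: I must show that $\mathsf s\bN_1$ and the idealized product-Wishart model differ by a perturbation negligible for the ESD. This requires (i) an El Karoui–type operator-norm estimate (generalizing \cite{elkaroui2016,DW2} to the rectangular cross-kernel and to possibly growing $\sigma_k^2$) showing that the quadratic-and-higher remainder $\bE$ — in particular the fluctuating part of the $\langle\bxi_i,\bzeta_j\rangle^2$ terms — contributes vanishing operator norm after scaling, and (ii) controlling the cross-products between the rank-$\le 3$ skeleton and the bilinear term so their net effect is finite-rank plus $\mathrm{o}(1)$ in operator norm. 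Operator-norm convergence (or rank-stability via the Lidskii/rank inequalities) then transfers the weak limit from the product-Wishart model to $\mathsf s\bN_1$. A secondary bookkeeping point is the atom at zero: since the ESD in (\ref{eq_muesddefn}) is taken over the $\mathsf n=\min\{n_1,n_2,p\}$ nonzero eigenvalues, I would track $\operatorname{rank}(\bK)$ and normalize the free-convolution limit to its continuous part accordingly.
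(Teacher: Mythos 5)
Your proposal is correct and follows essentially the same route as the paper: Taylor-expand the kernel about the concentrated distance $p\sigma^2/h_n$, split $\bK$ into a finite-rank skeleton plus the cross Gram term $\tfrac{2}{h_n}\bX\bY^\top$, discard the finite-rank part via the rank inequality (Theorem A.43 of Bai--Silverstein), and identify the bulk of the product $\mathbf{W}_1^\top\mathbf{W}_1\mathbf{W}_2^\top\mathbf{W}_2$ with $\mu_{\SMP_1}\boxtimes\mu_{\SMP_2}$ by asymptotic freeness. The ``main obstacle'' you flag---showing the quadratic Hadamard terms are finite-rank plus negligible---is precisely what the paper resolves with its Lemma \ref{lem_lowrank} (the El Karoui--type estimates from \cite{elkaroui2010spectrum,DW2} that you cite), so your plan matches the paper's proof in all essentials.
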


Several remarks are in order. First, unlike in the strong SNR regime described in Corollary \ref{cor_finalconvergence}, where the significant eigenvalues can be characterized by the spectrum of duo-landmark integral operators, Theorem \ref{thm_noisededuction} shows that when noise dominates the signal, the bulk of the eigenvalues follows the free multiplicative convolution of two Marchenko–Pastur (MP) laws. Second, as seen in the proof, $\mathbf{N}_1$ (or $\mathbf{N}_2$) can be viewed as finite-rank perturbations of products of two sample covariance matrices constructed purely from noise. The low-rank component arises from the kernel effect, which does not encode any meaningful information from the data, while the bulk eigenvalues are determined by the product of the sample covariance matrices. This ensures that the algorithms do not produce artificial structures. Finally, combined with the rigidity property established in Lemma \ref{lem_free}, Theorem \ref{thm_noisededuction} can be used to verify the assumption in (\ref{eq_sigmaimagnititude}) via a nonparametric approach, without estimating the unknown quantity in (\ref{eq_sunknown}). The key idea is that the bulk eigenvalues—corresponding to the quantiles of $\mu_{\SMP_1 \boxtimes \SMP_2}$—exhibit strong rigidity, meaning that the gaps between consecutive eigenvalues are uniformly small; see Section \ref{sec_tuningparametersection} of the supplement for more details.

By combining the design of Algorithm \ref{al0} with the insights from Corollary \ref{cor_finalconvergence} and Theorem \ref{thm_noisededuction}, our algorithm is tailored to accommodate varying SNR levels across datasets; see also Remark \ref{rem_noise} in the supplement.
}

\section{Numerical simulation studies}\label{sec_simulation}

We carry out simulation studies to evaluate the empirical performance of our proposed Algorithm \ref{al0}. We consider two types of tasks. {The first one is the simultaneous clustering of two datasets, where there exists certain latent correspondence between the clusters in the two datasets \citep{petegrosso2020machine}. The second task is to learn the low-dimensional structure of a high-dimensional noisy dataset based on an external, less noisy dataset containing some shared (but not necessarily identical) low-dimensional structures. We show that, compared with the existing methods, our method achieves higher clustering accuracy in the former task, and a higher manifold reconstruction accuracy in the latter task. Additional numerical results can be found in the supplement.} 

\subsection{Simultaneous clustering}\label{sec_biclustering} 
In the first study, we consider the simultaneous clustering problem, that is, obtaining cluster memberships simultaneously for each datasets. Recall that for $\xb \in \mathbb{R}^p$ is from a Gaussian mixture model (GMM) if its density function follows that 
\begin{equation}\label{eq_GMMdensity}
	p(\xb)=\sum_{j=1}^K \pi_j \Phi(\xb; \bm{\mu}_j, \Sigma_j),
\end{equation}
where $K$ is the number of clusters, $\pi_j$'s are the mixing coefficients and  $\Phi(\xb; \bm{\mu}_j, \Sigma_j)$ is a multivariate Gaussian density with parameters $(\bm{\mu}_j, \Sigma_j).$ Under our model (\ref{model}), we consider the following two simulation settings. For concreteness, we set  $n_1=n_2=600$ and $p=800$.

\noindent {\bf Setting 1.} The data $\xb_i, 1 \leq i \leq n_1,$ are generated from (\ref{model}), where the signals $\xb_i^0$ are drawn from (\ref{eq_GMMdensity}) with $K=6, \ \pi_j \equiv 1/6, \ \bm{\mu}_j=15 \mathbf{e}_j, \ \Sigma_j\equiv 9{\bf I}_p, 1 \leq j \leq 6,$ where $\{\mathbf{e}_j\}$ are the standard Euclidean basis in $\mathbb{R}^p$, and the noises $\bxi_i$ are generated from multivariate Gaussian satisfying (\ref{eq_noisevariance}) with $\sigma_1^2=0.25$. 
The data $\yb_i, 1 \leq i\leq n_2,$ are generated from (\ref{model}), where the noises $\bzeta_i$ are multivariate Gaussian vectors satisfying (\ref{eq_noisevariance}) with $\sigma_2^2=1$, and the signals $\yb_i^0$ satisfy
$\yb_i^0=\zb^0_i+\bw^0_i$, with $\zb^0_i$ generated
from (\ref{eq_GMMdensity}) with $K=6, \ \pi_j \equiv 1/6, \ \bm{\mu}_j=15 \mathbf{e}_j, \ \Sigma_j\equiv 9{\bf I}_p, 1 \leq j \leq 6$, 
and $\bw^0_i$ having all their components being zero except for its 6-th to 25-th components which are independently drawn from a uniform distribution between $[-3\tau, \tau]$. In this way, both datasets contain a low-dimensional Gaussian mixture cluster structure with the common cluster centers, whereas the second dataset $\{\yb_i\}_{1\le i\le n_2}$ is much noisier and contains an additional data-specific signal structure in the subspace nearly orthogonal to the cluster centers $\{\bmu_j\}$. 

\noindent{\bf Setting 2.} The data $\xb_i, 1 \leq i \leq n_1,$ are generated according to (\ref{model}) and noise setup (\ref{eq_noisevariance}), where $\bxi_i$'s are multivariate Gaussian vectors with $\sigma_1^2=0.25$, and the clean signals $\xb_i^0$ are generated from (\ref{eq_GMMdensity}) with $K=4, \ \pi_j \equiv 1/4, \ \bm{\mu}_j=15 \mathbf{e}_{j+2}, \ \Sigma_j= 9{\bf I}_p, 1 \leq j \leq 4.$ The data $\yb_i, 1 \leq i \leq n_2,$ are generated in the same way as in the Setting 1. Compared with Setting 1, in additional to the  structural differences introduced by $\bw^0_i$, here the low-dimensional Gaussian mixture cluster structure is only partially shared across the two datasets.  

We examine the performance of our proposed algorithm on both settings, and compare the performance with six alternative methods: (1). pca: apply PCA to each dataset, and then perform k-means method to the $\mathsf r$-dimensional embedding based on the first $\mathsf r$ principle components (i.e., the first $\mathsf r$ eigenvectors of the sample covariance matrices of each dataset); (2). kpca: apply kernel-based PCA to each dataset, and then perform k-means to the $\mathsf r$-dimensional embedding based on the eigenvectors of some kernel matrices \citep{ding2022learning}; {(3). j-pca: first concatenate the two datasets, and then apply PCA to $\{\xb_1,...,\xb_{n_1}, \yb_1,...,\yb_{n_2}\}$, and finally perform k-means to the first $\mathsf r$ principle components;} (4). j-kpca: first concatenate the two datasets, and then apply kernel-based PCA to $\{\xb_1,...,\xb_{n_1}, \yb_1,...,\yb_{n_2}\}$, and finally perform k-means to the $\mathsf r$-dimensional embedding \citep{reverter2014kernel}; (5). lbdm: the LBDM bi-clustering algorithm \citep{pham2018large}; (6). rl:  k-means applied to the $\mathsf r$-dimensional embeddings produced by the Roseland algorithm \citep{10.1093/imaiai/iaac013} ; (7). prop:   k-means applied to the $\mathsf r$-dimensional embeddings produced by our proposed Algorithm \ref{al0}. For all the kernel-based methods, we set $\Gamma_1=\Gamma_2=\{2,3,...,\mathsf r+1\}$ as the first eigenvector is noninformative in general (nearly constant vector). %We set $\omega=0.5$ but emphasize that similar results are obtained based on the data-driven procedure described in Appendix \ref{sec_tuningparametersection}. 
In what follows, we set $\mathsf r=6$. For each of the settings, we vary the structural discrepancy parameter $\tau$ and evaluate the clustering performance of each method using the Rand index \citep{rand1971objective} between the estimated and true cluster memberships. Specifically, suppose for each dataset $\ell\in\{1,2\}$, $\hat M^{(\ell)}$ and $M^{(\ell)}$ are two partitions of $\{1,2,...,n_{\ell}\}$ samples according to the estimated and true cluster memberships, the overall Rand index (RI) is defined as 
\beq \label{rand}
\operatorname{RI} = \frac{\operatorname{RI}^{(1)}+\operatorname{RI}^{(2)}}{2},\qquad \ \text{where}\ \operatorname{RI}^{(\ell)}=\frac{a^{(\ell)}+b^{(\ell)}}{{n_{\ell} \choose 2}},\ \ell=1,2,
\eeq
where $a^{(\ell)}$ is the number of pairs of elements in $\{1,2,...,n_{\ell}\}$ that are in the same subset in $\hat M^{(\ell)}$ and $M^{(\ell)}$, and $b^{(\ell)}$ is the number of pairs of elements in $\{1,2,...,n_{\ell}\}$ that are in the different subsets in $\hat M^{(\ell)}$ and  $M^{(\ell)}$.
For each setting, we repeat the simulation 100 times to calculate the averaged Rand index. { See Section \ref{supp.sec.nu} for additional numerical results concerning the alignability screeening procedure, imbalanced sample sizes and unequal index sets $\Gamma_1$ and $\Gamma_2$.}

The results can be found in Figure \ref{simu.fig1} of the supplement. We find that in both settings, our proposed method has in general the highest Rand index values across all the cases, followed by ``lbdm" under most of the $\tau$ values. In particular, as the structural discrepancy between the two datasets increases (i.e., when $\tau$ increases), there is a performance decay for most methods. 
Our results indicate that the proposed method does better job in leveraging the (even partially) shared cluster patterns across datasets to improve clustering of each individual dataset, and achieves\ superior performance over the existing joint clustering methods.

\subsection{Nonlinear manifold learning}\label{sec_ml}
In the second study,  we consider enhancing the embeddings of low-dimensional manifold structures contained in a high-dimensional noisy dataset $\{\yb_j\}_{1\le j\le n_2}$, with the help of an external dataset $\{\xb_i\}_{1\le i\le n_1}$  which contains stronger and cleaner signals. For the data point $\yb_j$ from the nosier dataset, in terms of the model setup (\ref{model}) and noise setup (\ref{eq_noisevariance}), we let $\bm{\zeta}_j$ be generated from multivariate Gaussian distribution with $\sigma_2^2=1.$ For the clean signal part $\yb_j^0,$ its  first three coordinates are uniformly drawn from a torus in $\R^3$, i.e., for $u,v\in[0,2\pi)$, $\yb_{j1}^0=\theta (2+0.8\cos u) \cos v, \yb_{j2}^0=\theta(2+0.8\cos u)\sin v$ and $\yb_{j3}^0= 0.8\theta\sin u,$ where $\theta=0.2n^{1/2}$ characterizing the overall signal strength of the torus. For the subsequent coordinates of $\yb_i^0,$ the first 20 components (i.e., the 4-th to 23-th components) are independently drawn from a uniform distribution on $[-8, 8]$, whereas the rest components are zero. The dataset $\{\yb^0_j\}_{1\le j\le n_2}$ contains two mutually orthogonal low-dimensional signal structures. %Since $\theta \gg \tau,$ its primary signal structure would be the torus. 
For the data point $\xb_i$ from the cleaner dataset, in terms of the model setup (\ref{model}) and noise setup (\ref{eq_noisevariance}), we generate $\bm{\xi}_i$ from multivariate Gaussian distribution with $\sigma_1^2=0.16.$ For the clean signal part $\xb_j^0,$ its  first three coordinates are uniformly drawn from the same torus in $\R^3$, %i.e., for $u,v\in[0,2\pi)$, $\xb_{i1}^0=\theta (2+0.8\cos u) \cos v, \xb_{i2}^0=\theta(2+0.8\cos u)\sin v$ and $\xb_{i3}^0= 0.8\theta\sin u,$ where $\theta=0.2n^{1/2},$ 
and the rest of the coordinates are set as zeros. As such, $\{\xb_i\}_{1\le i\le n_1}$ is less noisy and only contains partially shared signal structure with $\{\yb_j\}_{1\le i\le n_2}$, i.e., the torus structure.

Our interest is to recover the torus structures of $\{\yb^0_j\}$ with the help of $\{\xb_i\}.$ Specifically, to evaluate the structural concordance between the obtained 3-dimensional embeddings of $\{\yb_i\}_{1\le i\le n_2}$ and the original noiseless signals $\{\yb_i^0\}_{1\le i\le n_2}$, we use Jaccard index to measure the preservation of neighborhood structures. % {\color{red}[introduce the definition and explain why it is a useful measurement]}. 
More concretely, we consider the Jaccard index  between the set of 50 nearest neighbors of the embeddings and that of the clean signals $\{\yb_j^0\}$ and then calculate the average index value across all data points as the concordance score. In other words, for each $j\in \{1,2,...,n_2\}$, denote $S_j$ as the index set for the 50-nearest neighbors of $\yb^0_j$ and $W_j$ as the index set for the 50-nearest neighbors of its low-dimensional embedding, we calculate
\beq\label{jaccard}
\text{concordance}:= \frac{1}{n_2}\sum_{j=1}^{n_2}\text{[Jaccard Index]}_j,\qquad \text{[Jaccard Index]}_j:=\frac{|S_j\cap W_j|}{|S_j\cup W_j|}.
\eeq
%{\color{red}[introduce the formula, explicitly for concordance since it will be used in the plots]}. 
We examine the performance of our proposed algorithm under the setup that $p=800$ and 
$n_1=n_2=n,$ where $n$ range from 400 to 1000. For each $n$, we repeat the Monte Carlo simulations 1,000 times and report the averaged concordance. We compare our methods with six other methods as discussed in Section \ref{sec_biclustering} but only assess their respective low-dimensional embeddings of the dataset $\{\yb_j\}_{1\le i\le n_2}$ in term of (\ref{jaccard}). Among them, two embedding methods (``pca" and ``kpca") only use $\{\yb_j\}_{1\le i\le n_2}$ whereas other embedding methods use both datasets. 

Our results are summarized in Figure \ref{simu.fig2} of the supplement. Compared with alternative methods, the proposed algorithm has superior performance across all the settings in retrieving the geometric (torus) structures from the noisy dataset $\{\yb_i\}_{1\le i\le n_2}$.  As the sample size increases, the proposed method along with ``j-kpca" and ``kpca" has improved performance, whereas ``rl" and ``lbdm", although achieving moderate performance under smaller sample sizes, have slightly decreased performance, possibly due to insufficient account of the high dimensionality and diverging signal strength. Comparing with non-integrative methods such as ``kpca" and ``pca," our results suggest the advantages of integrative methods, such as ``prop", ``j-kpca", ``lbdm" and ``rl", that borrow the shared information from the external dataset $\{\xb_i\}$. {Comparing the running time with the two related kernel algorithms ``rl" and ``lbdm" over datasets with various sample sizes, our proposed method demonstrates similar  scalability as ``lbdm", which is better than ``rl" (Figure \ref{simu.fig2} of the supplement). In particular, the proposed method only requires less than 1.5 minutes to process datasets with $n>10^4$ samples. }

\section{Applications in integrative single-cell omics analysis}\label{sec_realdataanalysis}

We apply the proposed method and evaluate its performance for joint analysis of single-cell omics data. 
By leveraging and integrating datasets with some shared biological information, we hope to improve the understanding of individual datasets, such as enhancing the underlying biological signals and capturing higher-order biological variations \citep{butler2018integrating,luecken2022benchmarking}. Here we test our method by considering the important task of identifying distinct cell types based on multiple single-cell data of the same type. { The performance of different methods is compared and validated using benchmark single-cell datasets for which cell type labels are available from the original scientific publications. These annotations were treated as ground truth cluster labels, against which we assessed the embedding quality of each method.} In what follows,
%Section \ref{rna.sec},
we analyze two single-cell RNA-seq datasets of human peripheral blood mononuclear cells (PBMCs) generated under different experimental conditions. Moreover, in Section \ref{supp.sec.nu} of the supplement, we analyze two single-cell ATAC-seq datasets of mouse brain cells generated from different studies.

%\subsection{Human PBMCs single-cell RNA-seq data}\label{rna.sec}

The following example concerns joint analysis of two single-cell RNA-seq datasets for human PBMCs \citep{kang2018multiplexed}. In this experiment, PBMCs were split into a stimulated group ($n_1=7451$ cells) and a control group ($n_2=6548$ cells), where the stimulated group was treated with interferon beta. The distinct experimental conditions and different sequencing batches necessarily introduced structural discrepancy, or batch effects, between the two datasets, making it difficult and problematic to directly combine the two datasets. { Here we are interested in identifying clusters of different cell types for each dataset, in the same spirit as our simulation studies in Section 4.1. We evaluate the embedding quality using the Rand index, comparing the estimated clusters to the true labels, with higher values indicating better performance.} We preprocess and normalize each dataset by following the standard pipeline implemented in the Seurat R package \citep{stuart2019comprehensive}, and select the $p=1000$ most variable genes for subsequent analysis. We apply our proposed method, the Seurat  integration method \citep{butler2018integrating,stuart2019comprehensive} (``seurat") -- arguably the most popular single-cell integration method -- as well as  six other methods evaluated in our simulation, to obtain clustering of both datasets. 

For each method, for various choices of $\mathsf r$ (from 5 to 20), we first obtain $\mathsf r$-dimensional embeddings of both datasets, and then apply the hierarchical clustering algorithm %using the arithmetic mean as the agglomerative method 
to assign the cells into $\mathsf r+1$ clusters. As in Section \ref{sec_biclustering}, for each $\mathsf r$, the clustering results are evaluated using the Rand index (\ref{rand}), averaged across the two datasets, by comparing with the cell type annotations obtained from the original scientific publication \citep{kang2018multiplexed,stuart2019comprehensive}. Figure \ref{realdata.fig} of our supplement (Left) contains the evaluation results, where for each method, we present a boxplot of the Rand indices obtained based on different $\mathsf r$ values. %{\color{red}[this is a slightly modified quantity, please define it here.]} 

%{\color{red}[add more discussions on each method, for us, it is not only accurate but also low variability.]}
The proposed method achieves overall the best performance in identifying the distinct cell types. Its improvement over the methods based on separate analysis (``pca" and ``kpca") demonstrates the advantage of integrative analysis of multiple datasets. { Moreover, comparing with other integrative approaches such as ``j-kpca" and ``seurat", the proposed method not only achieves better clustering accuracy but shows least variability over different values of $\mathsf r$, demonstrating its general robustness against the choice of embedding dimensions and cluster numbers. }

%\vspace{-10pt}

% comment it out for unblinded
%\section*{Acknowledgment} 
%The authors would like to thank the editor, the associate editor and the anonymous referees for their suggestions and comments, which have improved the paper significantly. The first author is partially supported by NSF DMS-
 \clearpage
%\section*{Supplementary materials}
\begin{center}
{\bf \huge Supplementary materials}
\end{center}

 In the supplement, we prove all the main theorems and  the technical lemmas. Some additional discussions about theoretical background and numerical implementation are also included.

{The supplement is organized as follows. Section \ref{supp.sec.nu} presents additional numerical results. Section \ref{appendix_additionalremark} provides several remarks on our methodology and theoretical findings. In Section \ref{appendix_preliminaryresults}, we state and prove technical lemmas concerning convergence in RKHSs, random matrix theory, and high-dimensional concentration inequalities. Section \ref{sec_techinicalproofappendix} is devoted to the proofs of the main results. Finally, Section \ref{sec_additionaladalall} offers further discussions, technical background, and auxiliary lemmas.}

Throughout the paper, the following conventions and notations will be consistently used. We interchangeably use the notation $\mathsf{X}=\OO_{\prec}(\mathsf{Y})$, $\mathsf{X} \prec \mathsf{Y}$ or $\mathsf{Y}\succ \mathsf{X}$  if $\mathsf{X}$ is stochastically dominated by $\mathsf{Y}$, uniformly in $u\in\mathsf{U}^{(n)}$, when there is no risk of confusion. We say that an $n$-dependent event $\Psi \equiv \Psi(n)$ holds {with high probability} if for any large $D>1$, there exists $n_0=n_0(D)>0$ so that $\mathbb{P}(\Psi) \geq 1-n^{-D},$
for all $n \geq n_0.$  For two sequences of deterministic positive values $\{a_n\}$ and $\{b_n\},$ we write $a_n=\OO(b_n)$ if $a_n \leq C b_n$ for some positive constant $C>0.$ In addition, if both $a_n=\OO(b_n)$ and $b_n=\OO(a_n),$ we write $a_n \asymp b_n.$ Moreover, we write $a_n=\oo(b_n)$ if $a_n \leq c_n b_n$ for some positive sequence $c_n \to 0.$
%We write $\mathsf{X} \asymp\mathsf{Y}$ if both $\mathsf{X} \prec \mathsf{Y}$ and $\mathsf{X} \succ \mathsf{Y}$ hold. 
For any probability measure $\sfP$ over $\Omega$, we denote $\mathcal{L}_2(\Omega, \sfP)$ as the  collection of $L_2$-integrable functions with respect to $\sfP$, that is, for any $f\in \mathcal{L}_2(\Omega, \sfP)$, we have $\|f\|_{\sfP}=\sqrt{\int_{\Omega} |f(y)|^2\sfP(dy)}<\infty$. 
For a vector $\bold{a} = (a_1,...,a_n)^\top \in \mathbb{R}^{n}$, we define its $\ell_p$ norm as $\| \bold{a} \|_p = \big(\sum_{i=1}^n |a_i|^p\big)^{1/p}$.  %We write $a\land b=\min\{a,b\}$ and $a\lor b=\max\{a,b\}$.  
We denote $\text{diag}(a_1,...,a_n)\in\R^{n\times n}$ as the diagonal matrix whose $i$-th diagonal entry is $a_i$.
For a matrix $ \bold{A}=(a_{ij})\in \R^{n\times n}$,  we define its Frobenius norm as $\| \bold{A}\|_F = \sqrt{ \sum_{i=1}^{n}\sum_{j=1}^{n} a^2_{ij}}$, % its ``two-to-infinity" norm as $\|\bA\|_{2\to\infty}=\sup_{\|\xx\|_2=1}\|\bA \xx\|_\infty$, 
and its operator norm as $\| \bold{A} \| =\sup_{\|\bold{x}\|_2\le 1}\|\bold{A}\bold{x}\|_2 $.
%; we also denote $ \bold{A}_{.i}\in \R^{n}$ as its $i$-th column and $ \bold{A}_{i.}\in \R^{n}$ as its $i$-th row.
For any integer $n>0$, we denote the set $[n]=\{1,2,...,n\}$. 
For a random vector $\mathbf{g},$ we say it is sub-Gaussian if $
\mathbb{E} \exp(\mathbf{a}^\top \mathbf{g}) \leq \exp\left( \| \mathbf{a} \|_2^2/2 \right)$ 
for any deterministic vector $\mathbf{a}.$
Throughout, $C,C_1,C_2,...$ are universal constants independent of $n$, and can vary from line to line.
We denote ${\bf I}_n$ as an $n$-dimensional identity matrix.

%\copyrightnotice

%\tableofcontents
\setcounter{section}{0}
\renewcommand{\thesection}{\Alph{section}}
\renewcommand{\thetable}{\Alph{table}}
\renewcommand{\thefigure}{\Alph{figure}}  
%\numberwithin{lem}{section}
%\numberwithin{prop}{section}

 \section{Additional numerical results} \label{supp.sec.nu}

The R codes that reproduce our analyses are available at our GitHub repository \url{https://github.com/rongstat/KSJE}.

\paragraph{Numerical results for the simulations conducted in  Section \ref{sec_simulation}.} The numerical results for the simulations conducted in Section \ref{sec_simulation} are presented below in Figures \ref{simu.fig1} and \ref{simu.fig2}.

\begin{figure}%[ht]
	\centering
	\includegraphics[angle=0,width=14cm]{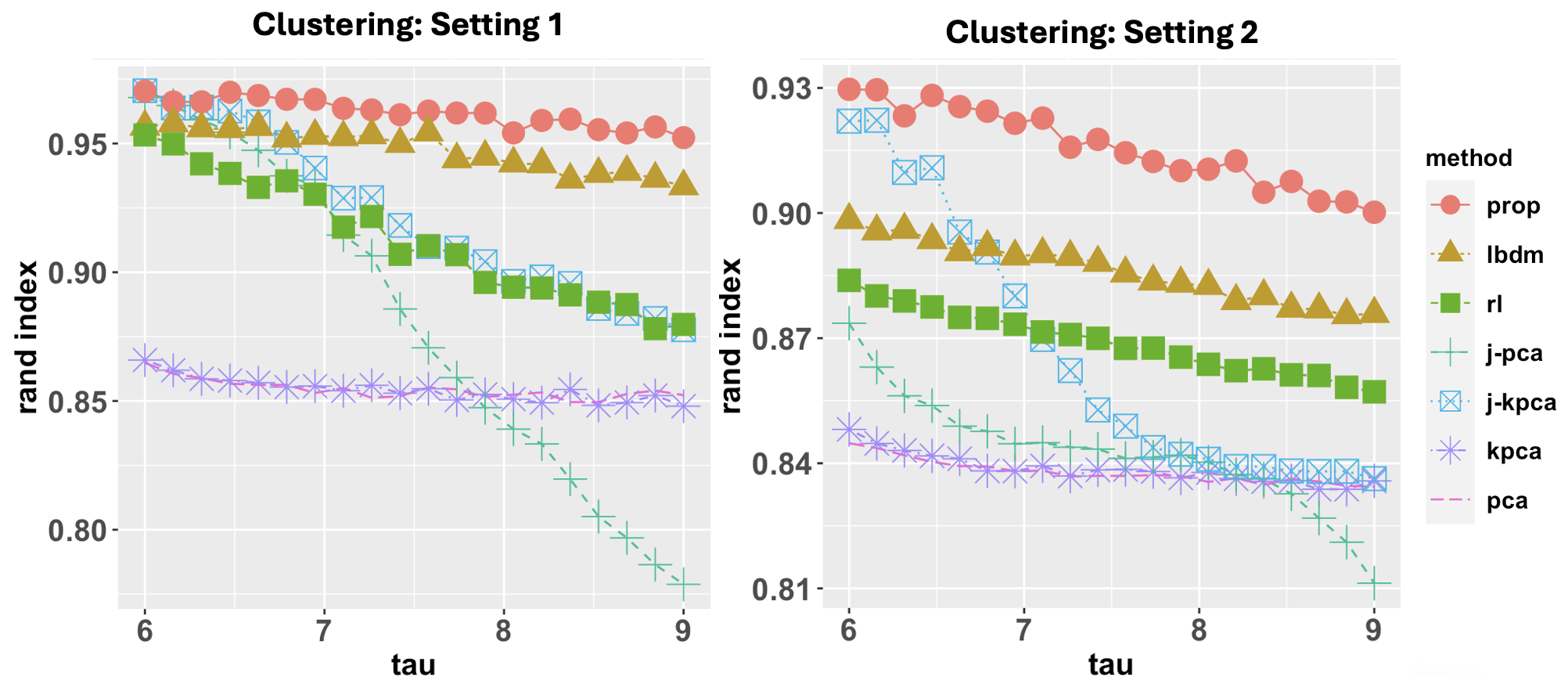}
	\caption{Comparison of simultaneous clustering performance of 7 different approaches using Rand index. The parameter $\tau$ indicates strength of added structural discrepancy between the two datasets. Left: simulation Setting 1 with identical cluster structures. Right: simulation Setting 2 with partially overlapping clusters. The proposed method does the best in leveraging the (even partially) shared cluster patterns across datasets to improve clustering of each individual dataset.} 
	\label{simu.fig1}
\end{figure}

\begin{figure}%[ht]
	\centering
	\includegraphics[angle=0,width=14cm]{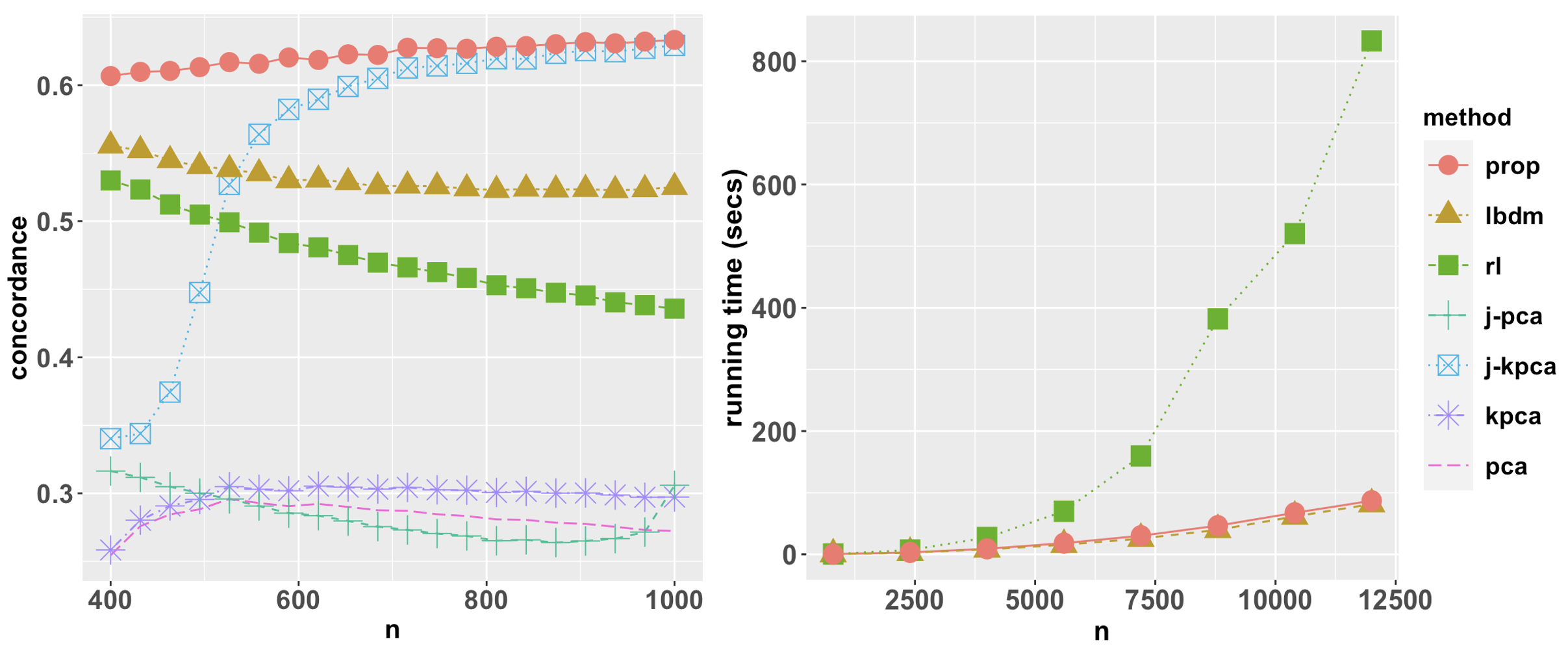}
	\caption{Comparison of nonlinear manifold learning performance of 7 different approaches. Left: concordance measures under various sample sizes. Right: running time (minutes) comparison of three related algorithms ``prop", ``rl" and ``lbdm," that achieved relatively better performance, showing competent scalability of ``prop." The proposed algorithm has overall the best performance in retrieving the torus structures from the noisy dataset $\{\yb_i\}$.  Our results suggest the advantages of integrative embedding methods (e.g., ``prop") over non-integrative embedding method (``pca" and ``kpca"), by leveraging the shared information from the external, cleaner dataset $\{\bx_i\}$.  %Among the three related , while indicates superior .
	} %Bottom right: comparison of running time of the three landmark-based joint embedding methods.
	\label{simu.fig2}
\end{figure}

\paragraph{Mouse brain single-cell ATAC-seq data.}
The second real data analysis example concerns integrative analysis of two single-cell ATAC-seq datasets for mouse brain cells ($n_1=3618$ and $n_2=3715$) generated from different studies \cite{luecken2022benchmarking}. ATAC-seq is a biotechnology that quantifies the genome-wide chromatin accessibility, which contains important information about epigenome dynamics and gene regulations. Each dataset contains a matrix of ATAC-seq gene activity scores, characterizing gene-specific chromatin accessibility for individual cells. Again, we preprocess and normalize the datasets following the standard pipeline in the Seurat package, and select the $p=1,000$ most variable genes for subsequent analysis. We evaluate the performance of the above eight methods in clustering the cells in both datasets under various embedding dimensions $\mathsf r$ (from 5 to 20), using the Rand index based on the cell type annotations provided in the original scientific publication  \cite{luecken2022benchmarking}. Our results in Figure \ref{realdata.fig} (Right) again indicate  the overall superior performance of the proposed method, demonstrating the benefits of integrative analysis of single-cell data, and its robustness with respect to different choices of embedding dimensions.

\begin{figure}%[ht]
	\centering
	\includegraphics[angle=0,width=13cm]{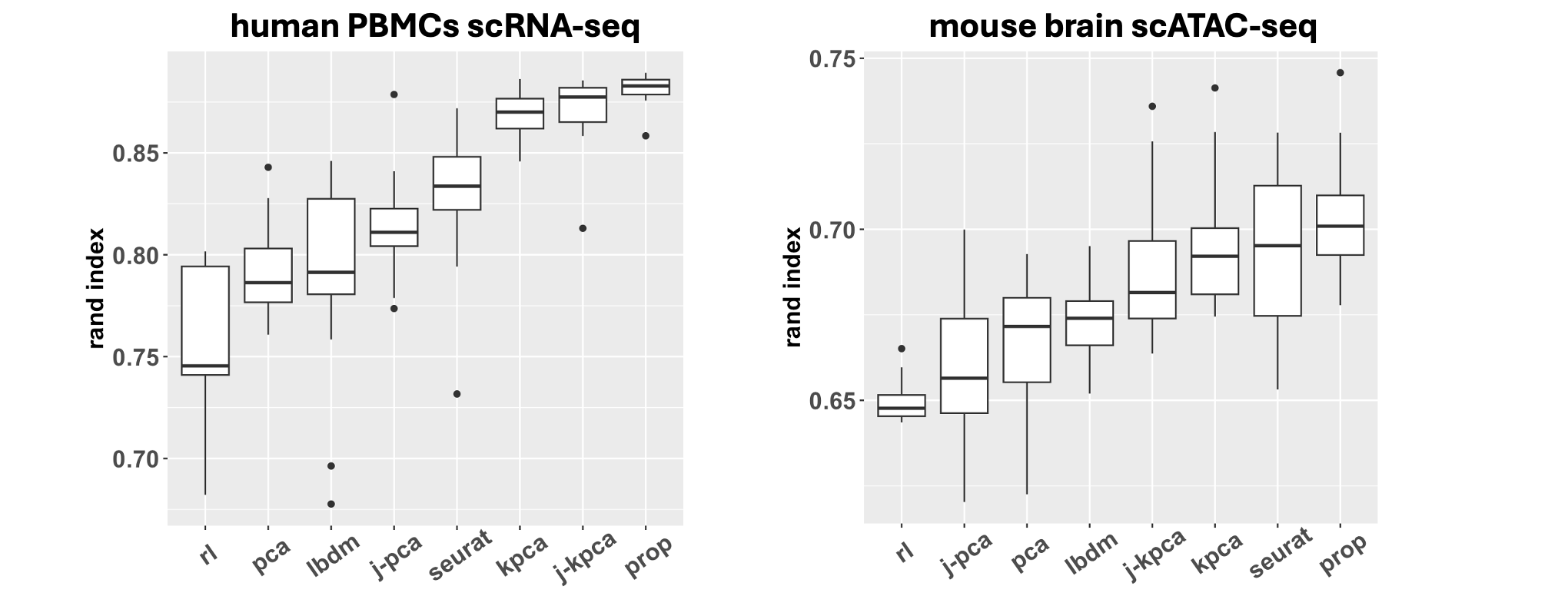}
	\caption{Comparison of eight methods for simultaneous biclustering of single-cell datasets. Each boxplot contains the  Rand index for clustering accuracy obtained under various embedding dimensions ($\mathsf r$ from 3 to 20). Left: single-cell RNA-seq data for human peripheral blood mononuclear cells \cite{kang2018multiplexed}. Right: single-cell ATAC-seq gene activity data for mouse brain cells \cite{luecken2022benchmarking}. The proposed method not only achieves better clustering accuracy compared with other methods, but also shows smaller variability and therefore robustness with respect to different choices of embedding dimensions. %{\color{red}[more explanations, more on the boxplot, especiallt those dots]}
	} %Bottom right: comparison of running time of the three landmark-based joint embedding methods.
	\label{realdata.fig}
\end{figure}

	{
	\paragraph{Integrating alignable datasets and alignability screening.} For the numerical simulation setup considered in our main text, where in each case the two datasets contain some shared structures, we demonstrate the use of our proposed alignability screening procedure. {In particular, throughout this paper, we use $k=30$ for the nearest neighbor parameter in the alignability screening step}. For both the simultaneous clustering problem and the nonlinear manifold learning problem, in almost all the settings ($>99\%$), the simulated data pairs passed our alignability screening. Figure \ref{supp.simu.fig1} contains the boxplots of the median KNN purity scores within each simulation setting. 	
	Our results indicate that the proposed alignability screening does not have a negative effect on the integration of datasets containing (partially) shared structures.
	
\begin{figure}%[ht]
	\centering
	\includegraphics[angle=0,width=14cm]{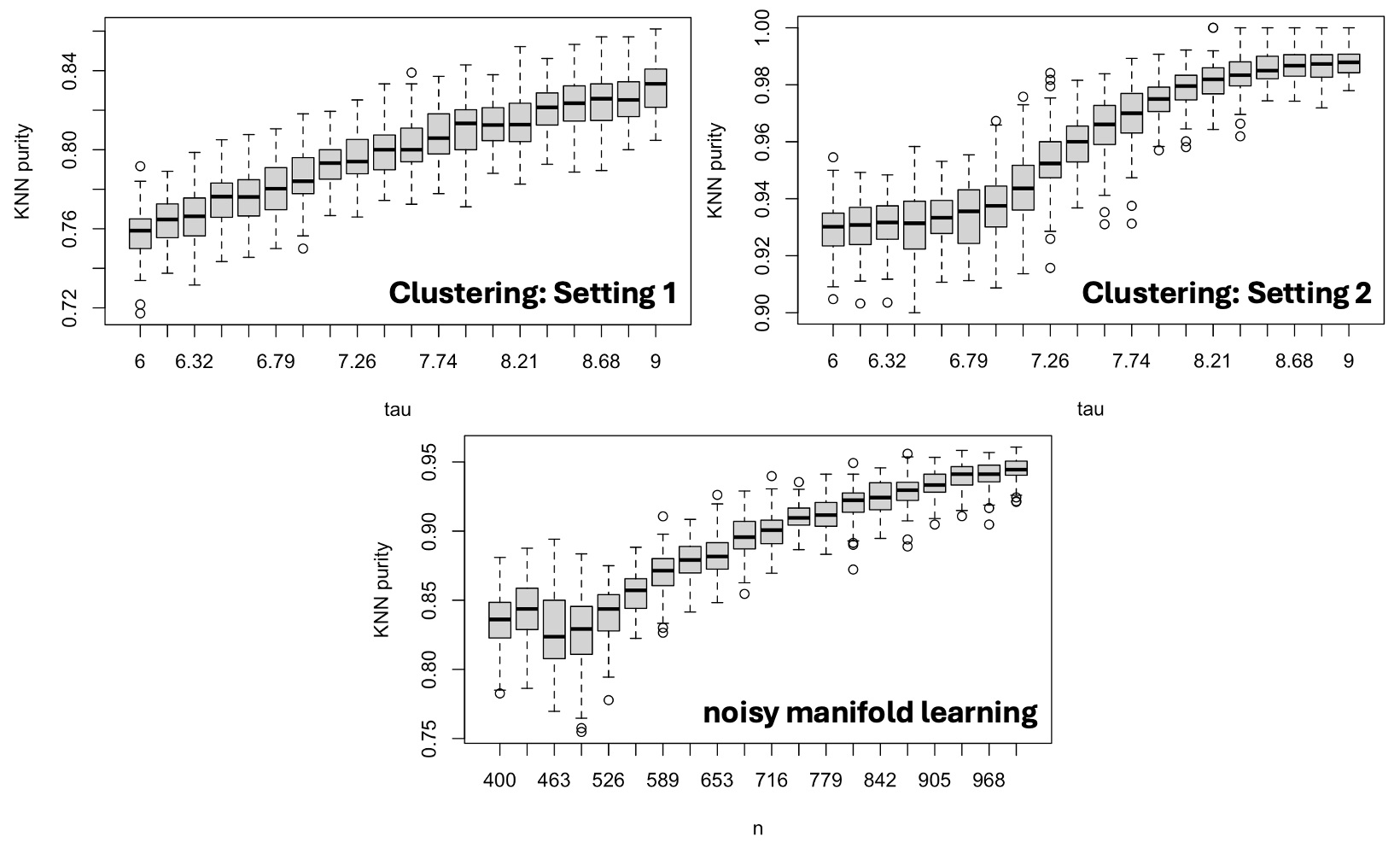}
	\caption{Boxplots of the median KNN purity scores within each simulation setting.} 
	\label{supp.simu.fig1}
\end{figure}

	\paragraph{Detection of non-alignable datasets.} When the pair of datasets do not contain overlapping signals, integrating these datasets using our proposed duo-landmark joint embedding procedure may introduce unwanted distortions to the datasets, creating artificial alignment patterns that may lead to misinterpretations and false conclusions. To demonstrate this potential risk and the power of the proposed alignability screening procedure, we generate a pair of negative control datasets, each containing a distinct manifold structure and there is no (measurable) overlap between the two manifolds. Specifically, the first dataset contains $n_1=3000$ samples, uniformly drawn from a ``Klein bottle" manifold in 4-dimensional space; see Figure \ref{supp.klein} for an illustration. The second dataset contains $n_2=2000$ samples in $\R^4$, whose first coordinates are uniformly drawn from $[-1,1]$, and the other coordinates are set as 0. If we directly apply our duo-landmark kernel embedding, we observe an artifical alignment between the two datasets (Figure \ref{supp.ce.prop}); in contrast,  the joint embeddings based on the j-kpca  do not display similar artificial alignment patterns (Figure \ref{supp.ce.kpca}). Now we apply our proposed alignability screening procedure (see Algorithm 1, Step 1) and compute the KNN purity score for each pair of generated data, finding that in all cases (across 100 rounds of simulations), our  alignability screening procedure can consistently detect the non-alignability between the two datasets. This suggests that in practice this procedure can effectively avoid generating misleading joint embedding of the two datasets. 
	
	\begin{figure}%[ht]
		\centering
		\includegraphics[angle=0,width=14cm]{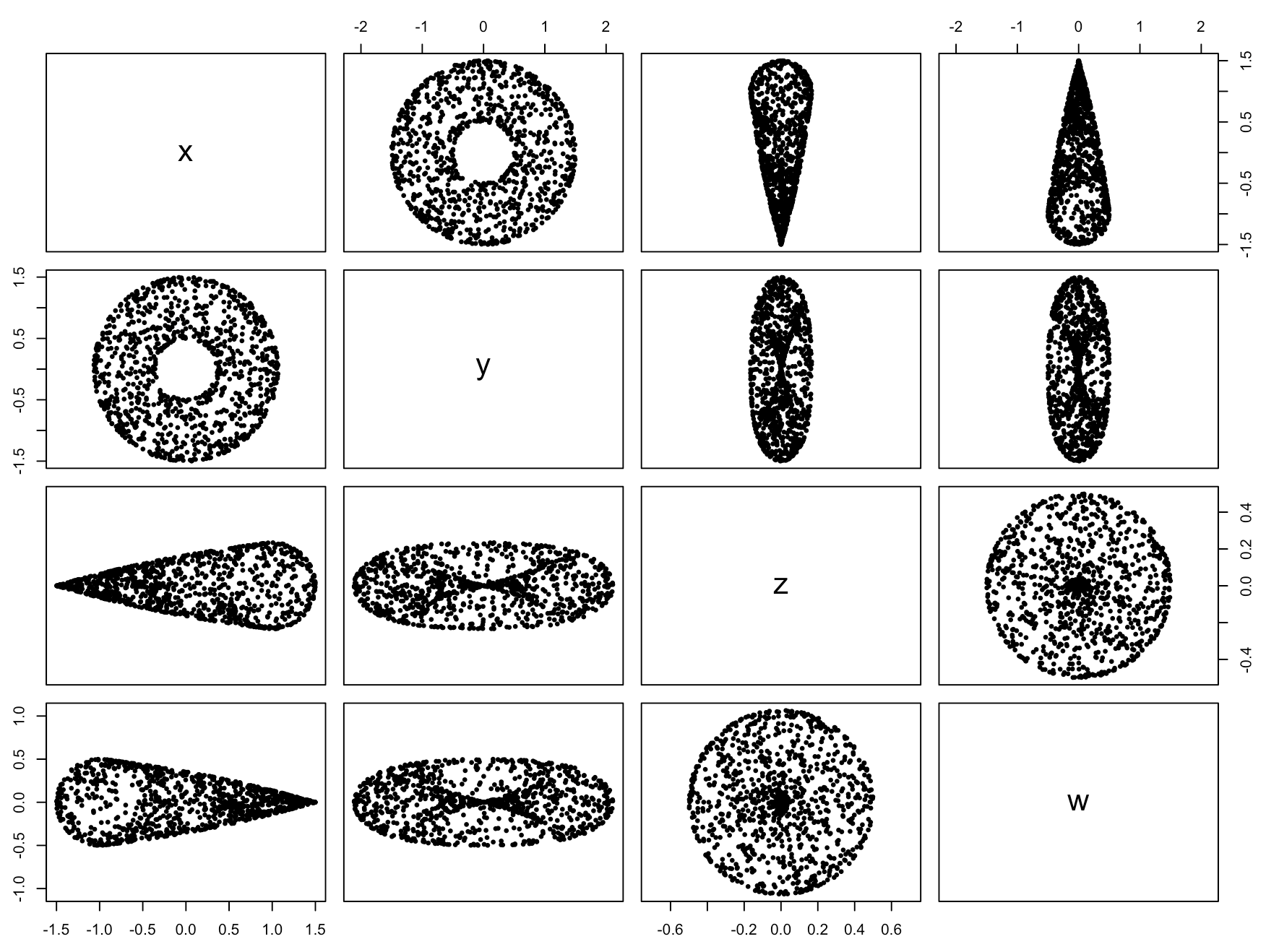}
		\caption{Pairwise coordinates scatter plots of random samples from the ``Klein bottle".} 
		\label{supp.klein}
	\end{figure}
	
		\begin{figure}%[ht]
		\centering
		\includegraphics[angle=0,width=14cm]{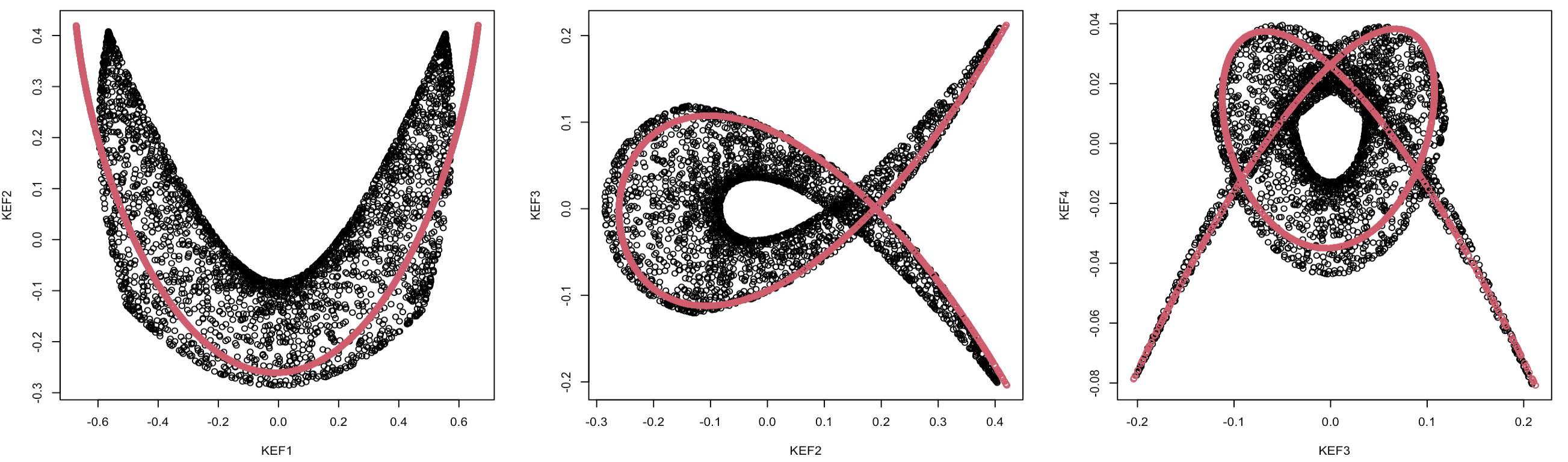}
		\caption{Pairwise coordinates scatter plots for the proposed joint embeddings. Black points: embeddings of the  3000 samples from the ``Klein bottle" manifold. Red points: embeddings of the  2000 samples from the straight line segment. KEF indicates the coordinates of the proposed kernel eigenfunction embeddings.} 
		\label{supp.ce.prop}
	\end{figure}
	
			\begin{figure}%[ht]
		\centering
		\includegraphics[angle=0,width=14cm]{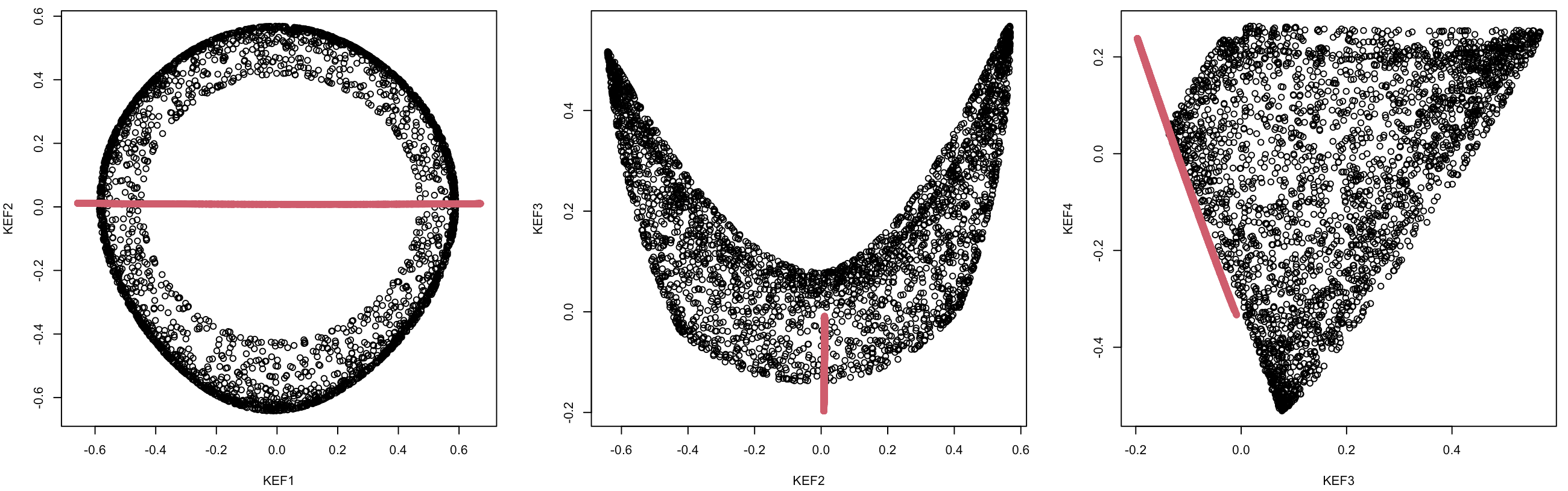}
		\caption{Pairwise coordinates scatter plots for the ``j-kpca"-based embeddings. Black points: embeddings of the  3000 samples from the ``Klein bottle" manifold. Red points: embeddings of the  2000 samples from the straight line segment. KEF indicates the coordinates of the proposed duo-landmark kernel  embeddings.} 
		\label{supp.ce.kpca}
	\end{figure}
	
%	[Figure: Klein bottle; boxplots of KNN purity out of 100 trails]
	
%	[Figure: joint embedding of the two datasets, by eigenfunctions.]

	\paragraph{Integrating a noisy signal dataset with a pure noise dataset.} when one dataset contains relatively strong signals while the other consists purely of noise, similar to the previous case, we find our proposed duo-landmark kernel embedding may still introduce artifacts, forcing alignment between the low-dimensional embeddings of the two datasets. In contrast, j-KPCA does not appear to suffer from this issue. Nonetheless, in this scenario, our alignability screening procedure remains valuable in preventing the forced integration of these two datasets. To demonstrate this point, we generate a pair of datasets, one containing $n_1=3000$ samples uniformly drawn from the torus manifold $\mathbb{T}^2$ in 3-dimensional space, and the other dataset containing $n_2=2000$ samples in $\R^3$, whose coordinates are all drawn from $\mathcal{N}(0,1)$. Applying our  alignability screening procedure, we find that across all 100 rounds of simulations, our method successfully detects the non-alignability between these two datasets, again avoiding the misleading joint embeddings of the two datasets (see Figures \ref{supp.sn.prop} and \ref{supp.sn.jkpca}).

		\begin{figure}%[ht]
		\centering
		\includegraphics[angle=0,width=14cm]{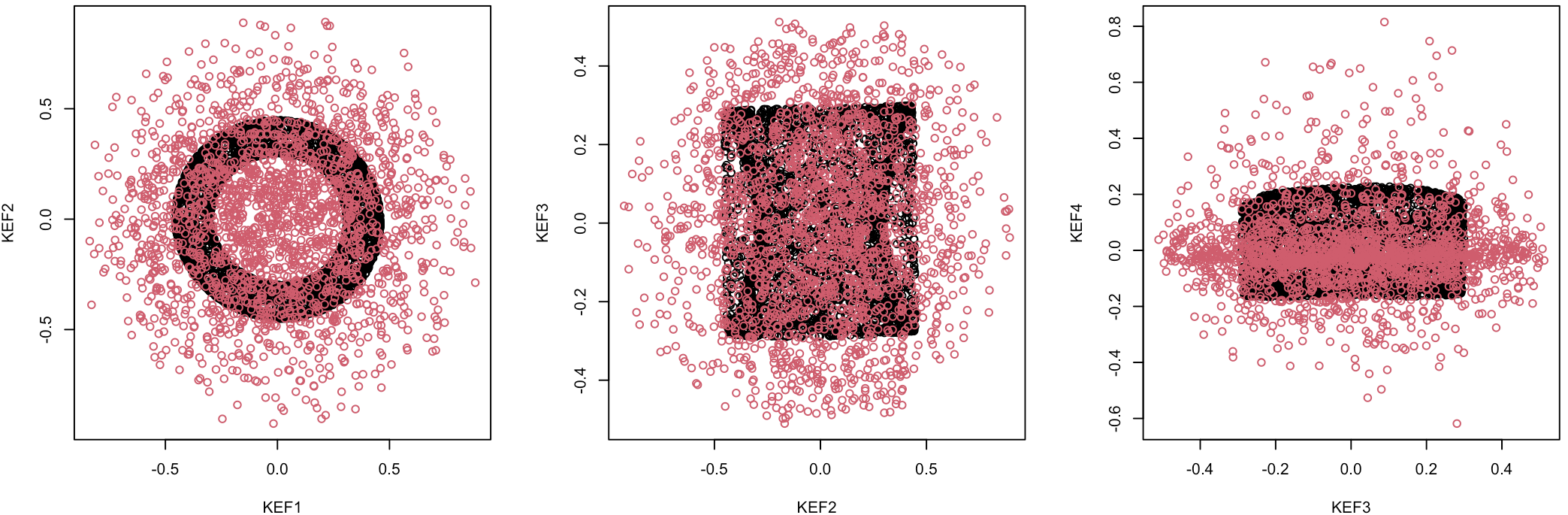}
		\caption{Pairwise coordinates scatter plots for the proposed joint embeddings. Black points: embeddings of the 3000 samples from the torus $\mathbb{T}^2$ manifold embedded in $\R^3$. Red points: embeddings of the 2000 samples from standard multivariate Gaussian distribution in $\R^3$. KEF indicates the coordinates of the proposed duo-landmark kernel embeddings.} 
		\label{supp.sn.prop}
	\end{figure}
	
		\begin{figure}%[ht]
		\centering
		\includegraphics[angle=0,width=14cm]{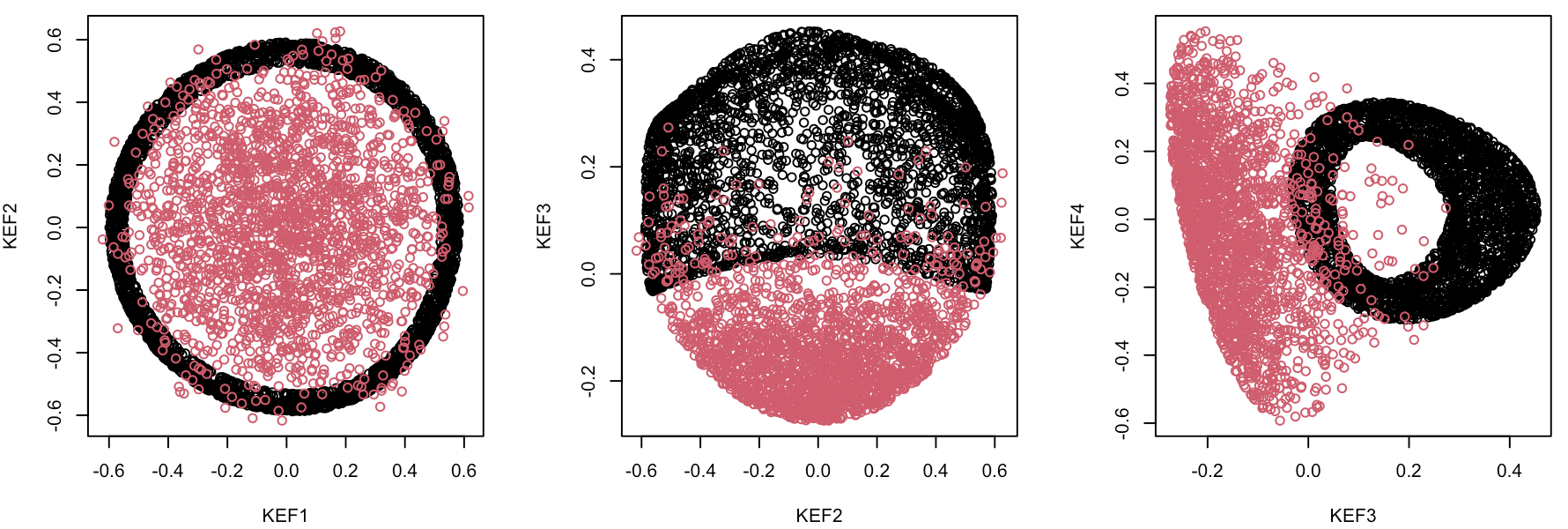}
		\caption{Pairwise coordinates scatter plots for the ``j-kpca" joint embeddings. Black points: embeddings of the 3000 samples from the torus $\mathbb{T}^2$ manifold embedded in $\R^3$. Red points: embeddings of the 2000 samples from standard multivariate Gaussian distribution in $\R^3$. KEF indicates the coordinates of the proposed duo-landmark kernel embeddings.} 
		\label{supp.sn.jkpca}
	\end{figure}
		
	\paragraph{Integrating a pair of pure noise datasets.} When both datasets consist of pure noise -- for example, when both datasets are drawn from a standard multivariate Gaussian distribution in $\R^4$ -- we find that our  alignability screening procedure would often indicate alignability between the datasets. Figure \ref{supp.nn.prop} contains a histogram of the median KNN purity score across 100 rounds of simulations, whose values vary around 0.5,  showing that the alignability screening algorithm cannot effectively distinguish such pathological cases from the favorable cases where two datasets contain shared signal structures.
{ Alternatively, in Section \ref{sec_tuningparametersection} of the Supplement, we propose an algorithm based on our theoretical results  under the low-SNR regime presented in Section \ref{sec_phasetransition}, that can help detect such pathological cases.}
	
	\begin{figure}%[ht]
		\centering
		\includegraphics[angle=0,width=7cm]{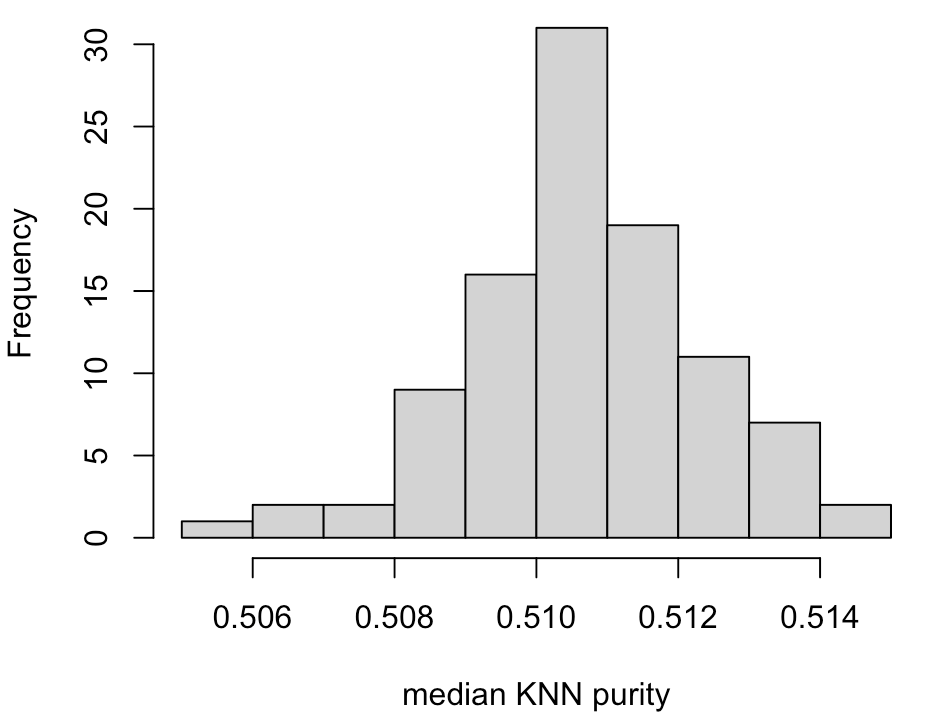}
		\caption{Histogram of the median KNN purity scores when both datasets consist of pure noise.} 
		\label{supp.nn.prop}
	\end{figure}

	\paragraph{Impact of sample-size imbalance.} We conduct simulations following the setup in Section 4.1 of the main text to assess the impact of unequal sample sizes. Specifically, under the same simulation framework, we consider scenarios where  $n_1=600$ and the ratio $n_2/n_1$ varying from 1.2 to 3.  We then evaluate the performance of different methods using the overall Rand index. We find that, on the one hand, even if the two datasets have very different sample sizes, the proposed method still performs well and demonstrates advantages over existing methods.  On the other hand, as suggested by our theoretical analysis (Theorem \ref{thm_cleanconvergence} and Corollary \ref{cor_finalconvergence}) as long as the smaller of the two dataset has sufficiently large sample size, the performance of our method is primarily determined by the sample size of the smaller dataset; in particular, increasing the sample size of the larger dataset does not further improve the performance of our method.
	
		\begin{figure}%[ht]
		\centering
		\includegraphics[angle=0,width=7cm]{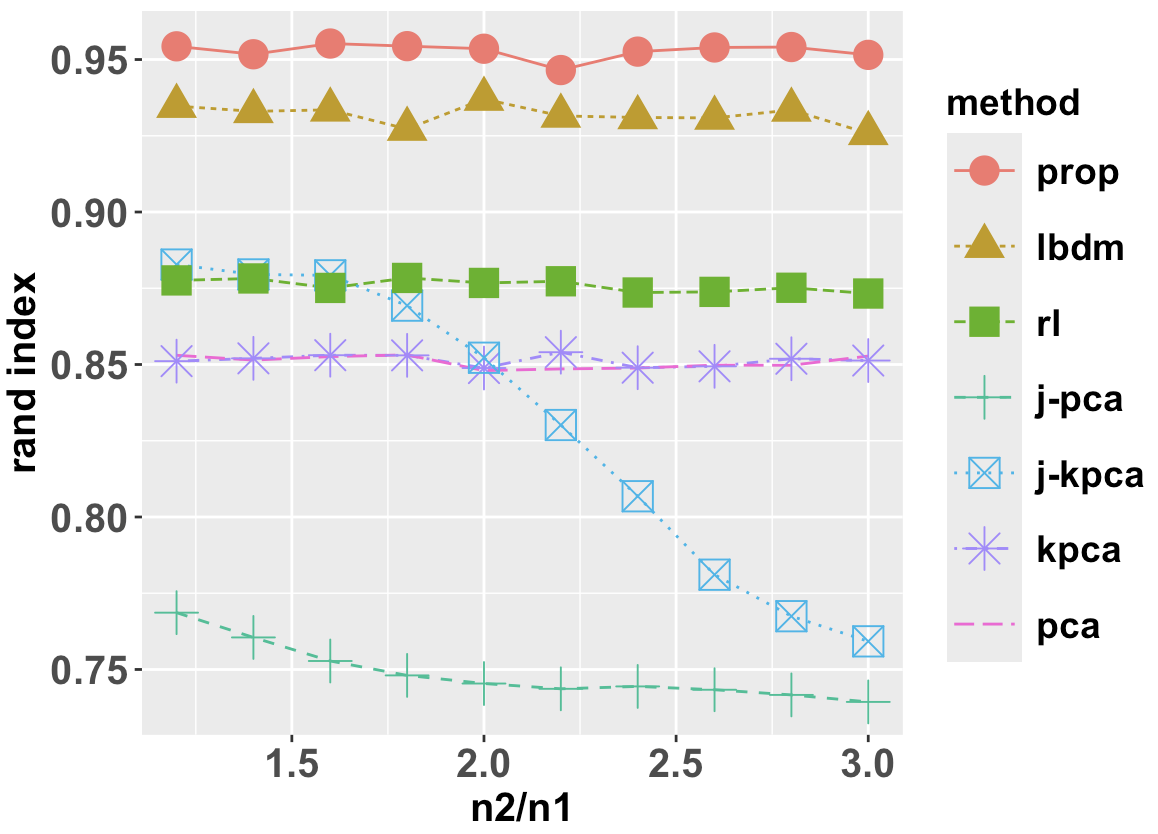}
		\caption{Comparison of simultaneous clustering performance of 7 different approaches using Rand index, averaged across 100 rounds of simulations for each setting. We fix $n_1=600$ and $p=800$, and let $n_2/n_1$ vary from 1.2 to 3.} 
		\label{supp.unbalance}
	\end{figure}

	\paragraph{Choice of $\Gamma_1$ and $\Gamma_2$.}  The choice of $\Gamma_1$ and $\Gamma_2$ will depend on the specific downstream applications of our spectral embeddings. If the goal is to obtain a joint low-dimensional representation of both datasets (as in our single-cell applications in Section 5), we acknowledge that it is typically required that $\Gamma_1=\Gamma_2$. However, if the goal is to conduct clustering analysis of the samples within each datasets, leveraging the possibly shared structures to improve the overall clustering performance on both datasets, the choice of $\Gamma_1$ and $\Gamma_2$ may depends on the number of clusters contained in each dataset. In setting 2 of the simulation study considered in Section 4.1 of the manuscript, we generated two datasets, one containing 6 clusters ($\mathcal{Y}$) and the other containing 4 clusters ($\mathcal{X}$), whose cluster centers align with some clusters in the first dataset. Although in the manuscript we used $\Gamma_1=\Gamma_2=\{2,3,4,5,6,7\}$ for simplicity and practicality, we show further in the supplement that if the number of clusters is known in advance, choosing $\Gamma_2=\{2,3,4,5,6,7\}$ and $\Gamma_1=\{2,3,4,5\}$ will in fact lead to better clustering of the samples, especially for the dataset $\mathcal{X}$ containing 4 clusters (see Figure \ref{supp.gamma}). This demonstrates the potential benefits of specifying different index sets for the two datasets.

	\begin{figure}%[ht]
		\centering
		\includegraphics[angle=0,width=12cm]{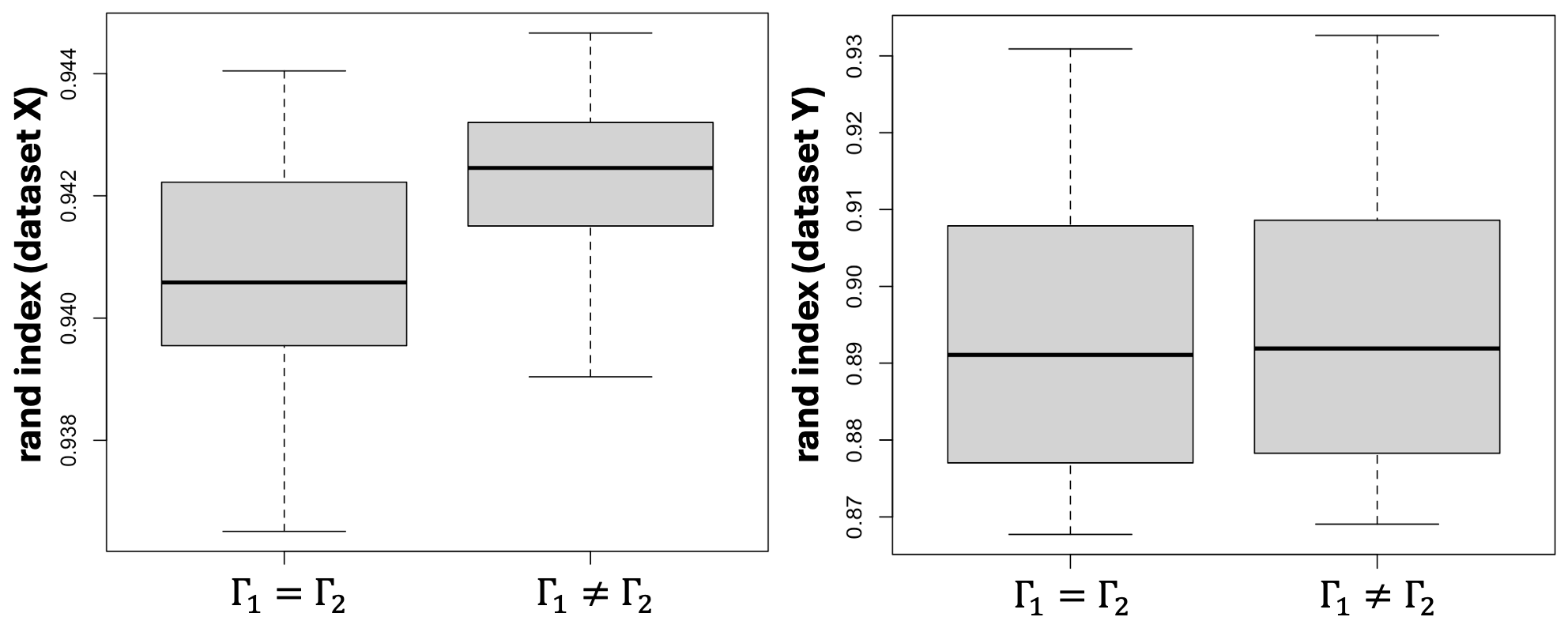}
		\caption{Comparison of clustering performance (Rand index) of the proposed method on the dataset $\mathcal{X}$ containing 4 clusters (left), and on the dataset $\mathcal{Y}$ containing 6 clusters (right), under the second simulation setup considered in Section 4.1. In each panel, the left boxplot corresponds to the embedding with $\Gamma_1=\Gamma_2= \{2,3,4,5,6,7\}$; the right boxplot corresponds to the embedding with $\Gamma_1=\{2,3,4,5\}$ and $\Gamma_2=\{2,3,4,5,6,7\}$.} 
		\label{supp.gamma}
	\end{figure}
	
	{
	\paragraph{Additional sensitivity analyses.} We conducted additional sensitivity analyses about the kernel bandwidth percentile parameter $\omega$. Under the noisy manifold learning task, we compared our methods with different choices of $\omega\in\{0.2,0.35,0.5,0.65,0.8\}$ for the bandwidth selection over different sample sizes. We found that overall the performance of the proposed method remains similar and stable across all the settings, with the methods under larger $\omega$ displaying slightly better performance compared with the others. See the left panel of Figure  \ref{simu.fig.3}.
	Similarly, we have conducted additional experiments to evaluate the sensitivity against the choice of the embedding index set $\Gamma_1=\Gamma_2=\{1,2,...,r\}$, particularly with different latent dimensions $r\in\{2,3,...,6\}$. Our analysis suggested that, as expected, when the underlying manifold has dimension $r^*=3$, the proposed method with underspecified $r$ achieved worse performance than that with correctly specified or over-specified $r$. In particular, the performance of the proposed method is less sensitive to the exact choice of $r$ whenever $r\ge r^*=3$. See the right panel of Figure  \ref{simu.fig.3}.}

	\begin{figure}%[ht]
		\centering
		\includegraphics[angle=0,width=14cm]{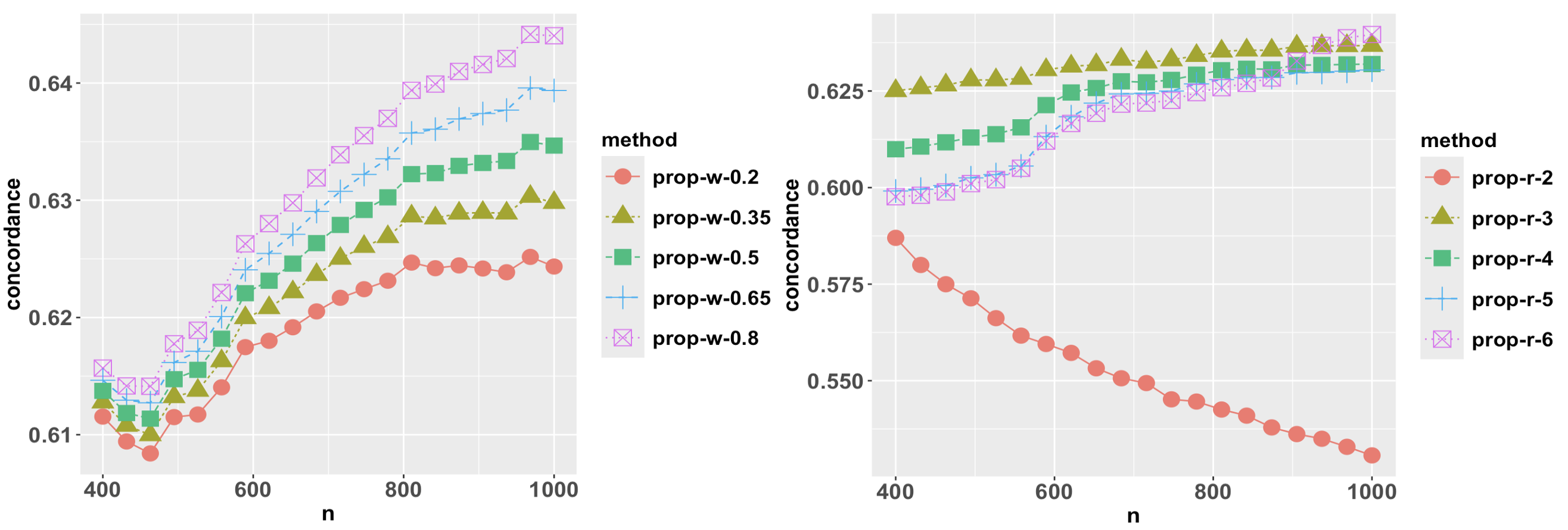}
		\caption{Comparison of the proposed method under different choices of $\omega\in \{0.2,0.35,0.5,0.65,0.8\}$ (Left) and of the embedding index set $\Gamma_1=\Gamma_2=\{1,2,...,r\}$, with dimension $r\in\{2,3,...,6\}$ (Right). The simulation is carried out under the noisy manifold learning setup described in Section 4.2.}  
		\label{simu.fig.3}
	\end{figure}

			\paragraph{Singular values of $\bK$ in real data.} Regarding the separation of the leading eigenvalues of the integral operators in real applications, by checking their empirical counterparts, that is, the leading singular values of the duo-landmark kernel matrix $\bK$ obtained from each single-cell dataset analyzed in Section 5 of the main text, we found that, for both human PBMCs scRNA-seq and and mouse brain scATAC-seq datasets, the leading (e.g., first 3 or 4) singular values appear to be well separated, with  the magnitude of the ``eigen gap" of the same order as the singular value themselves. This observation suggests that in single-cell applications, the leading eigenfunctions of the integral operators are often  distinguishable and can be individually estimated by our method.
		
		\begin{figure}%[ht]
			\centering
			\includegraphics[angle=0,width=13cm]{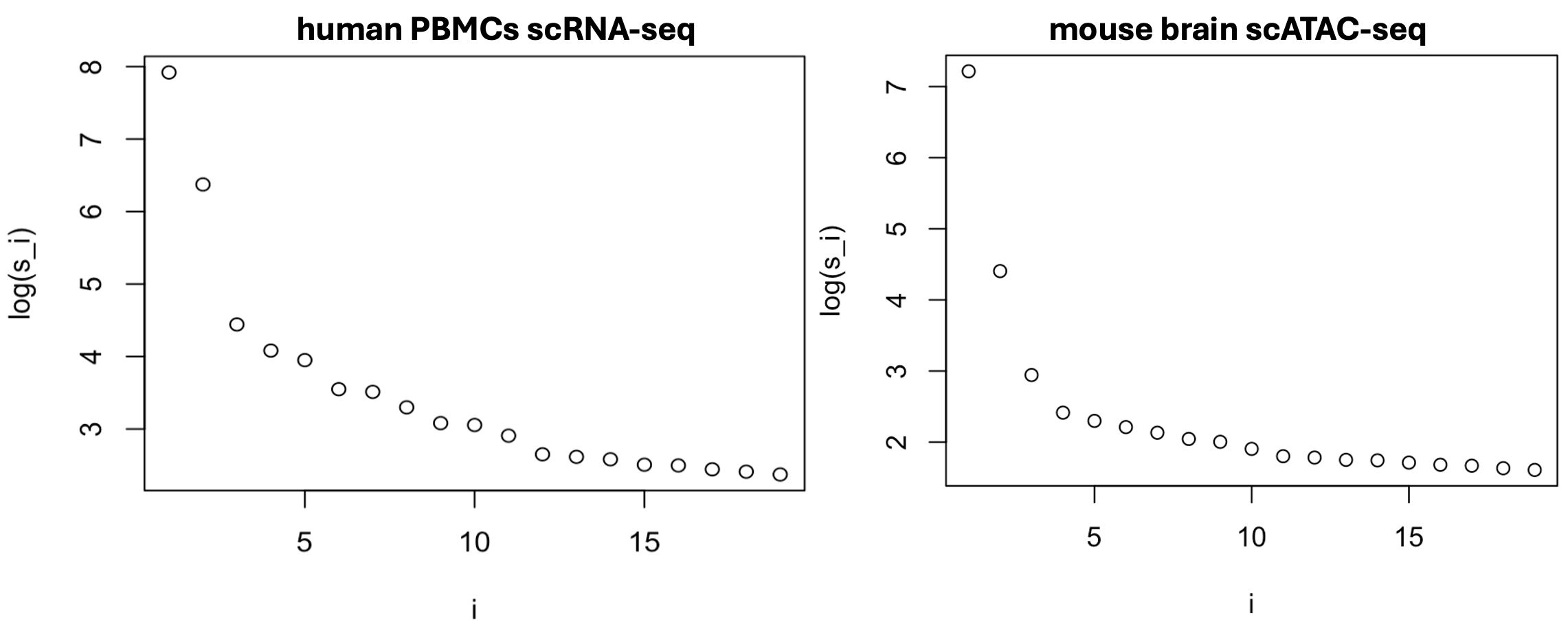}
			\caption{Summary of the leading 20 singular values (under log-scale) of $\bK$ in the two real single-cell datasets analyzed in the main text. } 
			\label{supp.eig.data}
		\end{figure}
		
}

{ \section{Some additional remarks}\label{appendix_additionalremark}
\begin{rem}[Choice of kernel functions] \label{rem_remarkkernel}
	Regarding the choices of kernel functions, in the current paper, for simplicity, we focus on the Gaussian kernel function as in (\ref{eq_clearnsignalkernel}). However, our results may be extended to a more general class of kernel functions, as discussed in Section 3.4 of \cite{ding2022learning}. This class includes Laplacian kernels, rational polynomial kernels, Matern kernels, and truncated kernels as special examples. The arguments are similar to Theorem 4 of \cite{ding2022learning}. Since this is not the focus of the current paper, we will pursue this direction in a subsequent work.  
\end{rem}

\begin{rem}[Advantages of duo-landmark embedding]
	{ Built upon these duo-landmark integral operators defined on the two manifolds, our algorithm is capable of generating spectral embeddings that encode the underlying geometric structures of both datasets. A key advantage is that our algorithm and theory can capture these structures without requiring the explicit form of the manifolds. All essential information about the manifolds, such as the Riemannian metric, curvature, and the first and second fundamental forms, is incorporated into the algorithms and the final embeddings through these proposed operators;} {see Section \ref{manifold.sec} for more details on this aspect.}
	\end{rem}
}

{
\begin{rem}[Key differences from the method in \cite{ding2022learning}]\label{rem_algo}
First, Algorithm 1 in \cite{ding2022learning} is designed to provide a low-dimensional embedding for a {single} dataset, with the goal of extracting useful underlying geometric information. In contrast, Algorithm 1 in the current paper aims to learn the {commonly shared} information between {two} datasets. Thus, the goals of the two algorithms are fundamentally different.  Although both algorithms involve the analysis of kernel affinity matrices, Algorithm 1 in \cite{ding2022learning} focuses on a {symmetric} kernel affinity matrix constructed from a complete graph based on a single dataset, aiming to produce a single set of embeddings. In contrast, our Algorithm 1 analyzes an {asymmetric} kernel affinity matrix derived from a graph built using two datasets, with the goal of producing two sets of joint embeddings. At a high level, Algorithm 1 in \cite{ding2022learning} utilizes a variant of the {adjacency matrix} of a single graph, whereas our current algorithm investigates a {biadjacency matrix} of a bipartite graph.

Second, although the bandwidth selection procedures in \cite{ding2022learning} and our current algorithm may appear similar, they are fundamentally different. In \cite{ding2022learning}, the bandwidth is selected based on the {pairwise distances within a single dataset} to ensure that the underlying structure can be effectively captured. In contrast, our bandwidth is chosen based on the {cross-pair distances between the two datasets} to ensure that the {commonly shared information} can be properly extracted. 

Third, as discussed above, our goal is to extract meaningful {commonly shared information} between two datasets for various downstream tasks. To this end, it is important to first assess whether the two datasets potentially share overlapping underlying geometric structures. In other words, we need to verify whether the datasets are {alignable} to avoid introducing potential artifacts. Consequently, unlike the algorithm in \cite{ding2022learning}, our method requires an {alignability screening procedure} applied to the two datasets at the outset. In the current paper, we propose such a step based on the {purity}  of the local neighborhoods across the datasets. This screening ensures that the datasets indeed contain shared structures, so that the resulting joint embeddings are meaningful for studying these commonalities.

Finally, we point out that, since the goals and algorithms are fundamentally different, their {theoretical interpretations} also differ significantly. The embeddings produced by Algorithm 1 in \cite{ding2022learning} are related to the eigenfunctions of {integral operators} associated with an underlying manifold model. In contrast, the joint embeddings produced by our current algorithm can be interpreted in terms of {pairs of eigenfunctions of duo-landmark integral operators} associated with joint manifold models that capture the {commonly shared structures}.
\end{rem}

\begin{rem}[Comparison with joint kernel PCA]\label{rem_algo1}	
Armed with Remark \ref{rem_algo}, we now explain why we prefer to work with the asymmetric $n_1 \times n_2$ kernel affinity matrix constructed from the two datasets $\mathcal{X}$ and $\mathcal{Y}$, rather than directly using the symmetric $(n_1 + n_2) \times (n_1 + n_2)$ kernel affinity matrix formed from the combined dataset $\mathcal{X} \cup \mathcal{Y}$.

First, on the modeling side, choosing to directly work with the $(n_1 + n_2) \times (n_1 + n_2)$ kernel affinity matrix formed from the combined dataset essentially assumes a strong premise: that both datasets arise from {the same underlying nonlinear manifold}. However, motivated by practical applications, our goal is to identify and exploit the shared structures between the two datasets under a joint manifold framework. Specifically, each dataset may follow its own {distinct nonlinear manifold model, but these models may partially overlap}. Our objective is to prioritize these shared structures for downstream analysis.    

Consequently, the interpretation of the embeddings differs significantly. Embeddings derived from the combined dataset rely on the eigenfunctions of a single integral operator associated with one underlying manifold model. In contrast, our proposed embeddings are based on a pair of closely related operators, with each dataset using the eigenfunctions of its respective operator. Since these operators are tightly connected, the resulting embeddings are naturally aligned through them, thereby emphasizing the shared structures between the datasets.  

In summary, when the two datasets share only partial structures, treating them as if they originate from identical structures—by using the combined dataset—may lead to poor approximations. For example, as demonstrated in our simulation studies, when $\mathcal{X}$ and $\mathcal{Y}$ only partially overlap, using the combined dataset can introduce artifacts.

Second, on the methodological side, building on the above model discussion, the two approaches require different prescreening or validation procedures. For our problem of interest, it suffices to check whether the two datasets are alignable—that is, whether they share some overlapping structures. This check can be efficiently performed using the first step of our algorithm. In contrast, using the combined dataset requires verifying that the two datasets have identical structures, which is a much more challenging—if not infeasible—task.

Third, despite the key differences in modeling and interpretation, our method offers additional advantages. For instance, when combining two datasets, the bandwidth selection procedure in \cite{ding2022learning} requires that both datasets have similar signal-to-noise ratios (SNRs)—a condition that is often unmet in practice. In contrast, our method, particularly the bandwidth selection step, accommodates differing SNRs across datasets. This flexibility enables us to extract meaningful information from a noisier dataset by pairing it with a cleaner one that shares common structures.

 Nevertheless, if the two datasets originate from the same manifold model and have identical SNRs, then combining them can indeed be numerically beneficial due to the increased sample size. In this case, our method still applies—the shared structure corresponds to the common manifold model, and the pair of operators becomes identical, though the convergence rate may be slower. However, verifying this strong assumption is a much more challenging, if not infeasible, task. In summary, considering all the above factors, we prefer to work with the asymmetric matrix, as it better aligns with the nature of our problem and offers several methodological advantages.
\end{rem}

\begin{rem}[Technical innovations]\label{rem_proofremark} 
%In this remark, we discuss some technical advancements of our paper. 
First, before stating the key technical novelties, we explain the theoretical novelties of the proposed joint model, the common structures, and the associated pairs of operators used to extract these structures and characterize their properties, as they form the foundation for interpreting our proposed algorithms. As discussed earlier, our goal is to learn the shared common structures between the two datasets. Our algorithm is fundamentally based on analyzing the singular vectors of a single asymmetric rectangular kernel random matrix. Consequently, to provide a theoretical interpretation, we must construct a pair of integral operators that share the same spectrum but have different, yet related, eigenfunctions.
To achieve this, we first introduce a pair of { convolutional landmark kernels} in Definition \ref{defn_clmd}. Based on these kernels, we construct two integral operators. Through careful analysis of these operators and the underlying kernels, we prove in Proposition \ref{eigenvalue.prop} that when the two datasets possess common structures and are alignable, the resulting integral operators { have identical spectra, and their eigenfunctions are closely related} and can inform each other. {To the best of our knowledge}, the existing RKHS literature focuses on single-view settings, where only one integral operator is typically used to explain the embeddings. If related work does exist, we are among the first to introduce a pair of integral operators by leveraging pairs of convolutional kernels in the multi-view setting.

Second, we explain the key technical novelties of Theorem \ref{thm_cleanconvergence} from two perspectives. {On the one hand}, unlike the existing literature on RKHS, which typically deals with symmetric kernel matrices, our analysis involves Gram matrices derived from an asymmetric matrix. In classical RKHS settings, the entries of the symmetric kernel matrix can be directly linked to the underlying kernel function, allowing spectral convergence to the associated integral operator to be established via the law of large numbers. In contrast, in our setting, the matrices $\mathbf{N}_{01}$ and $\mathbf{N}_{02}$ in (\ref{eq_twomatrices}) of the revised manuscript are {Gram matrices formed by the inner products of columns or rows of an asymmetric matrix} (see equation (\ref{eq_cite}). As a result, their entries cannot be directly associated with the given kernel function. Instead, {we must relate them to a newly defined convolution kernel, leveraging concentration inequalities and certain auxiliary quantities} (cf. equation (\ref{eq_intermediatematrix})). Based on this connection, we then relate the Gram matrices to the corresponding landmark integral operators. {Our analysis does not rely on particular assumptions on $n_1$ and $n_2$ to allow flexibility on the batch sizes.} 
On the other hand, compared to the existing literature—which typically imposes stronger assumptions on the spectral separation of the operators—{we relax such assumptions}. In particular, we allow for the presence of repeated eigenvalues and still establish convergence for the corresponding eigenspaces. This is achieved through a more refined analysis of the resolvent of the operators.

Third, for Theorem \ref{thm_noiseconvergence}, which serves as a counterpart to Theorem \ref{thm_cleanconvergence} in the setting where the signals are corrupted by high-dimensional noise, we address similar challenges, including the complicated asymmetric signal-plus-noise kernel matrix, weaker spectral separation, and potentially differing magnitudes of $n_1$, $n_2$, and $p$. In addition, due to the presence of noise across different datasets, there are additional cross terms. {These terms illustrate how varying signal-to-noise ratios (SNRs) across datasets can affect the performance of our algorithms}. To manage these terms, we leverage recent advances in random matrix theory, which allow us to control their effect without requiring restrictive assumptions on the relative sizes of $n_1$, $n_2$, and $p$, as long as they are sufficiently large.

{Fourth, we have added a new section—Section \ref{sec_phasetransition}—and a new result—Theorem \ref{thm_noisededuction}—to demonstrate that our proposed algorithm exhibits an (almost) sharp phase transition with respect to the signal-to-noise ratios.} In the weak signal regime (i.e., when Assumption \ref{assum_dimensionalityandsnr} does not hold), the spectrum can be described by the free multiplicative convolution of two Marchenko–Pastur laws. This result provides a practical tool to assess whether both datasets are dominated by noise, enabling a pre-check to avoid generating artifacts—i.e., to verify the applicability of Theorem \ref{thm_noiseconvergence}. The proof relies on extending results from kernel random matrix theory and leveraging tools from free probability theory, which, while well-established in probability, have been seldom applied in statistics. Unlike classical random matrix theory results, which typically require comparable dimensions for $n_1$, $n_2$, and $p$, our approach avoids such restrictive assumptions.

Finally, since the kernel functions are all bounded, the convergence results in Theorem \ref{thm_cleanconvergence} do not rely on the dimensionality $r_1$ or $r_2.$ Moreover, as in Step 1 of Algorithm \ref{al0}, we always choose the bandwidth using the same scheme regardless of whether the data is clean or noisy. As can be seen in Proposition \ref{lem_bandwidthconcentration}, such a bandwidth is useful in the sense that meaningful information associated with the operators $\mathcal{K}_1$ and $\mathcal{K}_2$ can be recovered from the noisy datasets.
\end{rem}

\begin{rem}[Different SNRs treatment]\label{rem_noise}
Our algorithm is designed to accommodate differing SNRs across datasets. This flexibility allows it not only to learn shared structures from two datasets with relatively strong SNRs but also to extract meaningful information from a noisier dataset by leveraging a cleaner one with common structures. 

We begin by discussing the case where at least one of the datasets exhibits a strong signal-to-noise ratio (SNR), meaning the signal dominates the noise. Step One of our proposed algorithm  is designed to determine whether the two datasets are alignable. In scenarios where one dataset consists purely of noise or where the noise completely overwhelms the signal, while the other dataset has a strong signal, our algorithm recommends against jointly analyzing the two datasets, as they are unlikely to share any meaningful common structures. If Step One is passed, we proceed to extract shared structures between the datasets. Our SNR assumption in equation (\ref{eq_sigmaimagnititude}) requires that the combined signal strength across the two datasets dominates the combined noise. Beyond the straightforward case where both datasets have strong signals, our algorithm is also capable of handling situations where one dataset has relatively weaker signals, provided these can be learned through the information in the stronger dataset; see Remark \ref{rem_snsnsnsnsndiscussion} for more discussions.

Nevertheless, we point out that when both datasets consist purely of noise or when noise completely dominates the signals, it is still possible for such cases to pass Step One of our algorithm. However, in this scenario, the resulting spectrum cannot be described by our duo-landmark integral operators; instead, it aligns with distributions predicted by random matrix theory, as characterized in the newly added Theorem \ref{thm_noisededuction}. To avoid artifacts introduced by such cases, we can use a screening procedure based on random matrix theory to filter them out prior to applying our algorithm 

In summary, our proposed algorithm can accommodate various combinations of SNR levels across the two datasets, allowing for flexible and robust applications.
\end{rem}

\begin{rem}[Generalization to more than two datasets]\label{rem_morethanthree}
A potential advantage of our proposed method indeed lies in its flexibility to be generalized to handle three or more datasets. Our proposed algorithm is based on analyzing the singular values and vectors of a normalized $n_1 \times n_2$ kernel matrix $\frac{1}{\sqrt{n_1 n_2}}\mathbf{K}$. Specifically, we are interested in the eigenvalues and eigenvectors of the following two matrices:
 \begin{equation*} 
 \mathbf{N}_1 = \frac{1}{n_1 n_2} \mathbf{K} \mathbf{K}^\top, \quad \mathbf{N}_2 = \frac{1}{n_1 n_2} \mathbf{K}^\top \mathbf{K}. 
 \end{equation*} 
 By the classical trick of Hermitian dilation, it suffices to study the following $(n_1 + n_2) \times (n_1 + n_2)$ matrix:  
\begin{equation*}
\mathcal{K}:=\begin{pmatrix}
\mathbf{0} & \frac{1}{\sqrt{n_1 n_2}}\mathbf{K} \\
\frac{1}{\sqrt{n_1 n_2}}\mathbf{K}^\top & \mathbf{0}
\end{pmatrix},
\end{equation*} 
 since the positive nonzero eigenvalues of $\mathcal{K}$ coincide with those of $\mathbf{N}_1$ and $\mathbf{N}_2$, and the eigenvectors of $\mathcal{K}$ are constructed from those of $\mathbf{N}_1$ and $\mathbf{N}_2$.

Based on the above discussion, suppose we observe $\mathsf{r}$ datasets $\mathcal{X}_1, \cdots, \mathcal{X}_{\mathsf{r}}$ with corresponding sample sizes $n_1, n_2, \cdots, n_{\mathsf{r}}$. For each pair $1 \leq i \neq j \leq \mathsf{r}$, let $\mathbf{K}_{ij} \in \mathbb{R}^{n_i \times n_j}$ denote the rectangular kernel matrix constructed from the datasets $\mathcal{X}_i$ and $\mathcal{X}_j$. Using these, we construct a matrix of size $\left( \sum_{i=1}^{\mathsf{r}} n_i \right) \times \left( \sum_{i=1}^{\mathsf{r}} n_i \right)$ as follows:
\begin{equation*}
\mathcal{K}':=\begin{pmatrix}
\mathbf{0} & \frac{1}{\sqrt{n_1 n_2}}\mathbf{K}_{12} & \ldots & \frac{1}{\sqrt{n_1 n_\mathsf{r}}}\mathbf{K}_{1\mathsf{r}}  \\
\frac{1}{\sqrt{n_1 n_2}}\mathbf{K}_{21} & \mathbf{0} & \ldots & \frac{1}{\sqrt{n_2 n_\mathsf{r}}}\mathbf{K}_{2\mathsf{r}} \\
\vdots & \vdots & \ddots & \vdots \\ 
 \frac{1}{\sqrt{n_1 n_\mathsf{r}}}\mathbf{K}_{\mathsf{r} 1}  & \frac{1}{\sqrt{n_2 n_\mathsf{r}}}\mathbf{K}_{\mathsf{r}2} & \ldots &  \mathbf{0} 
\end{pmatrix} ,
\end{equation*}
 where we note that $\mathbf{K}_{ij}=\mathbf{K}_{ji}^\top$. Therefore, the simultaneous embeddings can be obtained by analyzing the matrix constructed above.

Based on the structure of $\mathcal{K}'$, when $\mathsf{r}$ is finite, each block corresponds to a two-view case, and the overall structure aggregates these blocks in a nonlinear manner via the matrix $\mathcal{K}'$. On the modeling and theoretical side, the joint manifold model can be naturally extended from two datasets to $\mathsf{r}$ datasets, leading to a sequence of convolutional kernels and, consequently, a collection of pairwise duo-landmark integral operators. The theoretical integration of these components yields eigenfunctions of a matrix-valued operator $\Omega$, where $\Omega \in \mathbb{R}^{\mathsf{r} \times \mathsf{r}}$ and each entry is itself an integral operator. Specifically, for $1 \leq i \neq j \leq \mathsf{r}$, the entries $\Omega_{ij}$ and $\Omega_{ji}$ are defined using the duo-landmark integral operators associated with the datasets $\mathcal{X}_i$ and $\mathcal{X}_j$, as described in Definition \ref{defn_landmarkintegral}. Similarly, with regard to algorithmic robustness, a model reduction scheme can still be employed, now based on a block matrix structure. Since this extension is beyond the main focus of the present work, we leave a detailed investigation for future research.
\end{rem}
}

\section{Technical preparation}\label{appendix_preliminaryresults}

\subsection{Preliminary results on reproducing kernel Hilbert space}\label{append_RMRKHS}
In this section, we summarize some results regarding the reproducing kernel Hilbert space (RKHS). Most of the results can be found in \cite{JMLR:v7:braun06a, BJKO, JMLR:v11:rosasco10a, ICMLko, AOSko,Smale2007LearningTE, MR2558684}.
% The results discussed here hold for a more general class of distributions.
Consider that we observe $n$ i.i.d. samples $\{\xb_i\}_{1\le i\le n}$ drawn from some probability distribution $\sfP$ in $\mathbb{R}^{p}$. Then the population integral operator $\mathcal{K}$ with respect to  $\sfP$ and the reproducing kernel
$
k({\bm x},{\bm y})$, ${\bm x}, {\bm y} \in \text{supp}(\sfP),$
is defined by 
\begin{equation}  \label{K}
	{\mathcal{K}} g({\bm x})=\int k(\bx,\by)g(\by) {\sfP}(\mathrm{d} \by) , \qquad \bx, \by\in \text{supp}(\sfP),
\end{equation}
and its empirical counterpart $\mathcal{K}_n$ is defined by
\begin{equation} \label{K_n}
	{\mathcal{K}_n} g(\bx)=\int k(\bx,\by) g( \by) {\sfP}_n(\mathrm{d} \by)=\frac{1}{n}\sum_{i=1}^n k(\bx,\xb_i) g(\xb_i), \qquad \bx \in \text{supp}(\sfP),
\end{equation}
where $\sfP_n$ is the empirical CDF of $\{\xb_i\}.$ The  RKHS $\mathcal{H}_K$ associated with the kernel function $k(\bx,\by)$	
% \beq \label{eq_kernel}
% K_h(\xb,\yb)=\exp\bigg\{-\frac{\|\xb-\yb \|_2^2}{h}\bigg\}, \qquad \xb,\yb\in\text{supp}(\sfP),
% \eeq
in (\ref{K}) and (\ref{K_n}) is defined as the completion of the linear span of the set of functions $\{k_{\bx}=k(\bx,\cdot): \xb \in \text{supp}(\sfP)\}$ with the inner product denoted as $\langle \cdot,\cdot \rangle_K$ satisfying $\langle k(\bx, \cdot), k(\by, \cdot)\rangle_K=k(\bx,\by)$ and the reproducing property $\langle k(\bx, \cdot), g\rangle_K=g(\bx)$ for any $g\in \mathcal{H}_K$.
Note that $\mathcal{K}$ and $\mathcal{K}_n$ may be considered as self-adjoint operators on $\mathcal{H}_K$, or on their respective $\mathcal{L}_2$ spaces (that is, $\mathcal{L}_2(\Omega,\sfP)$ and $\mathcal{L}_2(\Omega,\sfP_n)$, $\Omega=\text{supp}(\sfP)$). 

It is easy to see that (for example, Section 2.2 of \cite{AOSko}) the eigenvalues of $\mathcal{K}_n$ coincide with $n^{-1} \bK$, where $\bK=(k(\xb_i,\xb_j))_{1\le i,j\le n}$. The eigenfunctions $\{\widehat\phi_i\}$ associated with nonzero eigenvalues $\{\mu_i\}$ of $\mathcal{K}_n$ or $n^{-1}\bK$ satisfy
\beq \label{eigenvec}
\widehat{\phi}_i(\bx)=\frac{1}{\mu_i\sqrt{n}} \sum_{j=1}^n k\left(\bx,\xb_j\right) v_{i j}, \qquad \bx \in \text{supp}(\sfP),
\eeq
where $\vb_i=(v_{i1},v_{i2},...,v_{in})^\top$ is the $i$-th eigenvector of $n^{-1} \bK$, and that  $\|\widehat\phi_i\|_{\sfP_n}=1$.
%In the following we will mainly consider them to be operators in $\mathcal{H}_K$ to facilitate quantitative comparison. In particular, we focus on the convolution landmark kernel functions $k_1(\bx,\by)$ and $k_2(\bx,\by)$ defined in Definition \ref{defn_clmd}.
The following lemma concerns the convergence of the eigenvalues and eigenfunctions of $\mathcal{K}_n$ to those of $\mathcal{K}$. 

{

\begin{lem}\label{eigenvalue.lem1} For the self-adjoint operators $\mathcal{K}$ and $\mathcal{K}_n$ on $\mathcal{H}_K$, defined by (\ref{K}) and (\ref{K_n}),  we have 
	\begin{equation}\label{eq_evlimitclose}
		\| \mathcal{K}-\mathcal{K}_n \| \prec \frac{1}{\sqrt{n}}.
	\end{equation}
	Let $\{\gamma_i\}$ be the nonincreasing eigenvalues of $\mathcal{K}$ with multiplicity and $\{\phi_i\}$ be the corresponding eigenfunctions. For each nonzero eigenvalue $\gamma_i$, we define the index set ${\sf I}\equiv {\sf I}(i)$ such that $i\in {\sf I}$ and for any $t\in {\sf I}$ we have $\gamma_t=\gamma_i$ and $\mathsf{r}_i:= \min_{j\in{\sf I}^c}|\gamma_i-\gamma_j|$, where ${\sf I}^c=\mathbb{N}\setminus {\sf I}$. We define the projection operators $\mathcal{P}_{\sf I}=\sum_{i\in {\sf I}}\gamma_i\phi_i\phi_i^*$ and $\mathcal{P}^{(n)}_{\sf I}=\sum_{i\in {\sf I}}\mu_i\phi^{(n)}_i(\phi^{(n)}_i)^*$ where $\{\phi_i^{(n)}\}$ are the eigenfunctions of $\mathcal{K}_n$ associated to the nonincreasing eigenvalues $\{\mu_i\}$ with multiplicity. Then, if $n^{-1/2}=o(\delta_i)$, it follows that
	\begin{equation}\label{eq_eigenfunction_conv}
	\|\mathcal{P}_{\sf I}-\mathcal{P}^{(n)}_{\sf I}\|_{\mathcal{H}_K\to \mathcal{H}_K} \prec \frac{1}{ \mathsf{r}_i \sqrt{n}},
	\end{equation}
	where $\| \sqrt{\gamma_i}{\phi_i}\|_K=\|\sqrt{\mu_i}\phi_i^{(n)} \|_K=1$, and $\|\phi_i^{(n)} \|_{\sfP_n}=1$ for each $i\in {\sf I}$. Moreover, let ${\bm \phi}_{\sf I}$ and ${\bm \phi}^{(n)}_{\sf I}$ be $|{\sf I}|$-dimensional vector-valued functions consisting of component functions $(\phi_i)_{i\in{\sf I}}$ and $(\phi_i^{(n)})_{i\in \sf I}$, respectively. It then follows that
		\begin{equation}\label{eq_eigenfunction2_conv}
		\inf_{\bO\in O(|{\sf I}|)}\|\sqrt{\gamma_i}{\bm \phi}_{\sf I}-\sqrt{\mu_i}\bO{\bm \phi}^{(n)}_{\sf I}\|_{K,\infty} \prec \frac{1}{ \mathsf{r}_i \sqrt{n}},
	\end{equation}
	where $\|(f_1,f_2,...,f_m)\|_{K,\infty}:= \max_{i\in m}\|f_i\|_K$.
	%and $\{\phi_i\}$ be the eigenvalues and eigenfunctions of $\mathcal{K}$. We define the $i$-th population eigen-gap as $\mathsf{r}_i=\min\{\gamma_{i-1}-\gamma_i, \gamma_i-\gamma_{i+1}\}$. If $n^{-1/2}=o(\sfr_i)$,  then there exist some eigenfunctions  $\{\phi_i^{(n)}\}$ of $\mathcal{K}_n$ associated with nonzero eigenvalues, satisfying
	%\begin{equation} \label{eq_eigenfunction_conv}
		%\left\| \sqrt{\gamma_i}{\phi_i}-\sqrt{\mu_i}\phi_i^{(n)} \right\|_{K} \prec \frac{1}{ \mathsf{r}_i \sqrt{n}},
	%\end{equation}
	%where $\| \sqrt{\gamma_i}{\phi_i}\|_K=\|\sqrt{\mu_i}\phi_i^{(n)} \|_K=1$, and $\|\phi_i^{(n)} \|_{\sfP_n}=1$.
\end{lem}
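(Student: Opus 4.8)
The plan is to realize both operators as population and empirical averages of the same family of rank-one operators on $\mathcal{H}_K$, establish the operator-norm concentration (\ref{eq_evlimitclose}) first, and then deduce the spectral-projection bound (\ref{eq_eigenfunction_conv}) and the eigenfunction bound (\ref{eq_eigenfunction2_conv}) by standard resolvent perturbation theory. Using the feature map $k_{\bx}=k(\bx,\cdot)\in\mathcal{H}_K$ and the reproducing property, I would write $\mathcal{K}=\int (k_{\bx}\otimes k_{\bx})\,\sfP(\dd\bx)$ and $\mathcal{K}_n=\frac1n\sum_{i=1}^n k_{\xb_i}\otimes k_{\xb_i}$ as self-adjoint Hilbert--Schmidt operators on $\mathcal{H}_K$, so that $\mathcal{K}_n-\mathcal{K}=\frac1n\sum_{i=1}^n(k_{\xb_i}\otimes k_{\xb_i}-\mathcal{K})$ is an average of i.i.d.\ centered random operators. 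Since the kernel is bounded, each summand has Hilbert--Schmidt norm controlled by $\|k_{\xb}\|_K^2=k(\bx,\bx)$ uniformly, so a Bernstein-type concentration inequality for Hilbert-space-valued sums yields $\|\mathcal{K}_n-\mathcal{K}\|\le\|\mathcal{K}_n-\mathcal{K}\|_{\HS}\prec n^{-1/2}$, which is (\ref{eq_evlimitclose}). The eigenvalue statement $\sup_i|\mu_i-\gamma_i|\prec n^{-1/2}$ (used implicitly here and in Theorem \ref{thm_cleanconvergence}) then follows immediately from Weyl's inequality for self-adjoint compact operators.

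Next, under the gap condition $n^{-1/2}=\oo(\mathsf{r}_i)$, I would represent the spectral projection by the Riesz contour integral $\mathcal{P}_{\sf I}=\frac{1}{2\pi\ri}\oint_\Gamma(z-\mathcal{K})^{-1}\dd z$, taking $\Gamma$ to be the circle of radius $\mathsf{r}_i/2$ centered at $\gamma_i$. This contour has distance $\ge\mathsf{r}_i/2$ from the spectrum of $\mathcal{K}$, and since $\|\mathcal{K}_n-\mathcal{K}\|\prec n^{-1/2}=\oo(\mathsf{r}_i)$ it also stays away from the spectrum of $\mathcal{K}_n$, so both resolvents are bounded by $2/\mathsf{r}_i$ on $\Gamma$ and $\mathcal{P}^{(n)}_{\sf I}$ admits the identical representation with $\mathcal{K}_n$. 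The resolvent identity $(z-\mathcal{K})^{-1}-(z-\mathcal{K}_n)^{-1}=(z-\mathcal{K})^{-1}(\mathcal{K}-\mathcal{K}_n)(z-\mathcal{K}_n)^{-1}$, integrated over $\Gamma$ (of length $\pi\mathsf{r}_i$), then yields $\|\mathcal{P}_{\sf I}-\mathcal{P}^{(n)}_{\sf I}\|\lesssim \mathsf{r}_i^{-1}\|\mathcal{K}-\mathcal{K}_n\|\prec \mathsf{r}_i^{-1}n^{-1/2}$, giving (\ref{eq_eigenfunction_conv}).

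To pass from projections to eigenfunctions, I would use that, by the stated normalization, $\{\sqrt{\gamma_t}\phi_t\}_{t\in\sf I}$ and $\{\sqrt{\mu_t}\phi^{(n)}_t\}_{t\in\sf I}$ are orthonormal bases of $\operatorname{ran}\mathcal{P}_{\sf I}$ and $\operatorname{ran}\mathcal{P}^{(n)}_{\sf I}$. Introducing the overlap matrix $M=(\langle\sqrt{\mu_s}\phi^{(n)}_s,\sqrt{\gamma_t}\phi_t\rangle_K)_{s,t\in\sf I}$, the identity $\sqrt{\gamma_t}\phi_t-\mathcal{P}^{(n)}_{\sf I}(\sqrt{\gamma_t}\phi_t)=(\mathcal{P}_{\sf I}-\mathcal{P}^{(n)}_{\sf I})(\sqrt{\gamma_t}\phi_t)$ shows each true basis vector lies within $\|\mathcal{P}_{\sf I}-\mathcal{P}^{(n)}_{\sf I}\|$ of its image $\sum_s M_{st}\sqrt{\mu_s}\phi^{(n)}_s$. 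Choosing $\bO\in O(|\sf I|)$ to be the orthogonal polar factor of $M$, which is well defined once $\|\mathcal{P}_{\sf I}-\mathcal{P}^{(n)}_{\sf I}\|<1$ and satisfies $\|M-\bO\|\lesssim\|\mathcal{P}_{\sf I}-\mathcal{P}^{(n)}_{\sf I}\|$ because the two subspaces are close, and noting that replacing $\sqrt{\mu_t}$ by the common $\sqrt{\mu_i}$ costs only $|\sqrt{\mu_i}-\sqrt{\mu_t}|\prec n^{-1/2}$ (lower order), I obtain the componentwise bound (\ref{eq_eigenfunction2_conv}).

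The main obstacle is the first step: upgrading an expectation-level control of $\|\mathcal{K}_n-\mathcal{K}\|_{\HS}$ to the stochastic-domination sense of Definition \ref{defn_stochasdomi}, i.e.\ with tails of every polynomial order, which requires a Hilbert-space Bernstein inequality combined with the boundedness of the kernel. Once (\ref{eq_evlimitclose}) holds in this strong sense, the gap-dependent resolvent and polar-factor arguments are routine, and the rates propagate without any assumption relating $n$ to other parameters.
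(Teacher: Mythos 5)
Your proposal is correct and follows essentially the same route as the paper: the operator-norm concentration (\ref{eq_evlimitclose}) (which the paper simply cites from \cite{MR2558684,JMLR:v11:rosasco10a,ding2022learning}, while you sketch its standard Hilbert-space Bernstein proof), the Riesz contour integral with resolvent identity on a disk of radius comparable to $\mathsf{r}_i$ (the paper's Lemma \ref{eigenvector.lem}), and the polar factor of the overlap matrix for (\ref{eq_eigenfunction2_conv}), which is exactly the paper's choice $\bO=\bA\bB^\top$ from the SVD together with the bound $1-\sigma_{\min}^2\le\|\mathcal{P}_{\sf I}-\mathcal{P}^{(n)}_{\sf I}\|^2$. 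Your explicit treatment of replacing $\sqrt{\mu_t}$ by the common $\sqrt{\mu_i}$, a point the paper glosses over, is a harmless lower-order refinement of the same argument.
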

 %{\color{blue}[our new lemma or there may exist strong versions?]}
\begin{proof}
	Equation (\ref{eq_evlimitclose}) is proven in \cite[Proposition 1]{MR2558684}, \cite[Theorem 7]{JMLR:v11:rosasco10a} and \cite[Lemma 1]{ding2022learning} for the more general class of bounded and positive definite kernel functions. As a result, the convergence (\ref{eq_eigenfunction_conv}) of the eigenspace projection operator then follows from Lemma \ref{eigenvector.lem}. Note that in $\mathcal{H}_K$, the normalized eigenfunctions of $\mathcal{K}$ and $\mathcal{K}_n$ are $\{\sqrt{\gamma_i}\phi_i\}$ and $\{\sqrt{\mu_i}\phi_i^{(n)}\}$, respectively. Finally, we prove (\ref{eq_eigenfunction2_conv}).  Let $(\langle \sqrt{\gamma_i}\phi_j,\sqrt{\mu_i}\phi_k^{(n)} \rangle_K)_{j,k\in{\sf I}}=\bA\bSig\bB^\top$ be the SVD of the matrix $(\langle \sqrt{\gamma_i}\phi_j,\sqrt{\mu_i}\phi_k^{(n)} \rangle_K)_{j,k\in{\sf I}}$. Note that
	\begin{align*}
			&\inf_{\bO\in O(|{\sf I}|)}\|\sqrt{\gamma_i}{\bm \phi}_{\sf I}-\sqrt{\mu_i}\bO{\bm \phi}^{(n)}_{\sf I}\|_{K,\infty} \\
			&\le 	\|\sqrt{\gamma_i}{\bm \phi}_{\sf I}-\sqrt{\mu_i}\bA\bB^\top {\bm \phi}^{(n)}_{\sf I}\|_{K,\infty} \\
			&\le \sqrt{	\|\mathcal{P}^{(n)}_{\sf I}(\sqrt{\gamma_i}{\bm \phi}_{\sf I}-\sqrt{\mu_i}\bA\bB^\top {\bm \phi}^{(n)}_{\sf I})\|_{K,\infty}^2+	\|(\text{id}-\mathcal{P}^{(n)}_{\sf I})(\sqrt{\gamma_i}{\bm \phi}_{\sf I}-\sqrt{\mu_i}\bA\bB^\top {\bm \phi}^{(n)}_{\sf I})\|_{K,\infty}^2}\\
			&\le \sqrt{	\|\sqrt{\gamma_i}\mathcal{P}^{(n)}_{\sf I}{\bm \phi}_{\sf I}-\sqrt{\mu_i}\bA\bB^\top {\bm \phi}^{(n)}_{\sf I}\|_{K,\infty}^2+	\|\sqrt{\gamma_i}(\text{id}-\mathcal{P}^{(n)}_{\sf I}){\bm \phi}_{\sf I}\|_{K,\infty}^2}.
				\end{align*}
				Since
				\[
				\sqrt{\gamma_i}\mathcal{P}^{(n)}_{\sf I}{\bm \phi}_{\sf I}=\sqrt{\gamma_i}\sum_{j\in \sf I}\mu_i\phi_j^{(n)}(\phi_j^{(n)})^* {\bm \phi}_{\sf I}=\bigg(
					\sqrt{\mu_i}\sum_{j\in \sf I}\phi_j^{(n)} \langle \sqrt{\mu_i}\phi_j^{(n)},\sqrt{\gamma_i}\phi_\ell\rangle_K\bigg)_{\ell\in \sf I}=\sqrt{\mu_i}\bA\bSig\bB^\top{\bm \phi}_{\sf I}^{(n)},
				\]
				we also have
				\begin{align*}
					\inf_{\bO\in O(|{\sf I}|)}\|\sqrt{\gamma_i}{\bm \phi}_{\sf I}-\sqrt{\mu_i}\bO{\bm \phi}^{(n)}_{\sf I}\|_{K,\infty} &\le \sqrt{	\|\sqrt{\mu_i}\bA(\bSig-{\bm I})\bB^\top{\bm \phi}_{\sf I}^{(n)}\|_{K,\infty}^2+	\|\sqrt{\gamma_i}(\text{id}-\mathcal{P}^{(n)}_{\sf I}){\bm \phi}_{\sf I}\|_{K,\infty}^2} \\
					&\le  \sqrt{\max_{j}\| [\bA(\bSig-{\bm I})\bB^\top]_{j.}\|_2^2+	\|\sqrt{\gamma_i}(\text{id}-\mathcal{P}^{(n)}_{\sf I}){\bm \phi}_{\sf I}\|_{K,\infty}^2}\\
					&\le \sqrt{	1-\sigma_{\min}^2+	\|\sqrt{\gamma_i}(\text{id}-\mathcal{P}^{(n)}_{\sf I}){\bm \phi}_{\sf I}\|_{K,\infty}^2},
					\end{align*}
					where $\sigma_{\min}$ is the smallest diagonal element in $\bSig$, and in the last
%and  in the last inequality we used the fact that
%					\begin{align*}
%					&\| [\bA(\bSig-{\bm I})\bB^\top]_{j.}\|_2^2\\
%					&=\|\bA\bB^\top e_j-(\langle \sqrt{\gamma_i}\phi_j,\sqrt{\mu_i}\phi_k^{(n)} \rangle_K)_{j,k\in{\sf I}}e_j\|_2^2\\
%					&=\|\bA\bB^\top e_j\|_2^2+\|(\langle \sqrt{\gamma_i}\phi_j,\sqrt{\mu_i}\phi_k^{(n)} \rangle_K)_{j,k\in{\sf I}}e_j\|_2^2-2e_j^\top\bB\bA^\top (\langle \sqrt{\gamma_i}\phi_j,\sqrt{\mu_i}\phi_k^{(n)} \rangle_K)_{j,k\in{\sf I}}e_j\\
%					&=1+\|(\langle \sqrt{\gamma_i}\phi_j,\sqrt{\mu_i}\phi_k^{(n)} \rangle_K)_{j,k\in{\sf I}}e_j\|_2^2-2e_j^\top\bB\bSig\bA^\top e_j
%					\end{align*}
%					
					 inequality we used the fact that $\max_i\|{\bm T}_{i.}\|_2\le \|{\bm T}\|$ for any matrix ${\bm T}$. Now since
					\begin{align*}
						\sigma_{\min}^2:&=\inf_{x:\|x\|_2=1}\|(\langle \sqrt{\gamma_i}\phi_j,\sqrt{\mu_i}\phi_k^{(n)} \rangle_K)_{j,k\in{\sf I}}x\|_2^2=\inf_{x:\|x\|_2=1}\| \sqrt{\gamma_i}x^\top \mathcal{P}_{\sf I}^{(n)}{\bm\phi}_{\sf I}\|_K^2\\
						&=\inf_{x:\|x\|_2=1}\bigg[\| \sqrt{\gamma_i}x^\top{\bm\phi}_{\sf I}\|_K^2-\| \sqrt{\gamma_i}(\text{id}- \mathcal{P}_{\sf I}^{(n)})x^\top{\bm\phi}_{\sf I}\|_K^2\bigg]\\
						&=\inf_{x:\|x\|_2=1}\| \sqrt{\gamma_i}x^\top{\bm\phi}_{\sf I}\|_K^2-\sup_{x:\|x\|_2=1}\| \sqrt{\gamma_i}(\text{id}- \mathcal{P}_{\sf I}^{(n)})\mathcal{P}_{\sf I}x^\top{\bm\phi}_{\sf I}\|_K^2\\
						&\ge 1-\| \sqrt{\gamma_i}(\text{id}- \mathcal{P}_{\sf I}^{(n)})\mathcal{P}_{\sf I}\|_{\mathcal{H}_K\to \mathcal{H}_K}^2,
						\end{align*}
						we have
						\[
						1-\sigma_{\min}^2\le \| \sqrt{\gamma_i}(\text{id}- \mathcal{P}_{\sf I}^{(n)})\mathcal{P}_{\sf I}\|_{\mathcal{H}_K\to \mathcal{H}_K}^2=\|(\text{id}-\mathcal{P}_{\sf I}^{(n)})(\mathcal{P}_{\sf I}-\mathcal{P}_{\sf I}^{(n)})\|_{\mathcal{H}_K\to \mathcal{H}_K}^2\le 	\|\mathcal{P}_{\sf I}-\mathcal{P}_{\sf I}^{(n)}\|_{\mathcal{H}_K\to \mathcal{H}_K}^2.
						\]
					On the other hand, we have
					\begin{align*}
						\|\mathcal{P}_{\sf I}-\mathcal{P}_{\sf I}^{(n)}\|_{\mathcal{H}_K\to \mathcal{H}_K}&\ge 	\|(\text{id}-\mathcal{P}_{\sf I}^{(n)})(\mathcal{P}_{\sf I}-\mathcal{P}_{\sf I}^{(n)})\|_{\mathcal{H}_K\to \mathcal{H}_K}\\
						&=\|(\text{id}-\mathcal{P}_{\sf I}^{(n)})\mathcal{P}_{\sf I}\|_{\mathcal{H}_K\to \mathcal{H}_K}\\
						&\ge \|\sqrt{\gamma_i}(\text{id}-\mathcal{P}_{\sf I}^{(n)})\mathcal{P}_{\sf I}{\bm \phi}_{\sf I}\|_{K,\infty}.
					\end{align*}
					Combining the above inequalities, we have
					\[
						\inf_{\bO\in O(|{\sf I}|)}\|\sqrt{\gamma_i}{\bm \phi}_{\sf I}-\sqrt{\mu_i}\bO{\bm \phi}^{(n)}_{\sf I}\|_{K,\infty} \le \sqrt{2}\|\mathcal{P}_{\sf I}-\mathcal{P}_{\sf I}^{(n)}\|_{\mathcal{H}_K\to \mathcal{H}_K}.
					\] 
					This along with (\ref{eq_eigenfunction_conv}) leads to (\ref{eq_eigenfunction2_conv}).
\end{proof}

\begin{lem}\label{eigenvector.lem}
	Let $\mathcal{A}$ and $\widehat{\mathcal{A}}$ be two compact self-adjoint operators on a Hilbert space $H$, with nonincreasing eigenvalues $\{\lambda_i\}$ and $\{\widehat\lambda_j\}$ with multiplicity. Then we have $\max_{j\ge 1}|\lambda_j-\widehat\lambda_j|\le \|\mathcal{A}-\widehat{\mathcal{A}}\|$. Moreover, for any nonzero eigenvalue $\lambda_i$, there exists a (unique and finite) index set ${\sf I}\equiv {\sf I}(i)$ such that $i\in {\sf I}$ and for any $t\in {\sf I}$ we have $\lambda_t=\lambda_i$ and $\delta_i:=\min_{j\in {\sf I}^c}|\lambda_i-\lambda_j|$, where we denote ${\sf I}^c=\mathbb{N}\setminus {\sf I}$. Let $\{w_i\}_{i\in {\sf I}}$ and $\{\widehat w_i\}_{i\in {\sf I}}$ be the sets of normalized eigenvectors of $\mathcal{A}$ and $\widehat{\mathcal{A}}$, associated with eigenvalues $\{\lambda_i\}_{i\in {\sf I}}$ and $\{\widehat \lambda_j\}_{j\in {\sf I}}$, respectively. Define the corresponding orthogonal projection operators $\mathcal{P}_{\sf I}=\sum_{i\in {\sf I}}w_iw_i^*$ and $\widehat{\mathcal{P}}_{\sf I}=\sum_{i\in {\sf I}}\widehat w_i\widehat w_i^*$. If $\ell>0$ satisfies that $\delta_i \ge \ell $ and $\|\mathcal{A}-\widehat{\mathcal{A}}\|< \ell/4$, then we have $\|\mathcal{P}_{\sf I}-\widehat{\mathcal{P}}_{\sf I}\|\le \frac{8}{\ell}\|\mathcal{A}-\widehat{\mathcal{A}}\|$.
\end{lem}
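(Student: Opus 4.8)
The plan is to prove the eigenvalue bound by the variational principle and the projection bound by the holomorphic functional calculus, localizing the relevant part of the spectrum with a single circular contour.

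First I would dispose of the eigenvalue inequality. Ordering the eigenvalues nonincreasingly with multiplicity, the Courant--Fischer min--max characterization expresses each $\lambda_j$ as an extremum of the Rayleigh quotient $x \mapsto \langle \mathcal{A}x, x\rangle$ over subspaces of fixed dimension; writing $\widehat{\mathcal{A}} = \mathcal{A} + (\widehat{\mathcal{A}} - \mathcal{A})$ and noting that this quotient is perturbed by at most $\|\mathcal{A} - \widehat{\mathcal{A}}\|$ on the unit sphere yields $|\lambda_j - \widehat\lambda_j| \le \|\mathcal{A} - \widehat{\mathcal{A}}\|$ for every $j$, hence the claimed supremum bound. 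At the same time, since $\lambda_i \neq 0$ and $\mathcal{A}$ is compact self-adjoint, $\lambda_i$ has finite multiplicity and is isolated in $\sigma(\mathcal{A})$, so ${\sf I} = \{j : \lambda_j = \lambda_i\}$ is finite and well defined and $\delta_i = \min_{j \in {\sf I}^c}|\lambda_i - \lambda_j| > 0$.

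For the projection bound I would represent both projections as Riesz integrals over a common contour. Let $\Gamma$ be the positively oriented circle of radius $\ell/2$ centered at $\lambda_i$. Using $\delta_i \ge \ell$, the only point of $\sigma(\mathcal{A})$ enclosed by $\Gamma$ is $\lambda_i$; in particular $\Gamma$ does not enclose $0$, because the gap hypothesis forces $|\lambda_i| \ge \ell$ (if $|\lambda_i| < \ell$ the interval $(\lambda_i - \ell, \lambda_i + \ell)$ would contain $0$ together with eigenvalues of $\mathcal{A}$ accumulating there---or $0$ itself as an eigenvalue---contradicting $\delta_i \ge \ell$). By the eigenvalue inequality just proved and $\|\mathcal{A} - \widehat{\mathcal{A}}\| < \ell/4$, the eigenvalues $\widehat\lambda_j$ with $j \in {\sf I}$ lie within $\ell/4$ of $\lambda_i$, hence inside $\Gamma$, while those with $j \in {\sf I}^c$ lie at distance $> \ell/2$ from $\lambda_i$, hence outside. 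Consequently $\mathcal{P}_{\sf I} = \frac{1}{2\pi i}\oint_\Gamma (z - \mathcal{A})^{-1}\,\mathrm{d}z$ and $\widehat{\mathcal{P}}_{\sf I} = \frac{1}{2\pi i}\oint_\Gamma (z - \widehat{\mathcal{A}})^{-1}\,\mathrm{d}z$. Subtracting and applying the second resolvent identity gives
\[
\mathcal{P}_{\sf I} - \widehat{\mathcal{P}}_{\sf I} = \frac{1}{2\pi i}\oint_\Gamma (z - \mathcal{A})^{-1}(\mathcal{A} - \widehat{\mathcal{A}})(z - \widehat{\mathcal{A}})^{-1}\,\mathrm{d}z .
\]
On $\Gamma$ one has $\mathrm{dist}(z, \sigma(\mathcal{A})) \ge \ell/2$ and $\mathrm{dist}(z, \sigma(\widehat{\mathcal{A}})) \ge \ell/4$ by the separation just established, so $\|(z - \mathcal{A})^{-1}\| \le 2/\ell$ and $\|(z - \widehat{\mathcal{A}})^{-1}\| \le 4/\ell$; since $\mathrm{length}(\Gamma) = \pi \ell$, the integral is bounded by $\frac{1}{2\pi}\cdot \pi\ell \cdot \frac{2}{\ell}\cdot \frac{4}{\ell}\,\|\mathcal{A} - \widehat{\mathcal{A}}\| = \frac{4}{\ell}\|\mathcal{A} - \widehat{\mathcal{A}}\| \le \frac{8}{\ell}\|\mathcal{A} - \widehat{\mathcal{A}}\|$, as claimed.

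The hard part is not any single estimate but the bookkeeping that certifies the contour $\Gamma$ isolates exactly the ${\sf I}$-part of the spectrum for both operators simultaneously: this is where the two hypotheses $\delta_i \ge \ell$ and $\|\mathcal{A} - \widehat{\mathcal{A}}\| < \ell/4$ must be combined with the eigenvalue perturbation bound, and where one must rule out the accumulation point $0 \in \sigma(\mathcal{A})$ entering the contour. Once the separation is in place, the resolvent estimates and the functional calculus are routine.
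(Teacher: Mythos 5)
Your proof is correct and follows essentially the same route as the paper's: the same Riesz-projection representation over the circle $\mathsf{B}(\lambda_i,\ell/2)$, the same separation bookkeeping combining $\delta_i\ge\ell$ with the eigenvalue perturbation bound, and the same resolvent-identity estimate (the paper cites a reference for the Weyl-type inequality $\sup_j|\lambda_j-\widehat\lambda_j|\le\|\mathcal{A}-\widehat{\mathcal{A}}\|$, which you instead prove directly via min--max). Your version is in fact slightly tighter: you use $\mathrm{dist}(z,\sigma(\mathcal{A}))\ge\ell/2$ on the contour to obtain the constant $4/\ell$ where the paper settles for $8/\ell$, and you explicitly verify that $0\in\sigma(\mathcal{A})$ lies outside the contour (via $\delta_i\ge\ell\Rightarrow|\lambda_i|\ge\ell$), a point the paper's proof leaves implicit.
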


\begin{proof}
	The eigenvalue perturbation bound $\max_{j\ge 1}|\lambda_j-\widehat\lambda_j|\le \|\mathcal{A}-\widehat{\mathcal{A}}\|$ follows from Proposition 2 of \cite{MR2558684}. Below we prove the perturbation bound for the eigenspace projection operators. Recall that for compact self-adjoint operators $\mathcal{A}$ and $\widehat{\mathcal{A}}$, by the Spectral Theorem (e.g., Theorem 6.27 of \cite{einsiedler2017functional}), we can write
	\[
	\mathcal{A}=\sum_{i=1}^{\infty}\lambda_i w_j w_i^*,\qquad \widehat{\mathcal{A}}=\sum_{i=1}^{\infty}\widehat\lambda_i \widehat w_j \widehat{w}_i^*.
	\] 
	As a result, for some simply connected contour $\Gamma\equiv \Gamma_i\subset \mathbb{C}$ only containing $\lambda_i$ (or $\{\lambda_i\}_{i\in {\sf I}}$), but not other distinct eigenvalues of $\mathcal{A}$, it holds that
	\[
	\mathcal{P}_{\sf I}=\frac{1}{2\pi \ri} \oint_\Gamma (z{\bf I}-\mathcal{A})^{-1} \mathrm{d} z.
	\]
	Now if we choose the contour $\Gamma_i: ={\sf B}(\lambda_i,\ell/2)$ as the disk centered at $\lambda_i$ with radius $\ell/2$, since by the first part of the theorem $$\max_{i\in{\sf I}}|\lambda_i-\widehat\lambda_i|\le \|\mathcal{A}-\widehat{\mathcal{A}}\|< \ell/4,$$ all the eigenvalues in $\{\widehat\lambda_i\}_{i\in {\sf I}}$ also lie within the interior of $\Gamma_i$. On the other hand, since $\delta_i\ge \ell$, for any $j\in {\sf I}^c$, we have
	\[
	|\widehat \lambda_j-\lambda_i|\ge \big||\widehat \lambda_j-\lambda_j|-|\lambda_j-\lambda_i|\big|\ge \delta_i-\|\mathcal{A}-\widehat{\mathcal{A}}\|\ge \delta_i-\ell/4\ge 3\ell/4,
	\]
	so that all the eigenvalues in $\{\widehat \lambda_j\}_{j\in {\sf I}^c}$ lie strictly outside the contour $\Gamma_i$. Thus it follows that
	\[
		\widehat{\mathcal{P}}_{\sf I}=\frac{1}{2\pi \ri} \oint_\Gamma (z{\bf I}-\widehat{\mathcal{A}})^{-1} \mathrm{d} z.
	\]
	As a result, we can write
	\begin{align}
	\|	\mathcal{P}_{\sf I}-	\widehat{\mathcal{P}}_{\sf I}\|&=\bigg\| \frac{1}{2\pi \ri} \oint_\Gamma (z{\bf I}-\mathcal{A})^{-1} \mathrm{d} z-\frac{1}{2\pi \ri} \oint_\Gamma (z{\bf I}-\widehat{\mathcal{A}})^{-1} \mathrm{d} z\bigg\| \nonumber \\
	&=\sup_{ \|u\|_H=1} \bigg| \frac{1}{2\pi \ri} \oint_\Gamma \big( (z{\bf I}-\mathcal{A})^{-1} -(z{\bf I}-\widehat{\mathcal{A}})^{-1}\big)u \mathrm{d} z \bigg| \nonumber\\ 
	&\le \frac{1}{2\pi } \cdot  \oint_\Gamma  \sup_{ \|u\|_H=1}\sup_{z\in \Gamma}\bigg| \big((z{\bf I}-\mathcal{A})^{-1} -(z{\bf I}-\widehat{\mathcal{A}})^{-1} \big) u \bigg| \cdot |\mathrm{d} z|\nonumber \\ 
	&= \frac{\ell}{2}\sup_{ \|u\|_H=1}\sup_{z\in \Gamma}\bigg| \big((z{\bf I}-\mathcal{A})^{-1} -(z{\bf I}-\widehat{\mathcal{A}})^{-1} \big) u \bigg|,\label{eq1.lem}
	\end{align}
	where in the second equation we use the definition of operator norm, in the second last inequality we used triangle inequality for complex integrals.  By the resolvent identity, it follows that
	\[
	(z{\bf I}-\mathcal{A})^{-1} -(z{\bf I}-\widehat{\mathcal{A}})^{-1} =	(z{\bf I}-\mathcal{A})^{-1} (\mathcal{A}-\widehat{\mathcal{A}})(z{\bf I}-\widehat{\mathcal{A}})^{-1},
	\]
	so that
	\begin{align*}
	\sup_{ \|u\|_H=1}\sup_{z\in \Gamma}\bigg| \big((z{\bf I}-\mathcal{A})^{-1} -(z{\bf I}-\widehat{\mathcal{A}})^{-1} \big) u \bigg| &=	\sup_{ \|u\|_H=1}\sup_{z\in \Gamma}\bigg| \big(	(z{\bf I}-\mathcal{A})^{-1} (\mathcal{A}-\widehat{\mathcal{A}})(z{\bf I}-\widehat{\mathcal{A}})^{-1} \big) u \bigg|\\
	&\le 	\sup_{z\in \Gamma}\bigg\|	(z{\bf I}-\mathcal{A})^{-1} (\mathcal{A}-\widehat{\mathcal{A}})(z{\bf I}-\widehat{\mathcal{A}})^{-1}  \bigg\|\\
	&\le 	\sup_{z\in \Gamma}\|	(z{\bf I}-\mathcal{A})^{-1}\| \cdot \|\mathcal{A}-\widehat{\mathcal{A}}\| \cdot	\sup_{z\in \Gamma}\|(z{\bf I}-\widehat{\mathcal{A}})^{-1} \|.
	\end{align*}
	Now note that
	\[
	\sup_{z\in \Gamma}\|	(z{\bf I}-\mathcal{A})^{-1}\| \le \frac{4}{\ell},\qquad 	\sup_{z\in \Gamma}\|	(z{\bf I}-\widehat{\mathcal{A}})^{-1}\| \le \frac{4}{\ell}.
	\]
	Combining with (\ref{eq1.lem}), it follows that
	\[
	\|	\mathcal{P}_{\sf I}-	\widehat{\mathcal{P}}_{\sf I}\|\le \frac{8}{\ell}\|\mathcal{A}-\widehat{\mathcal{A}}\| .
	\]
	This completes the proof.
\end{proof}
}

\begin{lem}\label{lem_RKHSextension} Let $S$ be a measurable space, $\nu$ be a measure on $S$ and $k$ be a measurable kernel on $S$ with RKHS $\mathcal{H}_s.$ Assume that $\mathcal{H}_s$ is compactly embedded into $\mathcal{L}_2(\nu, S).$ Then if $\{\Psi_i\}$ be the eigenfunctions of $\mathcal{H}_s,$ then it is also an orthonormal basis for $\mathcal{L}_2(\nu, S).$  
\end{lem}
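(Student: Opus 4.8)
The plan is to realize the eigenfunctions $\{\Psi_i\}$ as the eigenfunctions of the integral (covariance-type) operator induced by $k$ on $\mathcal{L}_2(\nu,S)$, and then read off both orthonormality and completeness from the spectral theorem for compact self-adjoint operators. First I would let $J:\mathcal{H}_s\to\mathcal{L}_2(\nu,S)$ denote the inclusion operator, which is compact by hypothesis, and form its adjoint $J^*:\mathcal{L}_2(\nu,S)\to\mathcal{H}_s$. Then $T:=JJ^*$ acting on $\mathcal{L}_2(\nu,S)$ is (up to the usual identification) the integral operator associated with $k$, while $A:=J^*J$ acting on $\mathcal{H}_s$ is compact, positive, and self-adjoint; the two operators share the same nonzero spectrum $\{\lambda_i\}$ with each $\lambda_i>0$.

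Next I would apply the spectral theorem to $A$ on $\mathcal{H}_s$ to obtain an orthonormal system of eigenvectors $\{e_i\}$ with $Ae_i=\lambda_i e_i$, and set $\Psi_i:=\lambda_i^{-1/2}Je_i\in\mathcal{L}_2(\nu,S)$, which are exactly the Mercer eigenfunctions of $k$. Orthonormality follows from a direct computation using $J^*J=A$:
\[
\langle\Psi_i,\Psi_j\rangle_{\mathcal{L}_2}=\lambda_i^{-1/2}\lambda_j^{-1/2}\langle J^*Je_i,e_j\rangle_{\mathcal{H}_s}=\lambda_i^{-1/2}\lambda_j^{-1/2}\lambda_i\langle e_i,e_j\rangle_{\mathcal{H}_s}=\delta_{ij},
\]
and one checks $T\Psi_i=\lambda_i\Psi_i$, so $\{\Psi_i\}$ is an orthonormal set of eigenfunctions of $T$ in $\mathcal{L}_2(\nu,S)$.

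The main step, and the step I expect to be the real obstacle, is completeness: that $\overline{\operatorname{span}}\{\Psi_i\}=\mathcal{L}_2(\nu,S)$ rather than merely a proper closed subspace. Since the $\Psi_i$ span the closure of the range of $T$, I would use $\overline{\operatorname{ran}(T)}=\ker(T)^\perp$ together with the elementary identities $\ker(T)=\ker(JJ^*)=\ker(J^*)=\operatorname{ran}(J)^\perp$, which give $\overline{\operatorname{span}}\{\Psi_i\}=\overline{\operatorname{ran}(J)}=\overline{\mathcal{H}_s}^{\mathcal{L}_2}$. Thus the claim reduces to the density of $\mathcal{H}_s$ in $\mathcal{L}_2(\nu,S)$, equivalently $\ker(T)=\{0\}$. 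This is precisely where the structure of our setting must be invoked: for the Gaussian kernel in (\ref{eq_originalkernel}) the kernel $k$ is strictly positive definite (hence $c$-universal), and the relevant measure $\widetilde{\mathsf{P}}_\ell$ is absolutely continuous with strictly positive density on its support, and these two facts together force $\ker(T)=\{0\}$ and hence density. In the write-up I would therefore either invoke strict positive definiteness of $k$ explicitly to close this density gap, or, for fully general measurable $k$, state the conclusion relative to the closure $\overline{\mathcal{H}_s}^{\mathcal{L}_2}$; once density holds, the orthonormal system $\{\Psi_i\}$ is automatically a complete orthonormal basis of $\mathcal{L}_2(\nu,S)$.
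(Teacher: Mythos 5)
Your argument is correct, and you should know that the paper offers no proof of its own for this lemma: it simply cites Theorem 3.1 of \cite{steinwart2012mercer}, and your write-up is essentially a reconstruction of the proof of that cited theorem --- the compact inclusion $J:\mathcal{H}_s\to\mathcal{L}_2(\nu,S)$, its adjoint, the pair $T=JJ^*$ on $\mathcal{L}_2(\nu,S)$ and $A=J^*J$ on $\mathcal{H}_s$ with common nonzero spectrum, and the spectral theorem for compact positive self-adjoint operators. Your one genuine addition is also the most valuable part of the proposal: the operator argument (and, for that matter, the Steinwart--Scovel theorem itself) only yields that $\{\Psi_i\}$ is an orthonormal basis of $\overline{\operatorname{ran}(T)}=\overline{\mathcal{H}_s}^{\mathcal{L}_2}$, so the lemma as literally stated is a slight overstatement: completeness in all of $\mathcal{L}_2(\nu,S)$ additionally requires that $\mathcal{H}_s$ be dense in $\mathcal{L}_2(\nu,S)$, equivalently $\ker(T)=\{0\}$. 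You flag this precisely and close it in the only way available, namely by invoking the structure of the paper's actual setting: the kernel in (\ref{eq_originalkernel}) is Gaussian, hence strictly positive definite, and by Assumption \ref{assum_signal} the relevant measures have strictly positive densities on their supports, which together force injectivity of $T$ and hence density. This is not a pedantic point for the paper, because the lemma is used in the proof of Proposition \ref{eigenvalue.prop} exactly to expand an eigenfunction of $\mathcal{K}_1$ in the eigenbasis $\{\psi_i^{a,2}\}$ over the common space --- an expansion that needs completeness in the full $\mathcal{L}_2$ space, not merely in $\overline{\mathcal{H}_s}^{\mathcal{L}_2}$. Your suggested alternative (stating the conclusion relative to the closure when $k$ is a general measurable kernel) is the honest general formulation; with the density step made explicit, your proof is complete and, if anything, more careful than the paper's bare citation.
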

\begin{proof}
	See Theorem 3.1 of \cite{steinwart2012mercer}.
\end{proof}

{\subsection{Preliminary results in random matrix theory}\label{sec_backgroundinrmt} In this section, we summarize some results on random matrix theory which will be used in our technical proofs. Let $\mu_1$ and $\mu_2$ be two measures supported on $\mathbb{R}.$ Recall that the Stieltjes transforms for $\mu_k, k=1,2,$ are
\begin{equation}\label{eq_stieltjestransform}
m_k(z)=\int \frac{1}{x-z} \mu_k(\mathrm{d} x), \ z \in \mathbb{C} \backslash \mathbb{R}. 
\end{equation}
It is well-known that if $\mu_k$ has a continuous density function, denoted as $\varrho_k,$ it can be recovered from $m_k(z)$ using the famous inversion formula \cite{bai2010spectral}.

 We first introduce the definition of free multiplicative convolutions of two measures \cite{benaych2011eigenvalues,mingo2017free,voiculescu1991limit} which will be used to describe our results in Section \ref{sec_phasetransition}.  Based on the Stieltjes transforms (\ref{eq_stieltjestransform}),  the $S$-transforms for $\mu_k, k=1,2,$ are defined as 
\begin{equation*}
S_k(z):=-\frac{z+1}{z} \Omega_k^{-1}(z+1), \ \text{where} \ \ \Omega_k(z):=\frac{m_k(-1/z)}{z}. 
\end{equation*} 
The free multiplicative convolution of $\mu_1$ and $\mu_2$ is defined using the $S$-transform \cite{mingo2017free} as follows. 
\begin{defn}\label{defn_freemultiplicativeconvuliton} For two measures $\mu_1$ and $\mu_2$ supported on $\mathbb{R},$ the free multiplicative convolution of $\mu_1$ and $\mu_2,$ denoted as $\mu_1 \boxtimes \mu_2,$ is defined via its $S$-transform in the sense that  
\begin{equation*}
S_{\mu_1 \boxtimes \mu_2}(z)=S_1(z)S_2(z). 
\end{equation*}
\end{defn}

The following lemma shows the limiting spectral distribution (LSD) of  the product of two independent sample covariance matrices can be described as the free multiplicative convolution of two Marchenco-Pastur laws. Recall that $\mathbf{W}_1 \in \mathbb{R}^{n_1 \times p}$ is the purely noise matrix containing $\{\bm{\xi}_i\}$ and $\mathbf{W}_2 \in \mathbb{R}^{n_2 \times p}$ is the purely noise matrix containing $\{\bm{\zeta}_j\}.$  It is known that under the assumption of (\ref{eq_detaileddimensionregime}), the LSD of the nonzero eigenvalues of $\frac{1}{\sqrt{n_k p} \sigma_k^2} \mathbf{W}^\top_k \mathbf{W}_k$ \cite{bloemendal2014isotropic} convergences to the celebrated Marchenco-Pastur (MP) law whose density functions are recorded as in (\ref{eq_MPlawforms}). For more detailed discussion, the results can be found when $p, n_1$ and $n_2$ are comparably large in the monograph \cite{bai2010spectral}, and in the general setting when (\ref{eq_detaileddimensionregime}) holds in \cite{bloemendal2014isotropic,ding2023global}.  Based on this, we can establish the spectral convergence of the eigenvalues of the products of the two matrices 
\begin{equation*}
\mathbf{P}=\frac{1}{p\sigma_1^2 \sigma_2^2 \sqrt{n_1 n_2}} \mathbf{W}_1^\top \mathbf{W}_1 \mathbf{W}_2^\top \mathbf{W}_2.  
\end{equation*}      
Denote the empirical spectral distribution (ESD) of the nonzero eigenvalues of $\mathbf{P}$ as $\mu_n.$ Recall that $\mu_{\SMP_1 \boxtimes \SMP_2}$ is the free multiplicative convolution of the two MP laws in (\ref{eq_MPlawforms}). Moreover, we denote its typical locations as follows. Recall that $\mathsf{n}=\min\{n_1, n_2, p\}.$ For $1 \leq i \leq \mathsf{n},$ we denote  $\gamma_i$ in the sense that
\begin{equation*}
\int_{-\infty}^{\gamma_i} \mu_{\SMP_1 \boxtimes \SMP_2}(\mathrm{d} x)=\frac{i}{\mathsf{n}}. 
\end{equation*}   

\begin{lem}\label{lem_free}
We suppose Assumption \ref{assum_mainassumption}  and the assumption of (\ref{eq_detaileddimensionregime}) hold. Then when $\mathsf{n}$ is sufficiently large, we have that 
\begin{equation*}
\mu_n \Rightarrow \mu_{\SMP_1 \boxtimes \SMP_2}.
\end{equation*}
Moreover, for all $1 \leq i \leq \mathsf{n},$ let $\lambda_1 \geq \lambda_2 \geq \cdots$ be the eigenvalues of $\mathbf{P},$ we have that
\begin{equation*}
|\lambda_i-\gamma_i|=\mathrm{o}_{\prec}(1).
\end{equation*}
\end{lem}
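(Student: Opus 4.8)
The plan is to recognize $\mathbf{P}$ as a product of two independent, suitably scaled sample covariance matrices and then combine the known Marchenko--Pastur laws for each factor with free probability (for the weak limit) and a local law (for the eigenvalue rigidity). First I would write $\mathbf{P} = B_1 B_2$, where $B_k := \frac{1}{\sqrt{n_k p}\,\sigma_k^2}\mathbf{W}_k^\top \mathbf{W}_k \in \mathbb{R}^{p\times p}$; a direct computation using $\sqrt{n_1 p}\,\sqrt{n_2 p} = p\sqrt{n_1 n_2}$ confirms this equals the matrix in the statement. Each $B_k$ is positive semidefinite of rank $\min\{n_k,p\}=n_k$ under (\ref{eq_detaileddimensionregime}), so $\mathbf{P}$ has at most $\mathsf{n}=\min\{n_1,n_2\}$ nonzero eigenvalues, and these coincide with the spectrum of the symmetric positive semidefinite matrix $B_1^{1/2} B_2 B_1^{1/2}$; hence all nonzero eigenvalues of $\mathbf{P}$ are real and nonnegative. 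By the cited sample-covariance results valid in the regime (\ref{eq_detaileddimensionregime}) \citep{bloemendal2014isotropic, ding2023global}, the ESD of the nonzero eigenvalues of each $B_k$ converges to the scaled Marchenko--Pastur law $\mu_{\SMP_k}$ in (\ref{eq_MPlawforms}) with aspect ratio $\phi_k = p/n_k$.

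For the weak-convergence assertion, I would establish the asymptotic freeness of $B_1$ and $B_2$. In the Gaussian case this is immediate from rotational invariance: $B_k = O_k \Lambda_k O_k^\top$ with $O_k$ Haar-distributed and independent across $k$, so Voiculescu's asymptotic freeness theorem \citep{voiculescu1991limit,mingo2017free} yields that $(B_1, B_2)$ converges to a free pair with marginals $\mu_{\SMP_1}$ and $\mu_{\SMP_2}$. For the general sub-Gaussian noise of Assumption \ref{assum_mainassumption}, I would invoke universality: the limiting spectrum of a product of independent sample covariance matrices depends only on the aspect ratios and second moments, which is the classical product-of-sample-covariance result \citep{bai2010spectral} extended to the present dimensional regime via \citep{ding2023global}. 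Multiplicativity of the $S$-transform (Definition \ref{defn_freemultiplicativeconvuliton}) then identifies the LSD of the nonzero spectrum of $\mathbf{P}=B_1 B_2$ with $\mu_{\SMP_1 \boxtimes \SMP_2}$, giving $\mu_n \Rightarrow \mu_{\SMP_1\boxtimes \SMP_2}$.

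To upgrade this to the rigidity bound $|\lambda_i-\gamma_i|=\mathrm{o}_{\prec}(1)$, I would establish a local law for the resolvent of $B_1^{1/2} B_2 B_1^{1/2}$. Concretely, I would set up the subordination (self-consistent) equations characterizing the Stieltjes transform of $\mu_{\SMP_1\boxtimes\SMP_2}$, propagate the isotropic local laws for the individual factors $B_k$ from \citep{bloemendal2014isotropic, ding2023global} through the product to show the empirical Stieltjes transform is close to the deterministic solution down to spectral scales slightly above $\mathsf{n}^{-1}$, and separately control the largest and smallest nonzero eigenvalues so that none strays more than $\mathrm{o}_{\prec}(1)$ from the support of the limiting measure. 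A standard eigenvalue-counting (Helffer--Sj\"ostrand) argument then converts the local law into concentration of each eigenvalue at its classical location $\gamma_i$, the indexing being matched to the monotone quantile ordering.

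The hard part will be this third step in the general regime (\ref{eq_detaileddimensionregime}), where $p$ can be far larger than $n_1,n_2$ and the entries are only sub-Gaussian: one cannot rely on comparable-dimension arguments and must instead control the cross terms generated by the nonlinear product structure and the behavior near the (possibly hard) edges of the free convolution, where the density may vanish like a square root or develop an atom. The saving grace is that the target is only the weak rate $\mathrm{o}_{\prec}(1)$ rather than a sharp $\mathsf{n}^{-2/3}$-type bound, so it suffices to combine the bulk local law with crude no-outlier estimates at both edges, avoiding a full edge-universality analysis.
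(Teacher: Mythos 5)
Your proposal is correct and follows essentially the same route as the paper: the weak convergence comes from free probability (asymptotic freeness of the two independent sample-covariance factors together with multiplicativity of the $S$-transform, as in Chapter 4 of \cite{mingo2017free}), and the rigidity comes from a local law for the product of sample covariance matrices. The paper simply outsources your third step to a citation --- it invokes \cite{ji2023local}, extended verbatim to the regime (\ref{eq_detaileddimensionregime}) via the individual Marchenko--Pastur laws of \cite{bloemendal2014isotropic} --- which is precisely the subordination/local-law and eigenvalue-counting argument you sketch.
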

\begin{proof}
The first part of the results follows from Chapter 4 of \cite{mingo2017free}. For the second part, when $p, n_1, n_2$ are all of comparable magnitude, the results have been established in \cite{ji2023local}. In the general setting (\ref{eq_detaileddimensionregime}), since each individual matrix follows the Marchenko–Pastur law as in \cite{bloemendal2014isotropic}, one can follow the arguments in \cite{ji2023local} verbatim to complete the proof.   
\end{proof}

}

\subsection{Asymptotic behavior of oracle and empirical bandwidths} \label{band.sec}

We can prove the following proposition concerning the convergence of the empirical bandwidth $h_n$ defined in (\ref{eq_bandwidthselection}) and $h_n^0$ defined in (\ref{eq_clearnsignalkernel}). Recall the parameters $\theta_1,...,\theta_r$ defined in (\ref{eq_covstructure}). {
\begin{prop}\label{lem_bandwidthconcentration}
	Suppose the assumptions of Theorem \ref{thm_noiseconvergence} hold and $h_n$ is selected according to Algorithm \ref{al0} for any fixed $\omega\in(0,1)$, and $h_n^0$ is defined through (\ref{eq_clearnsignalkernel}). Then we have $\sum_{i=1}^r\theta_i\prec h^0_n \prec \sum_{i=1}^r\theta_i$ and $|h_n/h_n^0-1|=\mathrm{O}_{\prec}(\eta)$. Moreover, if instead the assumption of Theorem \ref{thm_noisededuction} holds, we have that $p\sigma^2\prec h_n\prec p\sigma^2$.
\end{prop}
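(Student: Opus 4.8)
The plan is to work directly with the empirical distribution of the between-dataset squared distances, since by construction $h_n$ and $h_n^0$ are the $\omega$-percentiles of $\nu_n$ and of its clean analogue $\nu_n^0$ (formed from $d_{ij}^0:=\|\xb_i^0-\yb_j^0\|_2^2$). Using the distance-preserving rotation $\bO\bR$ of (\ref{eq_datareducedstructure}), I would write $d_{ij}^0=\sum_{k=1}^r z_{ij,k}^2$, a quadratic form in a sub-Gaussian vector with covariance $\operatorname{diag}(\theta_1,\dots,\theta_r)$; hence $\mathbb{E}\,d_{ij}^0=\sum_{i=1}^r\theta_i=:\Theta$ and, by the sub-Gaussian fourth-moment bound, $\operatorname{Var}(d_{ij}^0)\le C\sum_k\theta_k^2\le C\Theta^2$. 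For the noisy distances I would expand $d_{ij}=d_{ij}^0+2\langle\bO\bR(\xb_i^0-\yb_j^0),\bO\bR(\bxi_i-\bzeta_j)\rangle+\|\bO\bR(\bxi_i-\bzeta_j)\|_2^2$, where the noise block has covariance $\sigma^2\bI$, so its squared norm has mean $p\sigma^2$.

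For the first claim $\Theta\prec h_n^0\prec\Theta$ it suffices to show the $\omega$-percentile of $\{d_{ij}^0\}$ is of order $\Theta$. The upper bound is Markov's inequality: $\mathbb{P}(d^0>C\Theta)\le1/C$, so for $C$ with $1/C<1-\omega$ the population CDF at $C\Theta$ exceeds $\omega$. The lower bound needs a small-ball estimate $\mathbb{P}(d^0\le c\Theta)\to0$ as $c\to0$, uniformly in $n$, which I would derive from the continuity/non-degeneracy of the signal law (Assumption \ref{assum_signal}) together with $\operatorname{Var}(d^0)\asymp\Theta^2$ (so that $d^0/\Theta$ does not collapse to $0$). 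To move from population to empirical percentiles I would note that $\nu_n^0(t)$ is a bipartite two-sample average, whose fluctuation around $\mathbb{P}(d^0\le t)$ is $\OO_\prec(n_1^{-1/2}+n_2^{-1/2})$ by a Hoeffding-type decomposition that isolates the dependence through the shared row/column indices; this is negligible for fixed $\omega\in(0,1)$.

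For $|h_n/h_n^0-1|=\OO_\prec(\eta)$ I would establish a uniform expansion of the noisy distances. By Hanson--Wright, $\|\bO\bR(\bxi_i-\bzeta_j)\|_2^2=p\sigma^2+\OO_\prec(\sqrt p\,\sigma^2)$; conditionally on the signal the cross term is sub-Gaussian with variance $d_{ij}^0\sigma^2$, hence $\OO_\prec(\sqrt{d_{ij}^0}\,\sigma)$. A union bound over the polynomially-many $n_1n_2$ pairs keeps both estimates uniform in the $\prec$ sense, giving $d_{ij}=d_{ij}^0+p\sigma^2+\OO_\prec(\sqrt p\,\sigma^2+\sqrt{d_{ij}^0}\,\sigma)$, an essentially additive shift by $p\sigma^2$. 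Sandwiching $\nu_n$ between shifted copies of $\nu_n^0$ then yields $h_n=h_n^0+p\sigma^2+\OO_\prec(\sqrt p\,\sigma^2+\sqrt\Theta\,\sigma)$; dividing by $h_n^0\asymp\Theta$ and using $\sqrt p\le p$ turns the right-hand side into $p\sigma^2/\Theta+\sigma/\sqrt\Theta$, which is exactly $\eta$ as in (\ref{psi}).

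For the weak-SNR regime, $\Theta/(p\sigma^2)=\oo(1)$ makes the deterministic shift $p\sigma^2$ dominant. The $\omega$-percentile of the clean distances is $\OO_\prec(\Theta)=\oo(p\sigma^2)$ by the first part (applied with the same $\omega$), and on the relevant bulk the cross and noise-fluctuation terms are also $\oo(p\sigma^2)$; hence the $\omega$-percentile of $\{d_{ij}\}$ is $p\sigma^2(1+\oo_\prec(1))$, i.e.\ $p\sigma^2\prec h_n\prec p\sigma^2$. I expect the main obstacle to be the lower bound in the first claim for small $\omega$: a percentile deep in the left tail forces a genuinely quantitative small-ball bound for $d^0$ that is uniform in $n$, which cannot come from sub-Gaussianity alone and must exploit the non-degeneracy of the manifold signal distribution; once this (together with the dependent empirical-CDF concentration) is in place, the perturbation bookkeeping that reduces everything to the single scale $\eta$ is routine.
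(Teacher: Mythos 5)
Your proposal is correct, and for the portion the paper proves in-house it runs on the same engine. The paper's proof starts from the identical decomposition (\ref{eq_originaldecomposition}), $\|\xb_i-\yb_j\|_2^2=\|\xb_i^0-\yb_j^0\|_2^2+\|\bxi_i-\bzeta_j\|_2^2-2(\xb_i^0-\yb_j^0)^\top(\bxi_i-\bzeta_j)$, and the same three uniform estimates you invoke: $\max_{i,j}\|\xb_i^0-\yb_j^0\|_2^2\prec\sum_{i=1}^r\theta_i$ via the sub-Gaussian norm bound (Lemma \ref{sg.bnd.lem}), $\|\bxi_i-\bzeta_j\|_2^2=p\sigma^2\bigl(1+\OO_{\prec}(p^{-1/2})\bigr)$ via Hanson--Wright (Lemmas \ref{lem_concentrationinequality} and \ref{HW.lem}), and the cross term $\prec\sigma(\sum_{i=1}^r\theta_i)^{1/2}$ as in (\ref{cross.pd0}). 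The differences lie in how the percentile conclusions are extracted, and in coverage. For the weak-SNR claim, the paper does not shift the empirical CDF as you do; it proves two-sided bounds valid for \emph{every} pair --- $\max_{i,j}\|\xb_i-\yb_j\|_2^2\prec p\sigma^2$ and, via the lower-deviation side of Hanson--Wright, $\min_{i,j}\|\bxi_i-\bzeta_j\|_2^2\succ p\sigma^2$, hence $\|\xb_i-\yb_j\|_2^2\succ p\sigma^2$ for all $(i,j)$ --- so that any fixed percentile is automatically trapped at scale $p\sigma^2$; your sandwich of $\nu_n$ between shifted copies of $\nu_n^0$ reaches the same conclusion with equivalent strength. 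For the first part ($\sum_{i=1}^r\theta_i\prec h_n^0\prec\sum_{i=1}^r\theta_i$ and $|h_n/h_n^0-1|=\OO_{\prec}(\eta)$), the paper gives no argument at all: it defers to Proposition 1 of \cite{ding2022learning}. Your Markov/small-ball/empirical-CDF program, the $\OO_{\prec}(n_1^{-1/2}+n_2^{-1/2})$ concentration of the bipartite empirical CDF, and the additive-shift bookkeeping reducing $\bigl(p\sigma^2+\sqrt{p}\,\sigma^2+\sigma(\sum_{i=1}^r\theta_i)^{1/2}\bigr)/\sum_{i=1}^r\theta_i$ to exactly $\eta$ as in (\ref{psi}) are thus more explicit than the paper itself, and the arithmetic is right.

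Two cautions. The obstacle you flag --- a quantitative lower small-ball bound for the clean percentile --- is genuine and is precisely what the citation absorbs; note, though, that the stochastic-domination formulation only requires $\nu_n^0\bigl(n^{-\upsilon}\sum_{i=1}^r\theta_i\bigr)<\omega$ with high probability for each fixed $\upsilon>0$, which is weaker than a constant-scale small-ball estimate, but as you say it still needs non-degeneracy input beyond sub-Gaussianity (the strictly positive density in Assumption \ref{assum_signal}). Separately, your parenthetical $\operatorname{Var}(d^0)\asymp\Theta^2$ overclaims: sub-Gaussianity yields only $\operatorname{Var}(d^0)\lesssim\Theta^2$ (the components $z_{ij,k}$ are uncorrelated, not independent, so one should bound the second moment directly), and in the favorable case $\operatorname{Var}(d^0)=\mathrm{o}(\Theta^2)$ Chebyshev alone already gives the small-ball bound; the delicate case is exactly $\operatorname{Var}(d^0)\asymp\Theta^2$ (few effective dimensions), where variance information cannot substitute for anti-concentration --- consistent with your own diagnosis, so this is a misstated heuristic rather than a load-bearing error.
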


\begin{proof}
	The proof of the first part follows a similar argument as  \cite[Proposition 1]{ding2022learning}. Below we provide the proof of the second part. %For completeness, below we provide the detailed proof.

	Note that 
	\begin{equation}\label{eq_originaldecomposition}
		\|\xb_i-\yb_j\|_2^2=\|\xb^0_i-\yb^0_j\|_2^2+\|\bxi_i-\bzeta_j\|_2^2-2(\xb^0_i-\yb^0_j)^\top(\bxi_i-\bzeta_j).
	\end{equation}
	By Lemma \ref{sg.bnd.lem} and the setup in  (\ref{eq_covstructure}), we have
	\beq
	\max_{1\le i\le n_1,1\le j\le n_2}\|\xb^0_i-\yb^0_j\|_2^2\prec \sum_{i=1}^r\theta_i.
	\eeq
	By Lemma \ref{lem_concentrationinequality} and Assumption \ref{assum_mainassumption}, we have that
	\beq
	\max_{i,j}\|\bxi_i-\bzeta_j\|_2^2\prec \sigma^2p. \nonumber
	\eeq
	By Lemma \ref{sg.bnd.lem}, we have
	\beq \label{cross.pd0}
\max_{i,j}|({\bxi}_i-\bzeta_j)^\top(\xb_i^0-\yb_j^0)|\prec \sigma(\sum_{i=1}^r\theta_i)^{1/2}.
	\eeq
	Combining the above arguments, we find that 
	\begin{align}
		\max_{i,j}\|\xb_i-\yb_j\|_2^2\prec \sigma^2p+\sigma (\sum_{i=1}^r\theta_i)^{1/2}\prec \sigma^2 p.\label{y.bnd}
	\end{align}
	On the other hand, by Assumption \ref{assum_mainassumption} and Lemma \ref{HW.lem}, and the fact that $\E\|\bxi_i-\bzeta_j\|_2^2=2\sigma^2p$, we have
	\[
	\min_{i,j}\|\bxi_i-\bzeta_j\|_2^2\succ \sigma^2 p.
	\]
	This along with (\ref{cross.pd0}) implies for all pairs $(i,j)$
	\[
  \|\xb_i-\yb_j\|_2^2\ge 	\|\bxi_i-\bzeta_j\|_2^2-\|\xb^0_i-\yb^0_j\|_2^2-|2(\xb^0_i-\yb^0_j)^\top(\bxi_i-\bzeta_j)|\succ \sigma^2 p.
	\]
	This implies that any finite percentile $h_n$ of $\{\|\xb_i-\yb_j\|_2^2\}$, we have $p\sigma^2\prec h_n\prec p\sigma^2$. 
	\end{proof}
}

\section{Technical Proofs}\label{sec_techinicalproofappendix}

\subsection{Construction of auxiliary RKHSs}\label{sec_auxililaryincrement}
Note that in general $\mathcal{S}_1$ and $\mathcal{S}_2$ defined in (\ref{eq_setstwo}) may not be the same. In addition, the distributions $\widetilde{\mathsf{P}}_1$ and $\widetilde{\mathsf{P}}_2$ are essentially different so that for the same kernel function $k(\cdot,\cdot),$ it may result in different Mercer's expansions. To address this issue, we introduce two auxiliary RKHSs. 

Denote 
\begin{equation*}
	\mathcal{S}=\mathcal{S}_1 \cup \mathcal{S}_2. 
\end{equation*}
On $\mathcal{S},$ we define two probability measures, denoted as $\mathring{\mathtt{P}}_1$ and $\mathring{\mathtt{P}}_2$ as follows. For $\bm{w} \in \mathcal{S},$ we denote $\mathring{\mathtt{P}}_t, t=1,2,$ that
\begin{equation}\label{eq_probmainmeasure}
	\mathring{\mathtt{P}}_t(\bm{w})=
	\begin{cases}
		\widetilde{\mathsf{P}}_t(\bm{w}), & \bm{w} \in \mathcal{S}_t \\
		0, & \ \text{Otherwise}
	\end{cases}.
\end{equation} 
Based on the above two measures, we can construct RKHSs on $\mathcal{L}_2(\mathcal{S}).$ According to Mercer's theorem (see Theorem 2.10 of \cite{scholkopf2002learning}), for $(\mathcal{L}_2(\mathcal{S}), \mathring{\mathtt{P}}_t), \ t=1,2,$ we can obtain the follow decompositions for $k(\bm{w}_1, \bm{w}_2)$
\begin{equation}\label{eq_mercerresult}
	k(\bm{w}_1,\bm{w}_2)=\sum_j \lambda^{a,t}_j \psi^{a,t}_j(\bm{w}_1) \psi^{a,t}_j(\bm{w}_2),
\end{equation}
for some positive eigenvalue sequences $\{\lambda_j^{a,t}\}$ and eigenfunction sequences that 
\begin{equation*}
	\int_{\mathcal{S}} \psi_j^{a,t}(\bm{w}) \psi_i^{a,t}(\bm{w}) \mathring{\mathtt{P}}_t(\mathrm{d} \bm{w})= \delta_{ij}.   
\end{equation*}
For notional simplicity, we denote these two RKHSs as $\mathring{\mathcal{H}}_1$ and $\mathring{\mathcal{H}}_2.$

\subsection{Proof  of results in Section \ref{sec_manifoldmodelandlandmarkoperator}}\label{sec_31proof}

\begin{proof}[\bf Proof of Proposition \ref{prop_pdfkernel}] 
{ For (\ref{eq_kernelone1111}), due to similarly, we focus on $k_1(\bm{x}_1, \bm{x}_2).$ Under the assumption that $d'>0,$ together with (\ref{eq_reducedmapping}), we find that for all $\bm{z}_{d'} \in \mathcal{S}_{1,d'}$ sampled according to $\widetilde{\mathsf{P}}_{1,d'}$, $\vartheta'(\bm{z}_{d'}) \in \mathcal{S}_{2,d'}$ will be distributed according to $\widetilde{\mathsf{P}}_{2,d'}$. Consequently, we can rewrite 
	\begin{align*}
		k_1(\bm{x}_1, \bm{x}_2)& = \int_{\mathcal{S}_{2,d'}} \int_{\mathcal{S}_{2,d'}^\perp} k\left(\bm{x}_1, \begin{pmatrix}
\bm{w}_{d'} \\ \bm{z}_{d'}^\perp		
\end{pmatrix} \right) k\left(\begin{pmatrix}
\bm{w}_{d'} \\ \bm{z}_{d'}^\perp		
\end{pmatrix} , \bm{x}_2\right) \widetilde{\mathsf{P}}^\perp_{2,d'}(\mathrm{d} \bm{z}_{d'}^\perp) \widetilde{\mathsf{P}}_{1,d'}(\mathrm{d} (\vartheta')^{-1}(\bm{w}_{d'})) \\
&= \int_{\mathcal{S}_{2,d'}} \int_{\mathcal{S}_{2,d'}^\perp} k\left(\bm{x}_1, \begin{pmatrix}
\bm{w}_{d'} \\ \bm{z}_{d'}^\perp		
\end{pmatrix} \right) k\left(\begin{pmatrix}
\bm{w}_{d'} \\ \bm{z}_{d'}^\perp		
\end{pmatrix} , \bm{x}_2\right) \widetilde{\mathsf{P}}^\perp_{2,d'}(\mathrm{d} \bm{z}_{d'}^\perp) \widetilde{\mathsf{P}}_{2,d'}(\mathrm{d} \bm{w}_{d'}).
	\end{align*}
This concludes the proof. 
}

	The boundedness of the kernel follow directly from the first part of the result and the fact that $k(\cdot,\cdot)$ is bounded. For positive definiteness,  due to similarity, we focus our discussion on the kernel $k_1(\bm{x}_1, \bm{x}_2).$ 
	Using the definition (\ref{eq_kernelone}) and the conventions in Section \ref{sec_auxililaryincrement}, we can rewrite $k_1(\bm{x}_1, \bm{x}_2)$ as 
	\begin{equation}\label{eq_formone}
		k_1(\bm{x}_1, \bm{x}_2)=\int_{\mathcal{S}} k(\bm{x}_1, \bm{z}) k(\bm{z}, \bm{x}_2) \mathring{\mathtt{P}}_2 (\mathrm{d} \bm{z}). 
	\end{equation}
	Together with (\ref{eq_mercerresult}), we can write {
	\begin{align}\label{eq_k1proof}
		k_1(\bm{x}_1, \bm{x}_2) & =\sum_{j} \sum_{i} \lambda_j^{a,2} \lambda_i^{a,2} \psi_j^{a,2}(\bm{x}_1) \psi_i^{a,2}(\bm{x}_2) \int_{\mathcal{S}} \psi_j^{a,2}(\bm{z}) \psi_i^{a,2}(\bm{z}) \mathring{\mathtt{P}}_2 (\mathrm{d} \bm{z}) \nonumber  \\
		&=\sum_{j} (\lambda_j^{a,2})^2  \psi_j^{a,2}(\bm{x}_1) \psi_j^{a,2}(\bm{x}_2).
	\end{align}}
	This completes our proof for $k_1(\bm{x}_1, \bm{x}_2)$ using the reverse of Mercer's theorem (see Exercise 2.23 of \cite{scholkopf2002learning}).
	
	For $k_2(\bm{x}_1, \bm{x}_2),$ similar to the discussion of (\ref{eq_k1proof}), we can show that 
	\begin{equation*}
		k_2(\bm{x}_1, \bm{x}_2)=\sum_{j} (\lambda_j^{a,1})^2 \psi_j^{a,1}(\bm{x}_1) \psi_j^{a,1} (\bm{x}_2).  
	\end{equation*}  
	This completes the proof for $k_2(\bm{x}_1, \bm{x}_2).$ 
\end{proof}

{
\begin{rem}
The eigenfunctions used to interpret the joint embeddings for the two datasets $\mathcal{X}$ and $\mathcal{Y}$ are derived from two integral operators, $\mathcal{K}_1$ and $\mathcal{K}_2$, respectively. These operators are defined using two newly introduced convolutional landmark kernels, $k_1$ and $k_2.$

First, the construction of $k_1$ in (\ref{eq_kernelone1111}), which is used to analyze the dataset $\mathcal{X}$, incorporates information from $\mathcal{Y}$. Specifically, \begin{equation*} k_1(\bm{x}_1, \bm{x}_2)=\int_{\mathcal{S}_2} k(\bm{x}_1, \bm{z}) k(\bm{z}, \bm{x}_2) \widetilde{\mathsf{P}}_2(\mathrm{d} \bm{z}), 
\end{equation*} 
where we recall that $\mathcal{S}_2$ encodes the geometric information of the manifold associated with $\mathcal{Y}$, and $\widetilde{\mathsf{P}}_2$ denotes the corresponding sampling distribution. In this way, the construction of $k_1$ leverages the manifold structure of $\mathcal{Y}$ as a landmark population. A similar construction applies to $k_2$ using $\mathcal{X}$.

Second, regarding the eigenfunctions associated with the integral operators, (\ref{eq_k1proof}) shows that the eigenfunctions of $\mathcal{K}_1$ coincide with those of an integral operator defined on the joint space $\mathcal{S}_1 \cup \mathcal{S}_2$, where the sampling distribution is extended using $\widetilde{\mathsf{P}}_2$. Similarly, the eigenfunctions of $\mathcal{K}_2$ coincide with those of an operator on the same joint space, but with the sampling distribution extended using $\widetilde{\mathsf{P}}_1$.
At a high level, this formulation enables $\mathcal{K}_1$ to learn from $\widetilde{\mathsf{P}}_2$ and $\mathcal{K}_2$ to learn from $\widetilde{\mathsf{P}}_1$, thereby facilitating the integration of information over the overlapping domain $\mathcal{S}_1 \cap \mathcal{S}_2$. 
\end{rem}
}

\subsection{Proof  of results in Section \ref{sec_spectralanalysis}}\label{sec_32proof}

%{\color{red}some notations needed to be changed to make them align with the main manuscript}

\begin{proof}[\bf Proof of Propositions \ref{eigenvalue.prop}] According to (\ref{eq_formone}) and (\ref{eq_k1proof}), we can also write that 
	\begin{equation}\label{eq_k1form}
		k_1(\bm{x}_1, \bm{x}_2)=\sum_{j} \sum_{i} \lambda_j^{a,2} \lambda_i^{a,1} \psi_j^{a,2}(\bm{x}_1) \psi_i^{a,1}(\bm{x}_2) \int_{\mathcal{S}} \psi_j^{a,2}(\bm{z}) \psi_i^{a,1}(\bm{z}) \mathring{\mathtt{P}}_2 (\mathrm{d} \bm{z}). 
	\end{equation}
	Similarly, we can also write 
	\begin{equation*}
		k_2(\bm{y}_1, \bm{y}_2)=\sum_{j} \sum_{i} \lambda_j^{a,2} \lambda_i^{a,1} \psi_j^{a,2}(\bm{y}_1) \psi_i^{a,1}(\bm{y}_2) \int_{\mathcal{S}} \psi_j^{a,2}(\bm{z}) \psi_i^{a,1}(\bm{z}) \mathring{\mathtt{P}}_1 (\mathrm{d} \bm{z}). 
	\end{equation*}
	Let $\ell_2(\mathbb{R}):=\{  \bm{a}= (a_i)_{i \in \mathbb{N}} \in \mathbb{R}^{\infty}| a_i \in \mathbb{R} \ \text{and} \ \sum_{i=1}^{\infty} a_i^2<\infty \}.$ We denote two operators $\mathcal{T}_1$ and $\mathcal{T}_2$ on $\ell_2(\mathbb{R})$ as follows 
	\begin{equation*}
		\mathcal{T}_1 \bm{a}=\left( \lambda_j^{a,2}\sum_{i=1}^{\infty} a_i  \int_{\mathcal{S}} \psi_j^{a,2}(\bm{z}) \psi_i^{a,1}(\bm{z}) \mathring{\mathtt{P}}_2 (\mathrm{d} \bm{z}) \right)_{j \in \mathbb{N}},
	\end{equation*} 
	and
	\begin{equation*}
		\mathcal{T}_2 \bm{a}=\left( \lambda_j^{a,1}\sum_{i=1}^{\infty} a_i \int_{\mathcal{S}} \psi_j^{a,1}(\bm{z}) \psi_i^{a,2}(\bm{z}) \mathring{\mathtt{P}}_1 (\mathrm{d} \bm{z}) \right)_{j \in \mathbb{N}}. 
	\end{equation*}
	It is straightforward to see that $\mathcal{T}_1$ and $\mathcal{T}_2$ commutes on $\ell_2(\mathbb{R}).$ Therefore, according to Lemma \ref{lem_Operatorcommuteshavesameeigenvalues}, $\mathcal{T}_1 \mathcal{T}_2$ and $\mathcal{T}_2 \mathcal{T}_1$ have the same nonzero eigenvalues. To conclude our proof, we will show that the nonzero eigenvalues of $\mathcal{K}_1$ coincide with those of $\mathcal{T}_1 \mathcal{T}_2,$ and those of $\mathcal{K}_2$ coincide with those of $\mathcal{T}_2 \mathcal{T}_1.$ 
	
	Under the assumption that $\mathcal{H}_1$ and $\mathcal{H}_2$ are compactly embedded into some common space $\mathcal{N},$  we can use the conventions in Section \ref{sec_auxililaryincrement} to conclude that    these exist some common measurable space $\mathring{\mathcal{N}}$ so that $\mathring{\mathcal{H}}_1$ and $\mathring{\mathcal{H}}_2$ are compactly embedded into $\mathcal{L}_2(\mathring{\mathcal{N}}, \mathring{\mathsf{P}}_1)$ and $\mathcal{L}_2(\mathring{\mathcal{N}}, \mathring{\mathsf{P}}_2),$ respectively. Suppose $\phi(\cdot)$ is an eigenfunction of $ \mathcal{K}_1$ associated with some nonzero eigenvalue $\lambda$. That is, 
	\beq \label{eigen.fun}
	\mathcal{K}_1\phi(\bm{x})=\lambda\phi(\bm{x}), \ \bm{x} \in \mathcal{S}_1.  
	\eeq
	By Lemma \ref{lem_RKHSextension} and the conventions of Section \ref{sec_auxililaryincrement}, we can expand  $\phi(\bm{x})$ on the common space $\mathring{\mathcal{N}}$ via $\mathcal{L}_2(\mathring{\mathcal{N}}, \mathring{\mathsf{P}}_2)$ using the basis of $\mathring{\mathcal{H}}_2.$ Consequently, we can write that for some constants $b_i \equiv b_i(\phi), \ i \in \mathbb{N}$ and $(b_i) \in \ell_2(\mathbb{R})$ so that 
	\begin{equation*}
		\phi(\bm{x}) =  \sum_{i=1}^\infty b_i  \psi^{a,2}_i(\bm{x}).
	\end{equation*}
	Together with (\ref{eq_k1form}) and Definition \ref{defn_landmarkintegral} as well as the conventions in Section \ref{sec_auxililaryincrement}, we have that 
	\begin{align*}
		\mathcal{K}_1\phi(\bm{x}) &= \int_{\mathcal{S}} k_1(\bm{x},\bm{y}) \sum_{i=1}^\infty b_i \psi^{a,2}_i(\bm{y})\mathring{\mathtt{P}}_1(\mathrm{d} \bm{y})\\
		&=\int_{\mathcal{S}} \sum_{j} \sum_{i} \lambda_j^{a,2} \lambda_i^{a,1} \psi_j^{a,2}(\bm{x}) \psi_i^{a,1}(\bm{y}) \int_{\mathcal{S}} \psi_j^{a,2}(\bm{z}) \psi_i^{a,1}(\bm{z}) \mathring{\mathtt{P}}_2 (\mathrm{d} \bm{z}) \sum_{t=1}^\infty b_t \psi_t^{a,2}(\bm{y})\mathring{\mathtt{P}}_1(\mathrm{d} \bm{y})\\
		&=\sum_{j=1}^{\infty} \alpha_j \psi_j^{a,2}(\bm{x}),
	\end{align*}
	where $\alpha_j$ is denoted as 
	\begin{equation*}
		\alpha_j:=\lambda_j^{a,2}\sum_i  \int_{\mathcal{S}} \psi_j^{a,2}(\bm{z}) \psi_i^{a,1}(\bm{z}) \mathring{\mathtt{P}}_2 (\mathrm{d} \bm{z}) \lambda_i^{a,1} \sum_{t=1}^{\infty} b_t   \int_{\mathcal{S}} \psi_i^{a,1}(\bm{y}) \psi_t^{a,2}(\bm{y}) \mathring{\mathtt{P}}_1(\mathrm{d} \bm{y}). 
	\end{equation*}
	Since the discussion holds for all $\bm{x},$ together with (\ref{eigen.fun}), we conclude that 
	\begin{equation}\label{eq_identity}
		\alpha_j=\lambda b_j, \ j \in \mathbb{N}
	\end{equation}
	Denote $\bm{b}=(b_j)_{j \in \mathbb{N}}$ and
	$\bm{\alpha}=(\alpha_j)_{j \in \mathbb{N}}.$ Using the definitions of $\mathcal{T}_1$ and $\mathcal{T}_2,$ we can rewrite (\ref{eq_identity}) as follows
	\begin{equation*}
		\mathcal{T}_1 \mathcal{T}_2 \bm{b}=\lambda \bm{b}.  
	\end{equation*}
	This shows that $\lambda$ is also an eigenvalues of $\mathcal{T}_1 \mathcal{T}_2$ and conclude the proof that the nonzero eigenvalues of $\mathcal{K}_1$ coincide with those of $\mathcal{T}_1 \mathcal{T}_2.$ Similarly, we can show that the eigenvalues of $\mathcal{K}_2$ coincide with those of $\mathcal{T}_2 \mathcal{T}_1.$ This completes our proof. 
	
\end{proof}

\begin{proof}[\bf Proof of Theorem \ref{thm_cleanconvergence}]
	%Recall that (\ref{eq_kernelone}) and (\ref{eq_kerneltwo})  induce two RHKS whose integral operators are denoted as $\mathcal{K}_1$ and $\mathcal{K}_2,$ given by 
	%\beq \label{K1}
	%\mathcal{K}_1 g(\bx) = \int\int k(\bx,\by)k(\by,\bz)g(\bz)\widetilde{\mathsf{P}}_2(\mathrm{d}\by)\widetilde{\mathsf{P}}_1(\mathrm{d}\bz),
	%\eeq
	%and
	%\beq \label{K2}
	%\mathcal{K}_2 g(\bx) = \int\int k(\xb,\by)k(\by,\bz)g(\bz)\widetilde{\mathsf{P}}_1(\mathrm{d}\by)\widetilde{\mathsf{P}}_2(\mathrm{d}\bz).
	%\eeq
	
	%\paragraph{Eigenvalue convergence to the integral operator.} as
	%We denote $\bK^0$ as the noiseless counterpart of $\bK$, where the bandwidth $h_n^0$ is noiseless version of $h_n$, that is, the solution to the equation $\nu(h)=\omega$, where 
	%\beq \label{h0}
	%	\nu(t)=\frac{1}{n_1n_2}\sum_{\substack{1\le i\le n_1\\
			%		1\le j\le n_2}}1_{\{\|\xb_i^0-\yb^0_j\|_2^2\le t\}}.
	%\eeq
	%In the following, we focus on the properties associated with the kernel $k_1(x,y)$ and the matrix $(n_1n_2)^{-1}\bK^0(\bK^0)^\top$, as the arguments for $k_2(x,y)$ and $(n_1n_2)^{-1}(\bK^0)^\top\bK^0$ are similar. 
	We start with the convergence of the eigenvalues (\ref{eq_defni}). Since $\bN_{01}$ and $\bN_{02}$ in (\ref{eq_twomatrices}) have the same nonzero eigenvalues, we focus our discussion on $\bN_{01}.$ For $\bN_{01}=(N_{ij}^{01}),$ we see that 
	% for $\bN_{01}$
	%
	%
	%Note that the kernel function
	%\begin{equation}
	%k_1(\bx_1,\bx_2)=\int_{\mathcal{M}} k(\bx_1,\by) k(\by,\bx_2) \widetilde{\mathsf{P}}_{2}(\mathrm{d}\by),
	%\end{equation} 
	%depends on the probability measure $\widetilde{\mathsf{P}}_{2}(\cdot)$. 
	%It is useful to define their empirical counterparts 
	%\begin{equation}
	%	\widehat{k}^0_1(\bm{x}_1, \bm{x}_2)=\frac{1}{n_2}\sum_{s=1}^{n_2} k(\bm{x}_1, \yb^0_s) k(\yb^0_s, \bm{x}_2), \qquad \widehat{k}^0_2(\bm{y}_1, \bm{y}_2)=\frac{1}{n_1}\sum_{s=1}^{n_1} k(\bm{y}_1, \xb^0_s) k(\xb^0_s, \bm{y}_2). 
	%\end{equation}
	%We denote $\bN_0=(N^0_{ij})\equiv \bN_{01}=(n_1n_2)^{-1}\bK^0(\bK^0)^\top$, so that
	\begin{equation}\label{eq_cite}
		N^{01}_{ij}=\frac{1}{n_1n_2}\sum_{k=1}^{n_2}k(\xb^0_i,\yb^0_k)k(\xb^0_j,\yb^0_k).
	\end{equation}
	
	Using the conventions in Section \ref{sec_31proof}, we now introduce the following auxiliary matrix that $\bW_{01}=(W_{ij}^{01}) \in \mathbb{R}^{n_1 \times n_1},$ where
	\begin{equation}\label{eq_intermediatematrix}
		W_{ij}^{01} = \frac{1}{n_1}\int_{\mathcal{S}}  k(\xb^0_i,\by)k(\by,\xb^0_j)\mathring{\mathtt{P}}_{2}(\mathrm{d}\by) \equiv \frac{1}{n_1}k_1(\xb_i^0, \xb_j^0).
	\end{equation}
	Denote the eigenvalues of $\bW_{01}$ as $\{\lambda_i^{01}\}.$ Then according to Proposition \ref{prop_pdfkernel} and Lemma \ref{eigenvalue.lem1}, we have that 
	\begin{equation}\label{eq_partone}
		\sup_i|\lambda_i^{01}-\gamma_i| \prec \frac{1}{\sqrt{n_1}}. 
	\end{equation}

	Moreover, for each $1\le i\ne j\le n_1$, conditional on $\xb_i^0$ and $\xb_j^0$, $N_{ij}^{01}$ is essentially
	a sum of independent bounded random variables $k(\xb^0_i,\yb^0_k)k(\xb^0_j,\yb^0_k), k=1,2,...,n_2$, satisfying
	\begin{equation*}
		\E \left[k(\xb^0_i,\yb^0_k)k(\yb^0_k,\xb^0_j)| \xb^0_i,\xb^0_j\right]=n_1 W_{ij}^{01}.
	\end{equation*}
	Therefore, for each pair of $i,j,$  we can apply Lemma \ref{lem_boundedconcentration} to obtain that 
	\begin{equation}\label{eq_discussionsimilarargument}
		\left| \frac{1}{n_2}\sum_{k=1}^{n_2}k(\xb^0_i,\yb^0_k)k(\yb^0_k,\xb^0_j)- n_1 W_{ij}^{01}\right|\prec n_2^{-1/2}.
	\end{equation}
	Equivalently, we have that 
	\begin{equation*}
		n_1 N_{ij}^{01}=n_1 W_{ij}^{01}+\OO_{\prec}\left(n_2^{-1/2}\right).
	\end{equation*}
	By Gershgorin circle theorem, we readily have that 
	\begin{equation}\label{part_II}
		\| \bN_{01}-\bW_{01} \| \prec \frac{1}{\sqrt{n}_2}. 
	\end{equation}
	Combining (\ref{eq_partone}) and (\ref{part_II}), we can conclude the proof of (\ref{v.conv0}). 
	
	Then we proceed to the proof of the eigenfunctions. Due to similarity, we focus on the proof of (\ref{eigenvector_leftspectral}). For the matrix $\bW_{01}$ defined according to (\ref{eq_intermediatematrix}), we denote its eigenvectors associated with the eigenvalues $\{\lambda_i^{01}\}$ as $\{\bu_i^{01}\}.$ Then for $\bm{x} \in \mathcal{S}_1,$ we denote that 
	\begin{equation*}
		\phi^{01}_{i}(\bm{x})=\frac{1}{\lambda^{01}_i\sqrt{n_1}} \sum_{j=1}^{n_1} {k}_1(\bm{x}, \xb^0_j)  u^{01}_{ij}
		%	, \qquad \bar{\psi}_i(\bm{y})=\frac{1}{\lambda_i\sqrt{n_2}} \sum_{j=1}^{n_2}{k}_2(\bm{y}, \yb^0_j) \bar v_{ij}, 
	\end{equation*} 
	where $\ub^{01}_i=(u^{01}_{i1}, \cdots, u^{01}_{in_1})^\top.$ {According to (\ref{eq_partone}) and Lemma \ref{eigenvalue.lem1}, we find that 
	\begin{equation*}
			\inf_{\bO\in O(|\sf I|)}\|\sqrt{\lambda_i^{01}}{\bm \phi}^{01}_{\sf I}-\sqrt{\gamma_i}\bO{\bm \phi}_{\sf I}\|_{K_1,\infty} \prec \frac{1}{ \mathsf{r}_i \sqrt{n}},
	%	\left \|\sqrt{\lambda_i^{01}} \phi_i^{01}(\bm{x})-\sqrt{\gamma_i} \phi_i(\bm{x}) \right\|_{K_1} \prec \frac{1}{\sfr_i \sqrt{n}_1}, 
	\end{equation*}
	where ${\bm \phi}^{01}_{\sf I}$ and ${\bm \phi}_{\sf I}$ are vector-valued functions containing $\{\phi_i^{01}\}_{i\in {\sf I}}$ and $\{\phi_i\}_{i\in {\sf I}}$, respectively, $\|(f_1,...,f_m) \|_{K_1,\infty}=\max_{1\le j\le m}\|f_j\|_{K_1}$, and $\|\cdot\|_{K_1}$ is the operator norm on the RKHS $\mathcal{H}_1$ built on $\mathcal{L}_2(\mathcal{S}_1, \widetilde{\mathsf{P}}_1)$ (or equivalently, on $\mathcal{L}_2(\mathcal{S}, \mathring{\mathtt{P}}_1)$ using the convention of Section \ref{sec_32proof}) using the kernel $k_1.$  Moreover, using the property of the reproducing kernel (see equation (2.31) of \cite{scholkopf2002learning}), we have that 
	\begin{equation}\label{eq_partoneeigenvector}
		\inf_{\bO\in O({\sf I})}\left\| \sqrt{\lambda_i^{01}} {\bm \phi}_{\sf I}^{01}(\bm{x})-\sqrt{\gamma_i} \bO{\bm \phi}_{\sf I}(\bm{x})  \right\|_\infty \leq \sqrt{k_1(\bm{x}, \bm{x})}\inf_{\bO\in O(|\sf I|)}\|\sqrt{\lambda_i^{01}}{\bm \phi}^{01}_{\sf I}-\sqrt{\gamma_i}\bO{\bm \phi}_{\sf I}\|_{K_1,\infty}  \prec \frac{1}{\sfr_i \sqrt{n}_1},
	\end{equation} 
	where we used the boundedness of the kernel $k_1$ in Proposition \ref{prop_pdfkernel}.  
	
	Moreover, we denote
	\[
\sqrt{\lambda_i^{01}} 	{\bm \phi}_{\sf I}^{01}(\bm{x})=\bigg(\frac{1}{\sqrt{n_1\lambda^{01}_i}} \sum_{j=1}^{n_1} {k}_1(\bm{x}, \xb^0_j) u^{01}_{kj}\bigg)_{k\in \sf I}: = \frac{1}{\sqrt{n_1\lambda^{01}_i}}\bm{k}_1(\bm{x})^\top\bU^{01},
	\]
	where $\bU^{01}\in\R^{n_1\times |{\sf I}|}$ and $\bm{k}_1(\bm{x})=\big( {k}_1(\bm{x}, \xb^0_j)\big)_{1\le j\le n_1}$, and denote
		\[
	\sqrt{\lambda_i}	\widehat{\bm \phi}_{\sf I}^{0}(\bm{x}):=\bigg(\frac{1}{\sqrt{n_1\lambda_i}} \sum_{j=1}^{n_1} \widehat{k}^0_1(\bm{x}, \xb^0_j) u^{0}_{kj}\bigg)_{k\in \sf I}: = \frac{1}{\sqrt{n_1\lambda_i}}\widehat{\bm{k}}^0_1(\bm{x})^\top\bU^{0},
	\]
	where $\bU^{0}=(u_{ij}^0)\in\R^{n_1\times |{\sf I}|}$ and $\widehat{\bm{k}}^0_1(\bm{x})=\big( \widehat{k}^0_1(\bm{x}, \xb^0_j)\big)_{1\le j\le n_1}$. Then, if we define $\widehat{\bm\phi}^0_{\sf I}=(\widehat{\phi}^0_{i})_{i\in \sf I}$, we have
	\begin{align}\label{eq_generalcontrol}
		&\quad \inf_{\bO\in O(|{\sf I}|)}\left\|\sqrt{\lambda_i^{01}} \bm{\phi}_{\sf I}^{01}(\bm{x})-\sqrt{\lambda_i}\bO\widehat{\bm\phi}^0_{\sf I}(\bm{x}) \right\|_\infty \nonumber \\
		&= \frac{1}{\sqrt{n_1}}\inf_{\bO\in O(|{\sf I}|)}\bigg\|\frac{1}{\sqrt{\lambda^{01}_i}}\bm{k}_1(\bm{x})^\top\bU^{01}-\frac{1}{\sqrt{\lambda_i}}\widehat{\bm{k}}^0_1(\bm{x})^\top\bU^{0}\bO\bigg\|_\infty \nonumber \\
		&\le  \frac{1}{\sqrt{n_1}}\bigg\|\frac{1}{\sqrt{\lambda^{01}_i}}\widehat{\bm{k}}^0_1(\bm{x})^\top\bU^{0}-\frac{1}{\sqrt{\lambda_i}}\widehat{\bm{k}}^0_1(\bm{x})^\top\bU^{0}\bigg\|_\infty +\frac{1}{\sqrt{n_1}}\bigg\|\frac{1}{\sqrt{\lambda^{01}_i}}{\bm{k}}_1(\bm{x})^\top\bU^{0}-\frac{1}{\sqrt{\lambda^{01}_i}}\widehat{\bm{k}}^0_1(\bm{x})^\top\bU^{0}\bigg\|_\infty \nonumber \\
		&\quad +\frac{1}{\sqrt{n_1}} \inf_{\bO\in O(|{\sf I}|)}\bigg\|\frac{1}{\sqrt{\lambda^{01}_i}}{\bm{k}}_1(\bm{x})^\top\bU^{01}-\frac{1}{\sqrt{\lambda^{01}_i}}{\bm{k}}_1(\bm{x})^\top\bU^{0}\bO\bigg\|_\infty \nonumber \\
		&:= E_1+E_2+E_3.
		\end{align}
		For $E_1,$ by Cauchy–Schwarz inequality, we have that 
		\beq\label{eq_E1control}
		E_1\le \frac{1}{\sqrt{n_1}} \frac{|\lambda_i-\lambda^{01}_i|}{\sqrt{\lambda_i \lambda^{01}_i}} \sqrt{\sum_{j=1}^{n_1} (k_1(\bm{x}, \xb_j^0))^2}  \|\bU^0\|_{2,\infty}.
		\eeq
		Together with the assumption that $1 \leq i \leq \mathsf{K}$ (c.f. (\ref{eq_defni})), (\ref{v.conv0}) and (\ref{part_II}), we have that 
		\begin{equation*}
			E_1 \prec \frac{1}{\sqrt{n_2}},
		\end{equation*}
		where we used the boundnedness of the kernel $k_1$ in Proposition \ref{prop_pdfkernel}. For $E_2,$   we have that
			\begin{align}\label{eq_E2control}
				E_2&= \frac{1}{\sqrt{n_1\lambda^{01}_i}}\bigg\|[{\bm{k}}_1(\bm{x})-\widehat{\bm{k}}^0_1(\bm{x})]^\top\bU^{0}\bigg\|_\infty \nonumber\\
			&\le \frac{1}{\sqrt{n_1\lambda^{01}_i}}\max_{k\in\sf I}\bigg| \sum_{j=1}^{n_1} [{k}_1(\bm{x}, \xb^0_j) -\widehat{k}^0_1(\bm{x}, \xb^0_j)]  u^0_{kj}\bigg|\\
			& \leq  \frac{1}{\sqrt{\lambda_i^{01}}} \sqrt{\frac{1}{n_1} \sum_{j=1}^{n_1}\left[ k_1(\bm{x},\xb_j^0)-\widehat{k}_1^0(\bm{x},\xb_j^0) \right]^2} \nonumber \\
			& \leq  \frac{1}{\sqrt{\lambda^{01}_i} } \max_{1 \leq j \leq n_1}\bigg| \frac{1}{n_2}\sum_{k=1}^{n_2}k(\bm{x},\yb^0_k)k(\yb^0_k,\xb^0_j)- \int_{\mathcal{S}}  k(\bm{x},\bm{z})k(\bm{z},\xb^0_j)\mathring{\mathtt{P}}_{2}(\mathrm{d}\bm{z})\bigg| \nonumber \\
			&\prec n_2^{-1/2}, \nonumber
		\end{align}
		where in the thrid step we used Cauchy-Schwarz inequality, in the fourth step we used the fact that both $k_1$ and $\widehat{k}^0_1$ are bounded and the definition in (\ref{eq_k01k02definition}), and in the last step we used a discussion similar to (\ref{eq_discussionsimilarargument}) with Lemma \ref{lem_boundedconcentration}. Similarly, for $E_3$, by Cauchy-Schwarz inequality, we have 
		\begin{align}\label{eq_E3decomposition}
			E_3&=\frac{1}{\sqrt{n_1}} \inf_{\bO\in O(|{\sf I}|)}\bigg\|\frac{1}{\sqrt{\lambda^{01}_i}}{\bm{k}}_1(\bm{x})^\top[\bU^{01}-\bU^{0}\bO]\bigg\|_\infty\le \frac{1}{\sqrt{n_1\lambda^{01}_i}}\inf_{\bO\in O(|{\sf I}|)}\bigg| \sum_{j=1}^{n_1} {k}_1(\bm{x}, \xb^0_j) (u^{01}_{ij}-u^0_{ij})\bigg|\nonumber \\
			& \prec\inf_{\bO\in O(|{\sf I}|)} \|\bU^{01}-\bU^0\bO\|_{2,\infty}\le \inf_{\bO\in O(|{\sf I}|)} \|\bU^{01}-\bU^0\bO\|.
		\end{align}
		Consequently, it suffices to control  $ \inf_{\bO\in O(|{\sf I}|)} \|\bU^{01}-\bU^0\bO\|$. According to the assumption of (\ref{eq_ricondition}), (\ref{v.conv0}) and (\ref{part_II}), we see that with high probability, for some constant $c>0$
		\beqs
		\min_{i\in {\sf I}, j\in {\sf I}^c}|\lambda_i-\lambda_{j}|\ge c\sfr_i, \qquad \|\bN_{01}-\bW_{01}\|\le c\sfr_i/2.
		\eeqs
		Then we can use Lemma \ref{eigenvector.lem} and (\ref{part_II}) to show that  
		\begin{align*}
			%|\sqrt{\bar\lambda_i}\bar{\phi}_{i}(\bm{x})-\sqrt{\lambda_i}\widehat{\phi}_{i}(\bm{x})|\le C\|\sqrt{\bar\lambda_i}\bar{\phi}_{i}(\bm{x})-\sqrt{\lambda_i}\widehat{\phi}_{i}(\bm{x})\|_K\prec \frac{1}{\sfr_i\sqrt{n_2}},
		\inf_{\bO\in O(|{\sf I}|)} \|\bU^{01}-\bU^0\bO\|\le\sqrt{2}	\|\bU^{01}(\bU^{01})^\top-\bU^0(\bU^0)^\top\|\prec \frac{1}{\sfr_i\sqrt{n_2}},
		\end{align*}
		where the first inequality follows from Lemma 1 of \cite{cai2018rate}.
		Consequently, we have that $E_3 \prec \sfr_i^{-1} n_2^{-1/2}.$ Inserting all the above controls into (\ref{eq_generalcontrol}), we see that 
		\begin{equation*}
			\inf_{\bO\in O(|{\sf I}|)}\left\|\sqrt{\lambda_i^{01}} \bm{\phi}_{\sf I}^{01}(\bm{x})-\sqrt{\lambda_i}\bO\widehat{\bm\phi}^0_{\sf I}(\bm{x}) \right\|_\infty
			 \prec \frac{1}{\sfr_i \sqrt{n_2}}+\frac{1}{\sqrt{n_2}}. 
		\end{equation*}
		Together with (\ref{eq_partoneeigenvector}), we can complete the proof of (\ref{eigenvector_leftspectral}). 
}
\end{proof}

\subsection{Proof of results in Section \ref{sec_robustness}}\label{sec_proof33}

%{\color{red}we need to do more steps because of the introduction of the approximate kernels: (\ref{eq_approximatekernels})}
%{\color{red}[from here]}
\begin{proof}[\bf Proof of Theorem \ref{thm_noiseconvergence}]
	For $\bN_1=(N^1_{ij})_{1\le i,j\le n}$ defined in (\ref{eq_twomatricesnoisy}), we have that 
	\[
	N^1_{ij}=\frac{1}{n_1n_2}\sum_{k=1}^{n_2}f\bigg( \frac{\|\xb_i-\yb_k\|_2}{h^{1/2}_n}\bigg)f\bigg(\frac{\|\yb_k-\xb_j\|_2}{h^{1/2}_n}\bigg),
	\]
	where we used the short-hand notation that $f(x)=\exp(-x^2).$ During the proof, we will also use the following auxiliary quantity
	\[
	\widetilde{N}^1_{ij}=\frac{1}{n_1n_2}\sum_{k=1}^{n_2}f\bigg( \frac{\|\xb_i-\yb_k\|_2}{\sqrt{h_n^0}}\bigg)f\bigg(\frac{\|\yb_k-\xb_j\|_2}{\sqrt{h_n^0}}\bigg).
	\]
	where $h_n^0$ is the clean bandwidth as defined in (\ref{eq_clearnsignalkernel}). Moreover, for $\bN_{01}=(N_{ij}^{01})$ defined in (\ref{eq_twomatrices}), we have that 
	\[
	N^{01}_{ij}=\frac{1}{n_1n_2}\sum_{k=1}^{n_2}f\bigg( \frac{\|\xb^0_i-\yb^0_k\|_2}{\sqrt{h_n^0}}\bigg)f\bigg(\frac{\|\yb^0_k-\xb^0_j\|_2}{\sqrt{h_n^0}}\bigg).
	\]
	
	We start with the proof of the eigenvalue convergence (\ref{eq_eigs_v}) and control $\| \bN_{01}-\bN_1 \|$. We will control the entry-wise difference by studying 
	\begin{align*}
		|N^1_{ij}-N^{01}_{ij}| &\le 	|N^1_{ij}-\widetilde{N}^1_{ij}|+	|\widetilde{N}^1_{ij}-N^{01}_{ij}|.
	\end{align*}
	First, under the model assumption (\ref{model}), we have that 
	\begin{align} \label{approx.bnd}
		&|\widetilde{N}^{1}_{ij}-N^{01}_{ij}|\nonumber \\
		&= \bigg|\frac{1}{n_1n_2}\sum_{k=1}^{n_2}\exp\bigg( -\frac{\|\xb_i-\yb_k\|^2_2+\|\yb_k-\xb_j\|_2^2}{h_n}\bigg)-\frac{1}{n_1n_2}\sum_{k=1}^{n_2}\exp\bigg(- \frac{\|\xb^0_i-\yb^0_k\|_2^2+\|\yb^0_k-\xb^0_j\|_2^2}{h_n}\bigg)\bigg| \nonumber \\
		& = \bigg|\frac{1}{n_1n_2}\sum_{k=1}^{n_2}\exp\bigg( -\frac{\|\xb^0_i-\yb^0_k\|^2_2+\|\yb^0_k-\xb^0_j\|_2^2}{h_n}\bigg) \nu(k,i,j) \bigg|,  
	\end{align}
	where $\nu(k,i,j)$ is defined as
	\begin{equation*}
		\nu(k,i,j):= \exp\bigg(-\frac{\|{\bxi}_i-\bzeta_k\|_2^2+\|\bzeta_k-{\bxi}_j\|_2^2-2({\bxi}_i-\bzeta_k)^\top(\xb_i^0-\yb_k^0)-2({\bzeta}_k-\bxi_j)^\top(\yb_k^0-\xb_j^0)}{h_n}\bigg)-1.
	\end{equation*}
	%\begin{align} \label{approx.bnd}
	%	&|N^*_{ij}-N^0_{ij}|\nonumber \\
	%	&= \bigg|\frac{1}{n_1n_2}\sum_{k=1}^{n_2}\exp\bigg( -\frac{\|\xb_i-\yb_k\|^2_2+\|\yb_k-\xb_j\|_2^2}{h}\bigg)-\frac{1}{n_1n_2}\sum_{k=1}^{n_2}\exp\bigg(- \frac{\|\xb^0_i-\yb^0_k\|_2^2+\|\yb^0_k-\xb^0_j\|_2^2}{h}\bigg)\bigg| \nonumber \\
	%		& = \bigg|\frac{1}{n_1n_2}\sum_{k=1}^{n_2}\exp\bigg( -\frac{\|\xb^0_i-\yb^0_k\|^2_2+\|\yb^0_k-\xb^0_j\|_2^2}{h}\bigg) \nonumber \\
	%		&\quad\times \exp\bigg(-\frac{\|{\bxi}_i-\bzeta_k\|_2^2+\|\bzeta_k-{\bxi}_j\|_2^2-2({\bxi}_i-\bzeta_k)^\top(\xb_i^0-\yb_k^0)-2({\bzeta}_k-\bxi_j)^\top(\yb_k^0-\xb_j^0)}{h}\bigg)\nonumber \\
	%		&\quad-\frac{1}{n_1n_2}\sum_{k=1}^{n_2}\exp\bigg(- \frac{\|\xb^0_i-\yb^0_k\|_2^2+\|\yb^0_k-\xb^0_j\|_2^2}{h}\bigg)\bigg| \nonumber \\
	%		&\le \frac{1}{n_1}\max_{1\le k\le n_2}\exp\bigg(- \frac{\|\xb^0_i-\yb^0_k\|_2^2+\|\yb^0_k-\xb^0_j\|_2^2}{h}\bigg) \nonumber \\
	%		&\quad\times\bigg|\frac{\|{\bxi}_i-\bzeta_k\|_2^2+\|\bzeta_k-{\bxi}_j\|_2^2-2({\bxi}_i-\bzeta_k)^\top(\xb_i^0-\yb_k^0)-2({\bzeta}_k-\bxi_j)^\top(\yb_k^0-\xb_j^0)}{h} \bigg| \nonumber \\
	%		&\le \frac{1}{n_1}\max_{1\le k\le n_2}\bigg|\frac{\|{\bxi}_i-\bzeta_k\|_2^2+\|\bzeta_k-{\bxi}_j\|_2^2-2({\bxi}_i-\bzeta_k)^\top(\xb_i^0-\yb_k^0)-2({\bzeta}_k-\bxi_j)^\top(\yb_k^0-\xb_j^0)}{h} \bigg|.
	%\end{align}
	By Lemma \ref{lem_concentrationinequality}, we have
	\beqs
	\max_{i,k}\|\bxi_i-\bzeta_k\|_2^2\prec \sigma^2p. 
	\eeqs
	Moreover, by Lemma \ref{sg.bnd.lem} and the setup in (\ref{eq_covstructure}), we have that 
	\beq\label{eq_boundhere}
	\max_{i\ne j}\|\xb_i^0-\yb_k^0\|_2^2\prec \sum_{i=1}^r\theta_i,\ \max_{i,k}|({\bxi}_i-\bzeta_k)^\top(\xb_i^0-\yb_k^0)|\prec \sigma(\sum_{i=1}^r\theta_i)^{1/2}.
	\eeq
	According to Proposition \ref{lem_bandwidthconcentration}, we see that
	\begin{equation*}
		\nu(k,i,j) \prec \frac{\sigma^2p+\sigma (\sum_{i=1}^r\theta_i)^{1/2}}{ \sum_{i=1}^r \theta_i}. 
	\end{equation*}
	Inserting the above bounds into (\ref{approx.bnd}), we readily see that
	\beq\label{eq_otherbounds}
	|\widetilde{N}^1_{ij}-N^{01}_{ij}| \prec \frac{\sigma^2p+\sigma (\sum_{i=1}^r\theta_i)^{1/2}}{n_1 \sum_{i=1}^r \theta_i}.
	\eeq
	Second, similar to the above discussions, we have that 
	\begin{align}
		|N^{1}_{ij}-\widetilde{N}^1_{ij}|&=\left|\frac{1}{n_1n_2}\sum_{k=1}^{n_2}\exp\bigg( -\frac{\|\xb_i-\yb_k\|^2_2+\|\yb_k-\xb_j\|_2^2}{h_n}\bigg)-\frac{1}{n_1n_2}\sum_{k=1}^{n_2}\exp\bigg( -\frac{\|\xb_i-\yb_k\|^2_2+\|\yb_k-\xb_j\|_2^2}{h_n^0}\bigg)\right|\nonumber \\
		&=\frac{1}{n_1 n_2}\left|\sum_{k=1}^{n_2}\exp\bigg( -\frac{\|\xb_i-\yb_k\|^2_2+\|\yb_k-\xb_j\|_2^2}{h_n} \upsilon(k,i,j) \right|\label{similarboundusingusingusing},
		%	&\le \frac{1}{n_1}\max_{1\le k\le n_2}\exp \bigg( -\frac{\|\xb_i-\yb_k\|^2_2+\|\yb_k-\xb_j\|_2^2}{\bar h}\bigg)\cdot \bigg|1-\frac{h}{h_n}\bigg|,
	\end{align}
	where $\upsilon(k,i,j)$ is defined as 
	\begin{equation*}
		\upsilon(k,i,j):=1-\exp\bigg( \frac{\|\xb_i-\yb_k\|^2_2+\|\yb_k-\xb_j\|_2^2}{h_n}-\frac{\|\xb_i-\yb_k\|^2_2+\|\yb_k-\xb_j\|_2^2}{h_n^0}\bigg). 
	\end{equation*}
	Using (\ref{eq_boundhere}) and Proposition \ref{lem_bandwidthconcentration}, we find that 
	\begin{equation*}
		\upsilon(k,i,j) \prec \eta. 
	\end{equation*}
	This implies that
	\beqs
	|N_{ij}^1-\widetilde{N}_{ij}^1|\prec \frac{\eta}{n_1}.
	\eeqs
	Together with (\ref{eq_otherbounds}), we have that 
	\begin{equation*}
		|N_{ij}^1-N_{ij}^{01}| \prec \frac{\eta}{n_1}. 
	\end{equation*}
	By Gershgorin circle theorem, we can conclude that
	\begin{equation}\label{eq_matrixmultiplicationerrorterm}
		\| \bN_{1}-\bN_{01} \| \prec \eta. 
	\end{equation}
	This completes the proof of (\ref{eq_eigs_v}).

	{%[this parts needed to be changed as well, notations needed to be changed as well.]}
	For the eigenvectors convergence (\ref{eq_projectionbound}), due to similarity, we only prove the results for $\bU_{\sf I}$. Recall that for any positive definite matrix $\bA$ admitting the spectral decomposition $\bA = \sum_{i=1}^n \lambda_i(\bA)\bbeta_i\bbeta_i^\top$, it holds that
	\beqs
	\bbeta_i\bbeta_i^\top = \frac{1}{2\pi \ri} \oint_\Gamma (z{\bf I}-\bA)^{-1} \mathrm{d} z,  
	\eeqs
	where $\Gamma \equiv \Gamma_i \subset \mathbb{C}$ is some simply connected contour only containing $\lambda_i(\bA)$ but no other eigenvalues of $\bA$. The above integral representation serves as the starting point of our analysis.

	Now we denote the contour $\Gamma_i:=\mathsf{B}(\gamma_i, \frac{\mathsf{\sfr_i}}{C}),$ for some large constant $C>0,$ where $\mathsf{B}(\gamma_i, \frac{\sfr_i}{C})$ is the disk centered at $\gamma_i$ with radius $\frac{\sfr_i}{C}$ with $\sfr_i$ is defined in (\ref{defn_sfr}).  Combining (\ref{eq_eigs_v}), (\ref{v.conv0}) and the assumption of (\ref{eq_assumption}), we find that for $1 \leq i \leq \mathsf{K}$ 
	\beqs
	|\mu_i-\gamma_i|\le |\mu_i-\lambda_i|+|\lambda_i-\gamma_i|\prec \eta+n_1^{-1/2}+n_2^{-1/2}=\mathsf{o}(\sfr_i),
	\eeqs
	and for $1 \leq j \neq i \leq \mathsf{K},$ we have that  for some constant $C>0,$ with high-probability
	\begin{equation*}
		|\mu_j-\gamma_i| \geq \left| |\mu_j-\gamma_j|-|\gamma_j-\gamma_i| \right| \geq C \sfr_i. 
	\end{equation*} 
	This shows that with high probability,  $\mu_i$ is the only eigenvalue in $\{\mu_i\}$ that is inside $\Gamma_i.$ Consequently, it yields that  
	\beqs
	\bU_{\sf I}\bU_{\sf I}^\top = \frac{1}{2\pi \ri}\oint_{\Gamma_i} (z{\bf I}-\bN_1)^{-1} \mathrm{d} z.
	\eeqs
	This further implies that, with high probability
		\begin{align} \label{decomp.N1}
		\|\bU_{\sf I}\bU_{\sf I}-\bU_{\sf I}^0(\bU_{\sf I}^0)^\top\|&=  \bigg\|\frac{1}{2\pi \ri}\oint_{\Gamma_i} (z{\bf I}-\bN_1)^{-1} \mathrm{d} z-\frac{1}{2\pi \ri}\oint_{\Gamma_i} (z{\bf I}-\bN_{01})^{-1} \mathrm{d} z\bigg\| \nonumber \\
		&= \bigg\|\frac{1}{2\pi \ri}\oint_{\Gamma_i} [(z{\bf I}-\bN_1)^{-1}- (z{\bf I}-\bN_{01})^{-1}] \mathrm{d} z\bigg\|.
	\end{align}
	By the triangle inequality of complex integral over the contour, we have that 
	\begin{align}\label{eq_L2decomposigion}
		\|\bU_{\sf I}\bU_{\sf I}-\bU_{\sf I}^0(\bU_{\sf I}^0)^\top\|&\leq \frac{1}{2\pi \ri}\oint_{\Gamma_i} \|[(z{\bf I}-\bN_1)^{-1}- (z{\bf I}-\bN_{01})^{-1}]\|\cdot| \mathrm{d} z|\nonumber \\
		& \leq \frac{2 \pi \sfr_i}{2 \pi C} \sup_{z \in \Gamma_i} \left\| (z{\bf I}-\bN_{1})^{-1}-(z{\bf I}-\bN_{01})^{-1} \right \|.  
	\end{align}
	To control the right-hand side of (\ref{eq_L2decomposigion}), according to the resolvent identity, we have that   
	\begin{align} \label{res_id}
		(z{\bf I}-\bN_{1})^{-1}-(z{\bf I}-\bN_{01})^{-1}=(z{\bf I}-\bN_{1})^{-1}[\bN_{1}-\bN_{01}](z{\bf I}-\bN_{01})^{-1}.
	\end{align}
	Recall that $\{\lambda_i\}$  are the eigenvalues of $\bN_{01}.$ According to (\ref{defn_sfr}), (\ref{v.conv0}), the assumption of (\ref{eq_assumption}) as well as the definition of $\Gamma_i$ imply that
	\[
	\inf_{z\in\Gamma_i}\min_{k\in {\sf I}, j\in {\sf I}^c}\{|\lambda_k-z|, |\lambda_{j}-z|\}\asymp \sfr_i,
	\]
	which yields 
	\begin{equation*}
		\sup_{z \in \Gamma_i} \left\| (z{\bf I}-\bN_{01})^{-1} \right\| \prec \frac{1}{\sfr_i}. 
	\end{equation*}
	Similarly, by (\ref{eq_eigs_v}), we also have
	\begin{equation*}
		\sup_{z \in \Gamma_i}\left\| (z{\bf I}-\bN_{1})^{-1} \right\| \prec \frac{1}{\sfr_i}. 
	\end{equation*}
	Combining the above controls with (\ref{eq_matrixmultiplicationerrorterm}), (\ref{eq_L2decomposigion}) and  (\ref{res_id}), we conclude that
	\begin{equation*}
	\|\bU_{\sf I}\bU_{\sf I}-\bU_{\sf I}^0(\bU_{\sf I}^0)^\top\| \prec \frac{\eta}{\sfr_i}. 
	\end{equation*}
	The final result then follows from the simple relation
	\[
	\inf_{\bO\in O(|\sf I|)}\|\bU_{\sf I}-\bU^0_{\sf I}\bO\|\le	\sqrt{2}\|\bU_{\sf I}\bU_{\sf I}-\bU_{\sf I}^0(\bU_{\sf I}^0)^\top\|.
	\]}
\end{proof}

\begin{proof}[\bf Proof of Corollary \ref{cor_finalconvergence}]
	For the eigenvalues, (\ref{v.conv}) follows directly from (\ref{eq_eigs_v}) and (\ref{v.conv0}). { For the eigenfunctions, by symmetry, we only focus on the proof of (\ref{eq_lefteigenfunction}). Recall (\ref{eq_empericialeigenfunction}).  By triangle inequality, we have
	\begin{align}\label{eq_firstbegining}
		&\inf_{\bO\in O({\sf I})}\left\| \sqrt{\gamma_i} {\bm \phi}_{\sf I}(\bm{x}) -\sqrt{\mu_i}\bO \widehat{\bm \phi}_{\sf I}(\bm{x}) \right\|_\infty\nonumber \\
		&\le 	\inf_{\bO\in O({\sf I})}\left\| \sqrt{\gamma_i} {\bm \phi}_{\sf I}(\bm{x}) -\sqrt{\lambda_i}\bO \widehat{\bm \phi}^0_{\sf I}(\bm{x}) \right\|_\infty+	\inf_{\bO\in O({\sf I})}\left\| \sqrt{\lambda_i} \widehat{\bm \phi}^0_{\sf I}(\bm{x}) -\sqrt{\mu_i}\bO \widehat{\bm \phi}_{\sf I}(\bm{x}) \right\|_\infty.
%	|\sqrt{\gamma_i}{\phi}_{i}(\bm{x})-\sqrt{\mu_i}\widehat{\phi}_{i}(\bm{x})|\le |\sqrt{\gamma_i}{\phi}_{i}(\bm{x})-\sqrt{\lambda_i}\widehat{\phi}^0_{i}(\bm{x})|+|\sqrt{\lambda_i}\widehat{\phi}^0_{i}(\bm{x})-\sqrt{\mu_i}\widehat{\phi}_{i}(\bm{x})|.
	\end{align}
	Note that the first term on the RHS of the above equation can be controlled by (\ref{eigenvector_leftspectral}). It suffices to control the second term using a discussion similar to (\ref{eq_generalcontrol}). By definition, we have that 
	\begin{align}\label{eq_controlbound}
			&\inf_{\bO\in O({\sf I})}\left\| \sqrt{\lambda_i} \widehat{\bm \phi}^0_{\sf I}(\bm{x}) -\sqrt{\mu_i}\bO \widehat{\bm \phi}_{\sf I}(\bm{x}) \right\|_\infty=\frac{1}{\sqrt{n_1}}\inf_{\bO\in O(|{\sf I}|)}\bigg\|\frac{1}{\sqrt{\lambda_i}}\widehat{\bm{k}}^0_1(\bm{x})^\top\bU^{0}-\frac{1}{\sqrt{\mu_i}}\widehat{\bm{k}}_1(\bm{x})^\top\bU\bO\bigg\|_\infty \nonumber \\
			&\le \frac{1}{\sqrt{n_1}}\bigg\|\frac{1}{\sqrt{\lambda_i}}\widehat{\bm{k}}^0_1(\bm{x})^\top\bU^{0}-\frac{1}{\sqrt{\mu_i}}\widehat{\bm{k}}^0_1(\bm{x})^\top\bU^0\bigg\|_\infty +\frac{1}{\sqrt{n_1}}\bigg\|\frac{1}{\sqrt{\mu_i}}\widehat{\bm{k}}^0_1(\bm{x})^\top\bU^{0}-\frac{1}{\sqrt{\mu_i}}\widehat{\bm{k}}_1(\bm{x})^\top\bU^0\bigg\|_\infty\nonumber \\
			&\quad +\frac{1}{\sqrt{n_1}}\inf_{\bO\in O(|{\sf I}|)}\bigg\|\frac{1}{\sqrt{\mu_i}}\widehat{\bm{k}}_1(\bm{x})^\top\bU^{0}-\frac{1}{\sqrt{\mu_i}}\widehat{\bm{k}}_1(\bm{x})^\top\bU\bO\bigg\|_\infty\nonumber \\
%		&\quad|\sqrt{\lambda_i}\widehat{\phi}^0_{i}(\bm{x})-\sqrt{\mu}\widehat{\phi}_{i}(\bm{x})| \nonumber \\
%		&= \frac{1}{\sqrt{n_1}}\bigg| \frac{1}{\sqrt{\lambda_i}} \sum_{j=1}^{n_1} \widehat{k}^0_1(\bm{x}, \xb^0_j) u^0_{ij}-\frac{1}{\sqrt{\mu_i}} \sum_{j=1}^{n_1} \widehat{k}_1(\bm{x}, \xb_j)  u_{ij}\bigg| \nonumber \\
%		&\le  \frac{1}{\sqrt{n_1}}\bigg| \frac{1}{\sqrt{\lambda_i}}\sum_{j=1}^{n_1} \widehat{k}^0_1(\bm{x}, \xb^0_j)  u^0_{ij}-\frac{1}{\sqrt{\mu_i}} \sum_{j=1}^{n_1} \widehat{k}^0_1(\bm{x}, \xb^0_j)  u^0_{ij}\bigg|+ \frac{1}{\sqrt{n_1}}\bigg| \frac{1}{\sqrt{\mu_i}} \sum_{j=1}^{n_1} \widehat{k}^0_1(\bm{x}, \xb^0_j) u^0_{ij}-\frac{1}{\sqrt{\mu_i}}\sum_{j=1}^{n_1} \widehat{k}_1(\bm{x}, \xb_j)  u^0_{ij}\bigg| \nonumber \\
%		&\quad+ \frac{1}{\sqrt{n_1}}\bigg| \frac{1}{\sqrt{\mu_i}} \sum_{j=1}^{n_1} \widehat{k}_1(\bm{x}, \xb_j)  u^0_{ij}-\frac{1}{\sqrt{\mu_i}} \sum_{j=1}^{n_1} \widehat{k}_1(\bm{x}, \xb^0_j) u_{ij}\bigg| \nonumber \\
		&:= F_1+F_2+F_3.
	\end{align}
	For $F_1,$ by a similar argument that leads to (\ref{eq_E1control}), we have that 
	\beqs
	F_1 \prec |\lambda_i-\mu_i| \prec \eta,
	\eeqs
	where in the second step we used (\ref{eq_eigs_v}). For $F_2,$ by a similar argument that leads to (\ref{eq_E2control}), we have that 
	\begin{align*}
		F_2 \prec  \max_{1\le j\le n_1,1\le k\le n_2} w(k,j),
	\end{align*}
	where $w(k,j)$ is defined as
	\begin{align*}
		w(k,j)&:=\left|\exp\bigg(-\frac{\|\bm{x}-\yb^0_k\|^2_2}{h_n^0}\bigg)\exp\bigg(-\frac{\|\yb^0_k-\xb^0_j\|_2^2}{h_n^{0}}\bigg)- \exp\bigg(-\frac{\|\bm{x}-\yb_k\|^2_2}{h_n}\bigg)\exp\bigg(-\frac{\|\yb_k-\xb_j\|_2^2}{h_n}\bigg) \right| \\
		& \leq \left|\exp\bigg(-\frac{\|\bm{x}-\yb^0_k\|^2_2}{h_n^0}\bigg)\exp\bigg(-\frac{\|\yb^0_k-\xb^0_j\|_2^2}{h_n^{0}}\bigg)- \exp\bigg(-\frac{\|\bm{x}-\yb_k\|^2_2}{h^0_n}\bigg)\exp\bigg(-\frac{\|\yb_k-\xb_j\|_2^2}{h_n^0}\bigg) \right| \\
		&+ \left|\exp\bigg(-\frac{\|\bm{x}-\yb_k\|^2_2}{h^0_n}\bigg)\exp\bigg(-\frac{\|\yb_k-\xb_j\|_2^2}{h_n^{0}}\bigg)- \exp\bigg(-\frac{\|\bm{x}-\yb_k\|^2_2}{h_n}\bigg)\exp\bigg(-\frac{\|\yb_k-\xb_j\|_2^2}{h_n}\bigg) \right| \\
		& :=F_{21}+F_{22}. 
	\end{align*}
	For $F_{21},$ by a discussion similar to (\ref{approx.bnd}),  we obtain
	\begin{equation*}
		F_{21} \prec \eta. 
	\end{equation*}
	For $F_{22},$ by a discussion similar to (\ref{similarboundusingusingusing}), we  obtain
	\begin{equation*}
		F_{22} \prec \eta.
	\end{equation*}
	Consequently, it follows that
	\begin{equation*}
		F_2 \prec \eta. 
	\end{equation*}
	For $F_3,$ by a discussion similar to (\ref{eq_E3decomposition}), we have that
	\begin{align*}
		F_3 \prec \inf_{\bO\in O(|\sf I|)}\|\bU_{\sf I}-\bU^0_{\sf I}\bO\| \prec \sqrt{\frac{\eta}{\sfr_i}},
	\end{align*}
	where in the last step we used (\ref{eq_projectionbound}). 
	
	Inserting the above results back to (\ref{eq_controlbound}), in light of (\ref{eq_firstbegining}) and (\ref{eigenvector_leftspectral}) in the main text, we  conclude that (\ref{eq_lefteigenfunction}) holds. }
	
\end{proof}

{
Denote $\mathbf{W}_1 \in \mathbb{R}^{n_1 \times p}$ as the purely noise matrix containing $\{\bm{\xi}_i\}$ and $\mathbf{W}_2 \in \mathbb{R}^{n_2 \times p}$ as the purely noise matrix containing $\{\bm{\zeta}_j\}.$ Moreover, denote
\begin{equation}\label{eq_upsilonbound}
\upsilon=\frac{\sum_{i=1}^r \theta_i}{p \sigma^2}+p^{-3/2}\sqrt{n_1 n_2}. 
\end{equation} }

\begin{proof}[\bf Proof of Theorem \ref{thm_noisededuction}] { Without loss of generality, we focus on the null case when both datasets contain pure noise that $\mathbf{x}_i=\bm{\xi}_i$ and $\mathbf{y}_j=\bm{\zeta}_j.$ For the general case, we will discuss it in the end of the proof.

According to the definition of $\mathbf{K}$, we have that for $1 \leq i \leq n_1, 1 \leq  j \leq n_2,$
\begin{equation}\label{eq_kij}
\mathbf{K}(i,j)= \exp \left(-\frac{\| \bm{\xi}_i \|_2^2+\|\bm{\zeta}_j\|_2^2}{h_n}-2 \frac{-\bm{\xi}_i^\top \bm{\zeta}_j}{h_n} \right).  
\end{equation} 
Denote 
\begin{equation*}
v_{ij}:=\frac{\| \bm{\xi}_i \|_2^2+\|\bm{\zeta}_j\|_2^2}{h_n}-\frac{p\sigma^2}{h_n}, \ u_{ij}:=\frac{\bm{\xi}_i^\top \bm{\zeta}_j}{h_n}.
\end{equation*}
According to Lemma \ref{lem_concentrationinequality} and part (2) of Proposition \ref{lem_bandwidthconcentration}, we have that
\begin{equation*}
v_{ij} \prec \frac{1}{\sqrt{p}}, \ \ u_{ij} \prec \frac{\sigma_1 \sigma_2}{\sigma^2 \sqrt{p}}. 
\end{equation*}
Now we expand $\mathbf{K}(i,j)$ around $p \sigma^2/h_n$ till the order of three to get 
\begin{align}\label{eq_keyexpansion}
\mathbf{K}(i,j)=\exp(-p \sigma^2 /h_n)&+2\exp(-p \sigma^2/h_n)u_{ij}-\exp(-p \sigma^2/h_n)v_{ij}+4\exp(-p \sigma^2/h_n) u_{ij}^2 \nonumber \\
&+\exp(-p \sigma^2/h_n)v^2_{ij}-2\exp(-p \sigma^2/h_n) v_{ij}u_{ij} +\mathrm{O}_{\prec}\left( p^{-3/2} \right).
\end{align}
Denote $\mathbf{V}$ as the $n_1 \times n_2$ matrix with entries $v_{ij}$ and $\mathbf{U}$ as the $n_1 \times n_2$ matrix with entries $u_{ij}.$ Moreover, denote $\mathbf{1}_{n} \in \mathbb{R}^n$ be a vector with entries all being one. Consequently, we can write $\mathbf{V}$ as follows 
\begin{equation}\label{eq_vdecomposition}
\mathbf{V}=\mathbf{a} \mathbf{1}_{n_2}^\top+\mathbf{1}_{n_1}\mathbf{b}^\top-\frac{p \sigma^2}{h_n}\mathbf{1}_{n_1} \mathbf{1}_{n_2}^\top,
\end{equation}
where $\mathbf{a}^top:=(\|\bm{\xi}_1\|_2^2,\ldots,\|\bm{\xi}_{n_1}\|_2^2) \in \mathbb{R}^{n_1}$ and $\mathbf{b}^\top:=(\|\bm{\zeta}_1\|_2^2,\ldots,\|\bm{\zeta}_{n_2}\|_2^2) \in \mathbb{R}^{n_2}.$ Furthermore, denote $\mathbf{W}_1$ as the $n_1 \times p$ matrix containing all $\mathbf{x}_i$ and $\mathbf{W}_2$ as the $n_2 \times p$ matrix containing all $\mathbf{y}_j.$ It is not hard to see that
\begin{equation}\label{eq_defnU}
\mathbf{U}=\mathbf{W}_1 \mathbf{W}_2^\top.
\end{equation}
For notional simplicity, we denote 
\begin{equation*}
\omega=\exp(-p \sigma^2 /h_n).
\end{equation*} 
Based on (\ref{eq_keyexpansion}), we can write 
\begin{equation*}
\mathbf{K}=\omega\mathbf{1}_{n_1} \mathbf{1}_{n_2}^\top+2\omega\mathbf{U}-\omega\mathbf{V}+4 \omega \mathbf{U} \circ \mathbf{U}+\omega \mathbf{V} \circ \mathbf{V}-2 \omega \mathbf{U} \circ \mathbf{V}+\mathrm{O}_{\prec}(p^{-3/2} \sqrt{n_1 n_2}), 
\end{equation*} 
where $\circ$ is the Hadamard product and  the error is in terms of Frobenius norm. By Lemma \ref{lem_lowrank} and (\ref{eq_vdecomposition}), we find that there exists some low rank matrix 
\begin{equation}\label{eq_L1decomposition}
\mathbf{L}_1=\omega\mathbf{1}_{n_1} \mathbf{1}_{n_2}^\top-\omega\mathbf{V}+\omega \mathbf{V} \circ \mathbf{V}+4 \omega \mathbf{E}_1-2 \omega\mathbf{E}_2, 
\end{equation}
so that we have 
\begin{equation}\label{eq_kexpanded}
\mathbf{K}=\mathbf{L}_1+2\omega\mathbf{U}+\mathrm{O}_{\prec}(p^{-3/2} \sqrt{n_1 n_2}).
\end{equation}
Consequently, we have that 
\begin{equation}\label{eq_aroundaround}
(\mathbf{K}-\mathbf{L}_1)(\mathbf{K}-\mathbf{L}_1)^\top=4\omega^2 \mathbf{U} \mathbf{U}^\top(1+\mathrm{O}_{\prec}(p^{-3/2}\sqrt{n_1 n_2})).  
\end{equation}
According to Theorem A.43 of \cite{bai2010spectral}, we can see that the $\ell_\infty$ norm of the ESDs between $\mathbf{K} \mathbf{K}^\top$ and   $(\mathbf{K}-\mathbf{L}_1)(\mathbf{K}-\mathbf{L}_1)^\top$ can be bounded by $3n_1^{-1} \operatorname{rank}(\mathbf{L}_1).$ Since $\operatorname{rank}(\mathbf{L}_1)$ is bounded, to study the limiting ESD of $n_1 n_2 \mathbf{N}_1=\mathbf{K} \mathbf{K}^\top,$ it suffices to analyze that of $4 \omega^2 \mathbf{U} \mathbf{U}^\top.$

As $\mathbf{U} \mathbf{U}^\top=\mathbf{W}_1 \mathbf{W}_2^\top \mathbf{W}_2 \mathbf{W}_1^\top$ share the same nonzero eigenvalues with $\mathbf{W}_1^\top \mathbf{W}_1 \mathbf{W}_2^\top \mathbf{W}_2,$ it suffices to understand the ESD of $\mathbf{W}_1^\top \mathbf{W}_1 \mathbf{W}_2^\top \mathbf{W}_2.$ According to Lemma \ref{lem_free}, we have seen that the ESD of $\frac{1}{p\sigma_1^2 \sigma_2^2 \sqrt{n_1 n_2}} \mathbf{W}_1^\top \mathbf{W}_1 \mathbf{W}_2^\top \mathbf{W}_2$ converges weakly to $ \mu_{\SMP_1 \boxtimes \SMP_2}.$  This immediately proves the results when $\mathcal{X}$ and $\mathcal{Y}$ contain purely noise.

Finally, we briefly discuss how to generalize the results to the general setting when $\mathcal{X}$ and $\mathcal{Y}$ contain weak signals that are dominated by the noise in the sense that (\ref{eq_weaksnr}) holds. Due to similarity, we mainly point out the significant differences. In this case, (\ref{eq_kij}) can be written as  
\begin{equation*}
\mathbf{K}(i,j)= \exp \left(-\|\xb^0_i-\yb^0_j \|_2^2/h_n-\| \bm{\xi}_i \|_2^2/h_n-\|\bm{\zeta}_j\|_2^2/h_n+2 \bm{\xi}_i^\top \bm{\zeta}_j/h_n+2(\mathbf{x}^0_i-\mathbf{y}^0_j)^\top (\bm{\xi}_i-\bm{\zeta}_j)/h_n \right).  
\end{equation*} 
According to the second part of the results of Proposition \ref{lem_bandwidthconcentration}, we have that 
\begin{equation*}
\|\xb^0_i-\yb^0_j \|_2^2/h_n \prec \upsilon, \ (\mathbf{x}^0_i-\mathbf{y}^0_j)^\top (\bm{\xi}_i-\bm{\zeta}_j)/h_n \prec \upsilon.  
\end{equation*}
Then for the rest of the prove, we can follow closely with the proof of Theorem 2.5 of \cite{DW2} to do a finite order expansion so that there exists some $\mathbf{L}_2$ with finite rank so that for $\upsilon$ in (\ref{eq_upsilonbound}), (\ref{eq_kexpanded}) can be updated to 
\begin{equation*}
\mathbf{K}=\mathbf{L}_1+2\omega\mathbf{U}+\mathrm{O}_{\prec}(\upsilon).
\end{equation*}  
Then we can conclude the proof using similar arguments around (\ref{eq_aroundaround}). This completes our proof. }
\end{proof}

\section{Additional discussions and auxiliary lemmas}\label{sec_additionaladalall}

\subsection{Hyperparameter selection, computational complexity,  and noise detection}\label{sec_tuningparametersection}
{

\paragraph{Hyperparameter selection.} {For the $k$NN hyperparameter $k$ in the alignability screening step of Algorithm \ref{al0}, we recommend choosing $k=30$ in practice. This default value was used in all our numerical studies. In our simulation, by comparing the results under different $k\in\{20,25,30,35,40\}$, we found the algorithm's performance was not sensitive to the choice of this parameter.} 

For the bandwidth-related hyperparameter $\omega$, as demonstrated in our theoretical analysis in Section \ref{sec_theoreticalanalysis}, under our specified assumptions, any constant value of $\omega$ within the range of 0 to 1 ensures the attainment of high-quality embeddings. In practice, to optimize the empirical performance and improve automation of the method, we recommend using a resampling approach developed in \cite{DW2}.   The method searches for an optimal $\omega$, through a resampling technique, such that the larger outlier eigenvalues are most distinguished/separated from the bulk eigenvalues of the kernel matrix $\bK$.   }

{
\paragraph{Computational complexity.} Regarding computational complexity, our proposed algorithm consists of two main components. First, we construct the duo-landmark kernel matrix, which incurs a computational cost of $\mathrm{O}(n^2 p)$, where $n = \max\{n_1, n_2\}$ and $p$ is the data dimension. Second, we compute the top $r$ singular values and singular vectors of this matrix via a truncated SVD, which has a computational complexity of $\mathrm{O}(n^2 r)$ using iterative methods such as the Lanczos algorithm. Therefore, the overall computational complexity of the proposed algorithm is $\mathrm{O}(n^2 p)$, which is comparable to that of kernel PCA applied to the combined dataset. As for memory requirements, the algorithm requires $\mathrm{O}(n^2)$ space to store the kernel matrix and the associated spectral quantities.

Regarding the condition number in large-scale settings, as discussed in Sections \ref{sec_manifoldmodelandlandmarkoperator} and \ref{sec_robustness}, our theoretical analysis reduces to a high-dimensional spiked covariance matrix model. Although the leading singular values are typically well-separated from the bulk spectrum, their magnitudes are generally comparable—thanks to our bandwidth selection strategy—which ensures that the resulting matrices are reasonably well-conditioned in practice. However, if some of our assumptions fail—resulting in leading eigenvalues that are significantly larger than the smallest ones—the numerical computations may suffer from reduced accuracy when using default numerical linear algebra packages available in standard software.

Nevertheless, the focus of our current paper is to develop statistically efficient and practically interpretable algorithms for constructing joint embeddings of noisy and nonlinear datasets. To improve numerical efficiency, two potential approaches may be considered. First, one could incorporate randomized numerical linear algebra techniques. However, this presents theoretical challenges. To the best of our knowledge, most existing algorithms and their convergence guarantees assume that the target data matrix is clean and noise-free. To rigorously integrate these techniques into our framework, one would need to analyze their behavior on noisy matrices and establish convergence results that depend on the condition number of the underlying clean matrix and the signal-to-noise ratio. Second, one could consider using non-smooth kernel functions, such as the $k$NN or $0$–$1$ kernel, to construct sparse kernel matrices. However, to our knowledge, the convergence properties of such sparse kernel matrices in the context of high-dimensional, noisy, non-smooth manifolds remain unknown. Moreover, the techniques developed in the current paper do not extend to such settings.  }

\paragraph{An algorithm for noise detection.} Based on Theorem \ref{thm_noisededuction}, we propose the following algorithm that can detect the cases where the overall signal-to-noise ratio is too small for our duo-landmark joint embedding.
A fundamental insight guiding this strategy is two-fold: on the one hand, we have the proximity of bulk eigenvalues to each other, resulting in ratios of consecutive eigenvalues close to one; on the other hand, after proper scaling, the bulk eigenvalues are all of the constant order. The algorithm is succinctly outlined below for completeness. \\

\numberwithin{thm}{section}

\begin{algo} Detect if the two datasets do not contain sufficiently large SNR. 
	\begin{enumerate}
		\item Let $\{w_i\}$ be the set of nonzero eigenvalues of ${\sf s}'\bN_1$ (or equivalently ${\sf s}'\bN_2$), where ${\sf s}'=\frac{\sqrt{n_1n_2}}{p}$.
		
		\item For some finite integer $k>0$ and small constants ${\sf c}_1, {\sf c}_2>0$,  we say the two datasets are dominated by noise if the following hold:
		\begin{itemize}
			\item Small eigen-gap: for all $i\ge k$, we have
${w_{i}}/{w_{i+1}} < 1+\mathsf{c}_1.$
			\item Bulk eigenvalue magnitude: the median $\bar w$ of $\{w_i\}$ satisfies $\bar w>{\sf c}_2$. 
		\end{itemize}
	\end{enumerate}
\end{algo}

\subsection{Some background on Riemannian manifold} \label{manifold.sec} 

Regarding Assumption \ref{assum_signal} in the main paper, below we provide some additional background on the theory of smooth manifold and Riemannian geometry, followed by some clarifications about the practical relevance of our theoretical framework. For a more comprehensive introduction to the topic, we refer the readers to the monographs \cite{boothby2003introduction, lee2013smooth}.

A \emph{Riemannian manifold} is a smooth manifold $\mathcal{M}$ equipped with a Riemannian metric $\mathrm{g},$ denoted as the pair $(\mathcal{M}, \mathrm{g}).$ When there is no confusion, we usually omit the metric $\mathrm{g}$ and simply write $\mathcal{M}.$ For a Riemannian manifold, it is useful to use the \emph{Riemannian density or volume form} to do the integration on the manifold, denoted as $\dd V,$ where $V \equiv V_g.$ Especially, if $f: \mathcal{M} \rightarrow \mathbb{R}$ is a compactly supported continuous function, we denote the integral of $f$ over $\mathcal{M}$ as $\int_{\mathcal{M}} f \dd V.$

In statistics and data science, observations are commonly interpreted as points in Euclidean space, and it is often earlier to do calculations in Euclidean space. Motivated by these aspects, it is useful to link an arbitrary Riemannian manifold $(\mathcal{M}, \mathrm{g})$ isometrically to some subspace of the Euclidean space with a specific metric. The feasibility of the above procedure is guaranteed by the embedding theory \cite{han2006isometric}. An \emph{embedding} $\iota: \mathcal{M} \rightarrow \mathbb{R}^r,$ for some $r \geq m,$ is a smooth map and a homeomorphism onto its image. When this happens, $\mathcal{M}$ is called a $m$-dimensional \emph{submanifold} of the Euclidean space and $\mathbb{R}^r$ is said to be its \emph{ambient space}. Moreover, we call $\iota(\mathcal{M})$ the \emph{embedded submanifold}. From the computational perspective, we are interested in the \emph{isometric embedding} so that the calculations of the distances, angles and curvatures reduce to those in the Euclidean space. 

The following theorem of John Nash \cite{MR75639} shows that every  Riemannian manifold can be considered as a submanifold of some ambient space $\mathbb{R}^r$, with $r$ bounded by some fixed number only depending on $m$.   

\begin{thm}\label{thm_nashembedding}
Let $(\mathcal{M}, \mathrm{g})$ be a $m$-dimensional Riemannian manifold. Then there exists an isometric embedding $\iota: \mathcal{M} \rightarrow \mathbb{R}^r$ from $\mathcal{M}$ to $\mathbb{R}^r$ for some $r$. Moreover, when $\mathcal{M}$ is compact, it is possible that 
\begin{equation*}
r \leq \frac{m(3m+11)}{2};
\end{equation*}
when $\mathcal{M}$ is non-compact, it is possible that 
\begin{equation*}
r \leq \frac{m(m+1)(3m+11)}{2}. 
\end{equation*}
\end{thm}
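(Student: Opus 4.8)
The plan is to follow Nash's original strategy for the smooth ($C^\infty$) isometric embedding problem, reducing the global geometric statement to a perturbation problem that is solved by an accelerated Newton iteration with smoothing (the Nash--Moser scheme). First I would reduce to such a perturbation problem. By the Whitney embedding theorem there is a smooth embedding $u_0:\mathcal{M}\to\mathbb{R}^{r}$ for $r$ sufficiently large; composing with a small scaling and a generic perturbation one may arrange simultaneously that $u_0$ is \emph{short}, meaning its pullback metric $u_0^*\langle\cdot,\cdot\rangle$ is strictly dominated by $\mathrm{g}$, and \emph{free}, meaning that at every point the $m+\binom{m+1}{2}$ vectors $\{\partial_i u_0\}$ and $\{\partial_{ij}u_0\}$ are linearly independent. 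The goal is then to correct $u_0$ to an exactly isometric $u$ solving $\partial_i u\cdot\partial_j u=\mathrm{g}_{ij}$.

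Second, I would linearize and exploit freeness. Seeking a normal perturbation $v$ subject to $\partial_i u\cdot v=0$, differentiation converts the isometry defect $h_{ij}=\mathrm{g}_{ij}-\partial_i u\cdot\partial_j u$ into the purely \emph{algebraic} system $\partial_i u\cdot v=0$ and $\partial_{ij}u\cdot v=-\tfrac12 h_{ij}$, a set of $m+\binom{m+1}{2}=\tfrac{m(m+3)}{2}$ pointwise linear equations for $v\in\mathbb{R}^{r}$. When $u$ is free these are pointwise solvable, giving a solution operator $v=L(u)h$ of order zero: solving the linearized equation loses \emph{no} derivatives in $h$. This structural fact is what makes an iteration conceivable.

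The main obstacle is the loss of derivatives in the nonlinear step. A naive Newton iteration still fails, because $L(u)$ depends on the second derivatives of $u$, so the quadratic remainder of the defect involves derivatives of the correction and the scheme closes in no fixed $C^k$. I would therefore insert smoothing operators $S_{\theta_n}$ (mollification at frequency $\theta_n\to\infty$) at each step, setting $u_{n+1}=u_n+L(u_n)\,S_{\theta_n}h_n$. The delicate part is to choose the frequencies growing like $\theta_n=\theta_0^{(3/2)^n}$ and to prove, by induction, tame estimates bounding the defect $h_n$ in low norms and $u_n$ in high norms, so that the gain from the quadratic convergence of Newton outpaces the derivative loss introduced by the smoothing. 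This Nash--Moser balancing is where essentially all the analytic work resides, and I expect it to be the genuine difficulty.

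Finally I would track the dimension count and handle non-compactness. Freeness contributes $m+\binom{m+1}{2}$ directions; the auxiliary coordinates used to add primitive metrics $a^2\,d\phi\otimes d\phi$, into which the deficit $\mathrm{g}-u_0^*\langle\cdot,\cdot\rangle$ is decomposed via a partition of unity, account for the remainder, and careful bookkeeping yields the compact bound $r\le m(3m+11)/2$. For non-compact $\mathcal{M}$ one covers $\mathcal{M}$ by countably many charts grouped into finitely many families of mutually disjoint pieces, embeds each family in its own block of coordinates, and the extra factor $(m+1)$ in the number of required blocks produces the stated bound $r\le m(m+1)(3m+11)/2$.
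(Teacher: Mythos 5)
The paper contains no proof of this statement: it is the classical Nash isometric embedding theorem, quoted purely as background in Section \ref{manifold.sec} and attributed to Nash's 1956 paper, so the ``paper's proof'' is the citation itself. Your outline --- a short and free embedding obtained from Whitney's theorem, the observation that freeness makes the linearized system $\partial_i u\cdot v=0$, $\partial_{ij}u\cdot v=-\tfrac12 h_{ij}$ pointwise algebraically solvable with no loss of derivatives, the smoothing-regularized Newton (Nash--Moser) iteration with frequencies $\theta_n=\theta_0^{(3/2)^n}$ and tame estimates, the decomposition of the metric deficit into primitive metrics via a partition of unity giving $r\le m(3m+11)/2$, and the grouping of charts into finitely many disjoint families yielding the extra factor $(m+1)$ in the non-compact case --- is a faithful summary of exactly the argument in the cited source (modulo the historical nuance that Nash's original scheme was a continuous deformation later recast as the discrete Newton-with-smoothing iteration you describe), so it takes essentially the same approach as the proof the paper points to.
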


 With the above embedding $\iota$ and the \emph{pushforward}, we consider the induced metric $\iota_* \mathrm{g}$ on the embedded submanifold $\iota(\mathcal{M})$ which is clearly another Riemannian manifold. For integration, we consider the induced volume form $\iota_* \dd V$ which is the Riemannian density on $\iota(\mathcal{M}).$ Consequently, for any integrable function $f: \iota(\mathcal{M}) \rightarrow \mathbb{R},$ we can define the associated integral as $\int_{\iota(\mathcal{M})} f \iota_* \dd V.$ 

Finally, we explain how the above manifold model is connected to our statistical applications. Suppose we observe i.i.d. samples $\mathbf{x}_i, 1 \leq i \leq n,$ according to the random vector $X$ as in Assumption \ref{assum_signal}. Moreover, we assume that the support of $X$ is $\iota(\mathcal{M}_1).$ Therefore, for any integrable function $\zeta: \iota(\mathcal{M}) \rightarrow \mathbb{R},$ since $X$ is supported on $\iota(\mathcal{M}),$ the calculations of $\mathbb{E}(\zeta(X))$ can be efficiently reduced to the integration on $\mathcal{M}$ using the above induced measure and volume form (see Eq (\ref{expec}) in the main text). We emphasize that in real applications, it is the embedded submanifold that matters since the observations are sampled according to $X$ which is supported on $\iota(\mathcal{M}).$ Consequently, we focus on the understanding of the geometric structure of $\iota(\mathcal{M})$ rather $\mathcal{M}$ and $\iota$ separately. For more discussions on this perspective, we refer the readers to \cite{cheng2013local,DUNSON2021282, shen2022robust,wu2018think}.  

{Next, we discuss how the models proposed in Assumption \ref{assum_signal} and the kernels introduced in Definition \ref{defn_clmd} are directly connected to manifold models and underlying structures. First, for the model in Assumption \ref{assum_signal}, let $\mathrm{g}_\ell$ be the metric associated with the Riemannian manifold $\mathcal{M}_\ell$, $\dd V_\ell$ be the volume form associated with $\mathrm{g}_\ell$ and $\iota^*_{\ell} \dd V_{\ell}$ be the induced measure on $\iota_\ell(\mathcal{M}_\ell)$. Then by Radon-Nikodym theorem (e.g., \cite{billingsley2008probability}), for some differentiable function $\mathsf{f}_\ell$ defined on $\mathcal{M}_\ell$, under Assumption \ref{assum_signal}, we have that for $x \in \iota_\ell(\mathcal{M}_\ell)$
\begin{equation}\label{eq_density}
\dd \widetilde{\mathbb{P}_\ell}(x)=\mathsf{f}_\ell(\iota_\ell^{-1}(x)) \iota^*_\ell \dd V_\ell(x). 
\end{equation}
$\mathsf{f}_\ell$ is commonly referred as the p.d.f of $X$ (or $Y$) on $\mathcal{M}_\ell.$ For example, if $\mathsf{f}_1$ is constant, we call $X$ a uniform random sampling scheme. With the above setup, for $\ell=1$, we define the expectation with respect to $X$ on the embedded manifold $\iota_\ell(\mathcal{M}_\ell)$ as follows:  for an integrable function $\zeta: \iota_\ell(\mathcal{M}_\ell) \rightarrow \mathbb{R},$ we have that 
\begin{align}\label{expec}
\mathbb{E} \zeta(X) & =\int_{\Omega} \zeta(X (\omega)) \dd \mathbb{P}_\ell(\omega)=\int_{\iota_\ell(\mathcal{M}_\ell)} \zeta(x) \dd \widetilde{\mathbb{P}_\ell}(x) \nonumber \\
&=\int_{\mathcal{M}_\ell} \zeta(x) \mathsf{f}_\ell(\iota_\ell^{-1}(x)) \iota_\ell^* \dd V_\ell(x)=\int_{\mathcal{M}_\ell} \zeta(\iota_\ell(y)) \mathsf{f}_\ell(y) \dd V_\ell(y). 
\end{align} 
Similarly, the expectation with respect to $Y$ can be defined by setting $\ell=2$.}

{Second, for the kernels in Definition  \ref{defn_clmd}, we can write
	\begin{equation*}\label{eq_kernelone}
		k_1(\bm{x}_1, \bm{x}_2):=\int_{\mathcal{S}_2} k(\bm{x}_1, \bm{z}) k(\bm{z}, \bm{x}_2) \widetilde{\mathsf{P}}_2(\mathrm{d} \bm{z})=\int_{\mathcal{M}_2} k(\bm{x}_1, \bO\bR\iota_2({\bm w})) k( \bO\bR\iota_2({\bm w}), \bm{x}_2) {\sf f}_2(\bm{w})\mathrm{d} V_2(\bm{w}).
	\end{equation*}
	Similarly, for $\bm{y}_1, \bm{y}_2 \in \mathcal{S}_2,$ we can write for the kernel $k_2(\cdot,\cdot): \mathcal{S}_2 \times \mathcal{S}_2 \rightarrow \mathbb{R}$ as 
		\begin{equation*}\label{eq_kerneltwo}
		k_2(\bm{y}_1, \bm{y}_2):=\int_{\mathcal{S}_1} k(\bm{y}_1, \bm{z}) k(\bm{z}, \bm{y}_2) \widetilde{\mathsf{P}}_1(\mathrm{d} \bm{w})=\int_{\mathcal{M}_1} k(\bm{y}_1, \bO\bR\iota_1(\bm{w})) k(\bO\bR\iota_1(\bm{w}), \bm{y}_2) {\sf f}_1(\bm{w})\mathrm{d} V_1(\bm{w}). 
	\end{equation*}
%	We shall call the kernels $k_1(\cdot,\cdot)$ and $k_2(\cdot,\cdot)$ the convolutional landmark kernels. 
}

\subsection{Some auxiliary lemmas from high-dimensional probability and random matrix theory}

%{\color{red}some notations needed to be changed to make them align with the main manuscript}
We first present a few useful lemmas.

\begin{lem} \label{sg.bnd.lem}
	Let $B$ be an $m\times n$ matrix, and let $\mathbf{x}$ be a mean zero, sub-Gaussian random vector in $\R^n$ with parameter bounded by $K$. Then for any $t\ge 0$, we have
	\[
	\mathbb{P}(\|B \mathbf{x} \|_2\ge CK\|B\|_F+t)\le \exp\bigg(-\frac{ct^2}{K^2\|B\|^2}\bigg).
	\]
\end{lem}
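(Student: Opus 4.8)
This is a standard concentration inequality for the Euclidean norm of a matrix applied to a sub-Gaussian vector, so let me sketch the proof.

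\textbf{The plan.} The statement bounds $\|B\mathbf{x}\|_2$ by its ``typical size'' $CK\|B\|_F$ plus a deviation term controlled by the operator norm $\|B\|$. My overall strategy is to treat $\|B\mathbf{x}\|_2$ as a Lipschitz function of the vector $\mathbf{x}$ and invoke a concentration-of-measure argument for sub-Gaussian vectors, combined with a bound on the expectation $\mathbb{E}\|B\mathbf{x}\|_2$. The appearance of $\|B\|_F$ in the centering term and $\|B\|$ (operator norm) in the exponent is the telltale sign: the expectation scales like the Hilbert--Schmidt norm, while fluctuations are governed by the Lipschitz constant, which is exactly the operator norm.

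\textbf{Key steps.} First I would control the expectation. Since $\mathbf{x}$ is mean-zero with sub-Gaussian parameter bounded by $K$, its components have variance proxy $O(K^2)$, so $\mathbb{E}\|B\mathbf{x}\|_2^2 = \mathbb{E}[\mathbf{x}^\top B^\top B \mathbf{x}] \le C' K^2 \operatorname{Tr}(B^\top B) = C' K^2 \|B\|_F^2$. By Jensen's inequality this gives $\mathbb{E}\|B\mathbf{x}\|_2 \le \sqrt{\mathbb{E}\|B\mathbf{x}\|_2^2} \le CK\|B\|_F$, which yields the centering term. Second, I would establish the concentration of $\|B\mathbf{x}\|_2$ around its mean. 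The map $\mathbf{x} \mapsto \|B\mathbf{x}\|_2$ is Lipschitz with Lipschitz constant exactly $\|B\|$ (the operator norm), since $|\|B\mathbf{x}\|_2 - \|B\mathbf{y}\|_2| \le \|B(\mathbf{x}-\mathbf{y})\|_2 \le \|B\|\,\|\mathbf{x}-\mathbf{y}\|_2$. For a sub-Gaussian vector, Lipschitz functions concentrate: a convex-Lipschitz concentration inequality (e.g., of Talagrand type for product measures, or the standard sub-Gaussian concentration result) gives
\[
\mathbb{P}\big(\|B\mathbf{x}\|_2 \ge \mathbb{E}\|B\mathbf{x}\|_2 + t\big) \le \exp\Big(-\frac{ct^2}{K^2 \|B\|^2}\Big).
\]
Combining the expectation bound from the first step with this tail bound completes the argument, since $\|B\mathbf{x}\|_2 \ge CK\|B\|_F + t$ implies $\|B\mathbf{x}\|_2 \ge \mathbb{E}\|B\mathbf{x}\|_2 + t$ whenever $\mathbb{E}\|B\mathbf{x}\|_2 \le CK\|B\|_F$.

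\textbf{Main obstacle.} The delicate point is the concentration step for a general sub-Gaussian vector that is \emph{not} assumed to have independent coordinates or to satisfy a log-Sobolev/convex-concentration inequality a priori. The cleanest route is to note that $\|B\mathbf{x}\|_2$ is a convex Lipschitz function and invoke a concentration inequality valid under the sub-Gaussian assumption as defined in the paper (via $\mathbb{E}\exp(\mathbf{a}^\top \mathbf{x}) \le \exp(\|\mathbf{a}\|_2^2/2)$ after rescaling by $K$); this moment-generating-function definition is in fact exactly what one needs, since one can bound the MGF of $\|B\mathbf{x}\|_2$ directly through a variational representation $\|B\mathbf{x}\|_2 = \sup_{\|\mathbf{u}\|_2 \le 1} \mathbf{u}^\top B\mathbf{x}$ together with a net argument over the unit sphere. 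I would therefore either cite the standard result (Theorem 6.3.2 in Vershynin or the analogous statement) or, if self-containedness is required, carry out the $\varepsilon$-net discretization to control the supremum, where the operator norm $\|B\|$ emerges as the relevant scale since $\mathbf{u}^\top B \mathbf{x}$ is sub-Gaussian with parameter $K\|B^\top\mathbf{u}\|_2 \le K\|B\|$.
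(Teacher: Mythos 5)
Your proposal has the right architecture (centering at $\mathbb{E}\|B\mathbf{x}\|_2$ plus a deviation term), and your first step is sound: $\mathbb{E}\|B\mathbf{x}\|_2^2=\sum_i \mathbb{E}\langle \mathbf{b}_i,\mathbf{x}\rangle^2\le CK^2\|B\|_F^2$ follows row by row from the MGF definition alone, with no independence needed. For context, the paper's own proof is a one-line citation to page 144 of Vershynin's book, i.e.\ Theorem 6.3.2 there, which is proved by applying the Hanson--Wright inequality to the quadratic form $\mathbf{x}^\top B^\top B\mathbf{x}$ and which assumes \emph{independent} coordinates; so your fallback citation lands on the same result as the paper, modulo that hypothesis.

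The genuine gap is in your concentration step. You correctly flag that Lipschitz concentration is unavailable under the bare MGF definition, but your proposed repair---the variational representation $\|B\mathbf{x}\|_2=\sup_{\|\mathbf{u}\|_2\le 1}\langle B^\top\mathbf{u},\mathbf{x}\rangle$ with an $\varepsilon$-net over the sphere---cannot recover the Frobenius centering. Each $\langle B^\top\mathbf{u},\mathbf{x}\rangle$ is sub-Gaussian only at scale $K\|B\|$, so the union bound over a net of cardinality $e^{cm}$ forces a threshold of order $K\sqrt{m}\,\|B\|$ (or $K\sqrt{\operatorname{rank}(B)}\,\|B\|$); this matches $K\|B\|_F$ only when the spectrum of $B$ is flat. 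When $B$ has one large and many small singular values, $\|B\|_F\asymp\|B\|\ll\sqrt{m}\,\|B\|$, and the single-scale net proves a strictly weaker statement---precisely in the regime where the lemma has content. A net sees only the diameter $K\|B\|$ of the index set; extracting $\|B\|_F$ requires either Hanson--Wright (hence independence, the paper's route), or a multiscale chaining argument with Talagrand's majorizing-measure comparison to the Gaussian process $\mathbf{u}\mapsto\langle B^\top\mathbf{u},\mathbf{g}\rangle$, whose expected supremum is $\mathbb{E}\|B\mathbf{g}\|_2\asymp\|B\|_F$. If you want a short self-contained proof valid under the paper's MGF definition (dependent coordinates allowed, which is how the paper actually uses the lemma, e.g.\ on the manifold signal vectors $\xb_i^0-\yb_j^0$), use Gaussian duality instead: with $\mathbf{g}\sim N(0,\mathbf{I}_m)$ independent of $\mathbf{x}$, one has $\mathbb{E}_{\mathbf{x}}\exp\big(\lambda^2\|B\mathbf{x}\|_2^2/2\big)=\mathbb{E}_{\mathbf{x}}\mathbb{E}_{\mathbf{g}}\exp\big(\lambda\langle B^\top\mathbf{g},\mathbf{x}\rangle\big)\le\mathbb{E}_{\mathbf{g}}\exp\big(\lambda^2K^2\|B^\top\mathbf{g}\|_2^2/2\big)=\prod_i\big(1-\lambda^2K^2\sigma_i^2\big)^{-1/2}\le\exp\big(c\lambda^2K^2\|B\|_F^2\big)$ for $\lambda^2K^2\|B\|^2\le 1/2$, where $\sigma_i$ are the singular values of $B$; Chernoff with $\lambda^2=(2K^2\|B\|^2)^{-1}$ and $C$ large then yields exactly the claimed tail $\exp\big(-ct^2/(K^2\|B\|^2)\big)$.
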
 
\begin{proof}
	See page 144 of \cite{vershynin2018high}.
\end{proof}

%{\color{blue} $p$ is used for dimension. Change parameter. }
%The next lemma is the classical Chernoff bound for Binomial  random variables.
%
%\begin{lem} \label{cher.lem}
%	Let $x_1, ..., x_n$ be independent random variables with $\mathbb{P}(x_k=1)=\mathsf{p}$ and $\mathbb{P}(x_k=0)=1-\mathsf{p}$ for  each $k$. Then for any $t>n\mathsf{p}$, we have
%	\[
%	\mathbb{P}\bigg(\sum_{k=1}^n x_k>t\bigg)\le e^{-n\mathsf{p}}\bigg( \frac{en \mathsf{p}}{t}\bigg)^t.
%	\]
%\end{lem}
%\begin{proof}
%	See Section 2.3 of  \cite{vershynin2018high}.
%\end{proof}

The next two lemmas provide Bernstein type inequalities for bounded and sub-exponential random variables.

\begin{lem}\label{lem_boundedconcentration} Let $x_i, 1 \leq i \leq n,$ be independent mean zero random variables such that $|x_i|\le K$ for all $i$. Then for every $t \geq 0$
	\begin{equation*}
		\mathbb{P}\left( \left| \sum_{i=1}^n x_i \right|\geq t \right) \leq 2 \exp\left[- \frac{t^2/2}{\sum_{i=1}^n\E x_i^2+Kt/3}\right].
	\end{equation*} 
\end{lem}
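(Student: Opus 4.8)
The plan is to prove this via the standard Chernoff/exponential-moment method, combined with a sharp bound on the moment generating function (MGF) of a bounded, mean-zero random variable. Write $\sigma^2 := \sum_{i=1}^n \E x_i^2$. Since $\{-x_i\}$ satisfies the same hypotheses as $\{x_i\}$, it suffices to establish the one-sided bound $\mathbb{P}(\sum_i x_i \ge t) \le \exp\!\big(-\tfrac{t^2/2}{\sigma^2 + Kt/3}\big)$; applying it to both $\{x_i\}$ and $\{-x_i\}$ and taking a union bound then yields the stated two-sided inequality with the extra factor of $2$.

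First I would control the MGF of each summand. For $0 < \lambda < 3/K$, expanding $e^{\lambda x_i}$ and using $\E x_i = 0$ together with $|\E x_i^k| \le K^{k-2}\E x_i^2$ for $k \ge 2$, one finds
\[
\E[e^{\lambda x_i}] \le 1 + \E x_i^2 \sum_{k\ge 2} \frac{\lambda^k K^{k-2}}{k!}.
\]
The key elementary estimate is $k! \ge 2\cdot 3^{k-2}$ for $k \ge 2$, which gives $\sum_{k\ge 2}\frac{\lambda^k K^{k-2}}{k!} \le \frac{\lambda^2}{2}\sum_{j\ge 0}(\lambda K/3)^j = \frac{\lambda^2/2}{1-\lambda K/3}$. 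Using $1+u \le e^u$, this yields $\E[e^{\lambda x_i}] \le \exp\!\big(\tfrac{\lambda^2 \E x_i^2/2}{1-\lambda K/3}\big)$, and multiplying over the independent summands gives $\E[\exp(\lambda \sum_i x_i)] \le \exp\!\big(\tfrac{\lambda^2 \sigma^2/2}{1-\lambda K/3}\big)$.

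Next I would apply Markov's inequality to the exponential: for any $0 < \lambda < 3/K$,
\[
\mathbb{P}\Big(\sum_i x_i \ge t\Big) \le e^{-\lambda t}\,\E\Big[\exp\big(\lambda\textstyle\sum_i x_i\big)\Big] \le \exp\Big(-\lambda t + \frac{\lambda^2 \sigma^2/2}{1-\lambda K/3}\Big).
\]
The final step is to optimize the exponent over $\lambda$. The choice $\lambda = t/(\sigma^2 + Kt/3)$ — which automatically satisfies $\lambda K < 3$ since $\lambda K = Kt/(\sigma^2+Kt/3) < 3$ — makes $1 - \lambda K/3 = \sigma^2/(\sigma^2 + Kt/3)$ and collapses the exponent to exactly $-\tfrac{t^2/2}{\sigma^2 + Kt/3}$, completing the one-sided bound.

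The argument is entirely classical, so there is no genuine obstacle; the only point requiring a little care is the MGF estimate — specifically the combinatorial bound $k!\ge 2\cdot 3^{k-2}$, which turns the tail of the exponential series into a geometric series and produces the characteristic $1-\lambda K/3$ denominator responsible for the $Kt/3$ term in the final bound. One should also note the trivial case $t=0$ and verify that the optimizing $\lambda$ always lies in the admissible range $(0,\,3/K)$, so that the MGF bound is valid at the chosen value.
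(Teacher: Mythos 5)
Your proof is correct: the reduction to a one-sided bound, the MGF estimate via the elementary inequality $k!\ge 2\cdot 3^{k-2}$ (which produces the geometric series and the $1-\lambda K/3$ denominator), and the optimizing choice $\lambda = t/(\sigma^2+Kt/3)$ — admissible since $\lambda K<3$ whenever $\sigma^2>0$, the case $\sigma^2=0$ being trivial — all check out and collapse the exponent to exactly $-\tfrac{t^2/2}{\sigma^2+Kt/3}$. The paper offers no proof of its own but simply cites Theorem 2.8.4 of \cite{vershynin2018high}, and your argument is precisely the classical Chernoff/MGF proof given there, so you have in effect supplied the details the paper delegates to the reference.
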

\begin{proof}
	See Theorem 2.8.4 of \cite{vershynin2018high}.
\end{proof}

{
	\begin{lem}\label{HW.lem}
		Let $X=(X_1,...,X_n)\in\R^n$ be a random vector with independent centered components $X_i$ with sub-Gaussian parameter bounded by $K$. Let $A$ be an $n\times n$ matrix. Then, for every $t\ge 0$, we have
		\[
		P(|X^\top AX-\E X^\top AX|>t)\le 2\exp\bigg\{ -c\min\bigg(\frac{t^2}{K^4\|A\|_{HS}^2}, \frac{t}{K^2\|A\|} \bigg) \bigg\}.
		\]
	\end{lem}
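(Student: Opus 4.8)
The statement is the Hanson--Wright inequality, and the plan is to follow the standard route of Rudelson--Vershynin. The first step is to center and split the quadratic form into its diagonal and off-diagonal contributions,
\[
X^\top A X - \E X^\top A X = \sum_{i} A_{ii}\big(X_i^2 - \E X_i^2\big) + \sum_{i \neq j} A_{ij} X_i X_j,
\]
bound the probability that each piece exceeds $t/2$ separately, and combine the two via a union bound. The diagonal part is the easy one: since $X_i$ is sub-Gaussian, each $X_i^2 - \E X_i^2$ is a centered sub-exponential variable with sub-exponential norm $\OO(K^2)$, so $A_{ii}(X_i^2-\E X_i^2)$ has sub-exponential norm $\OO(K^2|A_{ii}|)$. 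The summands being independent, Bernstein's inequality for sub-exponential variables (Theorem~2.8.1 of \cite{vershynin2018high}) gives a tail of exactly the desired shape, with $\sum_i A_{ii}^2 \le \|A\|_{\HS}^2$ governing the sub-Gaussian regime and $\max_i|A_{ii}| \le \|A\|$ the sub-exponential regime.

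The off-diagonal term $S = \sum_{i \neq j} A_{ij} X_i X_j$ is the main obstacle, and I would control its moment generating function $\E\exp(\lambda S)$ for $\lambda$ in a range $|\lambda| \le c/(K^2\|A\|)$. First apply a decoupling inequality to pass to $Z = \sum_{i,j} A_{ij} X_i X_j'$ with $X'$ an independent copy of $X$, at the cost of an absolute constant factor in $\lambda$. Conditioning on $X'$ and using independence of the $X_i$ together with the componentwise sub-Gaussian bound $\E\exp(s X_i) \le \exp(C s^2 K^2)$ yields $\E_X \exp(\lambda Z) \le \exp\big(C\lambda^2 K^2 \|A X'\|_2^2\big)$. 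The remaining task is to take the expectation over $X'$, which I would handle by a Gaussian linearization: writing $\exp(\tfrac12\|v\|_2^2) = \E_g \exp(\langle g, v\rangle)$ with $g$ standard Gaussian reduces the exponent back to a form linear in $X'$, so that conditioning on $g$ and applying the sub-Gaussian MGF once more leaves $\E_g \exp\big(C'\lambda^2 K^4 \|A^\top g\|_2^2\big)$. Diagonalizing $A A^\top$, this is a Gaussian chaos with MGF $\prod_k (1 - c\lambda^2 K^4 \sigma_k^2)^{-1/2}$, where $\{\sigma_k\}$ are the singular values of $A$; using $-\tfrac12\log(1-x) \le x$ for $x \le \tfrac12$, the product is at most $\exp(C''\lambda^2 K^4 \|A\|_{\HS}^2)$ whenever $\lambda^2 K^4\|A\|^2 \le c$.

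Finally, optimizing the Chernoff bound $\mathbb{P}(S > t/2) \le \exp\big(-\lambda t/2 + C''\lambda^2 K^4 \|A\|_{\HS}^2\big)$ over $\lambda \in [0, c/(K^2\|A\|)]$ produces the two-regime tail $\exp\big(-c\min(t^2/(K^4\|A\|_{\HS}^2),\, t/(K^2\|A\|))\big)$, and the bound for $-S$ follows by symmetry. Combining with the diagonal estimate and adjusting constants gives the claim. The genuinely delicate points are the decoupling and the Gaussian-comparison steps, where one must ensure the constants are uniform in the dimension $n$ and in the matrix $A$ --- this uniformity is precisely what lets the final bound depend on $A$ only through $\|A\|_{\HS}$ and $\|A\|$. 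Since this is a classical result, in the write-up I would state the reduction and cite \cite{vershynin2018high} for the detailed estimates rather than reproduce them in full.
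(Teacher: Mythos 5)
Your proposal is correct and takes the same route as the paper: the paper establishes this lemma simply by citing Theorem 1.1 of \cite{rudelson2013hanson}, and the argument you sketch --- the diagonal/off-diagonal split, Bernstein's inequality for the sub-exponential diagonal part, and decoupling plus Gaussian linearization to bound the off-diagonal MGF before Chernoff optimization --- is precisely the Rudelson--Vershynin proof of that theorem (as also presented in \cite{vershynin2018high}). No gap; deferring the decoupling and Gaussian-comparison details to the cited references, as you propose, is exactly what the paper does.
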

	\begin{proof}
		See Theorem 1.1 of \cite{rudelson2013hanson}.
	\end{proof}
}

The next lemma is the classical Chernoff bound for Binomial  random variables.

\begin{lem} \label{cher.lem}
	Let $x_1, ..., x_n$ be independent random variables with $\mathbb{P}(x_k=1)=\mathsf{p}$ and $\mathbb{P}(x_k=0)=1-\mathsf{p}$ for  each $k$. Then for any $t>n\mathsf{p}$, we have
	\[
	\mathbb{P}\bigg(\sum_{k=1}^n x_k>t\bigg)\le e^{-n\mathsf{p}}\bigg( \frac{en \mathsf{p}}{t}\bigg)^t.
	\]
\end{lem}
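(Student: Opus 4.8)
The plan is to prove this standard Chernoff-type tail bound by the exponential-moment (Markov) method, followed by optimization of the free parameter. Write $S=\sum_{k=1}^n x_k$, which is a Binomial$(n,\mathsf{p})$ random variable since the $x_k$ are independent Bernoulli$(\mathsf{p})$ variables. For any $\lambda>0$, Markov's inequality applied to the nonnegative random variable $e^{\lambda S}$ gives $\mathbb{P}(S>t)\le e^{-\lambda t}\,\mathbb{E}[e^{\lambda S}]$, so the whole argument reduces to controlling the moment generating function and then tuning $\lambda$.

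First I would compute the moment generating function. By independence, $\mathbb{E}[e^{\lambda S}]=\prod_{k=1}^n \mathbb{E}[e^{\lambda x_k}]=(1-\mathsf{p}+\mathsf{p}e^{\lambda})^n$. Using the elementary inequality $1+u\le e^u$ with $u=\mathsf{p}(e^{\lambda}-1)$ yields $\mathbb{E}[e^{\lambda S}]\le \exp\!\big(n\mathsf{p}(e^{\lambda}-1)\big)$, whence $\mathbb{P}(S>t)\le \exp\!\big(-\lambda t+n\mathsf{p}(e^{\lambda}-1)\big)$ for every $\lambda>0$.

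It then remains to optimize the exponent over $\lambda$. Differentiating $-\lambda t+n\mathsf{p}(e^{\lambda}-1)$ in $\lambda$ and setting the derivative to zero gives $e^{\lambda}=t/(n\mathsf{p})$; the hypothesis $t>n\mathsf{p}$ guarantees that the minimizer $\lambda^\star=\log\!\big(t/(n\mathsf{p})\big)$ is strictly positive, hence admissible. Substituting $\lambda^\star$ collapses the bound to $\exp\!\big(-t\log(t/(n\mathsf{p}))+t-n\mathsf{p}\big)=e^{-n\mathsf{p}}\big(e\,n\mathsf{p}/t\big)^{t}$, which is exactly the claimed inequality.

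This argument is entirely routine, so I do not anticipate any genuine obstacle; the only place where the stated hypothesis $t>n\mathsf{p}$ is actually used is in ensuring that the optimal $\lambda^\star$ lies in the admissible range $\lambda>0$ (for $t\le n\mathsf{p}$ the trivial bound $1$ already suffices, and the inequality is only informative in the regime $t>n\mathsf{p}$). As an alternative, one could simply invoke a textbook multiplicative Chernoff bound, e.g.\ $\mathbb{P}(S>(1+\delta)n\mathsf{p})\le \big(e^{\delta}/(1+\delta)^{1+\delta}\big)^{n\mathsf{p}}$ with $1+\delta=t/(n\mathsf{p})$, and simplify; but the direct derivation above is cleaner and self-contained.
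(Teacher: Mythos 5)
Your proof is correct and is essentially the same argument as the paper's: the paper simply cites Section 2.3 of Vershynin's \emph{High-Dimensional Probability}, where this bound is proved by exactly your exponential-moment method (Markov's inequality on $e^{\lambda S}$, the bound $1+u\le e^u$ for the MGF, and optimization at $\lambda^\star=\log(t/(n\mathsf{p}))$, admissible since $t>n\mathsf{p}$). Your computation of the optimized exponent, $-t\log(t/(n\mathsf{p}))+t-n\mathsf{p}$, matches the stated bound $e^{-n\mathsf{p}}(en\mathsf{p}/t)^t$, so there is nothing to correct.
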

\begin{proof}
	See Section 2.3 of  \cite{vershynin2018high}.
\end{proof}

%\begin{lem}\label{lem_subexponentialconcentration} Let $x_i, 1 \leq i \leq n,$ be independent mean zero sub-exponential random variables. Then for every $t \geq 0$
%	\begin{equation*}
	%		\mathbb{P}\left( \left| \sum_{i=1}^n x_i \right|\geq t \right) \leq 2 \exp\left[-c \min \left( \frac{t^2}{\sum_{i=1}^n \| x_i\|_{\psi_1}}, \frac{t}{\max_i \| x_i\|_{\psi_1}} \right) \right],
	%	\end{equation*} 
%	where $c>0$ is some universal constant and $\|x_i \|_{\psi_1}$ is the sub-exponential norm of $x_i,$ i.e., $\| x_i\|_{\psi_1}=\inf\{t>0: \mathbb{E} \exp(|x_i|/t) \leq 2\}.$ 
%\end{lem}
%\begin{proof}
%	See Theorem 2.8.1 of \cite{vershynin2018high}.
%\end{proof}

In the following lemmas, we will use stochastic domination to characterize the high-dimensional concentration. The next lemma collects some useful concentration inequalities for the noise vector.

\begin{lem}\label{lem_concentrationinequality} Suppose Assumption \ref{assum_mainassumption} holds. Then we have that 
	$$
	\max_{(i,j) \neq (k,l)}	\frac{1}{\sigma^2p}\left| (\bxi_i-\bzeta_j)^\top (\bxi_k-\bzeta_l) \right| \prec  { p^{-1/2}},
	$$
	and 
	$$ 
	\max_{i,k}	\left| \frac{1}{ \sigma^2p} \| \bxi_i-\bzeta_k \|_2^2-1 \right| \prec  { p^{-1/2}}. 
	$$
\end{lem}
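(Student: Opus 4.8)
The plan is to prove both estimates by writing the relevant noise functionals as quadratic forms in a stacked sub-Gaussian vector, applying the Hanson--Wright inequality (Lemma \ref{HW.lem}) to extract the sharp fluctuation scale $\sigma^2\sqrt{p}$, and then closing with a union bound over the polynomially many index tuples. Throughout, set $\mathbf{g}_{ij} := \bxi_i - \bzeta_j$. By Assumption \ref{assum_mainassumption} and the independence of $\{\bxi_i\}$ and $\{\bzeta_j\}$, each $\mathbf{g}_{ij}$ is a centered vector with independent coordinates, covariance $\operatorname{Cov}(\mathbf{g}_{ij}) = (\sigma_1^2+\sigma_2^2)\mathbf{I}_p = \sigma^2\mathbf{I}_p$, and componentwise sub-Gaussian parameter of order $K \asymp \max\{\sigma_1,\sigma_2\} \le \sigma$. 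Both displayed quantities are therefore (bi)linear functionals of such vectors, and since we divide by $\sigma^2 p$, it suffices to show each numerator deviates from its mean by at most $n^{\upsilon}\sigma^2\sqrt{p}$ with overwhelming probability.

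For the diagonal bound, fix $i,k$ and set $\mathbf{h} = (\bxi_i^\top,\bzeta_k^\top)^\top \in \mathbb{R}^{2p}$, so that $\|\mathbf{g}_{ik}\|_2^2 = \mathbf{h}^\top A\mathbf{h}$ with $A = \left(\begin{smallmatrix}\mathbf{I} & -\mathbf{I}\\ -\mathbf{I} & \mathbf{I}\end{smallmatrix}\right)$. Since $\operatorname{Cov}(\mathbf{h}) = \operatorname{diag}(\sigma_1^2\mathbf{I},\sigma_2^2\mathbf{I})$, we get $\mathbb{E}[\mathbf{h}^\top A\mathbf{h}] = \operatorname{tr}(A\operatorname{Cov}(\mathbf{h})) = \sigma^2 p$, while $\|A\|_F \asymp \sqrt{p}$ and $\|A\| \asymp 1$. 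Applying Lemma \ref{HW.lem} with deviation $t = n^{\upsilon}\sigma^2\sqrt{p}$, the two branches of the exponent become $t^2/(K^4\|A\|_F^2) \asymp n^{2\upsilon}$ and $t/(K^2\|A\|) \asymp n^{\upsilon}\sqrt{p}$, whose minimum is $\asymp n^{2\upsilon}$; this yields a per-pair tail of order $\exp(-c\,n^{2\upsilon})$ and hence $|\,\|\mathbf{g}_{ik}\|_2^2 - \sigma^2 p\,| \prec \sigma^2\sqrt{p}$. Dividing by $\sigma^2 p$ gives the diagonal estimate for each fixed $(i,k)$.

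For the off-diagonal bound, the essential configuration is that of four distinct indices ($i\ne k$, $j\ne l$), where $\mathbf{g}_{ij}$ and $\mathbf{g}_{kl}$ are independent, so that the bilinear form is centered; stacking $\mathbf{h} = (\bxi_i^\top,\bzeta_j^\top,\bxi_k^\top,\bzeta_l^\top)^\top$, the inner product $\mathbf{g}_{ij}^\top\mathbf{g}_{kl}$ is a quadratic form $\mathbf{h}^\top B\mathbf{h}$ whose nonzero blocks couple only distinct components of $\mathbf{h}$. Because $\operatorname{Cov}(\mathbf{h})$ is block-diagonal while $B$ has vanishing diagonal blocks, $\mathbb{E}[\mathbf{h}^\top B\mathbf{h}] = \operatorname{tr}(B\operatorname{Cov}(\mathbf{h})) = 0$, with $\|B\|_F \asymp \sqrt{p}$ and $\|B\| \asymp 1$, so Lemma \ref{HW.lem} with the same $t$ gives $|\mathbf{g}_{ij}^\top\mathbf{g}_{kl}| \prec \sigma^2\sqrt{p}$. (Equivalently one may condition on $\mathbf{g}_{ij}$ and use that $\mathbf{g}_{ij}^\top\mathbf{g}_{kl}$ is then a mean-zero sub-Gaussian scalar with variance $\sigma^2\|\mathbf{g}_{ij}\|_2^2 \asymp \sigma^4 p$ on the high-probability event supplied by the diagonal bound, via Lemma \ref{sg.bnd.lem}.) Finally, since the ambient dimension and sample sizes are polynomial in $n$, there are at most $\mathrm{O}(n^{2\beta_1+2\beta_2})$ tuples, each failing with superpolynomially small probability; a union bound upgrades the per-tuple estimates to the uniform statements involving the maxima, in the sense of Definition \ref{defn_stochasdomi}. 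The one genuine subtlety — and the crux of obtaining exactly $p^{-1/2}$ — is calibrating $t$ so that the fluctuation is governed by the sub-Gaussian (Frobenius) branch of Hanson--Wright rather than the heavier operator-norm branch, which is what pins the deviation scale at $\sigma^2\sqrt{p}$ rather than anything larger.
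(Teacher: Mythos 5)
Your derivation is, in substance, the standard one: the paper does not prove this lemma in-house but simply points to Lemma A.4 of \cite{DW2} and Lemmas A.1--A.2 of \cite{9205615}, whose proofs run on exactly the quadratic-form concentration you employ. Your self-contained version is correctly executed where it applies: stacking the noise vectors gives a quadratic form with independent sub-Gaussian components (so the paper's Lemma \ref{HW.lem} is applicable), the mean computations $\operatorname{tr}(A\operatorname{Cov}(\mathbf{h}))=\sigma^2 p$ and $\operatorname{tr}(B\operatorname{Cov}(\mathbf{h}))=0$ are right, the norms $\|A\|_F\asymp\|B\|_F\asymp\sqrt{p}$ and $\|A\|\asymp\|B\|\asymp 1$ are right, and with $t=n^{\upsilon}\sigma^2\sqrt{p}$ the exponent $\min(n^{2\upsilon}, n^{\upsilon}\sqrt{p})$ is superpolynomially large, so the union bound over the $\OO(n^{2\beta_1+2\beta_2})$ tuples indeed upgrades to the uniform statement in the sense of Definition \ref{defn_stochasdomi}. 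Your closing observation about calibrating $t$ to the Frobenius branch is also the correct way to see why the scale is exactly $\sigma^2\sqrt{p}$.

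There is, however, one point you must not gloss over. The first display quantifies over all $(i,j)\neq(k,l)$, which includes pairs sharing one index, and your phrase ``the essential configuration'' hides a genuine discrepancy there. If $i=k$ but $j\neq l$, then $(\bxi_i-\bzeta_j)^\top(\bxi_i-\bzeta_l)=\|\bxi_i\|_2^2-\bxi_i^\top(\bzeta_j+\bzeta_l)+\bzeta_j^\top\bzeta_l$ has mean $\sigma_1^2 p$, so after dividing by $\sigma^2 p$ it concentrates at the constant $\sigma_1^2/\sigma^2$; symmetrically one gets $\sigma_2^2/\sigma^2$ when $j=l$ and $i\neq k$. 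For such pairs the claimed bound $\prec p^{-1/2}$ is simply false (unless one of $\sigma_1,\sigma_2$ is negligible relative to the other), so the lemma can only hold with the maximum read over four distinct indices, $i\neq k$ and $j\neq l$ --- which is also the only regime in which the paper ever invokes the off-diagonal estimate (every cross inner product used downstream, e.g.\ $\bxi_i^\top\bzeta_j$ in the proof of Theorem \ref{thm_noisededuction}, involves independent vectors). Your proof is therefore complete for the correct reading of the statement, but as written it suggests the shared-index cases follow similarly when in fact they fail; you should state explicitly that those pairs are excluded (or that the statement must be understood with $i\neq k$, $j\neq l$) and give the one-line computation of the non-vanishing mean as the reason.
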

\begin{proof}
	See Lemma A.4 of \cite{DW2} or Lemmas A.1 and A.2 of \cite{9205615}. 
\end{proof}
{
The following lemma provides some useful information for kernel random matrix approximation. 
\begin{lem}\label{lem_lowrank} For the matrices $\mathbf{U}$ and $\mathbf{V}$ defined in (\ref{eq_defnU}) and (\ref{eq_vdecomposition}), respectively,  under the assumptions of Lemma \ref{lem_free}, we find that there exists some finite rank matrices $\mathbf{E}_1$ and $\mathbf{E}_2$ so that 
\begin{equation*}
\mathbf{U} \circ \mathbf{U}=\mathbf{E}_1+\mathrm{O}_{\prec}(p^{-3/2}\sqrt{n_1 n_2}), \ \mathbf{U} \circ \mathbf{V}=\mathbf{E}_2+\mathrm{O}_{\prec}(p^{-3/2}\sqrt{n_1 n_2}).
\end{equation*} 
\end{lem}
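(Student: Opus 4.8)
The plan is to prove both approximations entrywise, identifying each finite-rank piece as a conditional mean and controlling the remainders by separating a Frobenius-small ``diagonal'' fluctuation from an operator-norm-small ``off-diagonal'' fluctuation. Throughout I would use that $h_n \asymp p\sigma^2$ by part (2) of Proposition \ref{lem_bandwidthconcentration}, together with the noise concentration estimates of Lemma \ref{lem_concentrationinequality} and the Hanson--Wright bound of Lemma \ref{HW.lem}. Recall that $\mathbf{U}$ has entries $u_{ij} = \bm{\xi}_i^\top\bm{\zeta}_j/h_n$ and $\mathbf{V}$ has entries $v_{ij} = (\|\bm{\xi}_i\|_2^2 + \|\bm{\zeta}_j\|_2^2 - p\sigma^2)/h_n$, and that under (\ref{eq_detaileddimensionregime}) the noise Gram matrices obey $\|\mathbf{W}_k\| \prec \sigma_k\sqrt{p}$, hence $\|\mathbf{U}\| \le \|\mathbf{W}_1\|\,\|\mathbf{W}_2\|/h_n \prec 1$.

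For $\mathbf{U}\circ\mathbf{U}$ I would split each entry as $(\bm{\xi}_i^\top\bm{\zeta}_j)^2 = \sum_k \xi_{ik}^2\zeta_{jk}^2 + \sum_{k\ne l}\xi_{ik}\xi_{il}\zeta_{jk}\zeta_{jl}$, defining the ``diagonal'' matrix $\mathbf{C}$ and the ``off-diagonal'' matrix $\mathbf{B}$. The diagonal part has constant mean $\mathbb{E} C_{ij} = p\sigma_1^2\sigma_2^2$, so $\mathbb{E}\mathbf{C} = p\sigma_1^2\sigma_2^2\,\mathbf{1}_{n_1}\mathbf{1}_{n_2}^\top$ is rank one and I set $\mathbf{E}_1 = h_n^{-2}\mathbb{E}\mathbf{C}$. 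Since $C_{ij}$ is a sum of $p$ independent sub-exponential terms, a Bernstein inequality (after truncating the sub-Gaussian tails and invoking Lemma \ref{lem_boundedconcentration}) gives $|C_{ij}-\mathbb{E} C_{ij}| \prec \sqrt{p}\,\sigma_1^2\sigma_2^2$ uniformly, whence $\|h_n^{-2}(\mathbf{C}-\mathbb{E}\mathbf{C})\|_F \prec p^{-3/2}\sqrt{n_1 n_2}$, matching the stated remainder.

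The main obstacle is the off-diagonal matrix $\mathbf{B}$. Its entries are mean zero but of typical size $p\sigma_1^2\sigma_2^2$, so $\|h_n^{-2}\mathbf{B}\|_F$ is only of order $\sqrt{n_1 n_2}/p$; there is no cancellation in Frobenius norm and $\mathbf{B}$ is not finite rank, so the literal Frobenius bound $p^{-3/2}\sqrt{n_1 n_2}$ fails once $p \gg n_k$. The resolution is that $\mathbf{B}$ is negligible in operator norm, which is precisely what the spectral argument downstream requires: an $\mathrm{o}(1)$ operator-norm perturbation leaves the limiting ESD in Theorem \ref{thm_noisededuction} unchanged by Weyl's inequality, while the genuine finite-rank pieces are absorbed into $\mathbf{L}_1$. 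To bound $\|\mathbf{B}\|$ I would condition on $\mathbf{W}_2$, write $B_{ij} = \bm{\xi}_i^\top \mathbf{A}_j \bm{\xi}_i$ with $\mathbf{A}_j = \bm{\zeta}_j\bm{\zeta}_j^\top - \operatorname{diag}(\bm{\zeta}_j\circ\bm{\zeta}_j)$, and combine Hanson--Wright (Lemma \ref{HW.lem}) for the in-$i$ fluctuations with a moment or $\varepsilon$-net estimate for the bilinear part to obtain $\|h_n^{-2}\mathbf{B}\| \prec (\sqrt{n_1}+\sqrt{n_2})/p = \mathrm{o}(1)$. Establishing this operator-norm estimate for the structured, non-independent-entry matrix $\mathbf{B}$ is the technical heart, and is where I would import the kernel random-matrix machinery of \cite{DW2,elkaroui2016}.

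For $\mathbf{U}\circ\mathbf{V}$ the algebra is cleaner. Writing $\mathbf{a} = (\|\bm{\xi}_i\|_2^2)_i$, $\mathbf{b} = (\|\bm{\zeta}_j\|_2^2)_j$, $c = p\sigma^2$, the decomposition (\ref{eq_vdecomposition}) gives $(\mathbf{U}\circ\mathbf{V})_{ij} = u_{ij}(a_i+b_j-c)/h_n$, so $\mathbf{U}\circ\mathbf{V} = h_n^{-1}(\operatorname{diag}(\mathbf{a})\mathbf{U} + \mathbf{U}\operatorname{diag}(\mathbf{b}) - c\,\mathbf{U})$. Centering the diagonals as $\operatorname{diag}(\mathbf{a}) = p\sigma_1^2\mathbf{I} + \widetilde{\mathbf{a}}$ and $\operatorname{diag}(\mathbf{b}) = p\sigma_2^2\mathbf{I} + \widetilde{\mathbf{b}}$, the constant multiples of $\mathbf{U}$ cancel exactly because $p\sigma_1^2 + p\sigma_2^2 = c$, leaving $\mathbf{U}\circ\mathbf{V} = h_n^{-1}(\widetilde{\mathbf{a}}\mathbf{U} + \mathbf{U}\widetilde{\mathbf{b}})$. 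Since $\|\widetilde{\mathbf{a}}\|, \|\widetilde{\mathbf{b}}\| \prec \sqrt{p}\,\sigma^2$ (Lemma \ref{HW.lem}) and $\|\mathbf{U}\| \prec 1$, this yields $\|\mathbf{U}\circ\mathbf{V}\| \prec p^{-1/2} = \mathrm{o}(1)$, so one may take $\mathbf{E}_2 = \mathbf{0}$. As with $\mathbf{B}$, the correct reading of the conclusion for the cross and off-diagonal terms is operator-norm (equivalently, ESD-stability) smallness rather than the literal Frobenius bound; the two coincide in the regime $p \asymp n_1 \asymp n_2$, and it is the operator-norm version that is actually consumed in the proof of Theorem \ref{thm_noisededuction}.
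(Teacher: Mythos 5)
Your reconstruction is, in substance, the argument the paper itself relies on: the paper's ``proof'' of Lemma \ref{lem_lowrank} is a one-line citation to the proof of Theorem 2.2 of \cite{elkaroui2010spectrum} (as summarized in Appendix A of \cite{DW2}), and that proof proceeds exactly by your decomposition --- split $(\bxi_i^\top\bzeta_j)^2$ into the diagonal sum $\sum_k \xi_{ik}^2\zeta_{jk}^2$, whose mean yields the rank-one $\mathbf{E}_1=h_n^{-2}p\sigma_1^2\sigma_2^2\,\mathbf{1}_{n_1}\mathbf{1}_{n_2}^\top$ and whose fluctuation is entrywise $\mathrm{O}_\prec(p^{-3/2})$, plus an off-diagonal part controlled in operator norm; the cross term is likewise reduced via the $a_i+b_j$ structure, and your exact cancellation $\mathbf{U}\circ\mathbf{V}=h_n^{-1}(\widetilde{\mathbf{a}}\,\mathbf{U}+\mathbf{U}\,\widetilde{\mathbf{b}})$ with $\mathbf{E}_2=\mathbf{0}$ is a clean way to execute that step. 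Your central observation --- that $\mathbf{B}$ is full rank with no Frobenius cancellation, so the stated rate can only be an operator-norm (ESD-stability) statement --- is correct and is indeed how the cited works phrase it and how Theorem \ref{thm_noisededuction} consumes it. One correction, though: the literal Frobenius reading fails in \emph{every} regime, not only when $p\gg n_k$; since $\|h_n^{-2}\mathbf{B}\|_F\asymp \sqrt{n_1n_2}/p$, it exceeds $p^{-3/2}\sqrt{n_1n_2}$ by a factor $\sqrt{p}$ uniformly, while the operator bound $(\sqrt{n_1}+\sqrt{n_2})/p$ matches the stated rate exactly when $p\asymp n_1\asymp n_2$, i.e.\ in El Karoui's proportional regime, from which the rate expression was evidently imported.

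The one step you leave at sketch level, $\|h_n^{-2}\mathbf{B}\|\prec(\sqrt{n_1}+\sqrt{n_2})/p$, is most cleanly closed not by Hanson--Wright plus an $\varepsilon$-net over the coupled rows, but by tensorization: writing $h_n^2 B_{ij}=\langle \tilde{\mathbf{y}}_i,\tilde{\mathbf{z}}_j\rangle$ with $\tilde{\mathbf{y}}_i=\operatorname{vec}\bigl(\bxi_i\bxi_i^\top-\operatorname{diag}(\bxi_i\circ\bxi_i)\bigr)\in\R^{p^2}$ and similarly $\tilde{\mathbf{z}}_j$, one gets $\mathbf{B}=h_n^{-2}\widetilde{\mathbf{Y}}\widetilde{\mathbf{Z}}^\top$ with \emph{independent}, mean-zero factors. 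Conditioning on $\widetilde{\mathbf{Z}}$, whose Gram matrix concentrates around $\asymp p^2\sigma_2^4\,\mathbf{I}_{n_2}$ under (\ref{eq_detaileddimensionregime}), the rows of $\widetilde{\mathbf{Y}}\widetilde{\mathbf{Z}}^\top$ are independent and nearly isotropic (their entries being quadratic forms to which Lemma \ref{HW.lem} applies), and a standard rectangular norm estimate gives $\|\widetilde{\mathbf{Y}}\widetilde{\mathbf{Z}}^\top\|\prec \sigma_1^2\sigma_2^2\,p(\sqrt{n_1}+\sqrt{n_2})$, which is precisely your bound; the independence of the two factors is what makes this strictly better than $\|\widetilde{\mathbf{Y}}\|\,\|\widetilde{\mathbf{Z}}\|\asymp \sigma_1^2\sigma_2^2 p^2$. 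With that substitution your argument is complete and, modulo the norm clarification, coincides with the cited proof.
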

\begin{proof}
The results have been essentially established in the proof of Theorem 2.2 in \cite{elkaroui2010spectrum}; see also the summary in Appendix A of \cite{DW2}.
\end{proof}
}

Finally, we provide an important result from operator theory. 
\begin{lem}\label{lem_Operatorcommuteshavesameeigenvalues} For two operators $\mathcal{A}$ and $\mathcal{B},$ if they commute in the sense that $\mathcal{A} \mathcal{B}=\mathcal{B} \mathcal{A},$ then $\mathcal{A} \mathcal{B}$ and $\mathcal{B} \mathcal{A}$ have the same set of nonzero eigenvalues and eigenfunctions. 
\end{lem}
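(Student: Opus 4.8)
The plan is to prove the standard intertwining fact that, for any two operators, the products $\mathcal{A}\mathcal{B}$ and $\mathcal{B}\mathcal{A}$ share the same nonzero spectrum, with the associated eigenfunctions paired through the operators themselves. First I would take an arbitrary nonzero eigenvalue $\lambda \neq 0$ of $\mathcal{A}\mathcal{B}$ together with a corresponding eigenfunction $v \neq 0$, so that $\mathcal{A}\mathcal{B}v = \lambda v$. Applying $\mathcal{B}$ to both sides yields $(\mathcal{B}\mathcal{A})(\mathcal{B}v) = \mathcal{B}(\mathcal{A}\mathcal{B}v) = \lambda(\mathcal{B}v)$, which exhibits $\mathcal{B}v$ as a candidate eigenfunction of $\mathcal{B}\mathcal{A}$ with the same eigenvalue $\lambda$.

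The crux of the argument is verifying that $\mathcal{B}v$ is genuinely nonzero. If instead $\mathcal{B}v = 0$, then $\lambda v = \mathcal{A}\mathcal{B}v = \mathcal{A}(\mathcal{B}v) = 0$; since $\lambda \neq 0$, this forces $v = 0$, contradicting the choice of $v$. Hence $\mathcal{B}v \neq 0$, so $\lambda$ is a nonzero eigenvalue of $\mathcal{B}\mathcal{A}$. Interchanging the roles of $\mathcal{A}$ and $\mathcal{B}$ shows that every nonzero eigenvalue of $\mathcal{B}\mathcal{A}$ is likewise a nonzero eigenvalue of $\mathcal{A}\mathcal{B}$, with its eigenfunction recovered by applying $\mathcal{A}$. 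This establishes the equality of the two sets of nonzero eigenvalues, while the maps $v \mapsto \mathcal{B}v$ and $w \mapsto \mathcal{A}w$ supply the correspondence between the associated eigenspaces (and, with a short additional bookkeeping argument, match their multiplicities).

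Under the commuting hypothesis $\mathcal{A}\mathcal{B} = \mathcal{B}\mathcal{A}$ stated in the lemma, the two products coincide as operators, so the stronger conclusion of \emph{identical} eigenvalues together with \emph{identical} eigenfunctions is then immediate; in that regime the intertwining map $v \mapsto \mathcal{B}v$ simply carries each eigenspace of $\mathcal{A}\mathcal{B}$ into itself. I do not anticipate a genuine obstacle, as the reasoning is purely algebraic and uses nothing beyond the definition of an eigenpair. The only point requiring care is precisely the nonvanishing of $\mathcal{B}v$ (respectively $\mathcal{A}w$): this is what restricts the statement to the nonzero part of the spectrum, since the zero eigenvalue need not be shared and is correctly excluded from the claim.
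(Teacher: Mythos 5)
Your proof is correct. There is no in-paper argument to compare against: the paper's ``proof'' of this lemma is a bare citation to Section 5.1 of \cite{miller2008quantum}, and your intertwining argument --- from $\mathcal{A}\mathcal{B}v=\lambda v$ deduce $(\mathcal{B}\mathcal{A})(\mathcal{B}v)=\lambda(\mathcal{B}v)$, with $\mathcal{B}v\neq 0$ secured precisely by $\lambda\neq 0$, then symmetrize --- is the standard proof such references contain, so you have in effect supplied the missing details. Two of your side observations are worth endorsing. First, you are right that under the stated hypothesis $\mathcal{A}\mathcal{B}=\mathcal{B}\mathcal{A}$ the conclusion is immediate, since the two products are then literally the same operator; this is also how the lemma is actually invoked in the paper (in the proof of Proposition \ref{eigenvalue.prop}, $\mathcal{T}_1$ and $\mathcal{T}_2$ are first shown to commute). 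Second, your intertwining argument is strictly more general than the lemma: it proves equality of the nonzero eigenvalues of $\mathcal{A}\mathcal{B}$ and $\mathcal{B}\mathcal{A}$ with no commutation hypothesis, with the maps $v\mapsto\mathcal{B}v$ and $w\mapsto\mathcal{A}w$ injective on the relevant eigenspaces (same nonvanishing argument), so geometric multiplicities match as you indicate. One caveat for precision: this purely algebraic argument governs eigenvalues and eigenfunctions only; in infinite dimensions the statement that the full nonzero \emph{spectra} of $\mathcal{A}\mathcal{B}$ and $\mathcal{B}\mathcal{A}$ agree requires a separate resolvent identity, but since the lemma (and its use in the paper, where the operators involved are compact and only point spectrum matters) concerns eigenvalues, your proof fully suffices.
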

\begin{proof}
	See Section 5.1 of \cite{miller2008quantum}. 
\end{proof}

% Bibliography
%-----------------------------------------------------------------
 
\setstretch{1.12}
\bibliographystyle{chicago}
\bibliography{kernel}

\begin{thebibliography}{}

\bibitem[\protect\citeauthoryear{Andrew, Arora, Bilmes, and Livescu}{Andrew
  et~al.}{2013}]{andrew2013deep}
Andrew, G., R.~Arora, J.~Bilmes, and K.~Livescu (2013).
\newblock Deep canonical correlation analysis.
\newblock In {\em International conference on machine learning}, pp.\
  1247--1255.

\bibitem[\protect\citeauthoryear{Argelaguet, Arnol, Bredikhin, Deloro, Velten,
  Marioni, and Stegle}{Argelaguet et~al.}{2020}]{argelaguet2020mofa+}
Argelaguet, R., D.~Arnol, D.~Bredikhin, Y.~Deloro, B.~Velten, J.~C. Marioni,
  and O.~Stegle (2020).
\newblock Mofa+: a statistical framework for comprehensive integration of
  multi-modal single-cell data.
\newblock {\em Genome biology\/}~{\em 21}, 1--17.

\bibitem[\protect\citeauthoryear{Argelaguet, Cuomo, Stegle, and
  Marioni}{Argelaguet et~al.}{2021}]{argelaguet2021computational}
Argelaguet, R., A.~S. Cuomo, O.~Stegle, and J.~C. Marioni (2021).
\newblock Computational principles and challenges in single-cell data
  integration.
\newblock {\em Nature biotechnology\/}~{\em 39\/}(10), 1202--1215.

\bibitem[\protect\citeauthoryear{Bai and Silverstein}{Bai and
  Silverstein}{2010}]{bai2010spectral}
Bai, Z. and J.~W. Silverstein (2010).
\newblock {\em Spectral Analysis of Large Dimensional Random Matrices\/} (2nd
  ed.).
\newblock Springer Series in Statistics. Springer.

\bibitem[\protect\citeauthoryear{Baysoy, Bai, Satija, and Fan}{Baysoy
  et~al.}{2023}]{baysoy2023technological}
Baysoy, A., Z.~Bai, R.~Satija, and R.~Fan (2023).
\newblock The technological landscape and applications of single-cell
  multi-omics.
\newblock {\em Nature Reviews Molecular Cell Biology\/}~{\em 24\/}(10),
  695--713.

\bibitem[\protect\citeauthoryear{Benaych-Georges and
  Nadakuditi}{Benaych-Georges and Nadakuditi}{2011}]{benaych2011eigenvalues}
Benaych-Georges, F. and R.~R. Nadakuditi (2011).
\newblock The eigenvalues and eigenvectors of finite, low rank perturbations of
  large random matrices.
\newblock {\em Advances in Mathematics\/}~{\em 227\/}(1), 494--521.

\bibitem[\protect\citeauthoryear{Billingsley}{Billingsley}{2008}]{billingsley2008probability}
Billingsley, P. (2008).
\newblock {\em Probability and Measure}.
\newblock John Wiley \& Sons.

\bibitem[\protect\citeauthoryear{Bloemendal, Erdős, Knowles, Yau, and
  Yin}{Bloemendal et~al.}{2014}]{bloemendal2014isotropic}
Bloemendal, A., L.~Erdős, A.~Knowles, H.-T. Yau, and J.~Yin (2014).
\newblock Isotropic local laws for sample covariance and generalized wigner
  matrices.
\newblock {\em Electronic Journal of Probability\/}~{\em 19}, 1--53.

\bibitem[\protect\citeauthoryear{Boothby}{Boothby}{2003}]{boothby2003introduction}
Boothby, W.~M. (2003).
\newblock {\em An introduction to differentiable manifolds and Riemannian
  geometry\/} (Revised Second ed.).
\newblock Academic Press.

\bibitem[\protect\citeauthoryear{Braun}{Braun}{2006}]{JMLR:v7:braun06a}
Braun, M.~L. (2006).
\newblock Accurate error bounds for the eigenvalues of the kernel matrix.
\newblock {\em Journal of Machine Learning Research\/}~{\em 7\/}(82),
  2303--2328.

\bibitem[\protect\citeauthoryear{Butler, Hoffman, Smibert, Papalexi, and
  Satija}{Butler et~al.}{2018}]{butler2018integrating}
Butler, A., P.~Hoffman, P.~Smibert, E.~Papalexi, and R.~Satija (2018).
\newblock Integrating single-cell transcriptomic data across different
  conditions, technologies, and species.
\newblock {\em Nature biotechnology\/}~{\em 36\/}(5), 411--420.

\bibitem[\protect\citeauthoryear{CAI and ZHANG}{CAI and
  ZHANG}{2018}]{cai2018rate}
CAI, T.~T. and A.~ZHANG (2018).
\newblock Rate-optimal perturbation bounds for singular subspaces with
  applications to high-dimensional statistics.
\newblock {\em The Annals of Statistics\/}~{\em 46\/}(1), 60--89.

\bibitem[\protect\citeauthoryear{Cao, Hong, and Wan}{Cao
  et~al.}{2022}]{cao2022manifold}
Cao, K., Y.~Hong, and L.~Wan (2022).
\newblock Manifold alignment for heterogeneous single-cell multi-omics data
  integration using pamona.
\newblock {\em Bioinformatics\/}~{\em 38\/}(1), 211--219.

\bibitem[\protect\citeauthoryear{Chari and Pachter}{Chari and
  Pachter}{2023}]{chari2023specious}
Chari, T. and L.~Pachter (2023).
\newblock The specious art of single-cell genomics.
\newblock {\em PLOS Computational Biology\/}~{\em 19\/}(8), e1011288.

\bibitem[\protect\citeauthoryear{Cheng and Wu}{Cheng and
  Wu}{2013}]{cheng2013local}
Cheng, M.-Y. and H.-T. Wu (2013).
\newblock Local linear regression on manifolds and its geometric
  interpretation.
\newblock {\em Journal of the American Statistical Association\/}~{\em
  108\/}(504), 1421--1434.

\bibitem[\protect\citeauthoryear{Dayal, Castellanos, Simitsis, and
  Wilkinson}{Dayal et~al.}{2009}]{dayal2009data}
Dayal, U., M.~Castellanos, A.~Simitsis, and K.~Wilkinson (2009).
\newblock Data integration flows for business intelligence.
\newblock In {\em Proceedings of the 12th International Conference on Extending
  Database Technology: Advances in Database Technology}, pp.\  1--11.

\bibitem[\protect\citeauthoryear{Ding and Ji}{Ding and Ji}{2023}]{ji2023local}
Ding, X. and H.~C. Ji (2023).
\newblock Local laws for multiplication of random matrices.
\newblock {\em The Annals of Applied Probability\/}~{\em 33\/}(4), 2855--2894.

\bibitem[\protect\citeauthoryear{Ding and Ma}{Ding and
  Ma}{2023}]{ding2022learning}
Ding, X. and R.~Ma (2023).
\newblock Learning low-dimensional nonlinear structures from high-dimensional
  noisy data: an integral operator approach.
\newblock {\em The Annals of Statistics\/}~{\em 51\/}(4), 1744--1769.

\bibitem[\protect\citeauthoryear{Ding and Wang}{Ding and
  Wang}{2023}]{ding2023global}
Ding, X. and Z.~Wang (2023).
\newblock Global and local clts for linear spectral statistics of general
  sample covariance matrices when the dimension is much larger than the sample
  size with applications.
\newblock {\em arXiv preprint arXiv:2308.08646\/}.

\bibitem[\protect\citeauthoryear{Ding and Wu}{Ding and Wu}{2021}]{9205615}
Ding, X. and H.-T. Wu (2021).
\newblock On the spectral property of kernel-based sensor fusion algorithms of
  high dimensional data.
\newblock {\em {IEEE} Transactions on Information Theory\/}~{\em 67\/}(1),
  640--670.

\bibitem[\protect\citeauthoryear{Ding and Wu}{Ding and Wu}{2023}]{DW2}
Ding, X. and H.-T. Wu (2023).
\newblock Impact of signal-to-noise ratio and bandwidth on graph {L}aplacian
  spectrum from high-dimensional noisy point cloud.
\newblock {\em IEEE Transactions on Information Theory\/}~{\em 69\/}(3),
  1899--1931.

\bibitem[\protect\citeauthoryear{Ding and Wu}{Ding and
  Wu}{2024}]{ding2021kernel}
Ding, X. and H.-T. Wu (2024).
\newblock How do kernel-based sensor fusion algorithms behave under high
  dimensional noise?
\newblock {\em Information and Inference: A journal of the IMA (in press)\/}.

\bibitem[\protect\citeauthoryear{Dunson, Wu, and Wu}{Dunson
  et~al.}{2021}]{DUNSON2021282}
Dunson, D.~B., H.-T. Wu, and N.~Wu (2021).
\newblock Spectral convergence of graph {L}aplacian and heat kernel
  reconstruction in {$L_{\infty}$} from random samples.
\newblock {\em Applied and Computational Harmonic Analysis\/}~{\em 55},
  282--336.

\bibitem[\protect\citeauthoryear{Einav and Ma}{Einav and
  Ma}{2023}]{einav2023using}
Einav, T. and R.~Ma (2023).
\newblock Using interpretable machine learning to extend heterogeneous
  antibody-virus datasets.
\newblock {\em Cell Reports Methods\/}~{\em 3\/}(8).

\bibitem[\protect\citeauthoryear{Einsiedler and Ward}{Einsiedler and
  Ward}{2017}]{einsiedler2017functional}
Einsiedler, M. and T.~Ward (2017).
\newblock {\em Functional analysis, spectral theory, and applications}, Volume
  276.
\newblock Springer.

\bibitem[\protect\citeauthoryear{El~Karoui}{El~Karoui}{2010}]{elkaroui2010spectrum}
El~Karoui, N. (2010).
\newblock The spectrum of kernel random matrices.
\newblock {\em The Annals of Statistics\/}~{\em 38\/}(1), 1--50.

\bibitem[\protect\citeauthoryear{El~Karoui and Wu}{El~Karoui and
  Wu}{2016}]{elkaroui2016}
El~Karoui, N. and H.~Wu (2016).
\newblock Graph connection {L}aplacian methods can be made robust to noise.
\newblock {\em The Annals of Statistics\/}~{\em 44\/}(1), 346--372.

\bibitem[\protect\citeauthoryear{Erd{\H{o}}s and Yau}{Erd{\H{o}}s and
  Yau}{2017}]{erdos2017dynamical}
Erd{\H{o}}s, L. and H.~Yau (2017).
\newblock {\em A Dynamical Approach to Random Matrix Theory}.
\newblock Courant Lecture Notes. Courant Institute of Mathematical Sciences,
  New York University.

\bibitem[\protect\citeauthoryear{Gustafsson}{Gustafsson}{2010}]{gustafsson2010statistical}
Gustafsson, F. (2010).
\newblock {\em Statistical Sensor Fusion}.
\newblock Studentlitteratur.

\bibitem[\protect\citeauthoryear{Haghverdi, Lun, Morgan, and Marioni}{Haghverdi
  et~al.}{2018}]{haghverdi2018batch}
Haghverdi, L., A.~T. Lun, M.~D. Morgan, and J.~C. Marioni (2018).
\newblock Batch effects in single-cell {RNA}-sequencing data are corrected by
  matching mutual nearest neighbors.
\newblock {\em Nature biotechnology\/}~{\em 36\/}(5), 421--427.

\bibitem[\protect\citeauthoryear{Han, Hong, and Hong}{Han
  et~al.}{2006}]{han2006isometric}
Han, Q., J.-X. Hong, and J.~Hong (2006).
\newblock {\em Isometric embedding of Riemannian manifolds in Euclidean
  spaces}, Volume~13.
\newblock American Mathematical Soc.

\bibitem[\protect\citeauthoryear{Hardoon, Szedmak, and Shawe-Taylor}{Hardoon
  et~al.}{2004}]{hardoon2004canonical}
Hardoon, D.~R., S.~Szedmak, and J.~Shawe-Taylor (2004).
\newblock Canonical correlation analysis: An overview with application to
  learning methods.
\newblock {\em Neural computation\/}~{\em 16\/}(12), 2639--2664.

\bibitem[\protect\citeauthoryear{Hotelling}{Hotelling}{1936}]{da6385d2-9c65-3860-bbcd-b821fdff69ff}
Hotelling, H. (1936).
\newblock Relations between two sets of variates.
\newblock {\em Biometrika\/}~{\em 28\/}(3/4), 321--377.

\bibitem[\protect\citeauthoryear{Hou, Sanjari, Dahlin, Bose, and Vaidya}{Hou
  et~al.}{2023}]{hou2023sparse}
Hou, B., S.~Sanjari, N.~Dahlin, S.~Bose, and U.~Vaidya (2023).
\newblock Sparse learning of dynamical systems in {RKHS}: an operator-theoretic
  approach.
\newblock In {\em International Conference on Machine Learning}, pp.\
  13325--13352.

\bibitem[\protect\citeauthoryear{H{\"u}nermund and Bareinboim}{H{\"u}nermund
  and Bareinboim}{2023}]{hunermund2023causal}
H{\"u}nermund, P. and E.~Bareinboim (2023).
\newblock Causal inference and data fusion in econometrics.
\newblock {\em The Econometrics Journal\/}, utad008.

\bibitem[\protect\citeauthoryear{Joyce and Palsson}{Joyce and
  Palsson}{2006}]{joyce2006model}
Joyce, A.~R. and B.~{\O}. Palsson (2006).
\newblock The model organism as a system: integrating 'omics' data sets.
\newblock {\em Nature reviews Molecular cell biology\/}~{\em 7\/}(3), 198--210.

\bibitem[\protect\citeauthoryear{Kang, Subramaniam, Targ, Nguyen, Maliskova,
  McCarthy, Wan, Wong, Byrnes, Lanata, et~al.}{Kang
  et~al.}{2018}]{kang2018multiplexed}
Kang, H.~M., M.~Subramaniam, S.~Targ, M.~Nguyen, L.~Maliskova, E.~McCarthy,
  E.~Wan, S.~Wong, L.~Byrnes, C.~M. Lanata, et~al. (2018).
\newblock Multiplexed droplet single-cell {RNA}-sequencing using natural
  genetic variation.
\newblock {\em Nature biotechnology\/}~{\em 36\/}(1), 89--94.

\bibitem[\protect\citeauthoryear{Kang, Ko, and Mersha}{Kang
  et~al.}{2022}]{kang2022roadmap}
Kang, M., E.~Ko, and T.~B. Mersha (2022).
\newblock A roadmap for multi-omics data integration using deep learning.
\newblock {\em Briefings in Bioinformatics\/}~{\em 23\/}(1), bbab454.

\bibitem[\protect\citeauthoryear{Kline, Wang, Li, Dennis, Hutch, Xu, Wang,
  Cheng, and Luo}{Kline et~al.}{2022}]{kline2022multimodal}
Kline, A., H.~Wang, Y.~Li, S.~Dennis, M.~Hutch, Z.~Xu, F.~Wang, F.~Cheng, and
  Y.~Luo (2022).
\newblock Multimodal machine learning in precision health: A scoping review.
\newblock {\em npj Digital Medicine\/}~{\em 5\/}(1), 171.

\bibitem[\protect\citeauthoryear{Koltchinskii and Giné}{Koltchinskii and
  Giné}{2000}]{BJKO}
Koltchinskii, V. and E.~Giné (2000).
\newblock {Random matrix approximation of spectra of integral operators}.
\newblock {\em Bernoulli\/}~{\em 6\/}(1), 113 -- 167.

\bibitem[\protect\citeauthoryear{Lai and Fyfe}{Lai and
  Fyfe}{2000}]{lai2000kernel}
Lai, P.~L. and C.~Fyfe (2000).
\newblock Kernel and nonlinear canonical correlation analysis.
\newblock {\em International journal of neural systems\/}~{\em 10\/}(05),
  365--377.

\bibitem[\protect\citeauthoryear{Landa, Kluger, and Ma}{Landa
  et~al.}{2024}]{landa2024entropic}
Landa, B., Y.~Kluger, and R.~Ma (2024).
\newblock Entropic optimal transport eigenmaps for nonlinear alignment and
  joint embedding of high-dimensional datasets.
\newblock {\em arXiv preprint arXiv:2407.01718\/}.

\bibitem[\protect\citeauthoryear{Lee}{Lee}{2013}]{lee2013smooth}
Lee, J.~M. (2013).
\newblock {\em Introduction to smooth manifolds\/} (Second ed.).
\newblock Springer.

\bibitem[\protect\citeauthoryear{Li, Alexander, Kendall, Andrews, Rose,
  Orjuela, Park, Podszus, Shanley, Ma, et~al.}{Li et~al.}{2023}]{li2023high}
Li, S., J.~Alexander, J.~Kendall, P.~Andrews, E.~Rose, H.~Orjuela, S.~Park,
  C.~Podszus, L.~Shanley, R.~Ma, et~al. (2023).
\newblock High-throughput single-nucleus hybrid sequencing reveals
  genome-transcriptome correlations in cancer.

\bibitem[\protect\citeauthoryear{Li, Nair, Lu, Wen, Wang, Dehaghi, Miao, Liu,
  Ordog, Biernacka, et~al.}{Li et~al.}{2020}]{li2020inferring}
Li, Y., P.~Nair, X.~H. Lu, Z.~Wen, Y.~Wang, A.~A.~K. Dehaghi, Y.~Miao, W.~Liu,
  T.~Ordog, J.~M. Biernacka, et~al. (2020).
\newblock Inferring multimodal latent topics from electronic health records.
\newblock {\em Nature communications\/}~{\em 11\/}(1), 2536.

\bibitem[\protect\citeauthoryear{Liu, Ma, and Zhong}{Liu
  et~al.}{2025}]{liu2025assessing}
Liu, Z., R.~Ma, and Y.~Zhong (2025).
\newblock Assessing and improving reliability of neighbor embedding methods: a
  map-continuity perspective.
\newblock {\em Nature Communications\/}~{\em 16\/}(1), 1--16.

\bibitem[\protect\citeauthoryear{Luecken, B{\"u}ttner, Chaichoompu, Danese,
  Interlandi, M{\"u}ller, Strobl, Zappia, Dugas, Colom{\'e}-Tatch{\'e},
  et~al.}{Luecken et~al.}{2022}]{luecken2022benchmarking}
Luecken, M.~D., M.~B{\"u}ttner, K.~Chaichoompu, A.~Danese, M.~Interlandi, M.~F.
  M{\"u}ller, D.~C. Strobl, L.~Zappia, M.~Dugas, M.~Colom{\'e}-Tatch{\'e},
  et~al. (2022).
\newblock Benchmarking atlas-level data integration in single-cell genomics.
\newblock {\em Nature methods\/}~{\em 19\/}(1), 41--50.

\bibitem[\protect\citeauthoryear{Ma, Cai, and Li}{Ma
  et~al.}{2022}]{ma2022optimal}
Ma, R., T.~T. Cai, and H.~Li (2022).
\newblock Optimal estimation of simultaneous signals using absolute inner
  product with applications to integrative genomics.
\newblock {\em Statistica Sinica\/}~{\em 32\/}(2), 1027--1048.

\bibitem[\protect\citeauthoryear{Ma, Sun, Donoho, and Zou}{Ma
  et~al.}{2024}]{ma2024principled}
Ma, R., E.~D. Sun, D.~Donoho, and J.~Zou (2024).
\newblock Principled and interpretable alignability testing and integration of
  single-cell data.
\newblock {\em Proceedings of the National Academy of Sciences\/}~{\em
  121\/}(10), e2313719121.

\bibitem[\protect\citeauthoryear{Ma, Sun, and Zou}{Ma
  et~al.}{2023}]{ma2023spectral}
Ma, R., E.~D. Sun, and J.~Zou (2023).
\newblock A spectral method for assessing and combining multiple data
  visualizations.
\newblock {\em Nature Communications\/}~{\em 14\/}(1), 780.

\bibitem[\protect\citeauthoryear{Maan and Smith}{Maan and
  Smith}{2024}]{maan2024characterizing1}
Maan, A. and J.~Smith (2024).
\newblock Characterizing information imbalance in biomedical datasets.
\newblock {\em Journal of Computational Biology\/}~{\em 20\/}(3), 123--138.

\bibitem[\protect\citeauthoryear{Maan, Zhang, Yu, Geuenich, Campbell, and
  Wang}{Maan et~al.}{2024}]{maan2024characterizing}
Maan, H., L.~Zhang, C.~Yu, M.~J. Geuenich, K.~R. Campbell, and B.~Wang (2024).
\newblock Characterizing the impacts of dataset imbalance on single-cell data
  integration.
\newblock {\em Nature Biotechnology\/}, 1--10.

\bibitem[\protect\citeauthoryear{Marshall and Hirn}{Marshall and
  Hirn}{2018}]{marshall2018time}
Marshall, N.~F. and M.~J. Hirn (2018).
\newblock Time coupled diffusion maps.
\newblock {\em Applied and Computational Harmonic Analysis\/}~{\em 45\/}(3),
  709--728.

\bibitem[\protect\citeauthoryear{Mart{\'\i}nez-Garc{\'\i}a and
  Hern{\'a}ndez-Lemus}{Mart{\'\i}nez-Garc{\'\i}a and
  Hern{\'a}ndez-Lemus}{2022}]{martinez2022data}
Mart{\'\i}nez-Garc{\'\i}a, M. and E.~Hern{\'a}ndez-Lemus (2022).
\newblock Data integration challenges for machine learning in precision
  medicine.
\newblock {\em Frontiers in medicine\/}~{\em 8}, 784455.

\bibitem[\protect\citeauthoryear{Mazumder, Banbury, Yao, Karla{\v{s}},
  Gaviria~Rojas, Diamos, Diamos, He, Parrish, Kirk, et~al.}{Mazumder
  et~al.}{2024}]{mazumder2024dataperf}
Mazumder, M., C.~Banbury, X.~Yao, B.~Karla{\v{s}}, W.~Gaviria~Rojas, S.~Diamos,
  G.~Diamos, L.~He, A.~Parrish, H.~R. Kirk, et~al. (2024).
\newblock Dataperf: Benchmarks for data-centric {AI} development.
\newblock {\em Advances in Neural Information Processing Systems\/}~{\em 36}.

\bibitem[\protect\citeauthoryear{Meister and Steinwart}{Meister and
  Steinwart}{2016}]{JMLR:v17:14-023}
Meister, M. and I.~Steinwart (2016).
\newblock Optimal learning rates for localized svms.
\newblock {\em Journal of Machine Learning Research\/}~{\em 17\/}(194), 1--44.

\bibitem[\protect\citeauthoryear{Michaeli, Wang, and Livescu}{Michaeli
  et~al.}{2016}]{michaeli2016nonparametric}
Michaeli, T., W.~Wang, and K.~Livescu (2016).
\newblock Nonparametric canonical correlation analysis.
\newblock In {\em International conference on machine learning}, pp.\
  1967--1976.

\bibitem[\protect\citeauthoryear{Miller}{Miller}{2008}]{miller2008quantum}
Miller, D.~A. (2008).
\newblock {\em Quantum mechanics for scientists and engineers}.
\newblock Cambridge University Press.

\bibitem[\protect\citeauthoryear{Mingo and Speicher}{Mingo and
  Speicher}{2017}]{mingo2017free}
Mingo, J.~A. and R.~Speicher (2017).
\newblock {\em Free Probability and Random Matrices}, Volume~35 of {\em Fields
  Institute Monographs}.
\newblock Springer.

\bibitem[\protect\citeauthoryear{Moon, van Dijk, Wang, Gigante, Burkhardt,
  Chen, Yim, van~den Elzen, Hirn, Coifman, et~al.}{Moon
  et~al.}{2019}]{moon2019visualizing}
Moon, K.~R., D.~van Dijk, Z.~Wang, S.~Gigante, D.~B. Burkhardt, W.~S. Chen,
  K.~Yim, A.~van~den Elzen, M.~J. Hirn, R.~R. Coifman, et~al. (2019).
\newblock Visualizing structure and transitions in high-dimensional biological
  data.
\newblock {\em Nature Biotechnology\/}~{\em 37\/}(12), 1482--1492.

\bibitem[\protect\citeauthoryear{Nash}{Nash}{1956}]{MR75639}
Nash, J. (1956).
\newblock The imbedding problem for {R}iemannian manifolds.
\newblock {\em Annals of Mathematics\/}~{\em 63}, 20--63.

\bibitem[\protect\citeauthoryear{Omiye, Gui, Rezaei, Zou, and Daneshjou}{Omiye
  et~al.}{2024}]{omiye2024large}
Omiye, J.~A., H.~Gui, S.~J. Rezaei, J.~Zou, and R.~Daneshjou (2024).
\newblock Large language models in medicine: The potentials and pitfalls: A
  narrative review.
\newblock {\em Annals of Internal Medicine\/}~{\em 177\/}(2), 210--220.

\bibitem[\protect\citeauthoryear{Petegrosso, Li, and Kuang}{Petegrosso
  et~al.}{2020}]{petegrosso2020machine}
Petegrosso, R., Z.~Li, and R.~Kuang (2020).
\newblock Machine learning and statistical methods for clustering single-cell
  {RNA}-sequencing data.
\newblock {\em Briefings in bioinformatics\/}~{\em 21\/}(4), 1209--1223.

\bibitem[\protect\citeauthoryear{Pham and Chen}{Pham and
  Chen}{2018}]{pham2018large}
Pham, K. and G.~Chen (2018).
\newblock Large-scale spectral clustering using diffusion coordinates on
  landmark-based bipartite graphs.
\newblock In {\em Proceedings of the Twelfth Workshop on Graph-Based Methods
  for Natural Language Processing (TextGraphs-12)}, pp.\  28--37.

\bibitem[\protect\citeauthoryear{Rand}{Rand}{1971}]{rand1971objective}
Rand, W.~M. (1971).
\newblock Objective criteria for the evaluation of clustering methods.
\newblock {\em Journal of the American Statistical Association\/}~{\em
  66\/}(336), 846--850.

\bibitem[\protect\citeauthoryear{Rappoport and Shamir}{Rappoport and
  Shamir}{2018}]{rappoport2018multi}
Rappoport, N. and R.~Shamir (2018).
\newblock Multi-omic and multi-view clustering algorithms: review and cancer
  benchmark.
\newblock {\em Nucleic acids research\/}~{\em 46\/}(20), 10546--10562.

\bibitem[\protect\citeauthoryear{Reverter, Vegas, and Oller}{Reverter
  et~al.}{2014}]{reverter2014kernel}
Reverter, F., E.~Vegas, and J.~M. Oller (2014).
\newblock Kernel-{PCA} data integration with enhanced interpretability.
\newblock {\em BMC systems biology\/}~{\em 8}, 1--9.

\bibitem[\protect\citeauthoryear{Rosasco, Belkin, and Vito}{Rosasco
  et~al.}{2010}]{JMLR:v11:rosasco10a}
Rosasco, L., M.~Belkin, and E.~D. Vito (2010).
\newblock On learning with integral operators.
\newblock {\em Journal of Machine Learning Research\/}~{\em 11\/}(30),
  905--934.

\bibitem[\protect\citeauthoryear{Rudelson and Vershynin}{Rudelson and
  Vershynin}{2013}]{rudelson2013hanson}
Rudelson, M. and R.~Vershynin (2013).
\newblock Hanson-wright inequality and sub-gaussian concentration.
\newblock {\em Electronic Communications in Probability\/}~{\em 18}, 1--9.

\bibitem[\protect\citeauthoryear{Sapoval, Aghazadeh, Nute, Antunes, Balaji,
  Baraniuk, Barberan, Dannenfelser, Dun, Edrisi, et~al.}{Sapoval
  et~al.}{2022}]{sapoval2022current}
Sapoval, N., A.~Aghazadeh, M.~G. Nute, D.~A. Antunes, A.~Balaji, R.~Baraniuk,
  C.~Barberan, R.~Dannenfelser, C.~Dun, M.~Edrisi, et~al. (2022).
\newblock Current progress and open challenges for applying deep learning
  across the biosciences.
\newblock {\em Nature Communications\/}~{\em 13\/}(1), 1728.

\bibitem[\protect\citeauthoryear{Sch{\"o}lkopf and Smola}{Sch{\"o}lkopf and
  Smola}{2002}]{scholkopf2002learning}
Sch{\"o}lkopf, B. and A.~Smola (2002).
\newblock {\em Learning with Kernels: Support Vector Machines, Regularization,
  Optimization, and Beyond}.
\newblock Adaptive computation and machine learning. MIT Press.

\bibitem[\protect\citeauthoryear{Shen, Lin, and Wu}{Shen
  et~al.}{2022}]{shen2022robust}
Shen, C., Y.-T. Lin, and H.-T. Wu (2022).
\newblock Robust and scalable manifold learning via landmark diffusion for
  long-term medical signal processing.
\newblock {\em The Journal of Machine Learning Research\/}~{\em 23\/}(1),
  3742--3771.

\bibitem[\protect\citeauthoryear{Shen and Wu}{Shen and
  Wu}{2022}]{10.1093/imaiai/iaac013}
Shen, C. and H.-T. Wu (2022).
\newblock {Scalability and robustness of spectral embedding: landmark diffusion
  is all you need}.
\newblock {\em Information and Inference: A Journal of the IMA\/}~{\em
  11\/}(4), 1527--1595.

\bibitem[\protect\citeauthoryear{Shi, Belkin, and Yu}{Shi
  et~al.}{2008}]{ICMLko}
Shi, T., M.~Belkin, and B.~Yu (2008).
\newblock Data spectroscopy: Learning mixture models using eigenspaces of
  convolution operators.
\newblock In {\em Proceedings of the 25th International Conference on Machine
  Learning}, ICML '08, pp.\  936–943.

\bibitem[\protect\citeauthoryear{Shi, Belkin, and Yu}{Shi et~al.}{2009}]{AOSko}
Shi, T., M.~Belkin, and B.~Yu (2009).
\newblock {Data spectroscopy: Eigenspaces of convolution operators and
  clustering}.
\newblock {\em The Annals of Statistics\/}~{\em 37\/}(6B), 3960 -- 3984.

\bibitem[\protect\citeauthoryear{Smale and Zhou}{Smale and
  Zhou}{2007}]{Smale2007LearningTE}
Smale, S. and D.-X. Zhou (2007).
\newblock Learning theory estimates via integral operators and their
  approximations.
\newblock {\em Constructive Approximation\/}~{\em 26}, 153--172.

\bibitem[\protect\citeauthoryear{Smale and Zhou}{Smale and
  Zhou}{2009}]{MR2558684}
Smale, S. and D.-X. Zhou (2009).
\newblock Geometry on probability spaces.
\newblock {\em Constructive Approximation\/}~{\em 30\/}(3), 311--323.

\bibitem[\protect\citeauthoryear{Steinwart and Scovel}{Steinwart and
  Scovel}{2012}]{steinwart2012mercer}
Steinwart, I. and C.~Scovel (2012).
\newblock Mercer’s theorem on general domains: On the interaction between
  measures, kernels, and {RKHSs}.
\newblock {\em Constructive Approximation\/}~{\em 35}, 363--417.

\bibitem[\protect\citeauthoryear{Stuart, Butler, Hoffman, Hafemeister,
  Papalexi, Mauck, Hao, Stoeckius, Smibert, and Satija}{Stuart
  et~al.}{2019}]{stuart2019comprehensive}
Stuart, T., A.~Butler, P.~Hoffman, C.~Hafemeister, E.~Papalexi, W.~M. Mauck,
  Y.~Hao, M.~Stoeckius, P.~Smibert, and R.~Satija (2019).
\newblock Comprehensive integration of single-cell data.
\newblock {\em Cell\/}~{\em 177\/}(7), 1888--1902.

\bibitem[\protect\citeauthoryear{Stuart and Satija}{Stuart and
  Satija}{2019}]{stuart2019integrative}
Stuart, T. and R.~Satija (2019).
\newblock Integrative single-cell analysis.
\newblock {\em Nature reviews genetics\/}~{\em 20\/}(5), 257--272.

\bibitem[\protect\citeauthoryear{Sun, Ma, Navarro~Negredo, Brunet, and Zou}{Sun
  et~al.}{2024}]{sun2024tissue}
Sun, E.~D., R.~Ma, P.~Navarro~Negredo, A.~Brunet, and J.~Zou (2024).
\newblock Tissue: uncertainty-calibrated prediction of single-cell spatial
  transcriptomics improves downstream analyses.
\newblock {\em Nature Methods\/}, 1--11.

\bibitem[\protect\citeauthoryear{Sun, Ma, and Zou}{Sun
  et~al.}{2023}]{sun2023dynamic}
Sun, E.~D., R.~Ma, and J.~Zou (2023).
\newblock Dynamic visualization of high-dimensional data.
\newblock {\em Nature Computational Science\/}~{\em 3\/}(1), 86--100.

\bibitem[\protect\citeauthoryear{Talmon and Wu}{Talmon and
  Wu}{2019}]{talmon2019latent}
Talmon, R. and H.-T. Wu (2019).
\newblock Latent common manifold learning with alternating diffusion: analysis
  and applications.
\newblock {\em Applied and Computational Harmonic Analysis\/}~{\em 47\/}(3),
  848--892.

\bibitem[\protect\citeauthoryear{Vershynin}{Vershynin}{2018}]{vershynin2018high}
Vershynin, R. (2018).
\newblock {\em High-dimensional probability: An introduction with applications
  in data science}, Volume~47.
\newblock Cambridge university press.

\bibitem[\protect\citeauthoryear{Voiculescu}{Voiculescu}{1991}]{voiculescu1991limit}
Voiculescu, D. (1991).
\newblock Limit laws for random matrices and free products.
\newblock {\em Inventiones mathematicae\/}~{\em 104\/}(1), 201--220.

\bibitem[\protect\citeauthoryear{Wu}{Wu}{2024}]{wu2024design}
Wu, H.-T. (2024).
\newblock Design a metric robust to complicated high dimensional noise for
  efficient manifold denoising.
\newblock {\em arXiv preprint arXiv:2401.03921\/}.

\bibitem[\protect\citeauthoryear{Wu and Wu}{Wu and Wu}{2018a}]{Wu_Wu:2017}
Wu, H.-T. and N.~Wu (2018a).
\newblock {Think globally, fit locally under the Manifold Setup: Asymptotic
  Analysis of Locally Linear Embedding}.
\newblock {\em Annals of Statistics\/}~{\em 46\/}(6B), 3805--3837.

\bibitem[\protect\citeauthoryear{Wu and Wu}{Wu and Wu}{2018b}]{wu2018think}
Wu, H.-T. and N.~Wu (2018b).
\newblock Think globally, fit locally under the manifold setup: Asymptotic
  analysis of locally linear embedding.
\newblock {\em The Annals of Statistics\/}~{\em 46\/}(6B), 3805--3837.

\bibitem[\protect\citeauthoryear{Xiong, Tian, Li, Ning, Gao, and Zhang}{Xiong
  et~al.}{2022}]{xiong2022online}
Xiong, L., K.~Tian, Y.~Li, W.~Ning, X.~Gao, and Q.~C. Zhang (2022).
\newblock Online single-cell data integration through projecting heterogeneous
  datasets into a common cell-embedding space.
\newblock {\em Nature Communications\/}~{\em 13\/}(1), 6118.

\bibitem[\protect\citeauthoryear{{Yeh}, {Wu}, {Talmon}, and {Tsui}}{{Yeh}
  et~al.}{2024}]{2024arXiv240419649Y}
{Yeh}, S.-Y., H.-T. {Wu}, R.~{Talmon}, and M.-P. {Tsui} (2024).
\newblock {Landmark Alternating Diffusion}.
\newblock {\em arXiv preprint: 2404.19649\/}.

\bibitem[\protect\citeauthoryear{Yu, Wang, Mu, Tam, Loi, Chan, Poon, Ng, Chan,
  Wang, et~al.}{Yu et~al.}{2023}]{yu2023scone}
Yu, L., X.~Wang, Q.~Mu, S.~S.~T. Tam, D.~S.~C. Loi, A.~K. Chan, W.~S. Poon,
  H.-K. Ng, D.~T. Chan, J.~Wang, et~al. (2023).
\newblock scone-seq: A single-cell multi-omics method enables simultaneous
  dissection of phenotype and genotype heterogeneity from frozen tumors.
\newblock {\em Science Advances\/}~{\em 9\/}(1), eabp8901.

\bibitem[\protect\citeauthoryear{Zhao, Xie, Xu, and Sun}{Zhao
  et~al.}{2017}]{zhao2017multi}
Zhao, J., X.~Xie, X.~Xu, and S.~Sun (2017).
\newblock Multi-view learning overview: Recent progress and new challenges.
\newblock {\em Information Fusion\/}~{\em 38}, 43--54.

\end{thebibliography}

\end{document}